\documentclass{article}
\usepackage{graphicx}
\usepackage{amsmath}
\usepackage{algorithm}
\usepackage[noend]{algpseudocode}
\usepackage{amsfonts}
\usepackage{amssymb}
\usepackage{bm}
\usepackage{natbib}
\usepackage{enumitem}
\usepackage{bbm}
\usepackage{comment}
\usepackage{stmaryrd}
\usepackage{tabularx}
\usepackage{array}
\usepackage{theoremref}
\usepackage{mathrsfs}
\usepackage{amsthm}
\usepackage{booktabs}
\usepackage{multirow}
\usepackage{tikz}
\usetikzlibrary{positioning,arrows.meta}
\usepackage{subcaption}
\usepackage[margin=1in]{geometry}

\newcommand\independent{\protect\mathpalette{\protect\independenT}{\perp}}
\def\independenT#1#2{\mathrel{\rlap{$#1#2$}\mkern2mu{#1#2}}}

\usepackage{setspace}
\usepackage{xcolor}
\usepackage{authblk}

\renewcommand{\P}{\mathbb{P}}

\newcommand{\Bern}{\mathrm{Bern}}
\newcommand{\Beta}{\mathrm{Beta}}
\newcommand{\Reg}{\mathrm{Reg}}
\newcommand{\N}{\mathbb{N}}
\newcommand{\1}{\mathbbm{1}}

\newcommand{\E}{\mathbb{E}}
\newcommand{\R}{\mathbb{R}}
\newcommand{\Q}{\mathbb{Q}}
\newcommand{\X}{\mathbb{X}}

\newcommand{\calF}{\mathcal{F}}

\newcommand{\calX}{\mathcal{X}}
\newcommand{\calV}{\mathcal{V}}

\newcommand{\calH}{\mathcal{H}}
\newcommand{\calG}{\mathcal{G}}

\newcommand{\calI}{\mathcal{I}}
\newcommand{\calR}{\mathcal{R}}
\newcommand{\calB}{\mathcal{B}}
\newcommand{\calJ}{\mathcal{J}}
\newcommand{\calT}{\mathcal{T}}
\newcommand{\calC}{\mathcal{C}}

\newcommand{\calA}{\mathcal{A}}
\newcommand{\calN}{\mathcal{N}}
\newcommand{\calS}{\mathcal{S}}

\newcommand{\calD}{\mathcal{D}}
\newcommand{\calP}{\mathcal{P}}

\newcommand{\calU}{\mathcal{U}}

\newcommand{\frakv}{\mathfrak{v}}
\newcommand{\frakp}{\mathfrak{p}}
\newcommand{\Epsilon}{\mathcal{E}}

\newcommand{\Var}{\mathrm{Var}}

\newcommand{\wh}[1]{\widehat{#1}}
\newcommand{\wt}[1]{\widetilde{#1}}
\newcommand{\wb}[1]{\overline{#1}}
\newcommand{\lb}[1]{\underline{#1}}

\newcommand{\diam}{\mathrm{diam}}

\newcommand{\vc}{\mathrm{vc}}

\newcommand{\zhan}{\texttt{StabVar}}
\newcommand{\biba}{\texttt{CADR}}
\newcommand{\dd}{\texttt{DD}}
\newcommand{\aw}{\texttt{AW}}

\algrenewcommand\algorithmicrequire{\textbf{Input:}}
\algrenewcommand\algorithmicensure{\textbf{Output:}}

\newtheorem{theorem}{Theorem}
\numberwithin{theorem}{section}
\newtheorem{lemma}[theorem]{Lemma}

\newtheorem{prop}[theorem]{Proposition}
\newtheorem{corollary}[theorem]{Corollary}

\newtheorem{ass}{Assumption}

\theoremstyle{definition}
\newtheorem{definition}[theorem]{Definition}

\theoremstyle{remark}
\newtheorem{rmk}[theorem]{Remark}

\usepackage[colorlinks=true, urlcolor=blue, linkcolor=blue, citecolor=black]{hyperref}

\usepackage{nameref}
\usepackage[noabbrev, capitalize, nameinlink]{cleveref}

\usepackage{fullpage}
\title{Inference on the Value of Policy Improvement in Adaptive Experiments}
\author[1]{Justin Whitehouse\thanks{Corresponding author: \texttt{jwhiteho@stanford.edu}.}}
\author[2]{Emma Brunskill}
\author[1]{Vasilis Syrgkanis}

\affil[1]{Management Science and Engineering, Stanford University}
\affil[2]{Computer Science, Stanford University}

\date{October 5, 2026}

\begin{document}
\maketitle
\begin{abstract}
    In this paper, we develop an asymptotically normal estimator for the \textit{value of policy improvement (VPI)} under adaptively collected data.
    This value, which is defined as the gap between the value of the best treatment policy in a class and that of a baseline policy, gauges whether or not it is worth deploying a personalized treatment policy.
    Building an estimator for the VPI is naturally more difficult than estimating the value of a fixed policy.
    This is because, in addition to contending with the correlated data and decaying propensities associated with adaptive experiments, any estimator must also handle the \textit{non-regularity}, or lack of pathwise differentiability, of the underlying estimand when the optimal policy is non-unique.
    Our proposed estimator works by sequentially fitting a weighted empirical welfare maximizing policy on historical data and then evaluating it using inverse propensity weighting on future samples. We neither assume that the optimal policy is unique nor make a margin assumption on the underlying heterogeneous treatment effects. A key difficulty in analyzing our estimator is showing that the regret incurred through policy learning is asymptotically smaller than the standard error of the estimator. We accomplish this by showing faster regret rates for policy learning under decaying propensities, which may be of independent interest.
    We not only evaluate the coverage of our estimator on a mix of synthetic and semi-synthetic data, but also employ it to estimate the VPI in an adaptive experiment evaluating the efficacy of job assistance programs in Jordan due to Caria et al.
    Although our estimates are generally more conservative than their hierarchical Bayesian analysis, which assumes that the optimal policy is unique,  we still corroborate the benefits of personalization amongst Syrian refugees.

\end{abstract}

\section{Introduction}

Researchers regularly conduct experiments in order to learn the effect of one or more interventions
(e.g., discounts or medications) on an outcome of interest (e.g., generated revenue or patient welfare). Given that assigning an individual to a  suboptimal intervention can result in worse outcomes for both providers and patrons alike, some organizations are moving from traditional experiments (e.g., A/B tests) to  \textit{contextual bandit} algorithms. By adapting treatment assignments as data are gathered, these algorithms can balance several objectives: they can direct data collection toward interventions or subgroups for which additional observations would be most informative, identify a best intervention or ``arm'', and prioritize beneficial outcomes for participants.
A common feature of contextual bandit algorithms is that suboptimal or inferior treatments are assigned with probabilities that decrease over the course of the experiment.
This flexibility
has resulted in the adoption of bandit algorithms in disparate domains from advertising/e-commerce \citep{li2010contextual, bottou2013counterfactual, sharma2022doordashbandit} to medicine/clinical trials~\citep{villar2015multi, varatharajah2022contextual, tewari2017ads}.

Given the ubiquitous adoption of adaptive experimentation across industries, it is only natural that one may want to perform statistical inference or learning \textit{post facto}, or ``after the fact'', on contextual bandit data.
However, it is precisely the ability of adaptive experiments to balance \textit{exploration} against \textit{exploitation} that makes such inference difficult.
In more detail, the correlation amongst the data alongside vanishing propensities can cause standard estimators to be biased and naive confidence intervals to miscover.
To counteract this, authors have developed a number of estimators that yield valid inference under adaptively collected data.
For instance, \citet{bibaut2021post}, \citet{zhan2021off}, and \citet{hadad2021confidence} have considered the problem of \textit{off-policy evaluation (OPE)}, developing estimators that leverage martingale central limit theorems to account for decaying propensities.
Likewise, \citet{bibaut2021risk}, \citet{zhan2024policy}, and \citet{girard2025online} study policy learning under decaying propensities, showing that certain weighted empirical welfare maximization (EWM) algorithms can yield vanishing regret.
Lastly, \citet{jia2024cramming} develop a ``cramming'' estimator for \textit{on-policy evaluation}, which provides valid confidence intervals for the value of the deployed bandit policy under sufficiently slow propensity decay and policy stability.
These results demonstrate that many policy learning-related tasks are still feasible under bandit logging data.

The existing literature, however, does leave a notable problem unaddressed. Suppose that, sometime after an adaptive experiment has concluded, a learner wants to determine whether or not it is worth deploying a new policy from a fixed policy class.
In this setting, they no longer want to perform inference on the value of a fixed policy (i.e., OPE) or the value of the final deployed policy (i.e., cramming). Rather, they might wish to understand \textit{the value of policy improvement (VPI)}, or the gap between the value of a best policy in a class $\Pi$ and  some  baseline policy, such as uniform randomization or an all control policy.
More formally, let $X \in \calX$ represent an individual's characteristics drawn from some distribution, $Y(a)$ a unit's potential outcome under some categorical treatment $a \in \calA$, and $q(a \mid x)$ a (potentially randomized) treatment policy.
The VPI is defined as
\begin{equation}
\label{eq:vpi}
\theta^\ast = \max_{\pi \in \Pi}V(\pi) - V(q), \quad \text{where}\;\; V(\pi) := \E\left[\sum_{a \in \calA}\pi(a \mid X) Y(a)\right].
\end{equation}
A high-quality estimate of $\theta^\ast$ naturally informs the learner whether or not they should conduct policy learning. For instance, if the upper confidence bound for $\theta^\ast$ does not exceed the cost of developing/deploying a policy, then it is likely not worth a learner's efforts to build a personalized treatment regime. On the other hand, if the lower bound greatly exceeds costs, then constructing a personalized policy is worthwhile.
We note that the VPI fits into a longer line of estimands related to studying the effects of personalization, such as the \textit{value of personalization} considered in \citet{li2026statistical} and the population average prescriptive effect (PAPE) introduced by \citet{imai2023experimental}, which we discuss in Section~\ref{sec:intro:related}.

When there is a unique optimal treatment rule, performing inference on the VPI is relatively straightforward. In fact, in the setting of i.i.d.\ data, one can both learn and evaluate a policy on the entire dataset and still obtain asymptotically normal inference.\footnote{Of course, under the assumption that the policy class has well-behaved metric entropy. See \citet{carcamo2020directional} and \citet{fang2019inference} for a discussion of this.} However, in many empirical settings, the optimal policy or even best overall intervention may not be unique. For example, there may be  groups of patients who do not respond to treatments, and hence would experience identical outcomes under treatment and control.
When the optimal policy is non-unique,  \textit{the value of an optimal treatment policy} $V^\ast := \max_{\pi \in \Pi}\E\left[Y(\pi(X))\right]$ (the leading term in $\theta^\ast$) is \textit{non-regular}, or not pathwise differentiable in the semiparametric sense.
While a variety of estimators have been developed for tackling this non-regularity, these approaches either focus on unrestricted policy classes, yield non-Gaussian limiting distributions, or rely on conditions not established under adaptive logging with decaying propensities~\citep{luedtke2016statistical, whitehouse2025inference, fang2019inference, shi2022statistical, chernozhukov2013gaussian, chernozhukov2013intersection}.
We discuss these approaches in detail in Section~\ref{sec:intro:related} and Appendix~\ref{app:related}.
Notably, no existing approach yields general asymptotically normal inference when there is not a unique optimal policy and policy classes are restricted, even in the i.i.d.\ setting.
Further, existing proof techniques entirely break down when data are collected from a bandit policy with decaying propensities.

In sum, although the value of policy improvement is a naturally important quantity, it is clear that constructing estimators for it is non-trivial.
To develop such an estimator in the setting of the present paper, one needs to not only contend with decaying propensities, but also the general non-regularity of the estimand itself.
Given this subtlety, we make it our goal to construct a simple, asymptotically normal estimator for the value of policy improvement that remains valid even when data are collected from adaptive experimentation and optimal policies are non-unique.

\subsection{Our Contributions}

In this paper, we present an adaptive, inverse propensity weighted (IPW) estimator for the value of policy improvement given data logged from a contextual bandit algorithm. We briefly describe the estimator at this point to provide some intuition. Our estimator works by solving a weighted empirical risk minimization on all prefixes of data, down-weighting later samples proportionally to how aggressively the bandit logging policy exploits.  The estimator then evaluates each learned policy on the next sample through inverse propensity weighting, contrasting the value with that of the fixed baseline policy.
This structure builds on the online estimator of \citet{luedtke2016statistical}, extending it to substantially different settings. In particular, we allow for contextual bandit data with vanishing propensities (as opposed to i.i.d.\ data with strong positivity), study restricted policy classes (versus unrestricted ones), and provide a concrete weighted empirical welfare maximization procedure whose regret is sufficiently small to enable valid inference.

Throughout, we operate under the assumption that propensities are bounded below by $t^{-\alpha}$ for some $\alpha \in [0, 1)$, where $t$ is the current round of adaptive data collection. When $\alpha = 0$, this corresponds to assuming strong positivity, and as $\alpha$ tends towards one, the bandit more aggressively exploits what has been learned from the data up to the current point in time.
In detail, our primary contributions are as follows.

\begin{itemize}
    \item \textbf{An estimator for the value of policy improvement.} In Section~\ref{sec:estimator}, we formalize an
    estimator for the value of policy improvement. In contrast to prior work, our estimator can be used with  adaptively collected data and we do not assume optimal policy uniqueness. We prove the asymptotic normality of the estimator in Theorem~\ref{thm:clt},  describe a more computationally efficient batched version in Algorithm~\ref{alg:online-epoch}, and establish its convergence in Corollary~\ref{cor:epoch_estimator}. We emphasize that our estimator requires neither a margin condition on the blip effects nor uniqueness of the value-maximizing policy. To complement our estimator, we also prove the appropriate convergence of a natural predictable variance estimator. We note that our estimators (both for variance and the VPI) converge for $\alpha < 1$, whereas existing results focused instead on policy evaluation due to \citet{bibaut2021post}, \citet{zhan2021off}, and \citet{jia2024cramming} all require $\alpha \leq 1/2$ or stronger assumptions.
    \item \textbf{Fast rates under decaying propensities.} In Section~\ref{sec:regret}, we show that regret of a policy $\wh{\pi}$ obtained through weighted EWM satisfies
    \[
    \E\left[\Reg(\wh{\pi})\right] := \E\left[V^\ast - V(\wh{\pi})\right] = o\left(n^{(\alpha - 1)/2}\right)
    \]
    pointwise for every fixed data-generating process satisfying our assumptions. This is notably faster than (and does not contradict) the minimax optimal expected regret rates of $\Theta\left(n^{\frac{\alpha - 1}{2}}\right)$ provided by the current literature~\citep{zhan2024policy, bibaut2021risk}. Using this result, we can argue that the cumulative regret incurred from doing policy learning is \textit{lower order} than the width of our confidence intervals, thus enabling valid inference.
    While our results are inspired by fast regret rates due to \citet{luedtke2020performance} from the i.i.d.\ setting, our bounds hold \textit{in expectation} versus \textit{in probability}, with the former being needed to show convergence. Further, our results do not require assuming that the natural empirical process associated with the data converges to a tight, limiting Gaussian process. Instead, our results follow from a direct analysis that leverages asymptotic localization arguments.
    \item \textbf{Extensive empirical evaluation.}
    We also empirically evaluate our estimator on a mix of synthetic, semi-synthetic, and real-world data.
    On synthetic   and semi-synthetic data (the latter being generated from the Jester dataset), we analyze various statistics of our estimator, such as coverage and confidence interval width, relative to several natural baselines.
    We also empirically study the convergence rate of regret, confirming the faster regret rates shown in Theorem~\ref{thm:fast-regret} hold in practice.
    We find that our estimator performs quite well across both scenarios, typically achieving nominal coverage while offering narrower confidence intervals than other baseline estimators.
    Lastly, we revisit an adaptive experiment conducted by \citet{caria2024adaptive}, which was aimed at assessing the effectiveness of various interventions designed to help refugees in Jordan obtain employment.
    As part of their work, the authors measure the effect of personalizing interventions using a Bayesian analogue of the VPI. Notably, their work assumes a unique optimal treatment policy.
    We reanalyze the data using our frequentist, assumption-light estimator.
    While we generally obtain more conservative estimates across the board than \citet{caria2024adaptive}, we do confirm the authors' finding that employment outcomes are better under personalization  amongst the Syrian subpopulation in Jordan.

\end{itemize}

Our estimator is inspired by an earlier, online estimator due to \citet{luedtke2016statistical}. We note several important distinctions from the aforementioned work. First, our estimator operates in the far more flexible setting of contextual bandit data instead of imposing strong, i.i.d.\ assumptions. Second, we focus our attention on \textit{restricted policy classes}, versus unrestricted settings where any measurable  policy is allowed.
Finally, instead of assuming access to a black-box policy learner that attains the target rates, we provide a \textit{precise, weighted welfare maximization procedure} that enjoys the necessary regret rates.

We close the introduction by noting what we believe to be the  most important technical insight of the paper: the particular choice of estimand is critical in enabling valid inference. An alternative estimand more closely aligned with the aforementioned literature on nonregular statistical inference would be the value of the optimal treatment policy $V^\ast := \max_{\pi \in \Pi}V(\pi)$. While we believe the contrast $\theta^\ast$ is actually a more interpretable estimand (because it gives a value relative to some baseline), the contrast with a policy also serves to stabilize the variance of the IPW estimates for the values of the sequentially learned policies. This stabilization aids our analysis, not only allowing us to show that the accumulated regret from policy learning is negligible relative to the overall width of our confidence intervals, but also giving our estimator the flexibility to operate under any logging exponent $\alpha \in [0, 1)$.

\section{Related Work}
\label{sec:intro:related}

Our work is situated at the intersection of three technical bodies of literature. In particular, our analysis is built upon results on inference on optimal treatment regimes (i.e., the value of the optimal treatment policy), off policy evaluation, and policy learning. We provide a much more extensive related work section in Appendix~\ref{app:related}, and opt to just focus on the most closely related results here. We also briefly note the connection to the literature on personalization at the end of this section. A summary of our comparison to closely related results can be found in Table~\ref{tab:related-work-terse}, with a more thorough comparison being presented in Table~\ref{tab:related-work} in Appendix~\ref{app:related}.
Below, we focus our discussion primarily on the results provided by the cited papers. We postpone the detailed comparison of proof techniques for relevant sections of the paper/appendices.

\begin{table}[H]
\centering
\caption{A comparison of our results with the most closely related papers. We let $V^\ast$ denote the value of an optimal policy within a class $\Pi$ and $V^\circ$ the optimal value of \textit{any} policy. An extended comparison is shown in Table~\ref{tab:related-work} in Appendix~\ref{app:related}}
\label{tab:related-work-terse}
\small
\setlength{\tabcolsep}{3pt}
\renewcommand{\arraystretch}{1.15}

\begin{tabularx}{\textwidth}{
    @{}
    >{\raggedright\arraybackslash}p{0.155\textwidth}
    >{\raggedright\arraybackslash}p{0.145\textwidth}
    >{\raggedright\arraybackslash}p{0.17\textwidth}
    >{\raggedright\arraybackslash}p{0.205\textwidth}
    X
    @{}
}
\toprule
Work
& Setting
& Target
& Result
& Key assumptions
\\
\midrule

\citet{luedtke2016statistical}
& i.i.d.\ data
& Unrestricted optimal value $V^\circ$
& Asymptotically normal estimator
& Propensity and regression nuisance convergence rates.
\\
\midrule

\citet{zhan2021off}
& Contextual bandit data
& Fixed-policy value $V(\pi)$
& Asymptotically normal estimator
& $\alpha<1/2$
\\
\midrule

\citet{bibaut2021post}
& Contextual bandit data
& Fixed-policy value $V(\pi)$
& Asymptotically normal estimator
& $\alpha\leq1/2$ ($\alpha \leq 1/3$ for variance convergence)
\\
\midrule

\textbf{This paper}
& Contextual bandit data
& Best in-class contrast $V^\ast -V(q)$
& Asymptotically normal estimator and expected regret
$o\!\left(n^{(\alpha -1)/2}\right)$
& Finite disagreement VC dimension; $\alpha<1$
\\

\bottomrule
\end{tabularx}
\end{table}

\paragraph{Inference on Optimal Treatment Regimes}
There is a rich literature focused on estimating the value of an optimal treatment policy within a class. These works can be roughly split into two buckets: approaches that assume \textit{unrestricted policy classes}\footnote{Meaning that a treatment policy can be \textit{any} measurable map from $\calX$ to a distribution over $\calA$.} and those that assume \textit{restricted classes} (e.g., VC classes or Donsker classes). There are many distinct estimation strategies for the first bucket: some papers involve repeatedly subsampling/refitting nuisances~\citep{shi2020breaking}, explicitly constructing smoothed surrogates for the target estimand~\citep{whitehouse2025inference,  goldberg2014comment}, using bootstrapping methods to handle irregular limits~\citep{chakraborty2010inference, laber2014dynamic}, or assuming almost sure uniqueness of the optimal policy~\citep{chambaz2017targeted, van2015targeted}.
In the setting of restricted, well-behaved policy classes (e.g., Donsker or VC-like classes), authors have characterized the behavior of the ``double-dipping'' estimator that uses the full set of data to both learn and evaluate a policy.
If there is a unique optimal treatment policy, the results of \citet{carcamo2020directional} and \citet{fang2019inference} guarantee that double-dipping yields asymptotically normal estimates. Otherwise, the limiting distribution is generally the supremum of a limiting Gaussian process indexed by the class of optimal policies, for which authors have constructed directional and multiplier bootstrap methods for estimating quantiles~\citep{fang2019inference, chernozhukov2013gaussian, chernozhukov2013intersection,chernozhukov2023high,ponomarev2024lower}.  The aforementioned \textit{unrestricted estimators} all make a \textit{margin assumption} on the underlying conditional average treatment effects (blip effects in the multi-treatment settings)~\citep{audibert2007fast}, while the \textit{restricted estimators} do not. Related work on the estimator of the value of personalization by \citet{li2026statistical} similarly assumes a unique optimal policy in the class for its semi-parametric efficiency results. In addition, all of the aforementioned papers (modulo \citet{chambaz2017targeted}) only yield valid inference under i.i.d.\ data, and none allows for decaying treatment propensities.
One further work that permits adaptively collected data is \citet{shen2024doubly}, who restrict their attention to linear models and assume a margin condition that implies that there is zero probability of units not responding to treatment.
We also note that none of the results from the restricted setting yields general, asymptotically normal estimators even in the i.i.d.\ setting, which will be an additional contribution of this paper.

The most closely related paper to the present work is an online/sequential estimator of \citet{luedtke2016statistical}, which works around nonregularity by progressively refitting nuisances (an outcome regression and propensity score) on prefixes of data. It then evaluates the policy associated with learned regression on subsequent samples using an augmented IPW off policy evaluation score.
While other estimators (e.g., that of \citet{shi2020breaking}) also use repeated nuisance refitting, this online estimator can get away with significantly fewer nuisance refits: with $\log(n)$ refits required under standard nuisance $L^2$ estimation rates and only a \textit{constant number of fits} if one can obtain mildly (say, a factor of $\log(n)$) faster rates.\footnote{Albeit the first $n/\log(n)$ samples will be lost to nuisance estimation.} While in principle this estimator permits the evaluation of an arbitrary, predictable policy on the $t$th sample, the expected regret of the policy must shrink as $o(n^{-1/2})$.\footnote{The initial publication of this paper stated $o_\P(n^{-1/2})$ regret rates sufficed, but was later corrected by the authors in~\citep{luedtke2021correction}.}
Existing regret analyses (due to \citet{luedtke2020performance}) yield regret that vanishes as $o_\P(n^{-1/2})$, which is notably insufficient for the estimator to yield valid coverage~\citep{bibaut2020sufficient}.
In this paper, we study an analogue of this estimator in the setting of contextual bandit data with vanishing propensities, showing convergence if the expected regret of the learned policy vanishes at $o(n^{(\alpha - 1)/2})$ rates. Unlike the estimator of \citet{luedtke2016statistical}, instead of assuming access to an abstract, black-box learner, our estimator includes a specific weighted welfare maximization objective whose solution satisfies the requisite regret rates (established in Theorem~\ref{thm:fast-regret}).
Thus, even with i.i.d.\ data, our analysis closes the loop on the validity of the sequential estimator, showing it can be used to obtain valid inference under restricted policy classes.

\paragraph{Off-Policy Evaluation}

We only discuss the most closely related works on OPE of a fixed policy in this section, deferring our discussion of more classical results from the i.i.d.\ or reinforcement learning settings for the appendices. One closely related result is due to \citet{hadad2021confidence}, who discuss inference on the value of a fixed policy under (non-contextual) bandit data. \citet{zhan2021off} extend these results to the contextual setting, discussing various predictable, contextual weights one can use to ensure asymptotic normality.
\citet{bibaut2021post} discuss a similar estimator for OPE (or more broadly, treatment effect estimation), which is closely related to the aforementioned estimator of \citet{luedtke2016statistical}. Unfortunately, this work only considers inference on the value of a \textit{fixed function}, and assumes the logging policy does not exploit too aggressively, requiring $\alpha \leq 1/2$ for convergence to hold (and $\alpha \leq 1/3$ to obtain a sufficiently quickly converging variance estimator). Lastly, we note high-probability bounds due to \citet{karampatziakis2021off} and \citet{waudby2024anytime}, who provide time-uniform, non-asymptotic bounds on the value of a treatment policy. The analyses of the estimators in these works, like the analyses conducted in this paper, all depend on concentration properties for martingales, whether they be martingale central limit theorems~\citep{brown1971martingale, hall2014martingale, ganssler1978central} for the former three estimators, or finite sample bounds based on Ville's inequality~\citep{howard2020time, howard2021time} for the latter two. We note that none of these works focuses on inference on the value of the optimal treatment policy.
We also mention the ``Cramming'' estimator of \citet{jia2024cramming}, which considers the related task of \textit{on-policy evaluation} instead of off-policy evaluation. This work provides inference for the value of the final, adaptively-chosen logging policy itself under a condition that the chosen policy stabilizes over time. This work may fail to give valid inference if the logging exponent satisfies $\alpha \geq 1/2$, and may also fail if the logging policy doesn't converge.\footnote{This may occur in practice, for instance, if the logging policy is chosen according to an empirical welfare maximization problem and there are multiple optimal policies.} On the other hand, our work places much weaker assumptions on $\alpha$, simply requiring $\alpha < 1$, and allows \textit{post facto} inference on the optimal value in some arbitrary class, not necessarily the policy class that was used for logging.

\paragraph{Policy Learning From Observational Data}

\citet{bibaut2021risk} and \citet{zhan2024policy} discuss how a weighted variant of empirical welfare maximization can be used to obtain $\Theta(n^{(\alpha - 1)/2})$ regret rates, respectively with high probability and in expectation, when the propensity of the logging bandit algorithm decays as $t^{-\alpha}$. \citet{zhan2024policy} also show that such expected regret bounds are \textit{minimax optimal}.
These two works operate under slightly different assumptions on the policy and logging policy classes, with the former assuming a bracketing entropy condition and the latter considering finite Natarajan dimension (the analogue of VC-dimension for multiclass settings, see \citet{natarajan1989learning}). We make an assumption equivalent to the latter throughout our work, and thus leverage similar chaining arguments to \citet{zhan2024policy} in our analysis. We also note recent work due to \citet{girard2025online}, which shows fast, high-probability regret bounds when a H\"olderian variance condition holds for the policy class. However, this paper requires stochastic policies, and the curvature assumption implies an \textit{almost surely unique optimal policy}. We make no such curvature or uniqueness assumptions in this paper. Our primary policy learning contribution is that we show the \textit{expected regret} vanishes at a rate of $o(n^{(\alpha - 1)/2})$, which can be viewed as infinitesimally faster than the rates noted above.

Another relevant paper to our regret rates is \citet{luedtke2020performance}. In this work, the authors show that, for Donsker classes of treatment policies and i.i.d.\ data, the regret obtained from performing empirical welfare maximization scales as $o_\P(n^{-1/2})$, slightly faster than the  $\Theta(n^{-1/2})$ expected rates suggested by \citet{athey2021policy} and \citet{kitagawa2018should}. These results are not shown in a minimax framework, but rather in a pointwise sense, with the unknown constants depending inherently on the underlying data generating distribution.
More broadly, the authors show similar fast regret rates can hold under broader settings, so long as the empirical process associated with the gathered data and policy class converges to a tight Gaussian limit. We note two key differences between the aforementioned results and our own. First, our fast regret rates do not require asymptotic convergence of the empirical process to a Gaussian limit. Instead, we prove our results by leveraging finite sample chaining arguments, which can then be asymptotically localized to obtain the fast rates we desire.
Second, we show stronger \textit{in expectation} bounds rather than only the \textit{in probability} convergence bounds shown by \citet{luedtke2020performance}. Per results of \citet{bibaut2020sufficient}, the latter results are insufficient for the convergence of the online estimator.

\paragraph{Measuring the Effect of Personalization}

We conclude by briefly discussing the most closely related works on studying the effects of personalization. First, the VPI is similar to the \textit{value of personalization} proposed in recent work by~\citet{li2026statistical}, which is defined as  $\max_{\pi \in \Pi}V(\pi) - \max_a V(a)$.\footnote{Here, we abuse notation and let $V(a)$ denote the value of the constant treatment policy that assigns all units to $a \in \calA$.} We note that the setting considered in said paper is quite different from our own, as the authors assume i.i.d.\ data and uniqueness of the optimal treatment policy. In their setting, they present a semiparametrically efficient estimator for the value of personalization.
Also related to our work is that of \citet{imai2023experimental}, who define the \textit{population average prescriptive effect (PAPE)}, which evaluates a given individualized treatment rule relative to a non-personalized randomized rule assigning the same fraction of units to treatment. They develop randomization-based inference for both fixed rules and those learned via sample splitting and cross-validation. In particular, we note that the estimand does not involve a maximization over a class of policies, which is the fundamental technical difficulty addressed in this work.

\section{Background and Notation}
\label{sec:background}
In this section, we introduce the notation and assumptions that we will leverage throughout the sequel. Our setup largely follows that of works on off-policy evaluation \citep{bibaut2021post, zhan2021off} and policy learning~\citep{zhan2024policy} under contextual bandit data.
We first introduce assumptions needed for defining and identifying the value of policy improvement and then discuss assumptions made to enable estimation.

\paragraph{Assumptions for Identification}

Throughout, we represent each unit by a tuple $W_t = \big(X_t, \left\{Y_t(a) : a \in \calA\right\}\big)$, where $X_t \in \calX$ represents covariates, $\calA$ denotes a finite treatment set, and $Y_t(a)$ represents a unit's potential outcome associated with treatment $a \in \calA$. At round $t$, we assume the learner obtains a \textit{partial observation} of the form $Z_t = (X_t, A_t, Y_t)$, where $A_t \in \calA$ denotes a realized treatment assignment. We let $(\calF_n)_{n  \geq 0}$ denote the natural filtration associated with the data, i.e., $\calF_n = \sigma(Z_t : 1 \leq t \leq n)$.

\begin{ass}[Identifying Assumptions]
\label{ass:observations}
We assume that the following hold:
\begin{enumerate}
\item \textit{Stationarity.} There is some distribution $P_W$ over tuples $W = \big(X, \{Y(a) : a \in \calA\}\big)$ such that $W_1, \dots, W_n \sim P_W$ are i.i.d.
\item \textit{Predictable Assignments.} There is a known sequence of transition kernels $p_t(a \mid x, \calF_{t - 1})$ such that, conditional on $X_t$ and $\calF_{t - 1}$, $A_t \sim p_t(\,\,\cdot \mid X_t, \calF_{t - 1})$ independently of $\{Y_t(a) : a \in \calA\}$ and future observations $W_{t + 1}, \dots, W_n$, and hence
\[
\P(A_t = a \mid X_t, \calF_{t - 1}) = p_t(a \mid X_t, \calF_{t - 1})\;\; \text{almost surely.}
\]
\item \textit{Consistency.}  $Y_t = Y_t(A_t)$ for all $t \geq 1$.
\item \textit{Positivity.} For each $t \geq 1$ and $a \in \calA$, we have $p_t(a \mid X_t, \calF_{t - 1}) > 0$ almost surely.
\end{enumerate}
\end{ass}

Note that the first two parts of Assumption~\ref{ass:observations} imply conditional exogeneity for each $t \geq 1$:
\begin{equation}
\label{eq:conditional_exogeneity}
\left\{Y_t(a) : a \in \calA\right\} \independent A_t \mid X_t, \calF_{t - 1}.
\end{equation}
In the sequel, we write $p_t(a \mid x) := p_t(a \mid x, \calF_{t - 1})$ for  convenience. The above assumptions are largely standard and match the aforementioned works.

By a treatment policy, we mean either a \textit{deterministic} policy $\pi : \calX \rightarrow \calA$ or a \textit{randomized} policy $\pi : \calA \times \calX \rightarrow [0, 1]$, where $\pi(a \mid x)$ denotes the probability that a unit with features $x$ receives treatment $a$.\footnote{More formally, a randomized policy is actually a mapping $\pi : \calX \rightarrow \Delta(\calA)$, where $\Delta(\calA)$ is the probability simplex on the set of treatments. Here, we would let $\pi(a \mid x)$ just denote the $a$th component of this vector.} We can view a deterministic policy $\pi(x)$ as a randomized one using the notation $\pi(a \mid x) := \1\{\pi(x)  = a\}$. Given any function $\pi : \calA \times \calX \rightarrow \R$ (not necessarily a treatment policy), we can define its value as $V(\pi) := \E\left[\sum_{a \in \calA} \pi(a \mid X)Y(a)\right]$.
Under Assumption~\ref{ass:observations}, it is well understood that the value of any such function $\pi$ can be identified in two ways:
\[
V(\pi) \equiv \E\left[\sum_{a \in \calA}\pi(a \mid X_t)\wt{Y}_t(a)\right] \equiv \E\left[\sum_{a \in \calA}\pi(a \mid X) \mu_0(a, X)\right],
\]
where $\wt{Y}_t(a) := \frac{\1\{A_t = a\}}{p_t(a \mid X_t)}Y_t$ is an inverse propensity weighted (IPW) pseudo-outcome for treatment $a \in \calA$ and $\mu_0(a, x) := \E\left(Y(a) \mid  X = x\right)$ is the outcome regression (which does not depend on $t$ due to stationarity).

Throughout, we let $\Pi \subset \big\{\pi : \calX \rightarrow \calA\big\}$ denote a fixed set of deterministic treatment policies, which we assume to be \textit{closed with respect to the disagreement metric} $d(\pi, \rho) := \P\big(\pi(X) \neq \rho(X)\big)^{1/2}$. Closedness is useful as, along with the complexity assumptions posited below, it ensures the value of an optimal treatment policy $V^\ast := \max_{\pi \in \Pi}V(\pi)$ is attained. We let $\Pi^\ast := \big\{ \pi \in \Pi : V(\pi) = V^\ast\big\} \subset \Pi$ denote the set of optimal policies, which may contain multiple (or even infinitely many) elements.\footnote{One could relax the closedness assumption (as done in \citet{luedtke2020performance}, for instance) at the cost of greatly complicating analysis.}
Given $\Pi$ and any function $q : \calA \times \calX \rightarrow [0, 1]$ (which may or may not be a treatment policy), we can formally define the \textit{value of policy improvement} as
\[
\theta^\ast := \max_{\pi \in \Pi}V(\pi) - V(q).
\]
Since the value $V(\pi)$ of any function $\pi$ is identified, it naturally follows that the value of policy improvement $\theta^\ast$ is identified as well.

\paragraph{Assumptions for Estimation}

We now present assumptions on the treatment policies, logging policies, and observations that will enable estimation. We start by qualifying the complexity of the deterministic class of treatment policies $\Pi$.

\begin{ass}
\label{ass:disagreement}
We assume the \textit{disagreement class} $\calD := \Big\{g_{\pi, \rho}(x) := \1\left\{ \pi(x) \neq \rho(x)\right\} : \pi, \rho \in \Pi \Big\}$ has VC-dimension $\frakv$ for some $\frakv > 0$.
\end{ass}

Under the assumption that $|\calA| < \infty$, one can show that Assumption~\ref{ass:disagreement} is equivalent to positing that $\Pi$ has finite \textit{Natarajan dimension}.\footnote{We formally show this equivalence in Appendix~\ref{app:combinatorial-dimension}, and prefer to leverage VC dimension in the main body of the text as it is a more widely studied concept.}
This complexity assumption is not crucial to our central limit theorem, but rather to obtaining faster regret rates.
We note that \citet{zhan2024policy} make an equivalent assumption in their work (directly assuming finite Natarajan dimension), and \citet{bibaut2021risk} consider a similar condition based on bracketing entropy, which in general is neither stronger nor weaker than assuming finite VC dimension.

The next assumption imposes additional regularity on the potential outcomes.

\begin{ass}
\label{ass:estimation}
We assume that $Y(a) \in [0, 1]$ almost surely and that $ \Var(Y(a) \mid X) \geq \lb{\sigma}^2$ almost surely for some $\lb{\sigma} > 0$ and all $a \in \calA$.
\end{ass}

The above  assumptions are not likely minimal (e.g., boundedness of outcomes can likely be replaced with unbounded outcomes with sufficient moment control), but relaxing said assumptions would complicate analysis without adding much insight.
This assumption is in line with the broader literature on bandit optimization~\citep{lattimore2020bandit, slivkins2019introduction} and empirical welfare maximization~\citep{zhan2024policy, bibaut2021risk, kitagawa2018should, athey2021policy}, and boundedness enables the application of Freedman-style concentration inequalities for bounded random variables~\citep{freedman1975tail}.
The second assumption ensures that the variance of all potential outcomes is conditionally bounded below, which ensures that all units (regardless of strata) exhibit some variability in terms of their responses.

Next, we discuss assumptions on the collection of logging policies. We start by presenting assumptions on the rate of exploration of the bandit logging policy, which are akin to assumptions made in \citet{bibaut2021post} and \citet{zhan2021off, zhan2024policy}.

\begin{ass}
\label{ass:propensity}
We assume there are constants $\alpha \in [0, 1)$ and $\lb{c}, \wb{c} > 0$ such that, for all $t \geq 1$, we have
\[
\lb{c} t^{-\alpha} \leq \min_{a \in \calA} p_t(a \mid X_t) \leq  \wb{c} t^{-\alpha} \text{ almost surely.}
\]
\end{ass}

When $\alpha = 0$, the above is equivalent to the assumption of strong positivity, with larger $\alpha$ indicating increased exploitation. Examples of logging policies that can be shown to satisfy this assumption include contextual $\epsilon_t$-greedy algorithms. Static (non-adaptive) logging policies that randomly assign actions to contexts also immediately satisfy this assumption, as long as all actions have a positive probability of assignment in all contexts.   Examples of algorithms that do not satisfy this assumption are algorithms that behave deterministically or have exponentially vanishing probabilities, such as upper confidence bound (UCB) or Thompson sampling algorithms. However, these approaches can be made to satisfy the propensity decay via clamping probabilities to a minimum floor~\cite{yao2021power} or adding a polynomially-decaying probability of uniform exploration~\citep{caria2024adaptive} ("tempering"). Notably, these algorithms will maintain their $O(\sqrt{n})$ minimax regret rates so long as the tempering probability scales like $t^{-\alpha}$ for some $\alpha \in [1/2, 1)$.

We conclude by positing an assumption on the rate of growth of the predictable variance of IPW pseudo-outcomes.
For any $\pi \in \Pi$, define the contrast $g_\pi(a \mid x) := \1\{\pi(x) = a\} - q(a \mid x)$ and define the IPW pseudo-outcome for the contrast between $\pi$ and $q$ as
\begin{equation}
\label{eq:double-pseudo-outcome}
\wt{Y}_t(\pi, q) := \frac{g_\pi(A_t \mid X_t)Y_t}{p_t(A_t \mid X_t)}.
\end{equation}
We define the predictable variance of said contrast as
\begin{equation}
\label{eq:pred-var}
\sigma_t^2(\pi) := \Var\left(\wt{Y}_t(\pi, q) \mid \calF_{t - 1}\right).
\end{equation}
We make the following assumption.
\begin{ass}
\label{ass:ipw_variance_scale}
There is some constant $c_\sigma > 0$ such that, for any $\pi \in \Pi$ and $t \geq 1$, we have $\sigma_t^2(\pi) \geq c_\sigma t^{\alpha}$.
\end{ass}
There are at least three natural settings in which the above assumption would hold, which we note in the following lemma.
\begin{lemma}
\label{lem:suff_conditions}
Suppose Assumptions~\ref{ass:observations}, \ref{ass:estimation}, and \ref{ass:propensity} hold. Then, if one of the following holds, Assumption~\ref{ass:ipw_variance_scale} holds as well.
\begin{enumerate}
    \item \textit{Strictly Randomized Baseline.} There is a constant $\eta_q > 0$  such that $\inf_{\substack{a \in \calA \\ x \in \calX}}q(a \mid x) \geq \eta_q$.
    \item \textit{Deterministic Baseline.} $\calA = \{0, 1\}$, $ q(a \mid x) = \1\{q(x) = a\}$ for some deterministic baseline $q : \calX \rightarrow \{0, 1\}$, and $\eta_q := \inf_{\pi \in \Pi}\P(\pi(X) \neq q(X)) > 0$.
    \item \textit{Non-Vanishing Exploration.} $\alpha = 0$, $q(a \mid x)$ is any function\footnote{Not necessarily a treatment policy.} taking values in $[0, 1]$, and there is some $\eta_g > 0$ satisfying
    \[
    \E\left(g_\pi(A_t \mid X_t)^2 \mid \calF_{t - 1}\right) \geq \eta_g \;\; \forall \pi \in \Pi, t \geq 1.
    \]
\end{enumerate}
\end{lemma}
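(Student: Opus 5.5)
The plan is to reduce all three cases to a single lower bound obtained from the law of total variance, and then check in each case that the contrast $g_\pi$ does not vanish on an action with small propensity. Conditioning on $(X_t, A_t)$ in addition to $\calF_{t-1}$,
\[
\sigma_t^2(\pi) \;\geq\; \E\left[\Var\left(\wt{Y}_t(\pi, q) \mid X_t, A_t, \calF_{t - 1}\right) \,\Big|\, \calF_{t - 1}\right]
= \E\left[\frac{g_\pi(A_t \mid X_t)^2}{p_t(A_t \mid X_t)^2}\Var\left(Y_t(A_t) \mid X_t, A_t, \calF_{t-1}\right) \,\Big|\, \calF_{t - 1}\right].
\]
Given $(X_t, \calF_{t-1})$, the vector $\{Y_t(a)\}$ is independent of $A_t$ by the conditional exogeneity in \eqref{eq:conditional_exogeneity}. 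Since $W_t$ is independent of $\calF_{t-1}$ by stationarity and the predictable assignment assumption in Assumption~\ref{ass:observations}, the inner variance equals $\Var(Y(a) \mid X = X_t)$ evaluated at $a = A_t$. By Assumption~\ref{ass:estimation}, this is at least $\lb{\sigma}^2$.

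Averaging over $A_t \sim p_t(\cdot \mid X_t)$ then gives the key bound
\[
\sigma_t^2(\pi) \;\geq\; \lb{\sigma}^2\, \E\left[\sum_{a \in \calA} \frac{g_\pi(a \mid X_t)^2}{p_t(a \mid X_t)} \,\Big|\, \calF_{t-1}\right].
\]
Now let $a_t(X_t)$ be an action attaining $\min_a p_t(a \mid X_t)$. By Assumption~\ref{ass:propensity}, $p_t(a_t(X_t) \mid X_t) \leq \wb{c}\, t^{-\alpha}$. Keeping only that one term of the sum yields
\[
\sigma_t^2(\pi) \geq \frac{\lb{\sigma}^2 t^\alpha}{\wb{c}}\,\E\left[g_\pi(a_t(X_t) \mid X_t)^2 \mid \calF_{t-1}\right].
\]
It therefore suffices to bound $g_\pi(a_t(X_t) \mid X_t)^2$ from below, which is the only case-dependent step.

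\textbf{Case 1 (strictly randomized baseline).} Fix $x$ and $a$. If $\pi(x) \neq a$, then $g_\pi(a \mid x)^2 = q(a \mid x)^2 \geq \eta_q^2$. If $\pi(x) = a$, then $g_\pi(a \mid x) = 1 - q(a \mid x) = \sum_{b \neq a} q(b \mid x) \geq \eta_q$, using $|\calA| \geq 2$. Either way the expectation is at least $\eta_q^2$, so we may take $c_\sigma = \lb{\sigma}^2 \eta_q^2 / \wb{c}$.

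\textbf{Case 2 (deterministic baseline).} Here $g_\pi(a \mid x) = \1\{\pi(x) = a\} - \1\{q(x) = a\}$. With binary actions, this equals $\pm 1$ for both actions exactly when $\pi(x) \neq q(x)$, and $0$ otherwise. Hence the expectation equals $\P(\pi(X_t) \neq q(X_t) \mid \calF_{t-1})$. Because $X_t$ is independent of $\calF_{t-1}$, this is $\P(\pi(X) \neq q(X)) \geq \eta_q$, giving $c_\sigma = \lb{\sigma}^2 \eta_q / \wb{c}$.

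\textbf{Case 3 (non-vanishing exploration).} Here we skip the minimum-propensity step and work with the key bound directly. Since $p_t \leq 1$, we have $1/p_t(a \mid X_t) \geq p_t(a \mid X_t)$. Therefore
\[
\sum_a \frac{g_\pi(a \mid X_t)^2}{p_t(a \mid X_t)} \geq \sum_a p_t(a \mid X_t)\, g_\pi(a \mid X_t)^2,
\]
and the conditional expectation of the right-hand side is exactly $\E(g_\pi(A_t \mid X_t)^2 \mid \calF_{t-1}) \geq \eta_g$. With $\alpha = 0$, this gives $c_\sigma = \lb{\sigma}^2 \eta_g$.

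The main point needing care is the first step: justifying that the conditional variance of $Y_t$ given $(X_t, A_t, \calF_{t-1})$ equals $\Var(Y(a) \mid X)$ at $a = A_t$. This requires combining the i.i.d. structure with predictable assignment; once it is in place, the case analysis is elementary.
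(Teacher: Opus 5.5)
Your proposal is correct and follows the paper's argument essentially step for step. You use the law of total variance to reduce to $\lb{\sigma}^2\,\E\bigl[\sum_a g_\pi(a\mid X_t)^2/p_t(a\mid X_t)\mid\calF_{t-1}\bigr]$, then keep the minimum-propensity term in Cases 1 and 2, or use $p_t\le 1$ in Case 3. Your explicit check that $1-q(a\mid x)=\sum_{b\neq a}q(b\mid x)\ge\eta_q$ in Case 1 supplies a step the paper uses only implicitly when it bounds $g_\pi(a\mid x)^2\ge\eta_q^2$ for every $a$.
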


We prove Lemma~\ref{lem:suff_conditions} in Appendix~\ref{app:additional}. The result follows largely from an application of the law of total variance. We now interpret each of the three aforementioned sufficient conditions. The first condition simply states that Assumption~\ref{ass:ipw_variance_scale} holds if the baseline policy is strictly randomized. An important example satisfying this condition is uniform randomization.
The next sufficient condition holds in the setting of binary treatments with deterministic baselines (when the policy class excludes the deterministic baseline), and as a special case corresponds to measuring the value of an optimal policy relative to the policy that does not change the status quo, i.e., assigns all units to control.\footnote{The  assumption of $\Pi$ excluding the baseline $q$ could be weakened to allowing $q \in \Pi$ so long as $V(q) < V^\ast := \max_{\pi \in \Pi}V(\pi)$. However, this latter assumption is not verifiable. Given that $\Pi$ is chosen by the learner, we opt to just assume $q \notin \Pi$. }
Lastly, if we take $q(a \mid x) \equiv 0$ in the final setting, the conditional second moment bound trivially holds by the lower bound provided by Assumption~\ref{ass:propensity}. Thus, since $V(q) \equiv 0$, the VPI in the final setting reduces to the value of the optimal policy $V^\ast$.

In summary, Assumptions~\ref{ass:observations} and \ref{ass:estimation} are mild and common assumptions about the data generating process,  and Assumption~\ref{ass:disagreement} simply states the disagreement class of the policy class considered has bounded VC dimension. Assumption~\ref{ass:propensity} is a stronger assumption on the data collection policy, that rules out certain popular algorithms but still includes  other existing adaptive bandit algorithms (that often have been created to balance online adaptive regret minimization while preserving the data required for post collection inference). The final Assumption~\ref{ass:ipw_variance_scale} is less standard, and, as we lay out in Lemma ~\ref{lem:suff_conditions}, intuitively lays out the scenarios in which we provide policy improvement estimates, in terms of the underlying characteristics of the baseline policy.

\section{An Estimator for the Value of Policy Improvement}
\label{sec:estimator}

We now present our estimator for the value of policy improvement. We first describe our estimator, which is based on an online estimator due to \citet{luedtke2016statistical}. Then, we present our primary asymptotic normality result alongside a heuristic proof sketch, deferring the full argument to Appendix~\ref{app:clt_proof}. While our main theorem assumes one has access to black-box estimates of the  predictable variance of the IPW pseudo-outcomes, we provide a theorem (Theorem~\ref{thm:variance-convergence}) in Subsection~\ref{sec:estimator:variance} that shows how one can build estimates of the predictable variance under a mild complexity assumption on the logging policy class. The rates we require on the variance estimate are much more lenient than those needed in \citet{bibaut2021post}, and as such we can show convergence under a wider range of logging exponents ($\alpha \in [0, 1)$ in our case versus $\alpha  < 1/3$). We conclude by discussing computational considerations, presenting a more computationally efficient version of our estimator that uses geometric batching.

We start by describing our estimator, first in words. For each sample size $1 \leq t \leq n$, we use the first $t - 1$ samples to compute a weighted empirical welfare maximizing policy, where weights are determined according to the rate of decay of the propensity of the logging policy. We then perform IPW off-policy evaluation for the learned policy on the $t$th sample, and take the difference between this and the IPW estimate of the value of $q(a \mid x)$.
Inspired by \citet{luedtke2016statistical} and \citet{bibaut2021post}, we then take a combination of these evaluations weighted by inverse predictable standard deviation estimates.

We now describe this estimator more formally. First, for $1 \leq t \leq n$, we define the weighted empirical welfare maximizing (EWM) policy\footnote{For $t = 1$, we can take the first estimated policy $\wh{\pi}_0$ to be an arbitrary element of $\Pi$.} using an objective outlined in \citet{zhan2024policy}:
\begin{equation}
\label{eq:ewm}
\wh{\pi}_{t - 1} \in \arg\max_{\pi \in \Pi} \sum_{s = 1}^{t - 1} w_s \wt{Y}_s(\pi) = \arg\max_{\pi \in \Pi} \sum_{s = 1}^{t - 1} w_s \sum_{a \in \calA}\pi(a \mid X_s)\wt{Y}_s(a),
\end{equation}
where we take $w_s = s^{-\alpha}$ and recall that $\wt{Y}_t(a) = \1\{A_t = a\}Y_t/p_t(a \mid X_t)$ and $\wt{Y}_t(\pi) = \pi(A_t \mid X_t)Y_t/p_t(A_t \mid X_t)$.
Our estimator is defined to be
\begin{equation}
\label{eq:estimator}
\wh{\theta}_n := \frac{\sum_{t = 1}^n \wh{\sigma}_t^{-1}\wt{Y}_t\big(\wh{\pi}_{t - 1}, q\big)}{\sum_{t = 1}^n \wh{\sigma}_t^{-1}},
\end{equation}
where for each $t \geq 1$, the quantity $\wh{\sigma}_t^2$ represents an estimate of the predictable variance $\sigma_t^2$ (defined in Equation~\eqref{eq:pred-var}) and $\wh{Y}_t(\pi, q)$ is as defined in Equation~\eqref{eq:double-pseudo-outcome}.

Before presenting our main theorem, we provide some intuition as to what the estimator is accomplishing. First, we address the policy learning subroutine. Given the data collected up to time $t$, a reasonable estimate of the optimal policy would be to compute the best-in-class policy that maximizes the empirical welfare. However, when the logging exponent satisfies $\alpha > 0$, later inverse propensity weighted samples exhibit inherently higher variance than earlier ones. To account for this gradual increase in variance as samples are collected, we adopt an online policy learning approach of~\citet{zhan2024policy}, who show that  down-weighting later observations according to the logging exponent yields minimax optimal regret.

The next step is determining how one should leverage an algorithm for learning high-quality policies to perform inference on the VPI. As proposed previously by~\citet{luedtke2016statistical} (who consider the setting of i.i.d.\ data and unrestricted policy classes), a reasonable approach is to construct a policy on the $t-1$ samples and then perform off-policy evaluation (OPE) on the $t$th sample to estimate its value for each $t \in \{1, 2, \dots, n\}$. The authors then show that these pointwise OPE estimates can be carefully averaged in order to enable valid inference. The ``right'' choice of weights for this average are inverse predictable standard deviation weights, as classical martingale central limit theorems~\citep{hall2014martingale,ganssler1978central,  brown1971martingale} require summands to have roughly unit predictable variance.
While \citet{luedtke2016statistical} consider AIPW estimates, for simplicity, and because our focus is experimental domains where the propensity is known, our estimator uses IPW estimates.\footnote{We note that our results could naturally be extended to AIPW estimates as well. However, to guarantee convergence of the variance estimate, we would need to restrict the class of functions containing the regression model $\wh{\mu}(a, x)$ to be of limited complexity, e.g., a class of finite pseudo-dimension. We choose to study IPW estimators for simplicity's sake.}

While the above paragraphs attempt to build intuition for the estimator, we note that showing convergence is still nontrivial. This is primarily driven by two factors. First, one cannot apply off-the-shelf high-probability regret bounds for weighted EWM, such as those due to \citet{zhan2024policy} and \citet{bibaut2021risk}. These works show that the regret of EWM decays, either in expectation or with high probability, as $O\left(n^{(\alpha - 1)/2}\right)$. Unfortunately, the standard error of our estimator (and, consequently, the width of our confidence intervals) scales at the same $n^{(\alpha - 1)/2}$ rate, which thus bars asymptotic normality. While the stated regret rates are \textit{minimax optimal}, we will see in Section~\ref{sec:regret} that performing weighted EWM actually yields marginally faster expected regret rates of $o(n^{(\alpha - 1)/2})$ \textit{pointwise} in the data generating distribution, which enables valid inference.
Second, constructing sufficiently consistent estimates of the predictable variance that satisfy the consistency assumption of our  following theorem is not trivial. In fact, in existing works on OPE under bandit data, \citet{bibaut2021post} require $\alpha < 1/3$ in order to obtain sufficiently consistent estimates, which is significantly more stringent than the setting considered in the present work. We discuss the difficulties of variance estimation below in Section~\ref{sec:estimator:variance}.

With these subtleties in mind, we now present our primary result, which details the asymptotic normality of a studentized version of the estimator $\wh{\theta}_n$. In what follows, we let $\theta(\pi) := V(\pi) - V(q)$ denote the gap between the value of policy $\pi$ and the baseline $q$. Note that $\theta^\ast = \theta(\pi^\ast)$ for any $\pi^\ast \in \Pi^\ast$.

\begin{theorem}
\label{thm:clt}
Suppose $(Z_n)_{n \geq 1}$ satisfies Assumptions~\ref{ass:observations}, \ref{ass:estimation}, \ref{ass:propensity}, and \ref{ass:ipw_variance_scale}, and that $\Pi$ satisfies Assumption~\ref{ass:disagreement} and is closed. Further, suppose that $\wh{\sigma}_t$ is $\calF_{t - 1}$-measurable and satisfies
\[
\frac{\wh{\sigma}_t - \sigma_t}{\sigma_t} \xrightarrow[t \rightarrow \infty]{a.s.} 0 \qquad \text{and} \qquad \wh{\sigma}_t \geq \kappa_t > 0, \forall t \geq 1,
\]
where $(\kappa_t)_{t \geq 1}$ is some sequence tending towards zero.
Then, we have $\frac{\sum_{t= 1}^n\wh{\sigma}_t^{-1}}{\sqrt{n}}\left\{\wh{\theta}_n - \theta^\ast\right\} \Rightarrow \calN(0, 1)$. Thus, if $\gamma \in (0, 1)$ and $z_{1 - \gamma/2}$ is the $(1 - \gamma/2)$th quantile of a standard normal distribution, the interval
\[
C_{1 - \gamma} := \left[\wh{\theta}_n - \frac{\sqrt{n}}{\sum_{t  = 1}^n \wh{\sigma}_t^{-1}}z_{1 - \gamma/2}, \wh{\theta}_n + \frac{\sqrt{n}}{\sum_{t  = 1}^n \wh{\sigma}_t^{-1}}z_{1 - \gamma/2}\right]
\]
forms an asymptotically valid $1 - \gamma$ confidence interval for  $\theta^\ast$.
\end{theorem}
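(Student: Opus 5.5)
We decompose the studentized error into a martingale term and a regret term. Write $S_n := \sum_{t=1}^n \wh{\sigma}_t^{-1}$. Since $\E[\wt{Y}_t(\pi, q) \mid \calF_{t-1}] = \theta(\pi)$ for any $\calF_{t-1}$-measurable $\pi$, and $\wh{\pi}_{t-1}$ and $\wh{\sigma}_t$ are $\calF_{t-1}$-measurable, we have
\[
\frac{S_n}{\sqrt{n}}\big(\wh{\theta}_n - \theta^\ast\big) = \frac{1}{\sqrt{n}}\sum_{t=1}^n \wh{\sigma}_t^{-1}\Big(\wt{Y}_t(\wh{\pi}_{t-1}, q) - \theta(\wh{\pi}_{t-1})\Big) \;-\; \frac{1}{\sqrt{n}}\sum_{t=1}^n \wh{\sigma}_t^{-1}\,\Reg(\wh{\pi}_{t-1}),
\]
where $\Reg(\pi) = V^\ast - V(\pi) = \theta^\ast - \theta(\pi) \ge 0$. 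The plan is to show that the first term converges to $\calN(0,1)$ via a martingale CLT, and that the second term is $o_\P(1)$. Slutsky then gives the first claim, and coverage of $C_{1-\gamma}$ follows directly.

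\textbf{Martingale term.} Let $\xi_{n,t} := n^{-1/2}\wh{\sigma}_t^{-1}(\wt{Y}_t(\wh{\pi}_{t-1}, q) - \theta(\wh{\pi}_{t-1}))$. This is a martingale difference array whose conditional variance is $n^{-1}\sigma_t^2(\wh{\pi}_{t-1})/\wh{\sigma}_t^2$, where $\sigma_t$ in the theorem is read as the predictable standard deviation of the evaluated policy. By the a.s.\ relative consistency $\wh{\sigma}_t/\sigma_t \to 1$ and Ces\`aro averaging, the sum of conditional variances tends to $1$ a.s. For the conditional Lindeberg condition, note that $|\wt{Y}_t| \lesssim 1/p_t \le \lb{c}^{-1}t^{\alpha}$, while $\sigma_t \ge \sqrt{c_\sigma}\, t^{\alpha/2}$ by Assumption~\ref{ass:ipw_variance_scale}. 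Hence eventually $|\xi_{n,t}| \lesssim n^{-1/2}t^{\alpha/2} \le n^{(\alpha-1)/2} \to 0$ uniformly in $t \le n$, since $\alpha < 1$. The early rounds, where $\wh{\sigma}_t$ may still be poor, are controlled by $\wh{\sigma}_t \ge \kappa_t > 0$: any fixed finite prefix contributes $O(n^{-1/2})$. Truncating at a slowly growing time handles this uniformly. The martingale CLT (e.g.\ \citet{hall2014martingale}) then yields $\calN(0,1)$.

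\textbf{Regret term (main obstacle).} We need $n^{-1/2}\sum_t \wh{\sigma}_t^{-1}\Reg(\wh{\pi}_{t-1}) \to 0$ in probability. Since $\sigma_t \gtrsim t^{\alpha/2}$ and $\wh{\sigma}_t/\sigma_t \to 1$ a.s., we have $\wh{\sigma}_t^{-1} \le 2c_\sigma^{-1/2}t^{-\alpha/2}$ for all $t \ge T_0(\omega)$, with $T_0$ a.s.\ finite. The finitely many early terms are bounded by $\kappa_t^{-1}$ times a bounded regret and vanish after dividing by $\sqrt{n}$. For the remaining terms, take expectations of the nonnegative quantity $n^{-1/2}\sum_t t^{-\alpha/2}\Reg(\wh{\pi}_{t-1})$ and invoke the fast expected-regret result of Section~\ref{sec:regret} (Theorem~\ref{thm:fast-regret}): $\E\,\Reg(\wh{\pi}_{t-1}) = \epsilon_t\, t^{(\alpha-1)/2}$ with $\epsilon_t \to 0$. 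Then
\[
n^{-1/2}\sum_{t\le n} t^{-\alpha/2}\,\epsilon_t\, t^{(\alpha-1)/2} = n^{-1/2}\sum_{t\le n}\epsilon_t\, t^{-1/2} = o(1)
\]
by Ces\`aro, and Markov's inequality gives convergence in probability. Because the bound on $\wh{\sigma}_t^{-1}$ holds only on a random time set, we apply it via the event $\{T_0 \le m\}$, whose probability tends to one, together with nonnegativity, rather than by exchanging expectations directly.

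The genuine difficulty is therefore the $o(n^{(\alpha-1)/2})$ regret rate in expectation. The minimax $O(n^{(\alpha-1)/2})$ rate would leave the regret term of order one and destroy centering. Here we rely on Theorem~\ref{thm:fast-regret}, which uses Assumption~\ref{ass:disagreement} and the closedness of $\Pi$. The contrast with $q$ enters only through the variance lower bound, Assumption~\ref{ass:ipw_variance_scale}, and that bound is what makes the weights $\wh{\sigma}_t^{-1} \lesssim t^{-\alpha/2}$ exactly match the regret scaling for every $\alpha < 1$.
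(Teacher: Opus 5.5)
Your proposal is correct and follows essentially the same route as the paper. Both arguments use the same decomposition into a martingale term and a cumulative regret term. For the martingale term, both apply a martingale CLT, verified through almost-sure convergence of the conditional variance and a Lindeberg bound of order $n^{(\alpha-1)/2}$ after a random time. For the regret term, both combine the fast expected-regret rate of Theorem~\ref{thm:fast-regret} with a Ces\`aro step, Markov's inequality, and an event on which $\wh{\sigma}_t \geq \sigma_t/2$ eventually. The differences are cosmetic. You do the weighted Ces\`aro step by hand rather than citing Lemma~\ref{lem:expectation-cesaro}. You also bound $\wh{\sigma}_t^{-1} \lesssim t^{-\alpha/2}$ directly on $\{T_0 \le m\}$, whereas the paper first controls the sum weighted by the true $\sigma_t^{-1}$.
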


To build intuition for the above result, we provide a high-level proof sketch. We present the full proof in Appendix~\ref{app:clt_proof} --- the sketch is just aimed at capturing the main arguments.

\begin{proof}[Proof Sketch for Theorem~\ref{thm:clt}]
The key to our proof is the following decomposition, which splits the difference between our estimate and the true value of policy improvement into two terms:
\begin{align}
\label{eq:clt_decomp}
\frac{\sum_{t = 1}^n \wh{\sigma}_t^{-1}}{\sqrt{n}}\left\{\wh{\theta}_n - \theta^\ast\right\} &=  \underbrace{\frac{1}{\sqrt{n}}\sum_{t = 1}^n \wh{\sigma}_t^{-1}\left\{\wt{Y}_t\big(\wh{\pi}_{t - 1}, q\big) - \theta\big(\wh{\pi}_{t - 1}\big)\right\}}_{\text{Martingale CLT Term}} \\
&\qquad + \underbrace{\frac{1}{\sqrt{n}}\sum_{t = 1}^n \wh{\sigma}_t^{-1}\left\{\theta\big(\wh{\pi}_{t - 1}\big) - \theta^\ast\right\}}_{\text{Cumulative Regret Term}}.\nonumber
\end{align}

For any fixed sample size $n$, the first term is a martingale under the natural filtration whose terms roughly have $\frac{1}{n}$ predictable variance. Under a martingale central limit theorem due to \citet{brown1971martingale} (Theorem~\ref{thm:martingale-clt}), one can show asymptotic normality under our variance convergence assumptions.

Next, the second, cumulative regret term represents the bias accumulated from performing OPE not on an optimal policy, but rather on a slightly suboptimal policy learned using weighted EWM. We can make this connection to regret explicit by noting that
\[
\frac{1}{\sqrt{n}}\sum_{t = 1}^n \wh{\sigma}_t^{-1}\left\{\theta\big(\wh{\pi}_{t - 1}\big) - \theta^\ast\right\} = -\frac{1}{\sqrt{n}}\sum_{t = 1}^n \wh{\sigma}_t^{-1}\Reg\big(\wh{\pi}_{t - 1}
\big).
\]
In Section~\ref{sec:regret}, we show that $\E\left[\Reg\big(\wh{\pi}_{t - 1}\big)\right] = o(t^{(\alpha - 1)/2})$. Thus, by leveraging results on the convergence of Ces\`aro summations~\citep{bibaut2020sufficient} and noting that $\wh{\sigma}_t \asymp t^{\alpha/2}$, we see that the entire cumulative regret term satisfies
\[
\frac{1}{\sqrt{n}}\sum_{t = 1}^n \E\left[\wh{\sigma}_t^{-1}\Reg\big(\wh{\pi}_{t - 1}
\big)\right] = o(1),
\]
and hence converges to zero in probability. Putting these two pieces together furnishes the desired result.

\end{proof}

\subsection{Consistently Estimating the Variance}
\label{sec:estimator:variance}
The previous subsection established our main central limit theorem for our estimator $\wh{\theta}_n$. However, we have yet to show that there exists an estimator $\wh{\sigma}_t^2$ of the predictable variance $\sigma_t^2$ that achieves the requisite convergence rates. Our focus in this subsection is to show that a natural plug-in estimate converges at the desired rate under a combinatorial assumption on the class of logging policies at each point in time.

Our variance estimator and corresponding analysis are largely based on those of \citet{bibaut2021post}, who consider the task of predictable variance estimation under adaptively collected data. We note that there are three primary difficulties that prevent us from directly leveraging the aforementioned estimator in our setting. First, said estimator is designed for a \textit{fixed policy} (more broadly, function) that is being evaluated under contextual bandit data, not one that is continually updated according to a welfare maximization problem. Second, our assumptions, which are based on finite combinatorial dimension, are not comparable to the bracketing entropy assumptions made in \citet{bibaut2021post}. These combinatorial assumptions require us to leverage a substantially different chaining argument for showing variance convergence than the one presented in the aforementioned paper, with the VC-like assumptions greatly complicating the bounding of error in finite-depth chaining.
We discuss the particulars of the proof technique in the relevant appendices (namely, Appendices~\ref{app:variance} and \ref{app:martingale-bounds}).

Third, and most problematically, is that the variance estimate in \citet{bibaut2021post} requires that $\alpha \leq 1/3$ in order to guarantee almost sure convergence. This limitation is particularly undesirable, as any minimax regret optimal logging policy must have a logging exponent of at least $\alpha \geq 1/2$. In this section, we circumvent this restriction by observing that, for performing inference on the value of policy improvement, a significantly weaker notion of variance convergence is permissible.
In more detail, our analysis actually allows the error in a predictable variance estimate to \textit{grow} with the sample size, so long as this error vanishes to zero almost surely when divided by $\sigma_t^2 \asymp t^{\alpha}$. This weaker convergence requirement will ultimately allow us to show variance consistency under logging policies with aggressive exploitation (i.e., $\alpha \approx 1$).

We now describe the variance estimator we consider. Using the contrast functions $g_\pi(a \mid x) := \1\{\pi(x) = a\} - q(a \mid x)$ outlined earlier, we define the contrast under the (predictable) learned policy at time $t$ as $g_t(a \mid x) := g_{\wh{\pi}_{t - 1}}(a \mid x)$. In terms of these contrasts, we can express the predictable variance $\sigma_t^2$ as
\begin{equation}
\label{eq:pred-variance}
\sigma_t^2  = \E\left(\frac{g_t(A_t \mid X_t)^2 Y_t^2}{p_t(A_t \mid X_t)^2} \mid \calF_{t - 1}\right) - \E\left(\frac{g_t(A_t \mid X_t)Y_t}{p_t(A_t \mid X_t)} \mid \calF_{t - 1}\right)^2.
\end{equation}
Using the above decomposition, we propose estimating $\sigma_t^2$ by estimating the first and second terms separately and then taking a difference. In more detail, we define the estimators for the first moment and the second moment as
\begin{equation}
\label{eq:linear-term}
\wh{\Phi}_{1, t} := \frac{1}{t - 1}\sum_{s = 1}^{t - 1}\frac{g_t(A_s \mid X_s)}{p_s(A_s \mid X_s)}Y_s
\end{equation}
and
\begin{equation}
\label{eq:square-term}
\wh{\Phi}_{2, t} := \frac{1}{t - 1}\sum_{s = 1}^{t - 1}\frac{g_t(A_s \mid X_s)^2}{p_s(A_s \mid X_s)p_t(A_s \mid X_s)}Y_s^2.
\end{equation}

Then, we define our estimator for the overall predictable variance at sample size $t$ as
\begin{equation}
\label{eq:pred-var-est}
\wt{\sigma}_t^2 := \wh{\Phi}_{2,t} - \wh{\Phi}_{1, t}^2,
\end{equation}
where we can let $\wt{\sigma}_1^2$ be arbitrary.
We only use the first $t - 1$ samples to construct our estimate of $\sigma_t^2$ in order to ensure the estimate is itself predictable, i.e.\ $\calF_{t - 1}$-measurable. As noted above, this is exactly analogous to the variance estimator outlined in \citet{bibaut2021post}, modulo that we consider (i) contrasts in policy values and (ii) adaptively update the policy as more samples are gathered. Again, what will differ substantially is the form of our convergence guarantee and the structure of our analysis.

Under just our complexity assumption on the class $\Pi$, one can show the consistency of $\wh{\Phi}_{1, t}$ using arguments based on empirical process theory. However, because the estimate $\wh{\Phi}_{2, t}$ also depends on the (random) choice of logging policy at time $t$, we cannot hope for the estimate to converge unless we restrict the complexity of the set of valid logging policies. We leverage the following assumption, which restricts the \textit{pseudo-dimension} of the class in which the adaptive logging policy lies, which we denote by $d_P$. We define pseudo-dimension and its relationship to other notions of combinatorial dimension in Appendix~\ref{app:combinatorial-dimension}. This assumption should again be contrasted with the bracketing-type assumptions made in \citet{bibaut2021post}.

\begin{ass}[Logging Class Complexity]
\label{ass:logging}
For each $t \geq 1$, there is some bounded fixed class $\calP_t$ such that (i) $t^{-\alpha}p_t^{-1} \in \calP_t$ almost surely, (ii) $\sup_{t \geq 1}d_P(\calP_t) \leq \frakp$ for some $\frakp > 0$, and (iii) $\sup_{\substack{t \geq 1 \\ r \in \calP_t}} \|r\|_\infty \leq P$, for some constant $P > 0$.
\end{ass}

\begin{theorem}
\label{thm:variance-convergence}
Suppose Assumptions~\ref{ass:observations}, \ref{ass:disagreement}, \ref{ass:estimation}, \ref{ass:propensity}, \ref{ass:ipw_variance_scale}, and \ref{ass:logging} hold. Then, the following holds:
\begin{equation}
\label{eq:variance-rate}
\frac{\wt{\sigma}_t^2 - \sigma_t^2}{\sigma_t^2} \xrightarrow[t \rightarrow \infty]{a.s.} 0.
\end{equation}
\end{theorem}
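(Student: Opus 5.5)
Since Assumption~\ref{ass:ipw_variance_scale} gives $\sigma_t^2 \geq c_\sigma t^\alpha$, it suffices to show $t^{-\alpha}|\wt{\sigma}_t^2 - \sigma_t^2| \to 0$ almost surely. By Equation~\eqref{eq:pred-variance}, $\sigma_t^2 = \Phi_{2,t} - \Phi_{1,t}^2$, where
\[
\Phi_{1,t} = \theta(\wh{\pi}_{t-1})
\]
and
\[
\Phi_{2,t} = \E\left[\sum_a g_t(a\mid X)^2 \mu_{2}(a,X)/p_t(a\mid X) \mid \calF_{t-1}\right],
\]
with $\mu_2(a,x) := \E(Y(a)^2\mid X=x)$.

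The first moment is handled by uniformity over the class. Since $|\Phi_{1,t}|,|\wh{\Phi}_{1,t}|$ will be shown bounded, and in any case $t^{-\alpha}|\wh{\Phi}_{1,t}^2 - \Phi_{1,t}^2| \leq t^{-\alpha}|\wh{\Phi}_{1,t}-\Phi_{1,t}|\,(|\wh{\Phi}_{1,t}|+|\Phi_{1,t}|)$, we need $\sup_{\pi\in\Pi}|M_{t}(\pi)| \to 0$ a.s. Here
\[
M_t(\pi) := \frac{1}{t-1}\sum_{s<t}\left\{\frac{g_\pi(A_s\mid X_s)Y_s}{p_s(A_s\mid X_s)} - \theta(\pi)\right\}.
\]
For each fixed $\pi$ this is an average of martingale differences with conditional variance $O(s^\alpha)$ and increments bounded by $O(s^\alpha)$. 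Freedman's inequality then gives deviations of order $\sqrt{t^{\alpha}\log t/t} + t^{\alpha-1}\log t$, which vanishes because $\alpha<1$. To make this uniform over $\Pi$, I would use a chaining argument with the VC bound of Assumption~\ref{ass:disagreement} (the same one used for the regret results in Section~\ref{sec:regret}). Since $g_\pi - g_\rho$ is supported on the disagreement set, increments are controlled by the empirical disagreement. The resulting tail bound of the form $\exp(-c\,t^{1-\alpha}\epsilon^2)$ times polynomial covering numbers is summable in $t$, and Borel--Cantelli upgrades it to almost sure convergence.

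The second moment is the delicate term, because of the random $p_t$ and the scaling. Writing $r_t := t^{-\alpha}p_t^{-1} \in \calP_t$, we get
\[
t^{-\alpha}\wh{\Phi}_{2,t} = \frac{1}{t-1}\sum_{s<t} \frac{g_\pi(A_s\mid X_s)^2 r(A_s, X_s)Y_s^2}{p_s(A_s\mid X_s)}
\]
evaluated at $(\pi,r)=(\wh{\pi}_{t-1}, r_t)$. Its conditional mean at step $s$, for fixed $(\pi,r)$, is $\E[\sum_a g_\pi^2 r\mu_2]$, which equals $t^{-\alpha}\Phi_{2,t}$ at the plugged-in pair. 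The key point is that $p_t$ enters only through $r\in\calP_t$, which is a fixed (non-random) class. So I would bound
\[
\sup_{\pi\in\Pi, r\in\calP_t}\left|\frac{1}{t-1}\sum_{s<t}\xi_s(\pi,r)\right|,
\]
where $\xi_s$ are martingale differences bounded by $O(P s^\alpha)$ with conditional variance $O(P^2 s^\alpha)$. This is again a Freedman bound plus chaining. For covering, the product class $\{g_\pi^2 r\}$ has covering numbers controlled by the disagreement VC dimension $\frakv$ together with the pseudo-dimension $\frakp$ (Assumption~\ref{ass:logging}), via standard product and Lipschitz covering bounds. The bound $\|r\|_\infty \leq P$ keeps the envelope uniform. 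Summability in $t$ and Borel--Cantelli then give almost sure convergence to zero, and combining with the first step yields \eqref{eq:variance-rate}.

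I expect the main obstacle to be the chaining under the adaptive (martingale) sampling. Covering numbers must be taken with respect to a random, data-dependent norm that weights by $1/p_s$, and empirical $L^2$ covers do not directly control the predictable-variance Freedman terms. My first attempt would be a finite-depth chaining to scale $\epsilon\approx t^{-1}$. At each level I would apply Freedman with a union bound over a sequential cover, or equivalently a VC cover on the $t$ observed contexts, which is legitimate because the functions depend only on $(X_s, A_s)$. The final link would be bounded crudely by the $\ell_\infty$ envelope $O(t^\alpha\epsilon)$, which is $o(t^{\alpha})$ and thus negligible after dividing by $t^\alpha$.
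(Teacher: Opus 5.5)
Your outline matches the paper's. You divide by $\sigma_t^2 \geq c_\sigma t^{\alpha}$ and treat $\wh{\Phi}_{1,t}$ and $t^{-\alpha}\wh{\Phi}_{2,t}$ as evaluations of averaged IPW processes indexed by $\{g_\pi\}$ and by the fixed class $\{g_\pi^2 r : \pi \in \Pi, r \in \calP_t\}$, so the random $p_t$ enters only through $r_t \in \calP_t$. You then prove uniform tail bounds summable in $t$ and conclude with Borel--Cantelli.

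The gap is in the step that carries all the weight: the uniform bound for these martingale-indexed processes. The resolution you sketch for the obstacle you correctly flag does not work.
\begin{itemize}
\item If the cover is built on the realized pairs $(X_s, A_s)$, or under a norm weighted by the realized $1/p_s$, its elements are chosen after seeing $A_1, \dots, A_{t-1}$ (and, through $p_s$, the whole history). They are therefore not predictable, and you cannot union-bound Freedman's inequality over them. That the functions are evaluated only at $(X_s, A_s)$ does not help.
\item A genuinely sequential cover would be legitimate, but it is not equivalent to a VC cover. Thresholds, or trees with continuous splits, have infinite Littlestone dimension, so their worst-case sequential covering numbers grow exponentially in $t$ and destroy summability.
\item A cover of the contexts only, i.e.\ of $\{(a, X_s) : a \in \calA, s < t\}$, also fails as you have set things up. Your increments $\xi_s(\pi, r)$ are centered at the population mean $\E_X[\sum_a g_\pi^2 r \mu_2]$, so they are \emph{not} martingale differences once you condition on the covariates.
\end{itemize}
Your crude $\ell_\infty$ last link forces you into such data-dependent covers anyway, since these classes are not totally bounded in sup-norm.

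The missing idea is the decomposition behind Proposition~\ref{prop:martingale} and Theorem~\ref{thm:high-prob-chain}. Enlarge the filtration to $\calH_s = \calF_s \vee \X$ with $\X = \sigma(X_1, \dots, X_n)$, and split the process as $M_{\ell,n} + E_{\ell,n}$, where $M_{\ell,n}$ recenters each increment at $\sum_a g(a \mid X_s)\mu_\ell(a, X_s)$.
\begin{itemize}
\item By Assumption~\ref{ass:observations}, $M_{\ell,n}$ is an $(\calH_s)$-martingale whose $\calH_{s-1}$-predictable variance is $\lesssim s^{\alpha}\sum_a g(a \mid X_s)^2$.
\item Hence the $\X$-measurable norm putting the \emph{deterministic} weight $s^{\alpha}$ (a proxy for $1/p_s$, not its realized value) on every pair $(a, X_s)$, $a \in \calA$, controls the Freedman terms, and its covers are fixed given $\X$.
\item The truncation remainder follows from Cauchy--Schwarz plus a Freedman bound giving $\sum_s 1/p_s(A_s \mid X_s) \lesssim n$ with probability at least $1 - e^{-cn^{1-\alpha}}$.
\item $E_{\ell,n}$ is an ordinary i.i.d.\ empirical process, handled by symmetrization and McDiarmid.
\end{itemize}

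Alternatively, you could keep deterministic covers for both moments, as your first-moment paragraph suggests. This works because the $\calF_{s-1}$-predictable variance is also $\lesssim s^{\alpha}\E_X\sum_a f(a, X)^2$. But then the remainder, a supremum of $t^{-1}\sum_s |h(A_s, X_s)|/p_s(A_s \mid X_s)$ over a small ball, needs a high-probability, variance-sensitive (Bousquet/Talagrand-type) bound for a localized i.i.d.\ empirical process. Lemma~\ref{lem:vc-disagreement} controls only an expectation, which Markov's inequality cannot make summable. A bounded-differences bound leaves an error of order $t^{\alpha - 1/2}\sqrt{\log t}$, which does not vanish when $\alpha \geq 1/2$.
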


We prove Theorem~\ref{thm:variance-convergence} in Appendix~\ref{app:variance}. Our argument relies on developing martingale concentration inequalities for infinite collections of martingales indexed by classes of finite pseudo-dimension. We detail and prove these inequalities in the first part of Appendix~\ref{app:martingale-bounds}. We then use high-probability bounds alongside the Borel-Cantelli lemma to obtain the scaled, almost sure convergence of $\wh{\Phi}_{2, t}$ and $\wh{\Phi}_{1, t}$.

We note that Theorem~\ref{thm:clt} actually requires the variance estimates to be bounded away from zero by some decreasing sequence $(\kappa_t)$. This can be accomplished by simply performing lower truncation on our variance estimate. In more detail, we can simply set our variance estimate to be $\wh{\sigma}_t := \sqrt{\wt{\sigma}_t^2 \lor \kappa_t^2}$. Since $\sigma_t^2 \geq c_\sigma t^{\alpha}$ for sufficiently large $t$, the results of Theorem~\ref{thm:variance-convergence} imply truncation will only ever be applied to a finite number of sample sizes.

\subsection{A Batched Estimator}
The estimator described in Equation~\eqref{eq:estimator} involves repeatedly computing an empirical welfare maximizing policy on all prefixes of data, thus resulting in a number of fits proportional to the sample size.
In general, exactly solving the empirical risk minimization is a computationally expensive task, often requiring the use of enumeration or mixed integer programming (MIP) based methods. While efficient approximations for EWM policy learning exist (e.g., greedy splitting for policy trees), such  methods are not guaranteed to converge to a population maximizer in the large sample limit.

Due to the costliness of exactly solving empirical welfare maximization, we propose a more efficient, batch-based version of our estimator in this section. The modified algorithm works by splitting time into geometrically spaced epochs and refitting a policy using historical data at the beginning of each epoch. The possibility of such batched estimators was discussed in Section 6.1 of~\citet{luedtke2016statistical} in the case of i.i.d.\ data, but was not formally worked out. We formalize these ideas in Algorithm~\ref{alg:online-epoch} and Corollary~\ref{cor:epoch_estimator} below, with a more detailed discussion in the remark following the corollary. We note that our algorithm/estimator also allows (but does not require) a burn-in period of $b_n$ samples for training the first policy, which is intended to prevent the use of poorly trained policies that may amount to random guessing in practice. The proof of the following result is exactly analogous to that of Theorem~\ref{thm:clt}, and so we do not include it.

\begin{algorithm}[t]
\caption{Batched Estimator for $\theta^\ast$}
\label{alg:online-epoch}
\begin{algorithmic}[1]
\Require Samples $Z_1, \dots, Z_n$, confidence parameter $\delta \in (0, 1)$, a policy class $\Pi$, burn-in period $b_n$, indices $b_n = \tau_1 < \tau_2 < \cdots < \tau_{H + 1} = n$, truncation levels $(\kappa_t)_{t \geq 1}$,  baseline $q$.
\Ensure Estimate $\wh{\theta}_n$ of $\theta^\ast := V^\ast - V(q)$, $(1- \delta)$-confidence interval $\calC_{1 -\delta}$ for $\theta^\ast$.
\For{$h \in [H]$}
    \State Compute $\wh{\pi}_{h} := \arg\max_{\pi \in \Pi} \sum_{s = 1}^{\tau_h} s^{-\alpha}\frac{g_\pi(A_s \mid X_s)}{p_s(A_s \mid X_s)}Y_s$.
    \State For $t \in \{\tau_h + 1, \dots, \tau_{h + 1}\}$, define  $\wt{\sigma}_{t}^2$ by
    \[
    \wt{\sigma}_t^2 := \frac{1}{t - 1}\sum_{s = 1}^{t - 1}\frac{g_{\wh{\pi}_h}(A_s \mid X_s)^2}{p_s(A_s \mid X_s)p_t(A_s \mid X_s)}Y_s^2 - \left(\frac{1}{t - 1}\sum_{s = 1}^{t - 1}\frac{g_{\wh{\pi}_h}(A_s \mid X_s)}{p_s(A_s \mid X_s)}Y_s\right)^2
    \]
    \State Set  $\wh{\sigma}_t := \sqrt{\wt{\sigma}_t^2 \lor \kappa_t^2}$.
\EndFor
\State Compute estimate
\[
\wh{\theta}_n := \frac{\sum_{h = 1}^{H}\sum_{t = \tau_h + 1}^{\tau_{h + 1}}\wh{\sigma}_{t}^{-1}\frac{g_{\wh{\pi}_h}(A_t \mid X_t)}{p_t(A_t \mid X_t)}Y_t}{\sum_{h = 1}^{H}\sum_{t = \tau_h + 1}^{\tau_{h + 1}}\wh{\sigma}_t^{-1}}.
\]
\State Return the estimate $\wh{\theta}_n$ and $(1 - \delta)$ confidence interval
\[
\calC_{1 - \delta} := \left[\wh{\theta}_n - \frac{\sqrt{n - b_n}}{\sum_{t = b_n + 1}^n \wh{\sigma}_t^{-1}}z_{1 - \delta/2}, \wh{\theta}_n + \frac{\sqrt{n - b_n}}{\sum_{t = b_n + 1}^n \wh{\sigma}_t^{-1}}z_{1 - \delta/2}\right]
\]
\end{algorithmic}
\end{algorithm}

\begin{corollary}
\label{cor:epoch_estimator}
Suppose the assumptions of Theorem~\ref{thm:clt} hold and let $\wh{\theta}_n$ be as defined in Algorithm~\ref{alg:online-epoch}. Further, suppose there are constants $c \in (0, 1)$ and $C > 0$ such that $\tfrac{\tau_{h + 1}}{\max\{\tau_h, 1\}} \leq C$ for each $h$ and that $n - b_n \geq c n$. Then, we have
\[
\frac{\sum_{t = b_n + 1}^{n}\wh{\sigma}_t^{-1}}{\sqrt{n - b_n}}\left(\wh{\theta}_n - \theta^\ast\right) \Rightarrow \calN(0, 1),
\]
and hence $\calC_{1 - \delta}$ is an asymptotically-valid $1 - \delta$ confidence interval for $\theta^\ast$.
\end{corollary}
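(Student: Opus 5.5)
We follow the proof of Theorem~\ref{thm:clt} almost verbatim, with one change: the index set is $t \in \{b_n+1,\dots,n\}$ and the evaluated policy is piecewise constant over epochs. For $t \in \{\tau_h+1,\dots,\tau_{h+1}\}$ write $\wt{\pi}_{t-1} := \wh{\pi}_h$. This policy is $\calF_{\tau_h}\subseteq\calF_{t-1}$-measurable, so it is predictable. The normalized error then decomposes exactly as in \eqref{eq:clt_decomp}:
\[
\frac{1}{\sqrt{n-b_n}}\sum_{t=b_n+1}^n \wh{\sigma}_t^{-1}\left\{\wt{Y}_t(\wt{\pi}_{t-1},q)-\theta(\wt{\pi}_{t-1})\right\} \;-\; \frac{1}{\sqrt{n-b_n}}\sum_{t=b_n+1}^n \wh{\sigma}_t^{-1}\Reg(\wt{\pi}_{t-1}).
\]
The first term is a martingale CLT term and the second is a cumulative regret term.

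\textbf{Martingale term.} Each summand is a martingale difference with respect to $\calF_{t-1}$, because $\wh{\sigma}_t$ and $\wt{\pi}_{t-1}$ are predictable. Its conditional variance is $\sigma_t^2/\wh{\sigma}_t^2$, where $\sigma_t^2$ is now the predictable variance for the contrast $g_{\wt{\pi}_{t-1}}$. Assumption~\ref{ass:ipw_variance_scale} is uniform over $\pi\in\Pi$, so $\sigma_t^2 \ge c_\sigma t^\alpha$ still holds. The assumed a.s.\ convergence $\wh{\sigma}_t/\sigma_t\to 1$ then gives conditional variance $\to 1$ almost surely. Averaging over $n-b_n\ge cn$ terms, which is a Ces\`aro average over a growing window, yields summed predictable variance $\to 1$ in probability. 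For the conditional Lindeberg condition, each summand is bounded by $O(t^{\alpha}/\wh{\sigma}_t)=O(t^{\alpha/2})$ eventually, using Assumption~\ref{ass:propensity} and $Y\in[0,1]$. Dividing by $\sqrt{n-b_n}\gtrsim\sqrt n$ gives $O(n^{(\alpha-1)/2})\to 0$ uniformly, since $\alpha<1$ and $t\le n$. The finitely many $t$ where truncation at $\kappa_t$ is active contribute negligibly. Because $b_n\ge$ some index eventually or else is handled by that finite-exception argument, we can invoke Brown's martingale CLT (Theorem~\ref{thm:martingale-clt}) for the triangular array.

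\textbf{Regret term.} This is the step that actually uses the epoch condition, and it is the main obstacle. By Theorem~\ref{thm:fast-regret}, applied to the weighted EWM fit on $\tau_h$ samples (the objective differs from \eqref{eq:ewm} only by the $\pi$-independent term from $q$), we have $\E[\Reg(\wh{\pi}_h)] = o(\tau_h^{(\alpha-1)/2})$. For $t\in(\tau_h,\tau_{h+1}]$ we have $t\le C\max\{\tau_h,1\}$, so $\tau_h^{(\alpha-1)/2}\le C^{(1-\alpha)/2}t^{(\alpha-1)/2}$ and hence $\E[\Reg(\wt{\pi}_{t-1})] = o(t^{(\alpha-1)/2})$. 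In the first epoch $\tau_1=b_n$ may be zero, but the ratio condition with $\max\{\tau_h,1\}$ keeps that epoch at bounded length, so it contributes $O(1/\sqrt n)$. To control $\wh{\sigma}_t^{-1}$ in expectation, split on the event that $\wh{\sigma}_t\ge \tfrac12\sqrt{c_\sigma}t^{\alpha/2}$ for all $t\ge T_0$. Off this event, which has small probability for large $T_0$ by a.s.\ convergence, the regret is bounded by $2$ and $\wh\sigma_t^{-1}\le\kappa_t^{-1}$. Since we only need convergence in probability, we restrict to the good event. On it, the term is bounded by $\frac{C'}{\sqrt{n}}\sum_{t\le n} t^{-\alpha/2}\Reg(\wt{\pi}_{t-1})$ plus a finite-$T_0$ remainder that vanishes after dividing by $\sqrt n$. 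This bound has expectation $\frac{1}{\sqrt n}\sum_t o(t^{-1/2}) = o(1)$ by the Ces\`aro/Toeplitz lemma of \citet{bibaut2020sufficient}. Markov's inequality then gives convergence to $0$ in probability.

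\textbf{Conclusion.} Slutsky's lemma combines the two terms and gives the stated normal limit. Coverage of $\calC_{1-\delta}$ follows immediately, as in Theorem~\ref{thm:clt}.
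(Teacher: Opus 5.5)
Your proposal is correct and matches the argument the paper intends. The paper omits this proof as ``exactly analogous'' to that of Theorem~\ref{thm:clt}, and your martingale-CLT/cumulative-regret split over $t\in\{b_n+1,\dots,n\}$ supplies the missing details: you use $\tau_{h+1}\le C\max\{\tau_h,1\}$ to turn $\E[\Reg(\wh{\pi}_h)]=o(\tau_h^{(\alpha-1)/2})$ into an $o(t^{(\alpha-1)/2})$ bound on every evaluated round, and $n-b_n\ge cn$ to control the windowed Ces\`aro averages. One point to state explicitly is that $\wt{\pi}_{t-1}$, $\sigma_t$ and $\wh{\sigma}_t$ depend on $n$ through the epoch schedule, so both the almost-sure ratio convergence and the regret bound must hold uniformly in $n$; they do, via the uniform-over-$\Pi$ convergence in the proof of Theorem~\ref{thm:variance-convergence} (as the paper remarks) and via $\sup_{s\ge t/C}s^{(1-\alpha)/2}\E[\Reg(\wh{\pi}_s)]\to 0$, where $\wh{\pi}_s$ denotes the weighted EWM fit on $s$ samples.
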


We note that the consistency of the variance estimate described in Algorithm~\ref{alg:online-epoch} follows from the proof of Theorem~\ref{thm:variance-convergence}, as said proof establishes uniform convergence over a relevant class of functions.

\begin{rmk}
Corollary~\ref{cor:epoch_estimator} shows that our estimator still provides valid coverage if we refit a welfare-maximizing policy only on the order of $\log(n)$ times, where $n$ again denotes the sample size. One may wonder then if we could get away with a constant number of refits. For concreteness, we will speculate about the possibility of simply doing \textit{sample splitting}, where $b_n = o(n)$ samples are used to learn a policy, and the remaining $n - b_n$ are used for evaluation (obviously, we assume $b_n \rightarrow \infty$). Straightforward analysis yields that we would need regret to vanish at a rate of $\Reg\big(\wh{\pi}_{b_n}\big) = o_\P\left(n^{(\alpha - 1)/2}\right)$, which under the assumption that $b_n = o(n)$ is strictly faster than the rates that we will show in Section~\ref{sec:regret} below. For instance, if $b_n = n/\log(n)$, one would need a regret rate of
\[
\Reg\big(\wh{\pi}_{b_n}\big) = o_\P\left((b_n\log(b_n))^{(\alpha - 1)/2}\right).
\]
Such faster rates may indeed be possible under margin conditions on the conditional average treatment effect or blip effects, and \citet{girard2025online} discuss such high-probability rates in online settings. However, the margin condition discussed in the aforementioned paper implicitly requires a unique optimal policy, so deriving such faster rates under non-unique optimal policies would be an interesting direction for future work.
\end{rmk}

\begin{rmk}
We now conclude this subsection by briefly discussing the heuristic batching approach outlined in Section 6.1 of \citet{luedtke2016statistical}. In this subsection, the authors note that one can first estimate the nuisances (e.g., outcome regression and propensity in the unrestricted setting) on the first $b_n$ samples and then break the remaining $n - b_n$ into a constant number of chunks/batches. However, this approach actually requires that the nuisances estimated on the first batch of data converge at \textit{faster} than standard estimation rates, with the degree faster depending on the size of the burn-in period.

To illustrate this, we consider a concrete example in the binary treatment setting under i.i.d.\ data and unrestricted policy classes. Define the conditional average treatment effect as $\tau^\ast(x) := \mu_0(1, x) - \mu_0(0, x)$ and suppose that $\P(0 < |\tau^\ast(X)| \leq t) \lesssim t^\beta$ for some $\beta > 0$ and all $t \leq t_0$.  If one refits nuisances at each step (or uses the geometric batching that we outline above), the estimate of the outcome regression would need to satisfy $\|\wh{\mu}_n - \mu_0\|_{L^2} = o_\P\left(n^{-a_\beta}\right)$, where $a_\beta := \frac{2 + \beta}{4(1 + \beta)}$. If, however, one leverages just a constant number of fits and takes $b_n = n/\log(n)$ (as in the example above), the $L^2$ error in regression estimation would need to scale as $o_\P\left((b_n\log(b_n))^{-a_\beta}\right)$. We also note that setting $b_n = n/\log(n)$ actually sacrifices a large fraction of the samples even at moderate to large sample sizes, which empirically is likely to result in worse performance. If one instead wanted to take $b_n = \sqrt{n}$, a significantly faster nuisance estimation rate would be needed. In particular, one would need $\|\wh{\mu}_{b_n} - \mu_0\|_{L^2} = o_\P\left(b_n^{-\frac{2 + \beta}{2(1 + \beta)}}\right)$.
This is thus one drawback of using a constant number of fits as outlined in \citet{luedtke2016statistical}.
\end{rmk}

\section{Faster Regret for Weighted Empirical Welfare Maximization}
\label{sec:regret}

To ensure the asymptotic normality of our estimator, we need to ensure that the ``bias'' accumulated from evaluating a learned policy is negligible relative to the width of the confidence intervals. As seen in our proof sketch, to show this accumulated bias is negligible, it suffices to show that the expected regret incurred through weighted EWM vanishes at rates faster than $n^{(\alpha - 1)/2}$, i.e.,
\[
\underbrace{\E\left[\Reg\big(\wh{\pi}_{n}\big)\right] = o\big(n^{(\alpha - 1)/2}\big)}_{\text{Fast Expected Regret Rates}} \Longrightarrow \underbrace{\frac{1}{\sqrt{n}}\sum_{t = 1}^n \wh{\sigma}_t^{-1}\left\{\theta\big(\wh{\pi}_{t - 1}\big) - \theta^\ast\right\} = o_\P(1)}_{\text{Negligible Accumulated Bias}}.
\]

At first glance, obtaining regret rates such as the above may seem impossible. In their work, \citet{zhan2024policy} show that the weighted empirical welfare maximization procedure outlined in Equation~\eqref{eq:ewm} yields expected regret rates of $O\left(n^{(\alpha - 1)/2}\right)$, which is on the same order as the width of the confidence intervals outlined in Theorem~\ref{thm:clt}. Further, they show a matching lower bound --- the minimax expected regret of \textit{any learning algorithm} cannot decay faster than $\Omega\left(n^{(\alpha - 1)/2}\right)$. However, there is an important and subtle difference between our setting and that considered by \citet{zhan2024policy}. The aforementioned authors are considering a \textit{distribution uniform, finite sample} setting, obtaining upper bounds with known constants that hold regardless of the choice of data generating distribution and sample size. We instead aim to obtain convergence guarantees that hold \textit{pointwise} for the specific data generating distribution being considered, and we want \textit{asymptotic convergence guarantees}. In particular, the exact constants in our setting can actually depend on the distribution, and sample size at which regret starts becoming negligible can also be different for each distribution.

We are not the first to realize that such faster regret rates are possible. \citet{luedtke2020performance} study exactly this phenomenon in the setting of i.i.d.\ data. Under mild distributional assumptions, they show that the regret incurred through standard EWM actually vanishes as $o_\P(n^{-1/2})$ in a distributionally pointwise sense. This is notably (infinitesimally) faster than the $\Theta(n^{-1/2})$ minimax rates known in this setting~\citep{athey2021policy, zhan2024policy}. The authors also discuss more general settings under which such fast regret rates are possible, noting that such faster rates are possible under the assumption that the empirical process associated with the accumulated data and indexed by the policy class approaches a tight Gaussian limit.

Our contribution in this section is that we show fast expected regret rates hold under adaptively collected data with decaying propensities. Unlike the arguments made in \citet{luedtke2020performance}, our approach for showing convergence is constructive, relying on finite sample expectation and high-probability bounds for collections of martingales indexed by VC-classes alongside a localization argument. In particular, our results do not rely on the assumption that the empirical process converges to a tight Gaussian limit.\footnote{The verification of such convergence is non-obvious in our setting of adaptively collected data with decaying propensities.}
Further, our guarantees are actually stronger, as the aforementioned work only demonstrates ``in probability'' regret convergence, which as noted above is insufficient for the convergence of our estimator.
Showing convergence of the expectation requires arguing the uniform integrability of the \textit{scaled regret process} $\big(n^{(1 - \alpha)/2}\Reg(\wh{\pi}_{n - 1})\big)_{n \geq 1}$. This, in turn, requires intricate, high-probability chaining-based bounds for collections of martingales. We detail these in Appendix~\ref{app:martingale-bounds}.

We now state the main contribution of this section, which is our theorem on fast expected regret convergence for weighted empirical welfare maximization.

\begin{theorem}
\label{thm:fast-regret}
Suppose $(Z_n)_{n \geq 1}$ are observations satisfying Assumptions~\ref{ass:observations}, \ref{ass:propensity}, and the first part of Assumption~\ref{ass:estimation}. Further, assume that $\Pi$ is a closed policy class satisfying Assumption~\ref{ass:disagreement}. Let $\wh{\pi}_n$ be as defined in Equation~\eqref{eq:ewm}. Then
\[
\E\left[\Reg\big(\wh{\pi}_n\big)\right] = o\left(n^{\frac{\alpha - 1}{2}}\right).
\]
\end{theorem}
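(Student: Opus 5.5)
Let $r_n := n^{(1-\alpha)/2}$ and $W_n := \sum_{s\le n} w_s \asymp n^{1-\alpha}$. I would study the normalized weighted empirical process
\[
\mathbb{G}_n(\pi) := \frac{1}{\sqrt{\sum_{s\le n} w_s}}\sum_{s=1}^n w_s\big\{\wt{Y}_s(\pi) - V(\pi)\big\}.
\]
Each increment is a martingale difference with conditional variance $\lesssim w_s^2 s^{\alpha} = s^{-\alpha}$, so $\mathbb{G}_n$ is $O(1)$ on $\Pi$. Fix $\pi^\ast\in\Pi^\ast$. The EWM optimality of $\wh{\pi}_n$ gives the basic inequality
\[
W_n\,\Reg(\wh{\pi}_n) \le \sum_s w_s\{\wt{Y}_s(\wh{\pi}_n)-\wt{Y}_s(\pi^\ast)\} - W_n\{V(\wh{\pi}_n)-V(\pi^\ast)\},
\]
so $r_n\Reg(\wh{\pi}_n) \lesssim \sup_{\pi}\big[\mathbb{G}_n(\pi)-\mathbb{G}_n(\pi^\ast_\pi)\big]$ up to constants. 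Here $\pi^\ast_\pi$ may be any optimal policy, since the inequality holds for every element of $\Pi^\ast$. The key is to localize: the supremum only needs to range over $\pi$ with small regret, and the increment is measured against the optimal policy \emph{closest} to $\pi$ in the disagreement metric $d$.

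\textbf{Step 1 (uniform integrability).} Using the chaining-based maximal inequalities for martingales indexed by a VC class (the finite-Natarajan-dimension arguments of \citet{zhan2024policy}, in the form developed in Appendix~\ref{app:martingale-bounds}), I would prove a sub-exponential tail $\P(r_n\Reg(\wh{\pi}_n)>u)\le Ce^{-cu}$ uniformly in $n$. A Freedman-type bound with a generic chaining over $d$ should work, because the increments are bounded by $s^{\alpha}w_s=1$ and have conditional variance controlled by $s^{-\alpha}d(\pi,\rho)^2$. This reduces the claim to convergence in probability, $r_n\Reg(\wh{\pi}_n)\to0$.

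\textbf{Step 2 (consistency).} Define $\Pi_\epsilon := \{\pi: \Reg(\pi)\le\epsilon\}$. Since $\Reg(\wh\pi_n)=O_\P(1/r_n)\to0$, $\wh\pi_n\in\Pi_\epsilon$ with high probability. Closedness of $\Pi$ together with total boundedness under $d$ (from the VC assumption) makes $\Pi$ compact. Because $V$ is $d$-continuous, this yields $\sup_{\pi\in\Pi_\epsilon} d(\pi,\Pi^\ast)=:\delta(\epsilon)\to0$ as $\epsilon\to0$.

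\textbf{Step 3 (localized modulus).} On the event $\wh\pi_n\in\Pi_\epsilon$, bound $r_n\Reg(\wh\pi_n)$ by
\[
\sup\{\mathbb{G}_n(\pi)-\mathbb{G}_n(\rho):\ d(\pi,\rho)\le\delta(\epsilon),\ \rho\in\Pi^\ast\}.
\]
The same chaining bound gives expected modulus $\lesssim \phi(\delta(\epsilon))$ uniformly in $n$, where $\phi(\delta)=\delta\sqrt{\frakv\log(1/\delta)}+(\text{lower-order Bernstein term})$. Taking $n\to\infty$ and then $\epsilon\to0$ shows $r_n\Reg(\wh\pi_n)\to0$ in probability. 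Uniform integrability then upgrades this to convergence in expectation.

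The main obstacle is Step 3, namely getting an $n$-uniform modulus of continuity for the martingale process under decaying propensities. The increments can be as large as order $1$ after weighting, while their variance is only $s^{-\alpha}d^2$. The Bernstein (range) term in the chaining must therefore vanish relative to $\sqrt{W_n}$. I would handle this by truncating the chain at depth where $d\approx W_n^{-1/2}$, and bounding the remainder crudely by $n^{-(1-\alpha)/2}\cdot\log$ factors, which tends to zero.
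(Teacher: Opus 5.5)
Your architecture matches the paper's: the basic inequality $W_n\Reg(\wh{\pi}_n)\le M_n(\wh{\pi}_n,\pi^\ast)$, a localized chaining bound under the disagreement metric $d$ (Theorem~\ref{thm:local-sup-bound}), compactness of $\Pi$ to get $d(\wh{\pi}_n,\Pi^\ast)\to 0$ in probability (your Step 2 is the contrapositive of Lemma~\ref{lem:distance}), and uniform integrability from $n$-uniform tail bounds plus Vitali. Your ``fix $\epsilon$, let $n\to\infty$, then $\epsilon\to 0$'' argument is a valid and slightly cleaner replacement for the paper's slowly vanishing radius $\delta_n$ from Lemma~\ref{lem:conv-prob-decrease}. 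It works because, for fixed $\delta>0$, the Bernstein term $n^{(\alpha-1)/2}\frakv\log(A/\delta)\log(n)^2$ already vanishes.

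\textbf{The gap.} The problem is in the step you call the main obstacle. Truncating the chain at $d\approx W_n^{-1/2}\asymp n^{-(1-\alpha)/2}$ and bounding the remainder crudely does not give $n^{-(1-\alpha)/2}$ times logarithms. After weighting, each disagreement point contributes $w_t/p_t(A_t\mid X_t)\le 1/\lb{c}$, which is an order-one amount. So the crude bound on the unnormalized remainder is a constant times $\sup_{d(\pi,\pi')\le\delta_J}\sum_{t\le n}\1\{\pi(X_t)\neq\pi'(X_t)\}$, plus $W_n\delta_J^2$. That disagreement count is of order $n\delta_J^2$ in expectation (and no better in general), which at your depth is $n^{\alpha}$. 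Dividing by $\sqrt{W_n}$ gives $n^{(3\alpha-1)/2}$. This exceeds your claimed rate by a factor $n^{\alpha}$ and diverges for $\alpha\ge 1/3$. The remainder is driven by the raw sample count $n$, not by the effective sample size $W_n$.

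\textbf{The repair.} Truncate where $\delta_J\le n^{-1/2}$, as the paper does. Then Lemma~\ref{lem:vc-disagreement} bounds the expected disagreement count by $O(1+\sqrt{\frakv\log n}+\frakv\log n)$, so the normalized remainder is $O(n^{(\alpha-1)/2}\frakv\log n)$. There are still only $O(\log n)$ levels, so the Bernstein contributions sum to $O(\frakv\log(A/\delta)\log(n)^2)$, which division by $\sqrt{W_n}$ still kills. Keeping your depth would instead require a sharper, non-crude maximal inequality exploiting that $\1\{A_t=a\}/p_t(a\mid X_t)$ has conditional mean one uniformly over the class, which you have not supplied.

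The same correction is needed in Step 1. There you also need a high-probability bound on the disagreement count, not merely one in expectation, to control the remainder inside a tail bound. Such a bound is standard for VC classes. With it, your $n$-uniform sub-exponential tail does hold, since the Bernstein part of the chain scales like $n^{(\alpha-1)/2}\log(n)\,x$. The paper gets its tail bound (Theorem~\ref{thm:high-prob-chain}) by a different route: it splits the process into a martingale conditional on all covariates and an i.i.d.\ empirical process, and chains under an IPW-weighted empirical metric.
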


We provide a full proof of the above theorem in Appendix~\ref{app:fast-regret}. To build intuition, we provide a heuristic proof sketch that shows $\Reg\big(\wh{\pi}_n\big) = o_\P(n^{(\alpha - 1)/2})$, and leave the proof of uniform integrability fully to the appendix.

\begin{proof}[Proof Sketch of Theorem~\ref{thm:fast-regret}]

 In this sketch, we consider the ``disagreement metric'' $d(\pi, \rho) := \P\left(\pi(X) \neq \rho(X)\right)^{1/2}$, under which policies are close if they agree with high probability. We also define $w_t := t^{-\alpha}$ and $S_{n, -\alpha} := \sum_{t = 1}^n w_t$ and the process
 \[
 M_n(\pi, \rho) := \sum_{t = 1}^n w_t \left\{\wt{Y}_t(\pi) - \wt{Y}_t(\rho) - \left(V(\pi) - V(\rho)\right)\right\},
 \]
 which is a martingale for any fixed $\pi, \rho \in \Pi$. We leverage two key ingredients in our proof:
 \begin{itemize}
     \item \textit{A Basic Inequality.} We can bound the regret as
     \[
     \Reg\big(\wh{\pi}_n\big) \leq \frac{1}{S_{n, -\alpha}}\inf_{\pi^\ast \in \Pi^\ast}M_n\left(\wh{\pi}_n, \pi^\ast\right) \leq \frac{1}{S_{n, -\alpha}}\sup_{d(\pi,\rho) \leq \delta}M_n(\pi, \rho),
     \]
     where the final inequality holds on the event $\Big\{d\big(\wh{\pi}_n, \Pi^\ast\big) \leq \delta\Big\}$.
     \item \textit{A Localized Bound.} The expectation of the above localized empirical process is well controlled:
     \begin{equation*}
    \E\left[\frac{1}{S_{n, -\alpha}}\sup_{d(\pi, \rho) \leq \delta}\left|M_n(\pi, \rho)\right|\right] \lesssim  \delta n^{\frac{\alpha - 1}{2}}\sqrt{\frakv\log\left(\frac{A}{\delta}\right)}  + n^{\alpha - 1}\frakv\log\left(\frac{A}{\delta}\right)\log(n)^2,
    \end{equation*}
    where again $\frakv$ is the VC-dimension of the difference class and $A \geq e$ is some constant that may depend on $|\calA|$.
 \end{itemize}

Using the above ingredients, we can prove the target convergence in probability rate in three steps. First, noting that $\Big\{d\big(\wh{\pi}_n, \Pi^\ast\big) \leq 1\Big\}$ has probability one, we can use the above basic inequality, the localized bound with $\delta = 1$, and Markov's inequality to show that
 $\Reg\big(\wh{\pi}_n\big) = o_\P(1)$.
Next, regret consistency and continuity of regret ensure $d\big(\wh{\pi}_n, \Pi^\ast\big) = o_\P(1)$. In other words, the sequence of learned policies $(\wh{\pi}_n)$ gets arbitrarily close to the optimal set $\Pi^\ast$, although they may not converge to any fixed policy.
Finally, the above convergence yields a sequence $(\delta_n)$ slowly tending towards zero such that
\[
\P\big(\Epsilon_n\big) \xrightarrow[n \rightarrow \infty]{} 1, \quad \text{where} \quad \Epsilon_n := \Big\{d\big(\wh{\pi}_n, \Pi^\ast\big) \leq \delta_n\Big\}.
\]
Heuristically, $\Epsilon_n$ occurs with overwhelming probability for large $n$, and so Markov's inequality yields for any $\epsilon > 0$ that
\begin{align*}
&\P\left(\Epsilon_n \cap \Big\{n^{(1 - \alpha)/2}\Reg\big(\wh{\pi}_n\big) \geq \epsilon\Big\}\right) \leq \frac{1}{\epsilon}\E\left[\frac{1}{\sqrt{S_{n, -\alpha}}}\sup_{d(\pi, \rho) \leq \delta_n} \left|M_n(\pi, \rho)\right|\right] \\
&\quad \leq \frac{1}{\epsilon}\left\{\delta_n\sqrt{\frakv\log\left(\frac{A}{\delta_n}\right)} + n^{\frac{\alpha - 1}{2}}\frakv\log\left(\frac{A}{\delta_n}\right)\log(n)^2\right\}.
\end{align*}
where we have leveraged the fact that $S_{n, -\alpha} \asymp n^{1 - \alpha}$.
The first term will always go to zero as $n \rightarrow \infty$. The second term will tend towards zero if $\delta_n$ decays sufficiently slowly.

The above steps, when formalized, guarantee the desired convergence rate in probability. While we do not sketch the argument for uniform integrability in this part of the paper, it follows in a similar manner. However, instead of using expectation bounds, we actually need to leverage and derive high-probability chaining bounds instead. We defer the discussion of this until the appendix.
\end{proof}

\begin{rmk}
\label{rmk:fast_regret}
We note that Theorem~\ref{thm:fast-regret} is simultaneously a very powerful and weak result. The result is strong in that it enables the asymptotic convergence of our estimator to the true value of policy improvement, which could not have been established via existing, distribution-uniform expected regret guarantees.
However, the result does not give a practitioner a quantitative measure of how rapidly the scaled regret process $n^{(1 - \alpha)/2}\Reg(\wh{\pi}_n)$ tends towards zero as the sample size grows. In our experiments in the following section, we try to ascertain how quickly the above scaled quantity vanishes. We investigate this convergence both in fully synthetic and semi-synthetic data generating processes. Further, we study the convergence when the empirical welfare maximization problem is solved exactly or approximately (such as through greedy splits for policy trees).
While our empirical findings show favorable convergence rates, they naturally do not preclude slower decay of the scaled regret. We hypothesize that it is possible to construct data generating processes/policy classes such that the scaled regret converges arbitrarily slowly to zero, but we leave the exact construction of these objects for future work.
\end{rmk}

\section{Experiments}
\label{sec:exp}

We now study the empirical behavior of our estimator for the value of policy improvement. Our experiments in the sequel are aimed at evaluating the coverage of our estimator under data generating processes where there are multiple (in fact, infinitely many) optimal treatment policies.
We consider both synthetic experiments (Section~\ref{sec:exp:synthetic}), in which the true conditional average treatment effect is known and thus the optimal policy value can be computed exactly, semi-synthetic experiments (Section~\ref{sec:exp:semi-synthetic}) based on the \textit{Jester} dataset~\citep{goldberg2001eigentaste}, and then experiments on the impact of interventions on employment of Jordanians and Syrians given a prior study~\cite{caria2024adaptive}.

Before diving into our findings, we begin by describing our comparators, providing greater detail in Appendix~\ref{app:exp}. We compare our adaptively-weighted estimator (\aw) against the natural baselines summarized in
\cref{tab:comparators}, all computed on identical adaptively-logged data. The
estimators of \citet{bibaut2021post} (\biba) and \citet{zhan2021off} (\zhan) are designed for
off-policy evaluation of a single \emph{fixed} policy; we turn them into estimators of policy
improvement by learning a policy on part of the data and evaluating it on the rest. In particular, we consider this sample splitting in two
regimes --- a sublinear burn-in regime where a vanishing fraction of the data is used for policy learning, and a $50/50$ split.
We also consider a double-dipping estimator (\dd), which
learns and evaluates a policy on the entire sample. Interestingly, for policy classes satisfying the complexity assumptions of this paper, double dipping will yield asymptotically normal inference so long as there is a unique optimal treatment policy~\citep{fang2019inference, carcamo2020directional}. However, when there are multiple optimal policies, the limiting distribution is non-normal, and is instead characterized by the supremum of an empirical process indexed by the set of treatment policies.
We note that the convergence of the above sample splitting estimators is not a priori obvious, and requires the faster regret rates from this paper to hold.\footnote{For the $50/50$ sample-splitting estimators, the faster regret rates established in this paper ensure that the regret of the learned policy is negligible relative to the standard error of its estimated value. The sublinear burn-in versions need not enjoy the same guarantee and are included as heuristic comparators.}

\begin{table}[!htb]
\centering
\footnotesize
\setlength{\tabcolsep}{5pt}
\caption{A summary of comparators and the design choices that distinguish them. \biba{} and \zhan{} are each run in two modes: a sublinear burn-in
split consisting of $b_n = o(n)$ samples, and a $50/50$ split ($b_n = n/2$).}
\label{tab:comparators}
\begin{tabular}{l l l c c c}
\toprule
Estimator & Learns policy on & Evaluates on & Inv.-SD wt. & Refits & Out-of-sample \\
\midrule
\aw{} (ours) & geometric epochs       & future rounds      & $\checkmark$ & $\checkmark$ & $\checkmark$ \\
\biba{}      & prefix ($b_n$ or $n/2$) & remaining rounds   & $\checkmark$ & $\times$   & $\checkmark$ \\
\zhan{}      & prefix ($b_n$ or $n/2$) & remaining rounds   & $\times$   & $\times$   & $\checkmark$ \\
\dd{}        & all $n$                & all $n$ (same data) & $\times$   & $\times$   & $\times$   \\
\bottomrule
\end{tabular}
\end{table}

\subsection{Synthetic Experiments}
\label{sec:exp:synthetic}

In this subsection, we evaluate the finite sample performance of our estimator on a synthetic data generating process with a binary treatment. We first describe our experimental setup, including our choice of data generating process and logging rates and policies. We then discuss our findings, which generally indicate that our estimator obtains better coverage than comparators while maintaining smaller confidence intervals. Throughout, the baseline policy will correspond to uniform randomization, i.e., $q(0 \mid x) = q(1 \mid x) = 1/2$ regardless of covariates $x$.

\paragraph{Policy Class.} In this section, we take the policy class $\Pi$ to be the set of all depth 2 policy trees.
When fitting these trees, we impose the empirical analogue, requiring at least 15\% of the training observations in each leaf.
We solve the welfare
maximization problem \emph{exactly} using the optimal policy-tree solver of
\citet{sverdrup2020policytree}.
One example of an optimal policy is a depth 1 tree that splits at $x_1 = 0.4$. This gives leaves of mass
$0.4$ and $0.6$, so the $15\%$ floor is not binding.

\paragraph{Data-generating process.} We propose a data generating process (DGP) with a binary treatment. First, covariates
$X = (X_1,\dots,X_5)$ are generated such that $X_j \stackrel{\mathrm{iid}}{\sim} \mathrm{Unif}(0,1)$.
The expected outcome of a unit under control is then given by $\mu_0(x) := 1 + 0.4\Bigl(x_2-\tfrac12\Bigr) + 0.3\Bigl(x_3-\tfrac12\Bigr)^2 - 0.2\,x_4 x_5$.
The conditional average treatment effect is given by:
\[
\tau_0(x) :=
\begin{cases}
+1, & x_1 \in [0,\,0.4),\\
\phantom{+}0, & x_1 \in [0.4,\,0.7),\\
-1, & x_1 \in [0.7,\,1].
\end{cases}
\]
The potential outcomes are $Y(a) := \mu_0(X) + a\tau_0(X) + 0.2\epsilon$ with
$\epsilon \sim \calN(0,1)$ and $a \in \{0,1\}$, and the observed outcome is $Y := Y(A)$. Under this CATE there are in fact \textit{infinitely many} optimal depth 2 policy trees. We illustrate two such trees in Figure~\ref{fig:multiple-optima} in Appendix~\ref{app:exp:synth} below, but we note that any depth 1 tree that splits at $x_1 \in [0.4, 0.7]$ indeed attains the optimal value. The true VPI in this setting is $\theta^\ast = V^\ast -
V(q) = 0.350$.

\paragraph{Logging Policies.} Treatments are  assigned per an $\epsilon_t$-greedy
logging policy with $\epsilon_t = t^{-\alpha}$. At round $t$, with probability $\epsilon_t$ the action is
drawn uniformly at random from $\calA = \{0, 1\}$. Otherwise, the treatment is assigned greedily as the action with the highest empirical predictable mean, with ties broken lexicographically:
\begin{gather*}
\wh{a}_t := \arg\max_{a \in \calA}\wh{\rho}_{t - 1}(a), \;\; \text{where}\quad \wh{\rho}_{t - 1}(a) := \frac{1}{t - 1}\sum_{s = 1}^{t - 1}\wt{Y}_s(a) \\
A_t \sim p_t(\cdot \mid X_t) \qquad \text{where } p_t(a \mid x) = \frac{\epsilon_t}{2} + (1 - \epsilon_t)\1\{a = \wh{a}_t\}.
\end{gather*}
We adopt the convention that treatments are assigned uniformly at random when $t = 1$.

\paragraph{Evaluation.} We consider  $\alpha \in \{0, 0.1, 0.25, 0.5\}$ and sample
sizes $n \in \{2000, 5000, 10000, 15000\}$ with respective sublinear burn-in periods $b_n \in \{500, 1000,
1500, 2000\}$. We fit \aw{} using Algorithm~\ref{alg:online-epoch} with geometric epochs satisfying $\tau_{h + 1} - \tau_h \approx 1.5^h$, flooring the result in order to ensure batch size is an integer. For each configuration we run $250$ independent replications and
report coverage of the nominal $95\%$ intervals and the average interval width.

\paragraph{Findings.} We show the coverage of all estimators as a function of the sample size and logging coefficient $\alpha$ in Figure~\ref{fig:nu-cov-synth}.
Further, we compare the coverage and average confidence interval width of all estimators in Figure~\ref{fig:nu-pareto-synth},  thus showing tradeoffs between coverage and confidence interval width.
We show exact coverage numbers (with 95\% Wilson CIs) for our estimator alongside top comparators (\zhan{} and \biba{} with 50/50 splits) in~\Cref{tab:nu-cov-synth} of Appendix~\ref{app:exp:synth}.
We see that, due to non-uniqueness of the optimal treatment policy, the \dd{} under-covers at every sample size, with
coverage hovering between roughly $0.51$ and $0.66$.
In contrast, \aw{} and the two $50/50$ sample splits sit at or near the nominal level for $\alpha
\le 0.25$, with \aw{} delivering the smallest confidence interval widths.
We note that all estimators exhibit diminished performance for $\alpha = 0.5$.
There, \aw{}'s coverage dips to
$\approx 0.77$--$0.82$.
As evidenced in Section~\ref{sec:exp:regret}, the \textit{scaled regret} $\Reg(\wh{\pi})/n^{(\alpha - 1)/2}$ vanishes slowly as $n$ grows for larger $\alpha$, thus indicating that larger sample sizes may be needed to meet the desired coverage. While \zhan{} has coverage that is similar to our \aw, and  better for $\alpha=0.5$,  we also note that \zhan{} has confidence intervals that are around $1.5\times$ wider than \aw{} in this setting, which may also aid in coverage.

In Figure~\ref{fig:nu-pareto-synth} we see that our \aw{} presents the strongest balance of tighter confidence intervals and coverage.

\begin{figure}[!htbp]
\centering
\includegraphics[width=\linewidth]{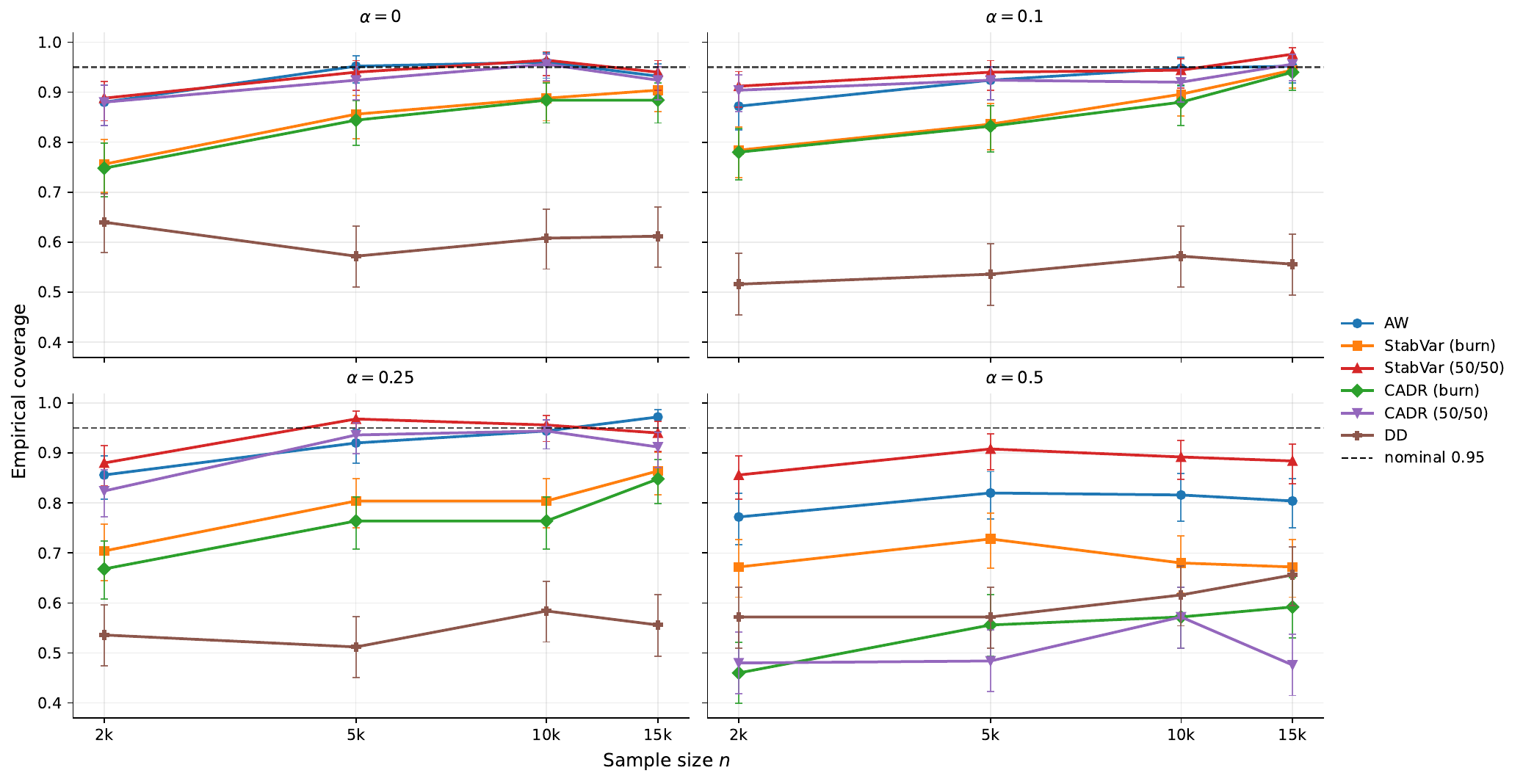}
\caption{Empirical coverage versus sample size in the synthetic DGP for all six
estimators at logging exponents $\alpha \in \{0, 0.1, 0.25, 0.5\}$. Error bars are Wilson $95\%$
intervals and the dashed line is the nominal $0.95$ coverage target. Double-dipping (\dd, the lowest curve)
under-covers and does not recover with $n$, while \aw{} and the $50/50$ estimators obtain nominal coverage except in the $\alpha = 0.5$ case.}
\label{fig:nu-cov-synth}
\end{figure}

\begin{figure}[!t]
\centering
\begin{subfigure}[t]{0.4\textwidth}
  \includegraphics[width=\linewidth]{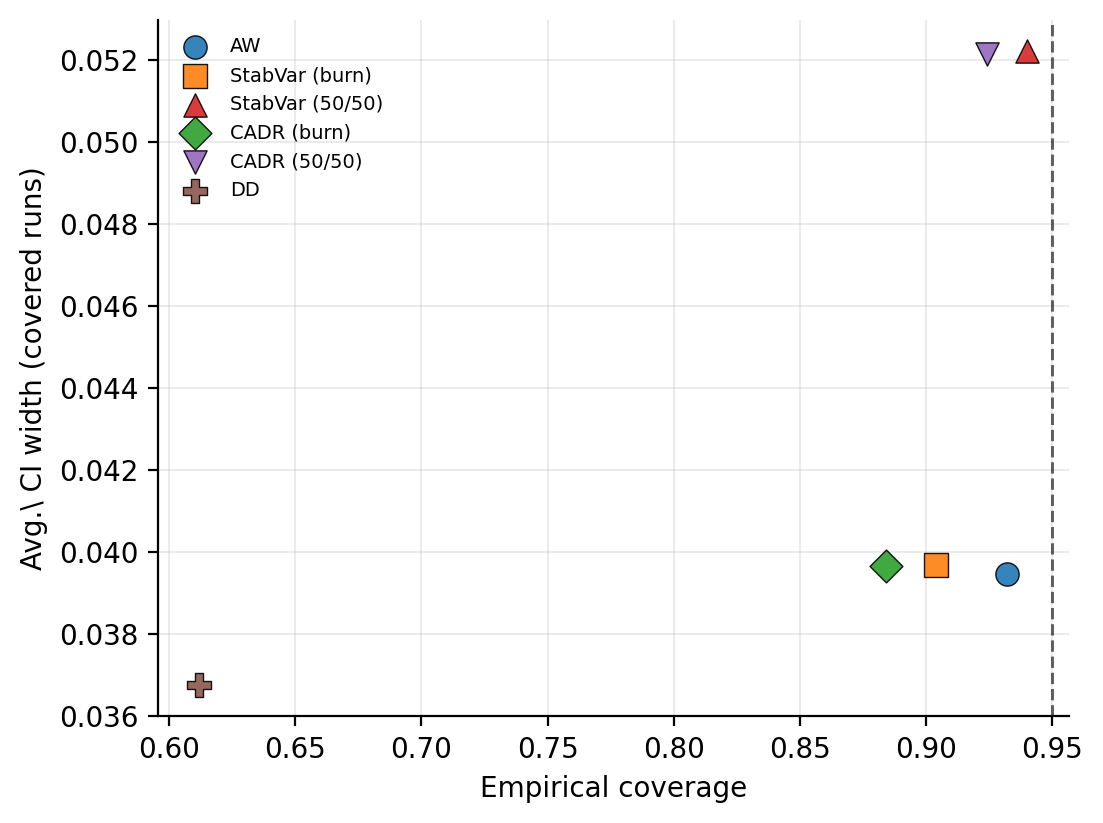}
  \caption{$\alpha = 0$.}
\end{subfigure}
\begin{subfigure}[t]{0.4\textwidth}
  \includegraphics[width=\linewidth]{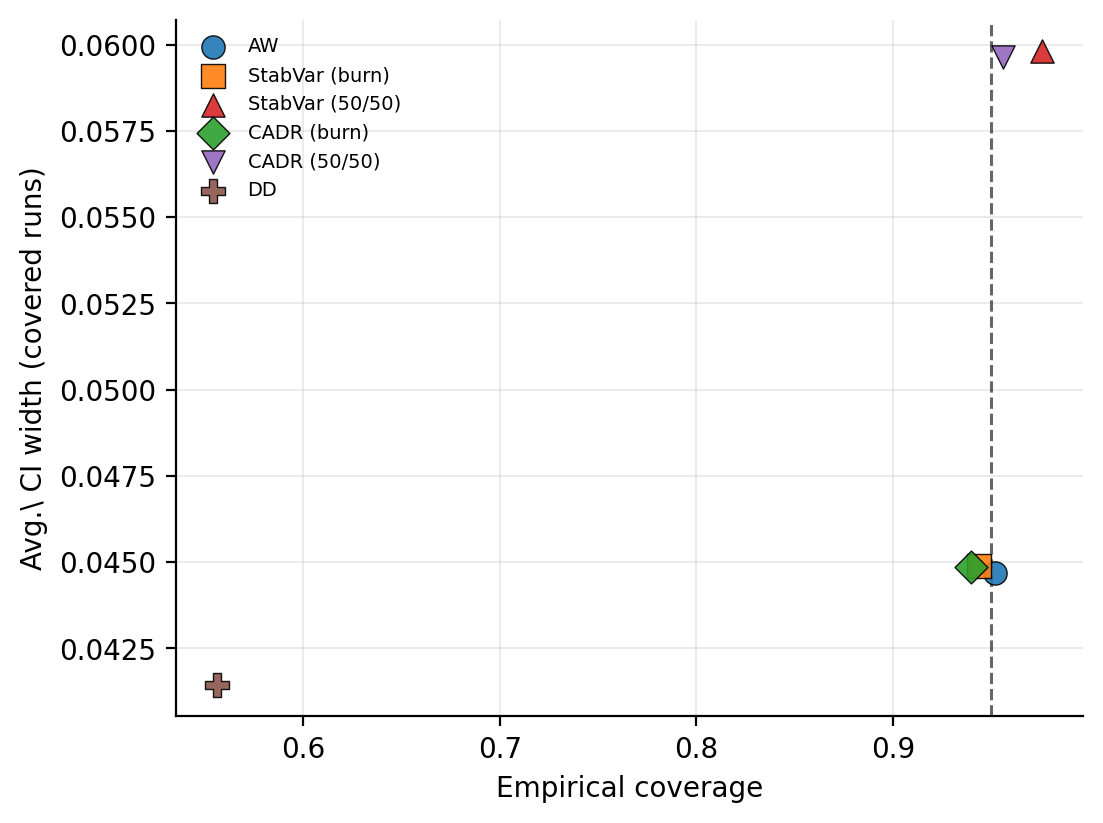}
  \caption{$\alpha = 0.1$.}
\end{subfigure}\hfill

\begin{subfigure}[t]{0.4\textwidth}
  \includegraphics[width=\linewidth]{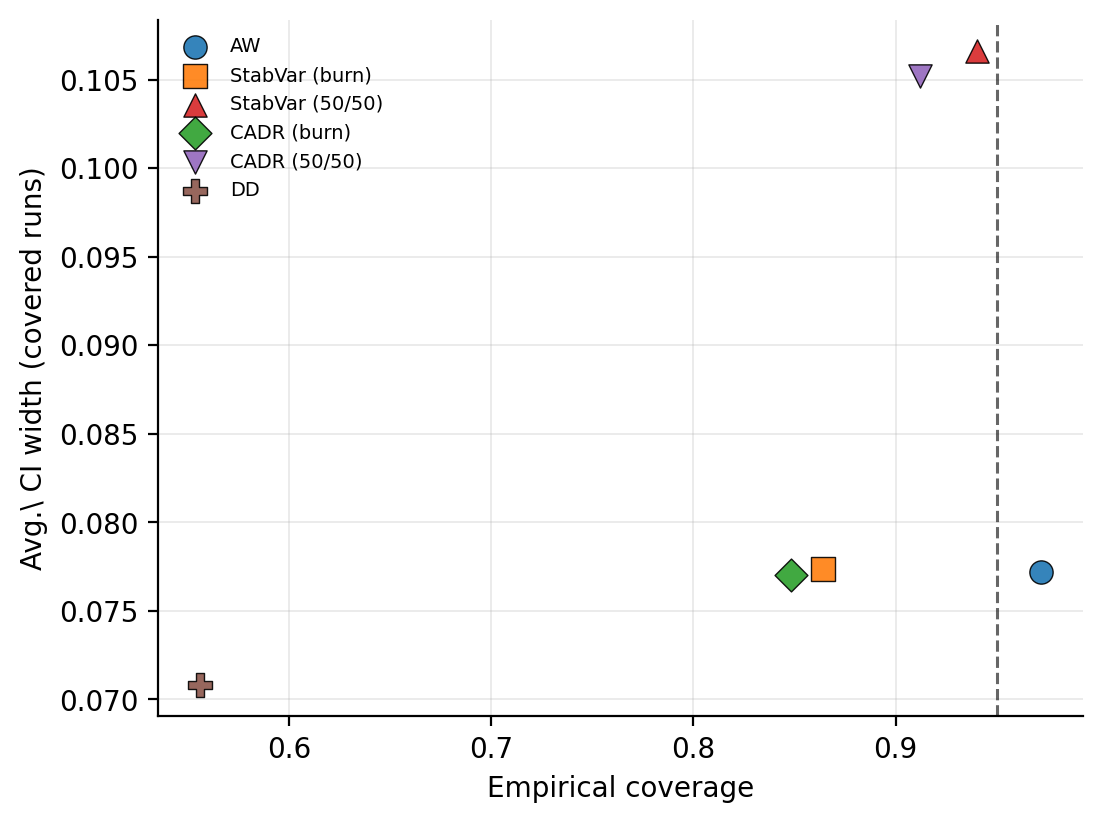}
  \caption{$\alpha = 0.25$.}
\end{subfigure}
\begin{subfigure}[t]{0.4\textwidth}
    \includegraphics[width=\linewidth]{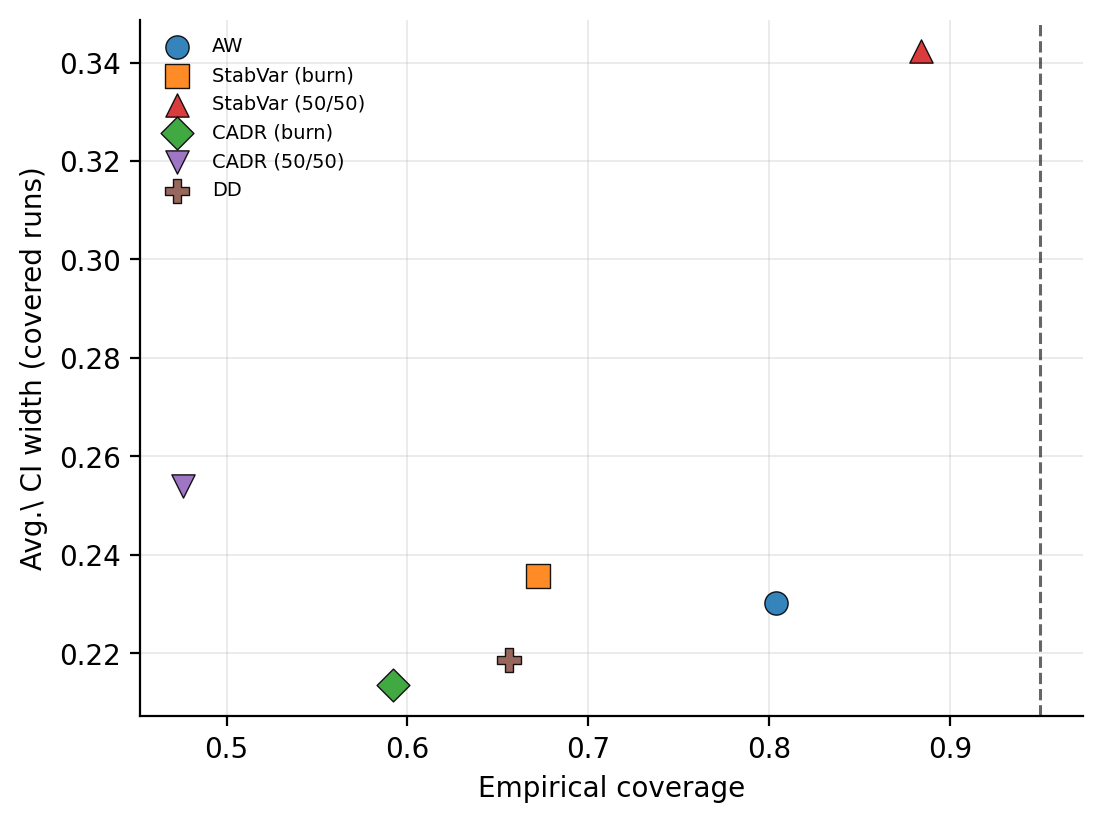}
    \caption{$\alpha = 0.5$.}
\end{subfigure}
\caption{The Pareto frontier of empirical coverage (x-axis) and confidence interval width (y-axis) in the synthetic DGP. We examine this frontier at $n = 15{,}000$ samples and logging exponents
$\alpha \in \{0, 0.1, 0.25, 0.5\}$. Estimators toward the lower right are better calibrated.}
\label{fig:nu-pareto-synth}
\end{figure}

\subsection{Semi-synthetic Experiments}
\label{sec:exp:semi-synthetic}

We now consider a substantially more realistic experiment based on raw user ratings from the Jester
joke-rating dataset of \citet{goldberg2001eigentaste}. The Jester dataset consists of 24,983 rows, each corresponding to a single user considered in the study. Each user was tasked with rating 100 jokes on a scale of -10.0 to 10.0, with the original study retaining the ratings of all units who rated more than 36 jokes. In the sequel, we consider the subset of 14,116 users who successfully rated all 100 jokes. We design a semi-synthetic data generating process on top of these rows based on sampling with replacement, where a user's expected rating of a joke is given by the true ratings in the dataset. We use a small number of jokes as treatments, letting a user's ratings of said jokes represent expected outcomes. A larger subset of jokes serves as covariates. We now describe our DGP in more detail.

\paragraph{DGP, Logging Policies, and Policy Class.}
We treat the five jokes with IDs $\calJ := \{16, 57, 41, 81, 50\}$ as
candidate treatments. We then set $\calA := \calJ \cup \{50'\}$ as the treatment set, where $50'$ is an exact duplicate of arm $50$, i.e., has the same associated (expected) potential outcomes. These jokes are chosen so that no single joke is uniformly preferred over others. We display both marginal and pairwise diagnostics for the selected jokes (i.e., treatments) in Tables~\ref{tab:treatment-summary} and~\ref{tab:treatment-corr} in Appendix~\ref{app:exp:semi-synth}, with the former showing average ratings and win-rates for the chosen jokes, and the latter showing pairwise correlations between joke ratings. We then select 20 of the remaining 95 jokes to serve as covariates. We also detail the exact selection procedure and chosen jokes for covariates in Appendix~\ref{app:exp:semi-synth}.

We generate observations as follows. First, we randomly sample an individual from the 14,116 users retained, and take $X$ to be their ratings of the jokes selected as covariates. We then define their potential outcome associated with treatment $a \in \calA$ to be
\[
Y(a) := \text{``User's raw rating for joke $a$''} + \xi_a, \;\; \text{where } \xi_a \sim \calN(0, 4.0).
\]
We set $Y \equiv Y(A)$, where $A$ is generated from the same logging policy as in the synthetic experiment above, adjusted for the number of interventions. We note that, in the above formulation, although jokes $50$ and $50'$ share the same raw ratings, the added noise is independent.
Again, the policy class considered is all depth 2 policy trees, this time with a minimum of 15\% of the samples landing in any leaf.\footnote{That is, every leaf $L$ has $\P(X \in L) \geq 0.15$.}

\paragraph{Evaluation.}
Because of the larger number of covariates and treatments (20 and 6, respectively) we approximately solve the empirical welfare maximization problem via greedy splitting.  In particular, we leverage the \texttt{PolicyTree} class from the \texttt{EconML} package~\citep{oprescu2019econml}. Our choice to adopt greedy approximations in this section is intentional --- many practitioners will likely use greedy or ML proxies for policy learning in practice. To approximate the optimal value $V^\ast$, we fit the \texttt{PolicyTree} class on the full $14{,}116$-user population without adding any noise.
Thus, although coverage in the semi-synthetic case will not be guaranteed by our theory, the results below suggest that greedy approximations for policy learning can work well in practice.
We consider
$\alpha \in \{0, 0.1, 0.25\}$ and $n \in \{2000, 5000, 8000, 10000, 15000, 20000\}$ with burn-in
periods $b_n$ of $500, 2000, 2500, 3000, 4000$, and $5000$, respectively. Again, we conduct our measurements over $250$
independent runs.

\paragraph{Findings.}
Figures~\ref{fig:nu-cov-semisynth} and \ref{fig:nu-pareto-semisynth} present  analogues of the coverage and Pareto plots shown in the previous subsection. From both plots, we see that \aw{} generally provides the best balance between coverage and confidence interval width. In particular, from these figures, we see that of all estimators that obtain nominal coverage at $n=20,000$ samples (after accounting for Wilson CIs), our estimator obtains the narrowest confidence intervals. The exact coverage of our estimator relative to the 50/50 split estimators based on \zhan{} and \biba{} can be found in Table~\ref{tab:nu-cov-semisynth} in Appendix~\ref{app:exp:semi-synth}. We note that double-dipping (\dd) fails
catastrophically in the present setting: its coverage sits between roughly $0.02$ and $0.10$, with
average estimates of $2.46$--$3.73$ against the truth of $2.197$. Further, this approach shows no sign of
recovering as $n$ grows out to $20{,}000$. We hypothesize that this is likely because cloning the marginally best treatment results in a large space of equally promising policies, so double-dipping is actually capable of overfitting in this setting. Unlike the synthetic study, we do not report  $\alpha = 0.5$ regime here.

\begin{figure}[htbp]
\centering
\includegraphics[width=0.8\linewidth]{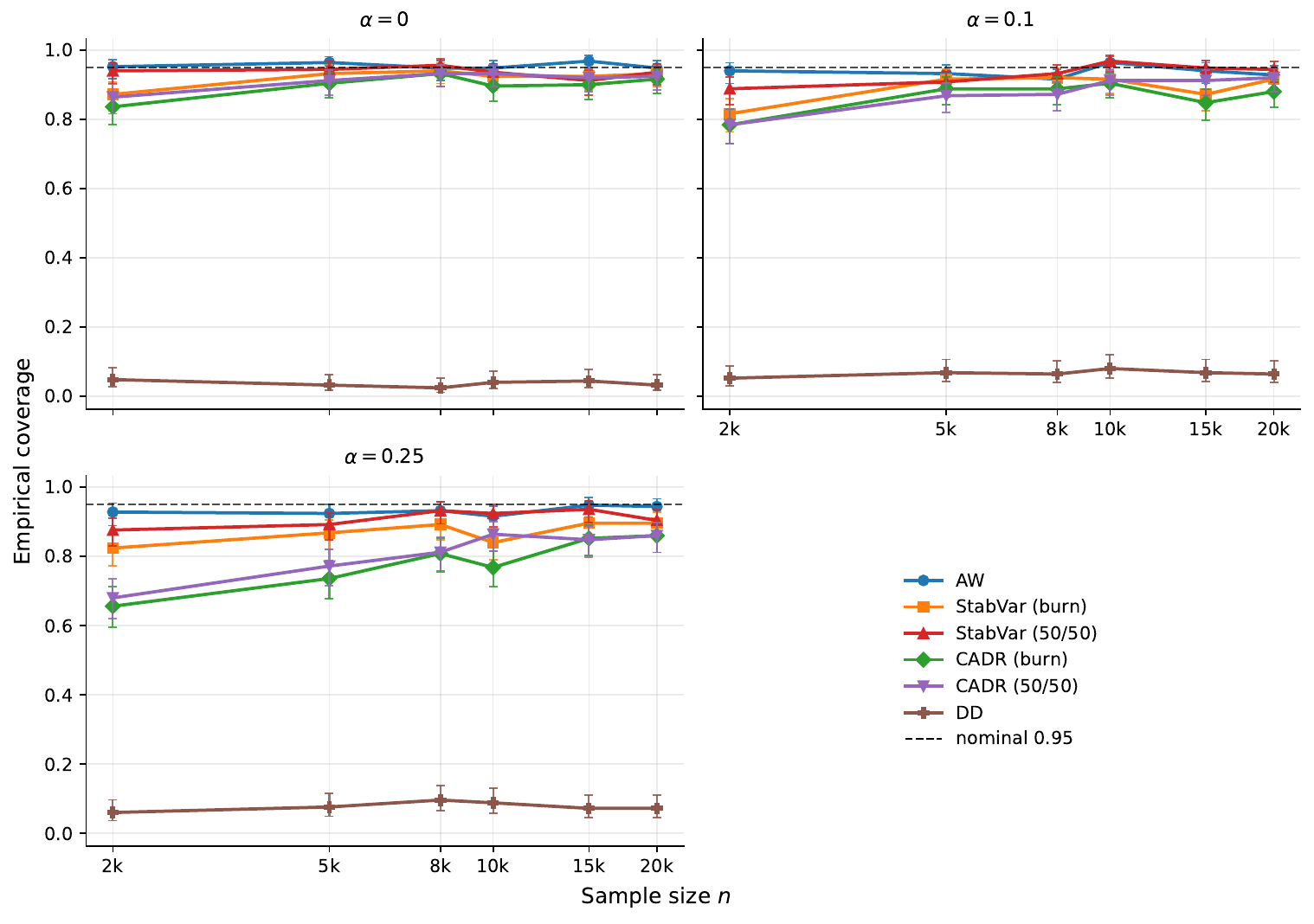}
\caption{Empirical coverage versus sample size in the semi-synthetic DGP for all six
estimators at logging exponents $\alpha \in \{0, 0.1, 0.25\}$. Error bars are Wilson $95\%$
intervals and the dashed line is the nominal $0.95$ coverage target.}
\label{fig:nu-cov-semisynth}
\end{figure}

\begin{figure}[htbp]
\centering
\begin{subfigure}[t]{0.32\textwidth}
  \includegraphics[width=\linewidth]{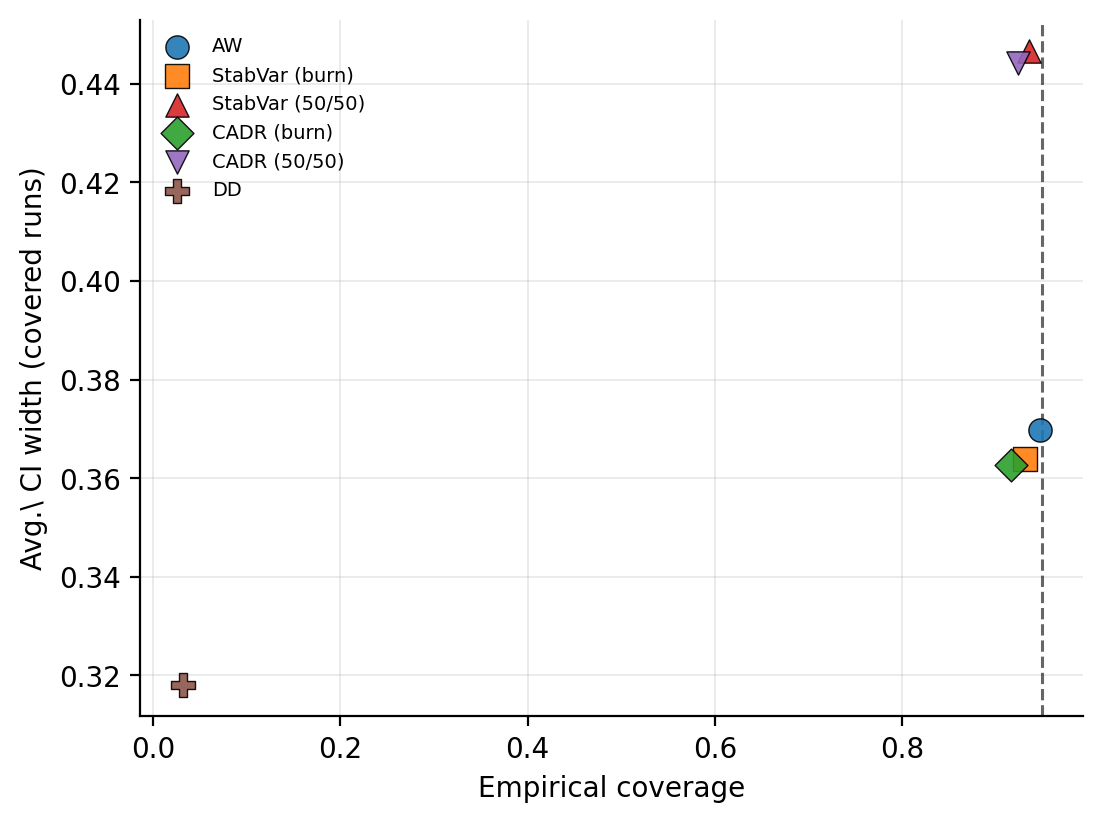}
  \caption{$\alpha = 0$.}
\end{subfigure}\hfill
\begin{subfigure}[t]{0.32\textwidth}
  \includegraphics[width=\linewidth]{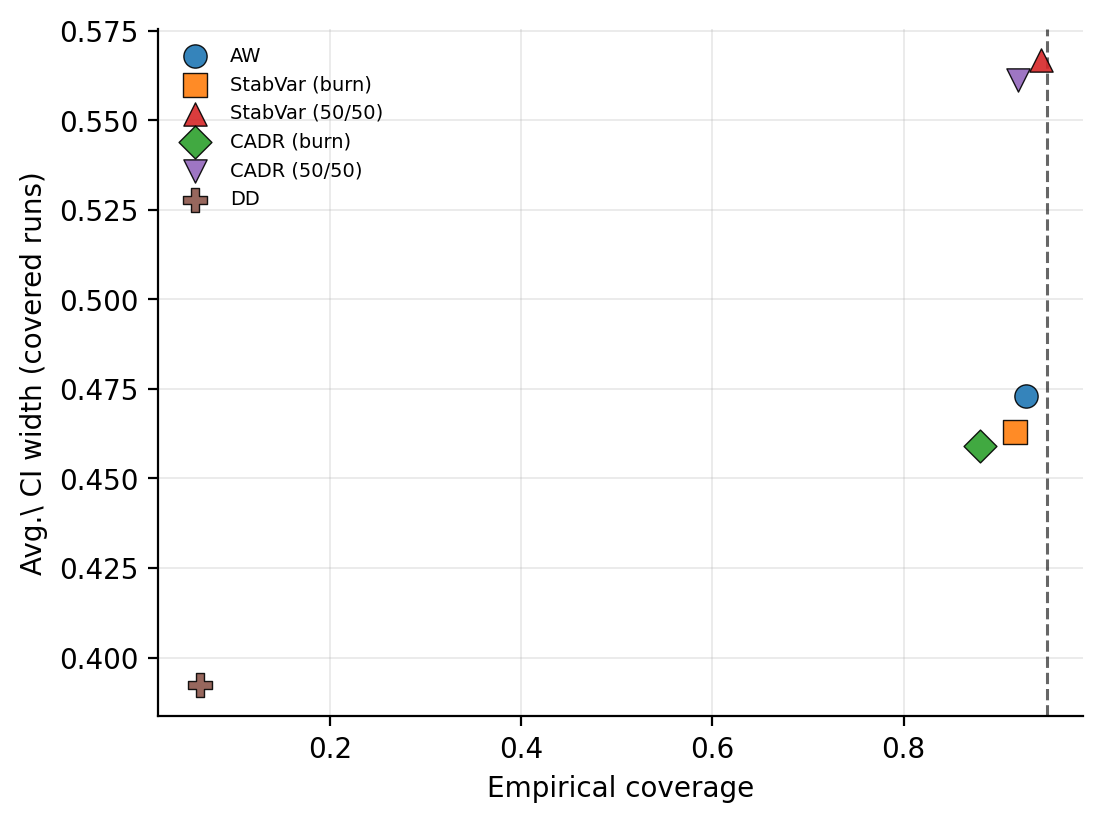}
  \caption{$\alpha = 0.1$.}
\end{subfigure}\hfill
\begin{subfigure}[t]{0.32\textwidth}
  \includegraphics[width=\linewidth]{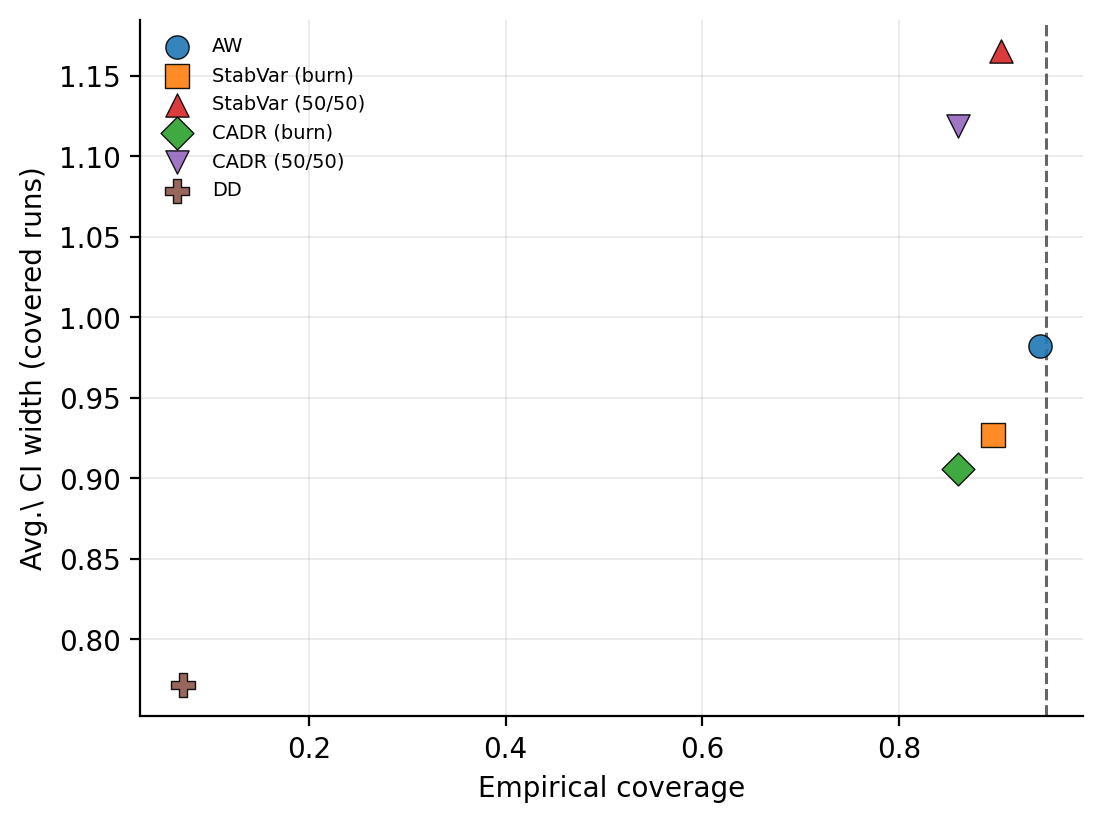}
  \caption{$\alpha = 0.25$.}
\end{subfigure}
\caption{The Pareto frontier of empirical coverage (x-axis) and confidence interval width (y-axis) in the semi-synthetic DGP. Here, we consider $n = 20{,}000$ samples and logging exponents
$\alpha \in \{0, 0.1, 0.25\}$.}
\label{fig:nu-pareto-semisynth}
\end{figure}

\newpage

\subsection{Regret Scaling}
\label{sec:exp:regret}

In Section~\ref{sec:regret}, we proved that the expected regret satisfies $\E[\Reg(\wh{\pi})] = o(n^{(\alpha - 1)/2})$. This was sufficient to argue that the bias incurred via repeated policy fits was negligible, but did not tell us \textit{how much faster} regret decays than the width of the confidence intervals.
Here, we empirically show that the scaled regret $\frac{\Reg(\wh\pi)}{n^{(\alpha-1)/2}}$ tends to zero as $n$ grows large. We will see, however, that the scaled regret decays more slowly for large logging exponents $\alpha$, partially explaining why seemingly all estimators undercover when $\alpha = 0.5$ in the synthetic DGP.

\paragraph{Evaluation.} We consider both DGPs in this section. For $n \in \{2000, 5000,
10000, 15000, 25000\}$ and $\alpha \in \{0.0, 0.1, 0.25, 0.5\}$, we generate $n$ samples from each DGP using the above logging policies. We then solve (either exactly in the synthetic case or approximately in the semi-synthetic case) the weighted empirical welfare maximization on all $n$ samples, per Equation~\eqref{eq:ewm}. For the synthetic DGP, we estimate the true optimal policy value by sampling $300,000$ \textit{fully-observed} samples, computing the estimate \textit{without} inverse propensity weighting. For the semi-synthetic DGP, we just average over the 14,116 samples in the Jester dataset, which are treated as ground truth. We repeat our measurements over $M = 100$ runs for each $(n, \alpha, \texttt{DGP})$ triple.

\paragraph{Findings.} We report our findings in Figure~\ref{fig:scaled_regret}, which displays the decay of scaled regret for both the synthetic and semi-synthetic DGPs.
Under both DGPs, the scaled regret seemingly tends towards zero as the sample size grows. For larger logging exponents (e.g., $\alpha = 0.5$), the rate of convergence seems to be significantly slower. We note that regret even seems to vanish as a function of the sample size in the semi-synthetic DGP, which is interesting as we are not solving the empirical welfare maximization problem exactly.
Note that the convergence of the scaled regret in turn implies that the raw regret itself vanishes, showing that multiplicity of optimal policies does not impact the possibility of policy learning.

\begin{figure}[htbp]
\centering
\begin{subfigure}[t]{0.48\textwidth}
  \includegraphics[width=\linewidth]{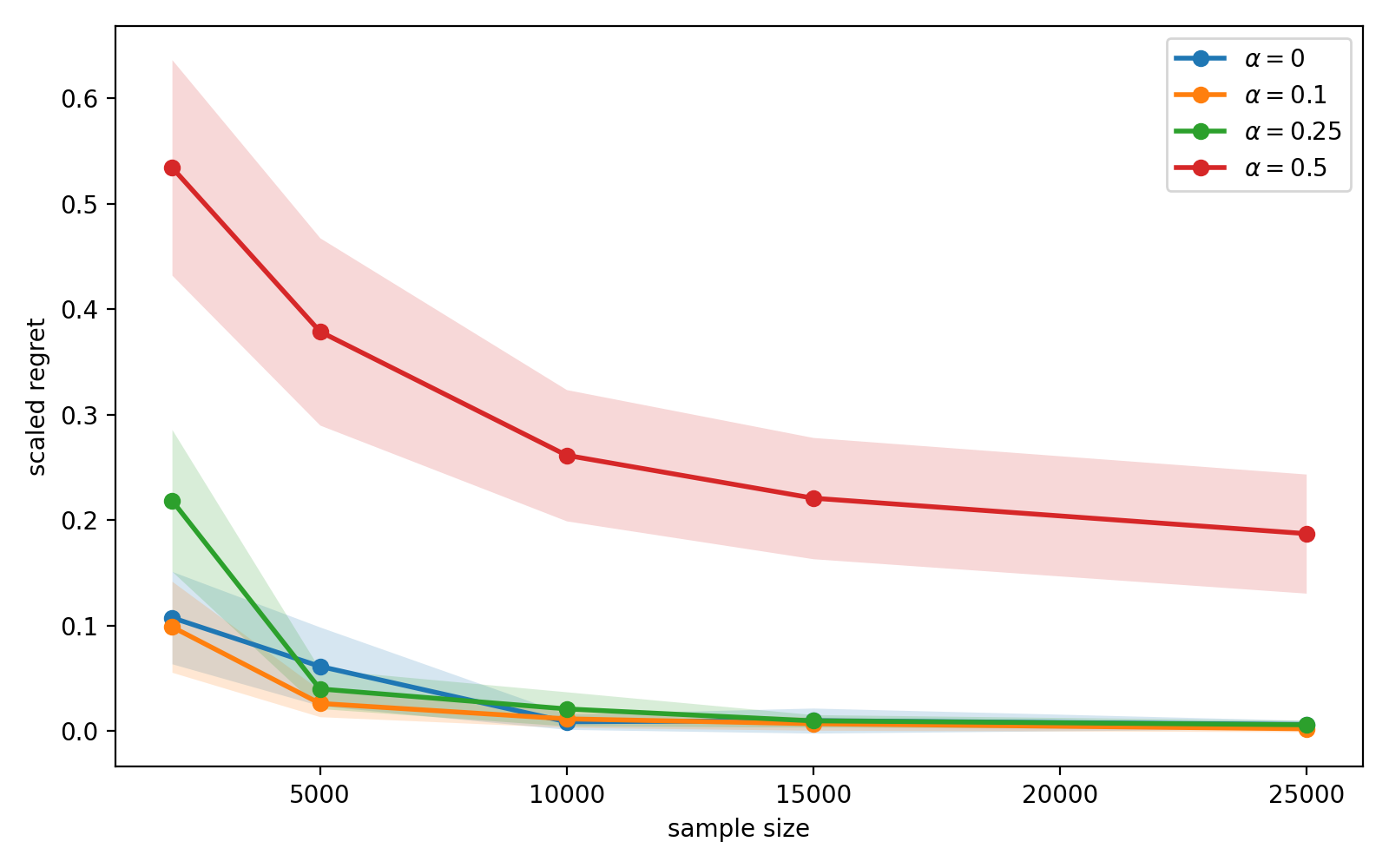}
  \caption{Synthetic DGP.}
\end{subfigure}\hfill
\begin{subfigure}[t]{0.48\textwidth}
  \includegraphics[width=\linewidth]{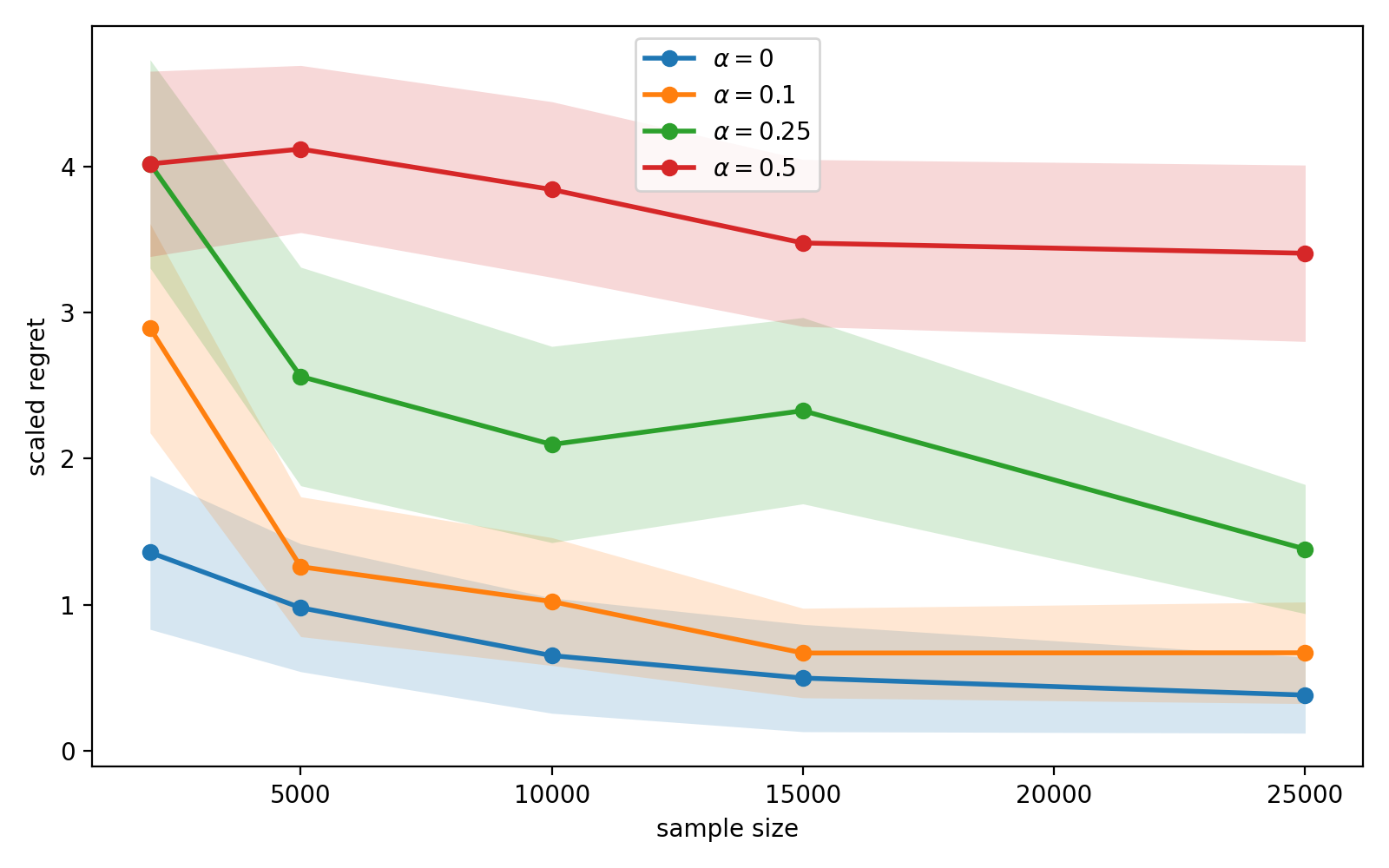}
  \caption{Semi-Synthetic DGP.}
\end{subfigure}
\caption{A visualization of scaled regret against $n$ under various logging exponents $\alpha$. The shaded region gives a $95\%$ pointwise confidence interval over $M = 100$ independent runs. The synthetic panel leverages an exact solver, whereas the semi-synthetic one uses an approximate solution.}
\label{fig:scaled_regret}
\end{figure}

\section{Application: Revisiting Job Search Assistance for Refugees in Jordan}
\label{sec:case_study}

We now apply our estimator in estimating the value of policy improvement in an adaptive field experiment conducted by \citet{caria2024adaptive}. In this experiment, a mix of Syrian refugees (1,663 units) and low-skilled Jordanian citizens (2,107 units) aged 18--45 were chosen to participate in a job search assistance program. These units were \textit{adaptively} assigned to one of four treatments on a rolling basis between February 10th and November 30th of 2019 using a \textit{tempered} Thompson sampling algorithm, formally described below. The four treatments are referred to as \textit{control}, \textit{labeled cash transfer} (or \textit{cash} for short), \textit{information}, and \textit{nudge}, and are described in Table~\ref{tab:jordan_treatments}. As before, we let $\calA$ denote the set of treatments throughout. The outcome of interest in the experiment is a binary indicator of employment status, which was measured at roughly six weeks, two months, and four months following treatment assignment.\footnote{As noted by the authors, overall attrition levels in the experiment were low, being 1.2\% at six weeks, 1.6\% at two months, and 4.2\% at four months.}

\begin{table}[htbp]
    \centering
    \caption{Interventions considered in the job assistance program.}
    \label{tab:jordan_treatments}
    \begin{tabular}{p{0.19\textwidth} p{0.72\textwidth}}
        \toprule
        \textbf{Treatment arm} & \textbf{Description} \\
        \midrule

        Control
        &
        Participants received a small cash infusion of 4 Jordanian dinars (approximately US \$5.60) and a basic informational pamphlet.
        \\
        \midrule

        Cash
        &
        Participants received a larger cash transfer of 65 Jordanian dinars (approximately US \$92).
        \\
        \midrule

        Information
        &
        Participants received more extensive information about preparing for job interviews, including in-person instruction.
        \\
        \midrule

        Nudge
        &
        Participants received a behavioral job-search intervention consisting of a four-week planning calendar, an instructional video, face-to-face guidance, and optional SMS reminders.
        \\

        \bottomrule
    \end{tabular}
\end{table}

One of the quantities estimated in the authors' work is a Bayesian equivalent of the VPI, which measures the gap between the value of the optimal covariate-dependent treatment policy and the policy that assigns all units to control. In more detail, the authors posit a hierarchical Bayesian model for potential outcomes and define a natural analogue of the VPI in terms of this model (we define both in the sequel).
They find that the Bayesian VPI  is 1.7 pp (percentage points) at the six week point, with a 95\% credible set excluding zero. Further, using their Bayesian method, we compute even higher effects at two months (4.4 pp) and four months (2.7 pp) following treatment, also with credible intervals excluding zero.
Our application of their methods points to a high VPI amongst Syrian refugees at the two month mark, with a point estimate of 8.3 pp, indicating reasonable effectiveness of the job search program.

However, implicit in their model (and explicit in their paper) is a strong assumption: the optimal treatment is \textit{almost surely unique} for each covariate stratum. While the authors note in their supplement that such a Bayesian model does not suffer from a winner's curse, this is only true if the assumed model is correct. Under the assumed hierarchical Bayesian model, one can compute credible sets for the effect of each treatment relative to control. The authors do this at the six week mark in Figure A.8 of their supplement, finding that all credible sets contain zero. The same figure shows this holds when conditioning on whether the participant was Jordanian or Syrian as well. Further, Figure A.9 of their supplement shows that credible sets for all expected potential outcomes at the same horizon overlap. Therefore, in this dataset there is not yet evidence that the optimal policy is unique.
Given this, we seek to evaluate if similar benefits for intervening are estimated when
using an approach that is robust to non-unique optima.

\paragraph{Assumed Bayesian Model and Estimand.}
We now briefly recount the assumed Bayesian hierarchical prior and targeting effect estimand from \citet{caria2024adaptive}. First, we introduce notation. We again denote the set of treatments as $\calA$, using the notation $0$ for the control treatment. The considered covariate strata are given by the following Cartesian product:
\[
\calX := \texttt{\{Syrian, Jordanian\}}\times\texttt{\{F, M\}}\times\texttt{\{HS, No HS\}}\times\texttt{\{Never employed, Ever employed\}},
\]
where \texttt{M} and \texttt{F} denote \textit{male} and \textit{female}, respectively, and \texttt{HS} denotes high school education.
Days are indexed by $t \in \calT$, $n_t$ denotes the number of units treated on day $t$, and $i \in [n_t]$ indexes units within a day. Each unit is associated with covariates $X_{i, t}$ and potential employment outcomes $Y_{i, t}(a) \in \{0, 1\}$, which are assumed to be generated by the following hierarchical prior:
\begin{gather*}
(\eta_a, \beta_a) \sim \nu, \;\;\forall a \in \calA \quad \text{where } \nu(\eta, \beta) \propto (\eta + \beta)^{-2.5}\\
\mu_{a, x} \mid (\eta_a, \beta_a) \sim \Beta(\eta_a, \beta_a), \;\; \forall (a, x) \in \calA \times \calX \\
Y_{i, t}(a) \mid \big(X_{i, t} = x, \mu_{a, x}, \eta_a, \beta_a\big) \sim \Bern(\mu_{a, x}) \;\; \forall i \in [n_t], t \in \calT, x \in \calX, a \in \calA.
\end{gather*}
We let $A_{i, t} \in \calA$ denote the realized treatment for said unit, and $Y_{i, t} = Y_{i, t}(A_{i, t})$ denote the realized employment status at the desired followup horizon. We fully describe the tempered Thompson sampling algorithm, which ensures each treatment has at least a 5\% chance of being assigned in each round, in Appendix~\ref{app:exp:case_study}.

Let $\calD := \left\{(X_{i, t}, A_{i, t}, Y_{i, t}) : i \in [n_t], t \in \calT\right\}$ denote the entire dataset of observations. The authors choose their estimand to be the gap between the value of the policy that maximizes per-stratum expected posterior outcomes and the value of the all control policy. Letting $p_x$ denote the fraction of the sample with $X_{i, t} = x$, this estimand is given by
\[
\theta_{0, B} := \sum_{x \in \calX}p_x\left[\max_{a \in \calA}\E\left(\mu_{a, x} \mid \calD\right) - \E\left(\mu_{0, x} \mid \calD\right)\right].
\]
This estimand is different than the frequentist VPI outlined in Equation~\eqref{eq:vpi}. In particular, the frequentist quantity averages over the distribution of covariates and views the per-stratum expected outcomes as \textit{fixed}, not random. The Bayesian analogue instead uses empirical stratum weights and averages expected potential outcomes over the posterior distribution. In our analyses, we will compare our estimates of the \textit{frequentist} VPI to this Bayesian analogue.

\paragraph{Our Measurement Approach.}
We briefly describe our measurement approach. We run our analysis on the released set of 3,709 participants.\footnote{We note the entire experiment had 3,770 participants.} We detail the exact data sources used in Appendix~\ref{app:exp:case_study}. Our analyses at two and four months also drop units with missing outcomes. We apply our estimator with batched refitting as in Algorithm~\ref{alg:online-epoch}. Instead of taking geometric epochs as in our previous experiments, we opt to refit policies at the start of each day. This is particularly natural in the setting of this dataset as on any day multiple units receive treatment. We use a burn-in period that ensures at least 20\% of the data is used for fitting the first policy. Table~\ref{tab:burn20-cutoffs} in Appendix~\ref{app:exp:case_study} displays precisely how many samples were used in constructing the initial policy, the corresponding percentage of overall data used, and the first date of policy evaluation.

We take the policy class $\Pi$ to be the set of all deterministic policies excluding the all control policy, which has finite cardinality due to $\calX$ being finite. The baseline policy $q(a \mid x)$ is taken to be the all control policy, as is done in \citet{caria2024adaptive}. We note that this setting is permissible by Lemma~\ref{lem:suff_conditions}, as $\gamma = 0.2$ ensures that each intervention has probability at least $0.05$ of being assigned in any round, and hence we have $\alpha = 0$.

We measure the overall value of policy improvement at the end of the experiment using the realized outcomes at the six week, two month, and four month marks. We compare our estimates with the Bayesian ones from \citet{caria2024adaptive}, which we compute using the publicly available data and Markov Chain Monte Carlo algorithm (written in pseudo-code in Algorithm~\ref{alg:caria-targeting-short} in Appendix~\ref{app:exp:case_study}). We compare these values in Table~\ref{tab:main-burn20}, and further present estimates broken down by nationality in Table~\ref{tab:subgroup-burn20}.

\paragraph{Our Findings.}
While the Bayesian method leveraged by~\citet{caria2024adaptive} yields a positive VPI at six weeks, two months, and four months following treatment assignment, our full-population confidence intervals for the VPI contain zero at all time horizons.\footnote{We again note that the two and four month estimates are not present in the original paper. We compute them using the authors' methods.} We note that our point estimates, which are detailed in Table~\ref{tab:main-burn20}, are all around 1.0--3.0 pp lower than their Bayesian counterparts. While our approaches are not directly comparable, we believe there are two competing forces at play. First, as noted above, the Bayesian approach assumes unique optimal treatments per stratum, and so would suffer from a winner's curse were said uniqueness  to be violated. Further, due to the fact our approach leverages a policy learning subroutine, it generally yields conservative estimates in finite samples.
We emphasize that our findings \textit{do not preclude} the possibility of a positive VPI.

One interesting finding that our estimates do corroborate is that, at the two month mark, there \textit{is a significant benefit} for targeting amongst Syrians. We estimate that the value of policy improvement is around 7.12 pp, with a lower confidence bound of 1.25 pp. This is only mildly lower than the 8.30 pp estimate from the Bayesian procedure.
Lastly, we note that although the two estimators construct treatment policies in very different ways, the decisions made by the policies are broadly in alignment. We compare the \textit{final} learned policies from our adaptive estimator to the policy that assigns the treatment maximizing posterior expected outcomes in each stratum in Appendix~\ref{app:exp:case_study}, finding that at the two month mark (i.e., when the VPI/effect of targeting was highest), the cash treatment was more often than not the most effective option across many strata.

\begin{table}[ht]
\centering
\caption{Our frequentist \aw{} VPI estimate, and the Bayesian estimates (which have formal properties under the assumption of a unique optimal policy~\citep{caria2024adaptive}) alongside 95\% credible/confidence intervals. Here, the baseline is the ``all control'' policy. Estimates of \aw{} are produced using roughly 20\% of the sample for burn-in.}
\label{tab:main-burn20}
\begin{tabular}{lccc}
\toprule
Estimator & 6 weeks & 2 months & 4 months \\
\midrule
AW & 0.26 [-1.98, 2.51] & 2.09 [-1.53, 5.71] & -0.07 [-2.65, 2.50] \\
Bayes & 1.70 [0.07, 3.33] & 4.42 [1.97, 6.91] & 2.73 [0.59, 4.93] \\
\bottomrule
\end{tabular}

\end{table}

\begin{table}[ht]
\centering
\caption{VPI estimates restricted to the Jordanian and Syrian subpopulations alongside 95\% credible/confidence intervals.}
\label{tab:subgroup-burn20}
\begin{tabular}{llccc}
\toprule
Group & Estimator & 6 weeks & 2 months & 4 months \\
\midrule
Jordanian & AW & -0.05 [-2.99, 2.90] & -1.76 [-6.14, 2.62] & -2.46 [-5.51, 0.60] \\
Jordanian & Bayes & 1.94 [-0.39, 4.27] & 1.83 [-0.92, 4.72] & 1.01 [-2.06, 4.21] \\
Syrian & AW & 0.33 [-2.69, 3.34] & 7.12 [1.25, 12.99] & 3.36 [-0.74, 7.46] \\
Syrian & Bayes & 1.49 [-0.73, 3.87] & 8.30 [4.08, 12.61] & 4.97 [1.46, 8.74] \\
\bottomrule
\end{tabular}

\end{table}

\section{Conclusion}
\label{sec:conclusion}

In this paper, we developed a simple, sequential estimator for the value of policy improvement under adaptive experimentation. Our estimator, which was based on an estimator due to \citet{luedtke2016statistical}, works by repeatedly using prefixes of historical data to train a policy using a weighted empirical welfare maximization objective, then evaluating the learned policy on subsequent samples. We showed it could operate under either continual refitting or batched fitting, in which the EWM policy was only retrained after a geometrically growing number of steps. We  proved that one can construct asymptotically valid confidence intervals for the VPI, and as part of this,  showed that one can obtain marginally faster pointwise expected-regret rates for weighted empirical welfare maximization. In addition,
we evaluated the coverage of our estimator and corresponding confidence interval width across both synthetic and semi-synthetic data generating processes. We found that, when compared to natural sample-splitting and double-dipping baselines, our estimator generally offered the best tradeoff between coverage and confidence interval width. We also note that the convergence of the sample-splitting estimators requires our faster regret rates, and thus their validity can be viewed as an additional contribution of this work.
Finally, we applied our estimator to an adaptive experiment evaluating job-assistance programs in Jordan, obtaining generally more conservative estimates than the existing Bayesian analysis whose formal properties rely on the existence of a unique optimal policy (for which there is little evidence) while corroborating the benefits of personalization among Syrian refugees at the two-month horizon.

There are many natural open directions one could pursue related to the work in this paper.
One interesting direction would be to study the validity of \textit{smoothing-based} estimators for the value of policy improvement.
Smoothing-based estimators, such as those considered in \citet{whitehouse2025inference} and \citet{chen2023inference}, have recently emerged as compelling alternatives to the online estimator of \citet{luedtke2016statistical}, as they are not only more intuitive (replacing a non-smooth objective by a differentiable alternative) and provide canonical tie-breaking rules, but also require only a constant number of policy fits.
However, existing work on these estimators considers the setting of \textit{unrestricted policy classes}, wherein any measurable map of a unit's covariates is permitted as a treatment policy. Extending these estimators to a setting of restricted policy classes is non-trivial, and thus would require significant theoretical development. Further, it is unclear how to leverage such estimators alongside contextual bandit data.

Another natural route would be to develop estimators and confidence intervals directly for the \textit{value of an optimal treatment policy} within a class. Our focus on the value of policy improvement was deliberate: not only did we find the estimand to be more naturally motivated, but the contrast with a randomized baseline also served the technical purpose of stabilizing the variance of each step of IPW off-policy evaluation. Without this added stability, one could still show that our confidence interval covers a \textit{weighted average of the values of learned policies}, but arguing that the bias from policy learning (i.e., accumulated regret) vanishes sufficiently quickly is non-obvious. Likely, the underlying estimator would not need to change, but one would need to develop more refined, self-normalized regret bounds that naturally reflect the accumulated variance of off-policy evaluation rather than just the overall number of samples. It is unclear if such improved rates are possible when there are multiple optimal treatment policies within a class, and we thus leave this for future work. We again emphasize that this added stability from the contrast is not needed in the setting $\alpha = 0$, and thus our analysis also yields confidence intervals for the value of the optimal treatment policy in a class in this setting.

\paragraph{AI Usage Statement}
The authors used generative AI tools to check the correctness of theorem statements, identify and fix errors in proofs, write code for experimentation, produce initial figures and tables, and copyedit. The authors verified all AI-assisted output and remain responsible for the content of this manuscript.

\bibliography{bib}

@article{ponomarev2024lower,
  title={On the lower confidence band for the optimal welfare},
  author={Ponomarev, Kirill and Semenova, Vira},
  journal={arXiv preprint arXiv:2410.07443},
  year={2024}
}

@inproceedings{luedtke2020performance,
  title={Performance guarantees for policy learning},
  author={Luedtke, Alex and Chambaz, Antoine},
  booktitle={Annales de l'IHP Probabilites et statistiques},
  volume={56},
  number={3},
  pages={2162},
  year={2020}
}

@article{luedtke2016statistical,
  title={Statistical inference for the mean outcome under a possibly non-unique optimal treatment strategy},
  author={Luedtke, Alexander R and Van Der Laan, Mark J},
  journal={Annals of statistics},
  volume={44},
  number={2},
  pages={713},
  year={2016}
}

@article{whitehouse2025inference,
  title={Inference on optimal policy values and other irregular functionals via smoothing},
  author={Whitehouse, Justin and Austern, Morgane and Syrgkanis, Vasilis},
  journal={arXiv preprint arXiv:2507.11780},
  year={2025}
}

@article{chen2023inference,
  title={Inference on optimal dynamic policies via softmax approximation},
  author={Chen, Qizhao and Austern, Morgane and Syrgkanis, Vasilis},
  journal={arXiv preprint arXiv:2303.04416},
  year={2023}
}

@article{chernozhukov2013intersection,
  title={Intersection bounds: Estimation and inference},
  author={Chernozhukov, Victor and Lee, Sokbae and Rosen, Adam M},
  journal={Econometrica},
  volume={81},
  number={2},
  pages={667--737},
  year={2013},
  publisher={Wiley Online Library}
}

@article{shi2020breaking,
  title={Breaking the Curse of Nonregularity with Subagging---Inference of the Mean Outcome under Optimal Treatment Regimes},
  author={Shi, Chengchun and Lu, Wenbin and Song, Rui},
  journal={Journal of Machine Learning Research},
  volume={21},
  number={176},
  pages={1--67},
  year={2020}
}

@techreport{van2014probability,
  title={Probability in high dimension},
  author={Van Handel, Ramon},
  year={2014}
}

@article{fang2019inference,
  title={Inference on directionally differentiable functions},
  author={Fang, Zheng and Santos, Andres},
  journal={The Review of Economic Studies},
  volume={86},
  number={1},
  pages={377--412},
  year={2019},
  publisher={Oxford University Press}
}

@article{carcamo2020directional,
  title={Directional differentiability for supremum-type functionals},
  author={C{\'a}rcamo, Javier and Cuevas, Antonio and Rodr{\'\i}guez, Luis-Alberto},
  journal={Bernoulli},
  volume={26},
  number={3},
  pages={2143--2175},
  year={2020},
  publisher={JSTOR}
}

@incollection{van1996weak,
  title={Weak convergence},
  author={Van Der Vaart, Aad W and Wellner, Jon A},
  booktitle={Weak convergence and empirical processes: with applications to statistics},
  pages={16--28},
  year={1996},
  publisher={Springer}
}

@book{kosorok2008introduction,
  title={Introduction to empirical processes and semiparametric inference},
  author={Kosorok, Michael R},
  year={2008},
  publisher={Springer}
}

@misc{chernozhukov2018double,
  title={Double/debiased machine learning for treatment and structural parameters},
  author={Chernozhukov, Victor and Chetverikov, Denis and Demirer, Mert and Duflo, Esther and Hansen, Christian and Newey, Whitney and Robins, James},
  year={2018},
  publisher={Oxford University Press Oxford, UK}
}

@article{chernozhukov2023high,
  title={High-dimensional data bootstrap},
  author={Chernozhukov, Victor and Chetverikov, Denis and Kato, Kengo and Koike, Yuta},
  journal={Annual Review of Statistics and Its Application},
  volume={10},
  number={1},
  pages={427--449},
  year={2023},
  publisher={Annual Reviews}
}

@article{chernozhukov2013gaussian,
  title={Gaussian approximations and multiplier bootstrap for maxima of sums of high-dimensional random vectors},
  author={Chernozhukov, Victor and Chetverikov, Denis and Kato, Kengo},
  journal={The Annals of Statistics},
  pages={2786--2819},
  year={2013},
  publisher={JSTOR}
}

@article{bibaut2020sufficient,
  title={Sufficient and insufficient conditions for the stochastic convergence of {C}es{\`a}ro means},
  author={Bibaut, Aur{\'e}lien F and Luedtke, Alex and van der Laan, Mark J},
  journal={arXiv preprint arXiv:2009.05974},
  year={2020}
}

@article{chambaz2017targeted,
  title={Targeted sequential design for targeted learning inference of the optimal treatment rule and its mean reward},
  author={Chambaz, Antoine and Zheng, Wenjing and van der Laan, Mark J},
  journal={Annals of statistics},
  volume={45},
  number={6},
  pages={2537},
  year={2017}
}

@article{bibaut2021post,
  title={Post-contextual-bandit inference},
  author={Bibaut, Aur{\'e}lien and Dimakopoulou, Maria and Kallus, Nathan and Chambaz, Antoine and van Der Laan, Mark},
  journal={Advances in neural information processing systems},
  volume={34},
  pages={28548--28559},
  year={2021}
}

@article{hadad2021confidence,
  title={Confidence intervals for policy evaluation in adaptive experiments},
  author={Hadad, Vitor and Hirshberg, David A and Zhan, Ruohan and Wager, Stefan and Athey, Susan},
  journal={Proceedings of the national academy of sciences},
  volume={118},
  number={15},
  pages={e2014602118},
  year={2021},
  publisher={National Academy of Sciences}
}

@inproceedings{yao2021power,
  title={Power constrained bandits},
  author={Yao, Jiayu and Brunskill, Emma and Pan, Weiwei and Murphy, Susan and Doshi-Velez, Finale},
  booktitle={Machine Learning for Healthcare Conference},
  pages={209--259},
  year={2021},
  organization={PMLR}
}

@inproceedings{zhan2021off,
  title={Off-policy evaluation via adaptive weighting with data from contextual bandits},
  author={Zhan, Ruohan and Hadad, Vitor and Hirshberg, David A and Athey, Susan},
  booktitle={Proceedings of the 27th ACM SIGKDD Conference on Knowledge Discovery \& Data Mining},
  pages={2125--2135},
  year={2021}
}

@article{zhan2024policy,
  title={Policy learning with adaptively collected data},
  author={Zhan, Ruohan and Ren, Zhimei and Athey, Susan and Zhou, Zhengyuan},
  journal={Management Science},
  volume={70},
  number={8},
  pages={5270--5297},
  year={2024},
  publisher={INFORMS}
}

@book{boucheron2003concentration,
  title={Concentration Inequalities: A Nonasymptotic Theory of Independence},
  author={Boucheron, St{\'e}phane and Lugosi, G{\'a}bor and Massart, Pascal},
  year={2013},
  publisher={Oxford University Press}
}

@article{freedman1975tail,
  title={On tail probabilities for martingales},
  author={Freedman, David A},
  journal={the Annals of Probability},
  pages={100--118},
  year={1975},
  publisher={JSTOR}
}

@article{howard2020time,
  title={Time-uniform Chernoff bounds via nonnegative supermartingales},
  author={Howard, Steven R and Ramdas, Aaditya and McAuliffe, Jon and Sekhon, Jasjeet},
  year={2020}
}

@article{bibaut2021risk,
  title={Risk minimization from adaptively collected data: Guarantees for supervised and policy learning},
  author={Bibaut, Aur{\'e}lien and Kallus, Nathan and Dimakopoulou, Maria and Chambaz, Antoine and van Der Laan, Mark},
  journal={Advances in neural information processing systems},
  volume={34},
  pages={19261--19273},
  year={2021}
}

@article{ganssler1978central,
  title={On central limit theorems for martingale triangular arrays},
  author={G{\"a}nssler, Peter and Strobel, J and Stute, Winfried},
  journal={Acta Mathematica Hungarica},
  volume={31},
  number={3-4},
  pages={205--216},
  year={1978},
  publisher={Akad{\'e}miai Kiad{\'o}, co-published with Springer Science+ Business Media BV~…}
}

@book{hall2014martingale,
  title={Martingale limit theory and its application},
  author={Hall, Peter and Heyde, Christopher C},
  year={2014},
  publisher={Academic press}
}

@article{brown1971martingale,
  title={Martingale central limit theorems},
  author={Brown, Bruce M},
  journal={The Annals of Mathematical Statistics},
  pages={59--66},
  year={1971},
  publisher={JSTOR}
}

@article{daniely2015multiclass,
  title={Multiclass learnability and the ERM principle},
  author={Daniely, Amit and Sabato, Sivan and Ben-David, Shai and Shalev-Shwartz, Shai},
  journal={The Journal of Machine Learning Research},
  volume={16},
  number={1},
  pages={2377--2404},
  year={2015},
  publisher={JMLR. org}
}

@article{natarajan1989learning,
  title={On learning sets and functions},
  author={Natarajan, Balas K},
  journal={Machine Learning},
  volume={4},
  number={1},
  pages={67--97},
  year={1989},
  publisher={Springer}
}

@article{haussler1995generalization,
  title={A generalization of Sauer's lemma},
  author={Haussler, David and Long, Philip M},
  journal={Journal of Combinatorial Theory, Series A},
  volume={71},
  number={2},
  pages={219--240},
  year={1995},
  publisher={Elsevier}
}

@incollection{tewari2017ads,
  title={From ads to interventions: Contextual bandits in mobile health},
  author={Tewari, Ambuj and Murphy, Susan A},
  booktitle={Mobile health: sensors, analytic methods, and applications},
  pages={495--517},
  year={2017},
  publisher={Springer}
}

@inproceedings{li2010contextual,
  title={A contextual-bandit approach to personalized news article recommendation},
  author={Li, Lihong and Chu, Wei and Langford, John and Schapire, Robert E},
  booktitle={Proceedings of the 19th international conference on World wide web},
  pages={661--670},
  year={2010}
}

@article{hirano2009asymptotics,
  title={Asymptotics for statistical treatment rules},
  author={Hirano, Keisuke and Porter, Jack R},
  journal={Econometrica},
  volume={77},
  number={5},
  pages={1683--1701},
  year={2009},
  publisher={Wiley Online Library}
}

@article{kitagawa2018should,
  title={Who should be treated? empirical welfare maximization methods for treatment choice},
  author={Kitagawa, Toru and Tetenov, Aleksey},
  journal={Econometrica},
  volume={86},
  number={2},
  pages={591--616},
  year={2018},
  publisher={Wiley Online Library}
}

@article{manski2004statistical,
  title={Statistical treatment rules for heterogeneous populations},
  author={Manski, Charles F},
  journal={Econometrica},
  volume={72},
  number={4},
  pages={1221--1246},
  year={2004},
  publisher={Wiley Online Library}
}

@article{bottou2013counterfactual,
  title={Counterfactual reasoning and learning systems: The example of computational advertising.},
  author={Bottou, L{\'e}on and Peters, Jonas and Qui{\~n}onero-Candela, Joaquin and Charles, Denis X and Chickering, D Max and Portugaly, Elon and Ray, Dipankar and Simard, Patrice and Snelson, Ed},
  journal={Journal of Machine Learning Research},
  volume={14},
  number={11},
  year={2013}
}

@inproceedings{thomas2015high,
  title={High-confidence off-policy evaluation},
  author={Thomas, Philip and Theocharous, Georgios and Ghavamzadeh, Mohammad},
  booktitle={Proceedings of the AAAI Conference on Artificial Intelligence},
  volume={29},
  number={1},
  year={2015}
}

@article{goldberg2001eigentaste,
  author  = {Goldberg, Ken and Roeder, Theresa and Gupta, Dhruv and Perkins, Chris},
  title   = {Eigentaste: A Constant Time Collaborative Filtering Algorithm},
  journal = {Information Retrieval},
  volume  = {4},
  number  = {2},
  pages   = {133--151},
  year    = {2001},
  doi     = {10.1023/A:1011419012209}
}

@article{sverdrup2020policytree,
  title={policytree: Policy learning via doubly robust empirical welfare maximization over trees},
  author={Sverdrup, Erik and Kanodia, Ayush and Zhou, Zhengyuan and Athey, Susan and Wager, Stefan},
  journal={Journal of Open Source Software},
  volume={5},
  number={50},
  pages={2232},
  year={2020}
}

@inproceedings{oprescu2019econml,
  title={EconML: A machine learning library for estimating heterogeneous treatment effects},
  author={Oprescu, Miruna and Syrgkanis, Vasilis and Battocchi, Keith and Hei, Maggie and Lewis, Greg},
  booktitle={33rd Conference on Neural Information Processing Systems},
  volume={6},
  year={2019},
  organization={Curran Associates, Inc}
}

@article{imai2023experimental,
  title={Experimental evaluation of individualized treatment rules},
  author={Imai, Kosuke and Li, Michael Lingzhi},
  journal={Journal of the American Statistical Association},
  volume={118},
  number={541},
  pages={242--256},
  year={2023},
  publisher={Taylor \& Francis}
}

@misc{sharma2022doordashbandit,
  author       = {Arjun Sharma},
  title        = {Using a Multi-Armed Bandit with Thompson Sampling to Identify Responsive Dashers},
  year         = {2022},
  month        = mar,
  day          = {15},
  howpublished = {\url{https://careersatdoordash.com/blog/using-a-multi-armed-bandit-with-thompson-sampling-to-identify-responsive-dashers/}},
  note         = {DoorDash blog post, accessed June 6, 2026}
}

@article{varatharajah2022contextual,
  title={A contextual-bandit-based approach for informed decision-making in clinical trials},
  author={Varatharajah, Yogatheesan and Berry, Brent},
  journal={Life},
  volume={12},
  number={8},
  pages={1277},
  year={2022},
  publisher={MDPI}
}

@article{villar2015multi,
  title={Multi-armed bandit models for the optimal design of clinical trials: benefits and challenges},
  author={Villar, Sof{\'\i}a S and Bowden, Jack and Wason, James},
  journal={Statistical science: a review journal of the Institute of Mathematical Statistics},
  volume={30},
  number={2},
  pages={199},
  year={2015}
}

@book{lattimore2020bandit,
  title={Bandit algorithms},
  author={Lattimore, Tor and Szepesv{\'a}ri, Csaba},
  year={2020},
  publisher={Cambridge University Press}
}

@article{slivkins2019introduction,
  title={Introduction to multi-armed bandits},
  author={Slivkins, Aleksandrs},
  journal={Foundations and Trends{\textregistered} in Machine Learning},
  volume={12},
  number={1-2},
  pages={1--286},
  year={2019},
  publisher={Emerald Publishing Limited}
}

@article{dudik2011efficient,
  title={Doubly robust policy evaluation and learning},
  author={Dud{\'\i}k, Miroslav and Langford, John and Li, Lihong},
  journal={arXiv preprint arXiv:1103.4601},
  year={2011}
}

@article{laber2014dynamic,
  title={Dynamic treatment regimes: Technical challenges and applications},
  author={Laber, Eric B and Lizotte, Daniel J and Qian, Min and Pelham, William E and Murphy, Susan A},
  journal={Electronic journal of statistics},
  volume={8},
  number={1},
  pages={1225},
  year={2014}
}

@article{goldberg2014comment,
  title={Comment on “Dynamic treatment regimes: Technical challenges and applications”},
  author={Goldberg, Yair and Song, Rui and Zeng, Donglin and Kosorok, Michael R},
  journal={Electronic journal of statistics},
  volume={8},
  number={1},
  pages={1290},
  year={2014}
}

@article{chakraborty2010inference,
  title={Inference for non-regular parameters in optimal dynamic treatment regimes},
  author={Chakraborty, Bibhas and Murphy, Susan and Strecher, Victor},
  journal={Statistical methods in medical research},
  volume={19},
  number={3},
  pages={317--343},
  year={2010},
  publisher={SAGE Publications Sage UK: London, England}
}

@article{le1986asymptotic,
  title={Asymptotic Methods in Statistical Decision Theory},
  author={Le Cam, Lucien},
  journal={Springer Series in Statistics},
  year={1986},
  publisher={Springer New York}
}

@article{dehejia2005program,
  title={Program evaluation as a decision problem},
  author={Dehejia, Rajeev H},
  journal={Journal of Econometrics},
  volume={125},
  number={1-2},
  pages={141--173},
  year={2005},
  publisher={Elsevier}
}

@article{stoye2009minimax,
  title={Minimax regret treatment choice with finite samples},
  author={Stoye, J{\"o}rg},
  journal={Journal of Econometrics},
  volume={151},
  number={1},
  pages={70--81},
  year={2009},
  publisher={Elsevier}
}

@article{girard2025online,
  title={Online Policy Learning via a Self-Normalized Maximal Inequality},
  author={Girard, Samuel and Bibaut, Aur{\'e}lien and Zenati, Houssam},
  journal={arXiv preprint arXiv:2510.15483},
  year={2025}
}

@article{horvitz1952generalization,
  title={A generalization of sampling without replacement from a finite universe},
  author={Horvitz, Daniel G and Thompson, Donovan J},
  journal={Journal of the American statistical Association},
  volume={47},
  number={260},
  pages={663--685},
  year={1952},
  publisher={Taylor \& Francis}
}

@article{rosenbaum1983central,
  title={The central role of the propensity score in observational studies for causal effects},
  author={Rosenbaum, Paul R and Rubin, Donald B},
  journal={Biometrika},
  volume={70},
  number={1},
  pages={41--55},
  year={1983},
  publisher={Oxford University Press}
}

@book{van2011targeted,
  title={Targeted learning: causal inference for observational and experimental data},
  author={Van der Laan, Mark J and Rose, Sherri and others},
  volume={4},
  year={2011},
  publisher={Springer}
}

@article{bang2005doubly,
  title={Doubly robust estimation in missing data and causal inference models},
  author={Bang, Heejung and Robins, James M},
  journal={Biometrics},
  volume={61},
  number={4},
  pages={962--973},
  year={2005},
  publisher={Oxford University Press}
}

@article{robins1995analysis,
  title={Analysis of semiparametric regression models for repeated outcomes in the presence of missing data},
  author={Robins, James M and Rotnitzky, Andrea and Zhao, Lue Ping},
  journal={Journal of the american statistical association},
  volume={90},
  number={429},
  pages={106--121},
  year={1995},
  publisher={Taylor \& Francis}
}

@article{robins1994estimation,
  title={Estimation of regression coefficients when some regressors are not always observed},
  author={Robins, James M and Rotnitzky, Andrea and Zhao, Lue Ping},
  journal={Journal of the American statistical Association},
  volume={89},
  number={427},
  pages={846--866},
  year={1994},
  publisher={Taylor \& Francis}
}

@book{tsiatis2006semiparametric,
  title={Semiparametric theory and missing data},
  author={Tsiatis, Anastasios A},
  volume={4},
  year={2006},
  publisher={Springer}
}

@article{li2026statistical,
  title={A statistical test for the benefits of personalizing interventions},
  author={Li, Zhaoqi and Brunskill, Emma},
  journal={Science},
  volume={393},
  number={6807},
  pages={eaeb9506},
  year={2026},
  publisher={American Association for the Advancement of Science}
}

@inproceedings{li2011unbiased,
  title={Unbiased offline evaluation of contextual-bandit-based news article recommendation algorithms},
  author={Li, Lihong and Chu, Wei and Langford, John and Wang, Xuanhui},
  booktitle={Proceedings of the fourth ACM international conference on Web search and data mining},
  pages={297--306},
  year={2011}
}

@inproceedings{thomas2016data,
  title={Data-efficient off-policy policy evaluation for reinforcement learning},
  author={Thomas, Philip and Brunskill, Emma},
  booktitle={International conference on machine learning},
  pages={2139--2148},
  year={2016},
  organization={PMLR}
}

@inproceedings{jiang2016doubly,
  title={Doubly robust off-policy value evaluation for reinforcement learning},
  author={Jiang, Nan and Li, Lihong},
  booktitle={International conference on machine learning},
  pages={652--661},
  year={2016},
  organization={PMLR}
}

@article{dudik2012sample,
  title={Sample-efficient nonstationary policy evaluation for contextual bandits},
  author={Dud{\'\i}k, Miroslav and Erhan, Dumitru and Langford, John and Li, Lihong},
  journal={arXiv preprint arXiv:1210.4862},
  year={2012}
}

@article{kallus2020double,
  title={Double reinforcement learning for efficient off-policy evaluation in markov decision processes},
  author={Kallus, Nathan and Uehara, Masatoshi},
  journal={Journal of Machine Learning Research},
  volume={21},
  number={167},
  pages={1--63},
  year={2020}
}

@article{shi2022statistical,
  title={Statistical inference of the value function for reinforcement learning in infinite-horizon settings},
  author={Shi, Chengchun and Zhang, Sheng and Lu, Wenbin and Song, Rui},
  journal={Journal of the Royal Statistical Society Series B: Statistical Methodology},
  volume={84},
  number={3},
  pages={765--793},
  year={2022},
  publisher={Oxford University Press}
}

@inproceedings{karampatziakis2021off,
  title={Off-policy confidence sequences},
  author={Karampatziakis, Nikos and Mineiro, Paul and Ramdas, Aaditya},
  booktitle={International Conference on Machine Learning},
  pages={5301--5310},
  year={2021},
  organization={PMLR}
}

@article{waudby2024anytime,
  title={Anytime-valid off-policy inference for contextual bandits},
  author={Waudby-Smith, Ian and Wu, Lili and Ramdas, Aaditya and Karampatziakis, Nikos and Mineiro, Paul},
  journal={ACM/IMS Journal of Data Science},
  volume={1},
  number={3},
  pages={1--42},
  year={2024},
  publisher={ACM New York, NY}
}

@article{howard2021time,
  title={Time-uniform, nonparametric, nonasymptotic confidence sequences},
  author={Howard, Steven R and Ramdas, Aaditya and McAuliffe, Jon and Sekhon, Jasjeet},
  journal={The Annals of Statistics},
  volume={49},
  number={2},
  pages={1055--1080},
  year={2021},
  publisher={JSTOR}
}

@article{athey2021policy,
  title={Policy learning with observational data},
  author={Athey, Susan and Wager, Stefan},
  journal={Econometrica},
  volume={89},
  number={1},
  pages={133--161},
  year={2021},
  publisher={Wiley Online Library}
}

@article{luedtke2021correction,
  title   = {Correction Note: ``Statistical Inference for the Mean Outcome under a Possibly Nonunique Optimal Treatment Rule''},
  author  = {Luedtke, Alex and Bibaut, Aur{\'e}lien and van der Laan, Mark J.},
  journal = {The Annals of Statistics},
  year    = {2021},
  volume  = {49},
  number  = {4},
  pages   = {2429--2430},
  doi     = {10.1214/20-AOS2031},
  url     = {https://doi.org/10.1214/20-AOS2031}
}

@article{audibert2007fast,
  title={Fast learning rates for plug-in classifiers},
  author={Audibert, Jean-Yves and Tsybakov, Alexandre B},
  journal={The Annals of Statistics},
  volume={35},
  number={2},
  pages={608--633},
  year={2007}
}

@article{van2015targeted,
  title={Targeted learning of the mean outcome under an optimal dynamic treatment rule},
  author={van der Laan, Mark J and Luedtke, Alexander R},
  journal={Journal of causal inference},
  volume={3},
  number={1},
  pages={61--95},
  year={2015},
  publisher={De Gruyter}
}

@article{jia2024cramming,
  title={Cramming Contextual Bandits for On-policy Statistical Evaluation},
  author={Jia, Zeyang and Imai, Kosuke and Li, Michael Lingzhi},
  journal={arXiv preprint arXiv:2403.07031},
  year={2024}
}

@article{caria2024adaptive,
  title={An adaptive targeted field experiment: Job search assistance for refugees in Jordan},
  author={Caria, A Stefano and Gordon, Grant and Kasy, Maximilian and Quinn, Simon and Shami, Soha Osman and Teytelboym, Alexander},
  journal={Journal of the European Economic Association},
  volume={22},
  number={2},
  pages={781--836},
  year={2024},
  publisher={Oxford University Press}
}

@article{shen2024doubly,
  title={Doubly robust interval estimation for optimal policy evaluation in online learning},
  author={Shen, Ye and Cai, Hengrui and Song, Rui},
  journal={Journal of the American Statistical Association},
  volume={119},
  number={548},
  pages={2811--2821},
  year={2024},
  publisher={Taylor \& Francis}
}
\bibliographystyle{plainnat}

\appendix
\section{Extended Related Work}
\label{app:related}

In this appendix, we provide an additional related work section that addresses works that are more tangentially related to the primary results of the paper.

\paragraph{Additional Work on Off-Policy Evaluation}

The classical IPW estimates for the value of a fixed treatment policy can be derived from seminal results on propensity weighting by \citet{horvitz1952generalization} and \citet{rosenbaum1983central}. Likewise, the foundations for augmented IPW (or AIPW) style estimates for the value of a fixed policy date back to work on missing data by \citet{robins1994estimation, robins1995analysis, tsiatis2006semiparametric}. Off-policy evaluation is also broadly related to the literature on doubly-robust estimation~\citep{bang2005doubly}, double machine learning/cross-fitting~\citep{chernozhukov2018double}, and targeted maximum likelihood estimation~\citep{van2011targeted}.

Modern approaches for off policy evaluation on bandit data date to works by \citet{li2010contextual, li2011unbiased}, who describe an (inefficient) approach for OPE based on rejection sampling.
The doubly-robust AIPW score for OPE was presented in \citet{dudik2011efficient, dudik2012sample}, who show how the use of a reward model can systematically reduce variance in estimates of a policy's value.
Likewise, OPE has been extensively studied in the reinforcement learning literature, with \citet{thomas2015high}, \citet{jiang2016doubly}, and \citet{thomas2016data} being notable examples. These works largely focus on analyzing variance, bounding mean squared error, and providing high-probability, finite sample confidence intervals. In particular, they do not provide a means for doing statistical inference on the value of a policy via asymptotic normality. Further, they assume bounded importance weights, and thus fail to be applicable in settings where suboptimal actions are played with vanishing frequency.
More recent works due to \citet{kallus2020double} and \citet{shi2022statistical} actually provide analyses that yield asymptotically normal estimates of the value of a policy in RL settings.
While there are many more works on OPE in RL settings, they are beyond the scope of this paper.

\paragraph{Additional Work on Policy Learning}

There is a rich literature on policy learning, with contributions originating from the statistics, econometrics, and machine learning literatures.
An early example from the econometrics literature is due to \citet{manski2004statistical} and studies minimax regret rates for learning a globally-optimal treatment policy given data from an experiment. \citet{dehejia2005program} considers a complementary, Bayesian decision theoretic perspective to policy learning. Subsequent work by \citet{hirano2009asymptotics} studies the policy learning problem proposed by Manski asymptotically through the methodology of Le Cam~\citep{le1986asymptotic} in both parametric and semiparametric settings. \citet{stoye2009minimax} shows that, if covariates are continuously distributed, minimax regret cannot vanish to zero if the policy class considered is left unrestricted. \citet{kitagawa2018should} study policy learning with restricted policy classes through the framework of \textit{empirical welfare maximization}, and show regret rates that depend on the VC-dimension of the underlying policy class. \citet{athey2021policy} then extend these results to the setting of observational data, discussing the requisite nuisance estimation rates needed to guarantee low regret. We note that, while the above works are foundational to the literature on policy learning, they are all focused on a setting where data is i.i.d., and thus the regret bounds contained within are not applicable to the present setting.

\begin{table}[H]
\centering
\caption{An extended comparison of our results with the most closely related papers.}
\label{tab:related-work}
\small
\setlength{\tabcolsep}{3pt}
\renewcommand{\arraystretch}{1.15}

\begin{tabularx}{\textwidth}{
    @{}
    >{\raggedright\arraybackslash}p{0.155\textwidth}
    >{\raggedright\arraybackslash}p{0.145\textwidth}
    >{\raggedright\arraybackslash}p{0.17\textwidth}
    >{\raggedright\arraybackslash}p{0.205\textwidth}
    X
    @{}
}
\toprule
Work
& Setting
& Target
& Result
& Key assumptions
\\
\midrule

\citet{luedtke2016statistical}
& i.i.d.\ data
& Unrestricted optimal value $V^\circ$
& Asymptotically normal estimator
&  Propensity and regression nuisance convergence rates.
\\
\midrule

\citet{li2026statistical}
& i.i.d.\ data
& Best in-class improvement $V^*- \max_a V(a)$
& Asymptotically normal estimator
&  Uniqueness, Nuisance convergence rates.
\\
\midrule
\citet{zhan2021off}
& Contextual bandit data
& Fixed-policy value $V(\pi)$
& Asymptotically normal estimator
&  $\alpha<1/2$
\\
\midrule

\citet{bibaut2021post}
& Contextual bandit data
& Fixed-policy value $V(\pi)$
& Asymptotically normal estimator
& $\alpha\leq1/2$ ($\alpha \leq 1/3$ for variance convergence)
\\
\midrule

\citet{imai2023experimental}
& i.i.d.\ randomized experiment
& Fixed-policy contrast $V(\pi)-V(q)$
& Finite-sample variance and bias bounds for policies
& Binary treatment; complete randomization
\\
\midrule

\citet{bibaut2021risk}
& Contextual bandit data
& Best-in-class policy learning
& High probability regret
$\Theta\left(n^{(\alpha - 1)/2}\right)$
& Sequential bracketing entropy
\\
\midrule

\citet{zhan2024policy}
& Contextual bandit data
& Best-in-class policy learning
& Expected regret $\Theta\left(n^{(\alpha - 1)/2}\right)$
& Finite Natarajan dimension
\\
\midrule

\citet{girard2025online}
& Contextual bandit data
& Best-in-class policy learning
& High probability regret faster than $n^{(\alpha - 1)/2}$
& Sup-norm entropy; stochastic policies;
H\"olderian variance condition implying uniqueness.
\\
\midrule

\citet{jia2024cramming}
& Contextual bandit data
& Value of final logging policy
& Asymptotically normal estimator
& $\delta$-stability of logging policy; $\alpha \leq \min\left\{1/2, \tfrac{\delta}{1 + \delta}\right\}$
\\
\midrule

\textbf{This paper}
& Contextual bandit data
&  Best in-class contrast $V^\ast -V(q)$
& Asymptotically normal estimator; expected regret
$o\!\left(n^{(\alpha -1)/2}\right)$
& Finite disagreement VC dimension; $\alpha<1$
\\

\bottomrule
\end{tabularx}
\end{table}

\section{Proof of Theorem~\ref{thm:clt}}
\label{app:clt_proof}

In this appendix, we present our proof of  Theorem~\ref{thm:clt}. This result guarantees the convergence in distribution of our estimator for the value of policy improvement over a baseline policy, and can be used to construct asymptotically valid confidence intervals for $\theta^\ast$. .

We now describe at a high level how our proof will proceed. To ease notation, define $\wb{\theta}_n := \frac{\sum_{t = 1}^n\wh{\sigma}_t^{-1}\theta(\wh{\pi}_{t - 1})}{\sum_{t = 1}^n \wh{\sigma}_t^{-1}}$ as the weighted average of improvement gaps associated with the (random) sequence of learned policies and $\wh{\Sigma}_n := \sum_{t = 1}^n\wh{\sigma}_t^{-1}$ as the accumulated sum of the predictable inverse standard deviations. Our proof is based on the following decomposition:
\begin{align*}
\frac{\wh{\Sigma}_n}{\sqrt{n}}\left\{\wh{\theta}_n - \theta^\ast\right\} = \underbrace{\frac{\wh{\Sigma}_n}{\sqrt{n}}\left\{\wh{\theta}_n - \wb{\theta}_n\right\}}_{\text{Martingale CLT Term}} - \underbrace{\frac{\sum_{t = 1}^n \wh{\sigma}_t^{-1} \Reg(\wh{\pi}_{t - 1})}{\sqrt{n}}}_{\text{Cumulative Regret Term}},
\end{align*}
which was outlined slightly more verbosely in Equation~\eqref{eq:clt_decomp}.
As the labeling suggests, our goal is to use a martingale central limit theorem for triangular arrays to show that the first, studentized term is asymptotically standard normal. Then, we aim to use our fast regret rates (outlined in Section~\ref{sec:regret} and proved in Appendix~\ref{app:fast-regret} below) to show that the second term is asymptotically negligible, in the sense that it is $o_\P(1)$. We prove each of these two results as propositions in this appendix. We defer the task of showing almost sure convergence of the predictable variance estimates for the next appendix (Appendix~\ref{app:variance}).

To show the normality of the first term, we leverage the following central limit theorem, which appears in \citet{ganssler1978central} and \citet{brown1971martingale}. Other martingale central limit theorems, such as those discussed in \citet{hall2014martingale}, may also be applicable.

\begin{theorem}[Theorem 2 of \citet{brown1971martingale}, Theorem 2 of \citet{ganssler1978central}]
\label{thm:martingale-clt}
For each $n \geq 1$, let $(\zeta_{n, j} : 1 \leq j \leq m_n)$ denote a martingale difference sequence with respect to a filtration $(\calG_{n, j} : 0 \leq j \leq m_n)$. Define the predictable quadratic variation by
\[
\calV_n := \sum_{j = 1}^{m_n}\E\left(\zeta_{n, j}^2 \mid \calG_{n, j -1}\right) = \sum_{j = 1}^{m_n}\Var\left(\zeta_{n, j} \mid \calG_{n, j- 1}\right).
\]
Suppose that the following hold:
\begin{enumerate}
\item The predictable variance converges in probability to one, i.e.\ $\calV_n  - 1 = o_\P(1)$.
\item For any $\epsilon > 0$, the process $(L_n(\epsilon))$ defined by
\[
L_n(\epsilon) := \sum_{j = 1}^{m_n}\E\left(\zeta_{n, j}^2\1\left\{|\zeta_{n, j}| > \epsilon\right\} \mid \calG_{n, j - 1}\right)
\]
converges in probability to zero, i.e.\ $L_n(\epsilon) = o_\P(1)$.
\end{enumerate}
Then, we have $\sum_{j = 1}^{m_n} \zeta_{n, j} \Rightarrow \calN(0, 1)$.
\end{theorem}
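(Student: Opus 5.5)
The statement is a classical cited result, so I would reproduce the standard characteristic-function proof of the triangular-array martingale CLT, in the style of Brown. I would carry it out in four steps.

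\textbf{Step 1 (truncation).} Write $\sigma_{n,j}^2 := \E(\zeta_{n,j}^2 \mid \calG_{n,j-1})$, which is $\calG_{n,j-1}$-measurable. Define
\[
\tau_n := \max\Big\{k \leq m_n : \sum_{j \leq k}\sigma_{n,j}^2 \leq 2\Big\}.
\]
The event $\{j \leq \tau_n\}$ is $\calG_{n,j-1}$-measurable because the conditional variances are predictable. Hence $\zeta'_{n,j} := \zeta_{n,j}\1\{j \leq \tau_n\}$ is again a martingale difference array. Its predictable variation $\calV_n'$ is at most $2$ almost surely, and its Lindeberg sum is at most $L_n(\epsilon)$. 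Since $\calV_n \to 1$ in probability, $\P(\tau_n < m_n) \to 0$. So $\sum_j \zeta'_{n,j}$ and $\sum_j \zeta_{n,j}$ agree with probability tending to one, and it suffices to prove the result for the truncated array. From now on I assume $\calV_n \leq 2$.

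\textbf{Step 2 (exponential device and telescoping).} Fix $u \in \R$ and set $S_{n,k} := \sum_{j \leq k}\zeta_{n,j}$ and $\calV_{n,k} := \sum_{j \leq k}\sigma_{n,j}^2$. Define
\[
Z_{n,k} := \exp\big(iuS_{n,k} + u^2\calV_{n,k}/2\big),
\]
so that $|Z_{n,k}| \leq e^{u^2}$. Telescoping and conditioning on $\calG_{n,j-1}$ give
\[
\E Z_{n,m_n} - 1 = \sum_j \E\Big[Z_{n,j-1}\Big(e^{u^2\sigma_{n,j}^2/2}\,\E\big(e^{iu\zeta_{n,j}}\mid\calG_{n,j-1}\big) - 1\Big)\Big].
\]
I would bound the conditional factor using the Taylor estimates $|e^{ix}-1-ix+x^2/2| \leq \min(x^2, |x|^3/6)$ and $|e^{y}-1-y| \leq y^2e^{y}$. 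Splitting on $\{|\zeta_{n,j}|\leq\epsilon\}$ versus $\{|\zeta_{n,j}|>\epsilon\}$, the $j$th summand is bounded by a constant depending on $u$ times
\[
\E\big(\zeta_{n,j}^2\1\{|\zeta_{n,j}|>\epsilon\}\mid\calG_{n,j-1}\big) + \epsilon\,\sigma_{n,j}^2 + \sigma_{n,j}^4 .
\]
Summing over $j$, the total error is at most $C_u\big(L_n(\epsilon) + 2\epsilon + \max_j\sigma_{n,j}^2 \cdot 2\big)$.

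\textbf{Step 3 (controlling the error).} Since $\sigma_{n,j}^2 \leq \epsilon^2 + \E(\zeta_{n,j}^2\1\{|\zeta_{n,j}|>\epsilon\}\mid\calG_{n,j-1})$, we get $\max_j\sigma_{n,j}^2 \leq \epsilon^2 + L_n(\epsilon)$. The whole error bound is therefore $C_u\big(L_n(\epsilon)+\epsilon\big)$, up to constants. This random bound is uniformly bounded, because each summand is dominated by a constant times $\sigma_{n,j}^2$ and $\calV_n \leq 2$. By bounded convergence, condition 2, and then letting $\epsilon \downarrow 0$, we conclude $\E Z_{n,m_n} \to 1$.

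\textbf{Step 4 (identifying the limit).} I expect this to be the main obstacle. The factor $e^{u^2\calV_n/2}$ is random, so $\E Z_{n,m_n}\to 1$ does not immediately give the characteristic function. I would write
\[
\E e^{iuS_{n,m_n}} - e^{-u^2/2} = e^{-u^2/2}\Big(\E Z_{n,m_n} - 1\Big) + \E\Big[e^{iuS_{n,m_n}}\big(1 - e^{u^2(\calV_n-1)/2}\big)\Big]\cdot(\text{bounded factor}).
\]
The first term vanishes by Step 3. In the second term the integrand is bounded, since $\calV_n \leq 2$, and tends to $0$ in probability by condition 1, so it vanishes by bounded convergence. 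This gives $\E e^{iuS_{n,m_n}} \to e^{-u^2/2}$ for every $u$, and Lévy's continuity theorem yields $\sum_j\zeta_{n,j}\Rightarrow\calN(0,1)$.
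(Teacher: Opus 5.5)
Your proof is correct. The paper gives no proof of this theorem; it imports it as a black box from the cited works of Brown and of G\"{a}nssler, Strobel, and Stute. The paper's only work is to verify hypotheses 1 and 2 for its own array. So your proposal is a self-contained replacement for a citation, not a reconstruction of an argument in the paper.

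Your route is the standard Lindeberg-type characteristic-function argument. You truncate with the predictable stopping time $\tau_n$, then telescope the compensated exponential $\exp(iuS_{n,k}+u^2\calV_{n,k}/2)$. The truncation at $\calV_{n,k}\le 2$ does three jobs at once:
\begin{enumerate}
\item It bounds the compensator by $e^{u^2}$.
\item It gives $\E(\zeta'_{n,j})^2\le 2$, so no separate square-integrability hypothesis is needed.
\item It makes the integrand in Step 4 bounded, so convergence in probability of $\calV_n$ is enough there, via bounded convergence.
\end{enumerate}
The citation buys brevity. Your argument shows where each hypothesis enters. The conditional Lindeberg condition controls the third-order Taylor remainder and also gives $\max_j\sigma_{n,j}^2\le\epsilon^2+L_n(\epsilon)$, and hence controls $\sum_j\sigma_{n,j}^4$. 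Condition 1 is used only to get $\P(\tau_n<m_n)\to 0$ and in the final identification of the limit.

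Three cosmetic fixes:
\begin{itemize}
\item In Steps 2--3, everything should refer to the truncated array. Its Lindeberg sum satisfies $L_n'(\epsilon)\le\min\{L_n(\epsilon),\calV_n'\}\le\min\{L_n(\epsilon),2\}$, which is what justifies bounded convergence.
\item The Step 2 display bounds the integrand, so $|\E Z_{n,m_n}-1|$ is bounded by its expectation. Your Step 3 already handles exactly this.
\item The ``bounded factor'' in Step 4 is identically $1$, since $e^{-u^2/2}Z_{n,m_n}=e^{iuS_{n,m_n}}e^{u^2(\calV_n-1)/2}$.
\end{itemize}
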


\begin{prop}
Under the assumptions of Theorem~\ref{thm:clt}, the randomly centered studentized statistic is asymptotically standard normal, i.e.\
\[
\frac{\wh{\Sigma}_n}{\sqrt{n}}\left\{\wh{\theta}_n - \wb{\theta}_n\right\} \Rightarrow \calN(0, 1).
\]
\end{prop}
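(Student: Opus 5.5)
We will apply Theorem~\ref{thm:martingale-clt} to the triangular array
\[
\zeta_{n,t} := \frac{1}{\sqrt{n}}\,\wh{\sigma}_t^{-1}\left\{\wt{Y}_t\big(\wh{\pi}_{t-1}, q\big) - \theta\big(\wh{\pi}_{t-1}\big)\right\}, \qquad 1 \leq t \leq n,
\]
with filtration $\calG_{n,t} := \calF_t$. Summing gives exactly $\frac{\wh{\Sigma}_n}{\sqrt{n}}\{\wh{\theta}_n - \wb{\theta}_n\}$.

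\textbf{Martingale difference property.} Both $\wh{\sigma}_t$ and $\wh{\pi}_{t-1}$ are $\calF_{t-1}$-measurable. Assumption~\ref{ass:observations} (stationarity, predictable assignment, consistency) then gives
\[
\E\big(\wt{Y}_t(\pi, q) \mid \calF_{t-1}\big) = \E\Big[\sum_a g_\pi(a \mid X)\mu_0(a,X)\Big] = \theta(\pi)
\]
for any $\calF_{t-1}$-measurable $\pi$. Integrability holds because $|\wt{Y}_t| \lesssim t^{\alpha}$ by Assumptions~\ref{ass:estimation} and~\ref{ass:propensity}. Hence $\E(\zeta_{n,t} \mid \calF_{t-1}) = 0$.

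\textbf{Variance condition.} Since $\wh{\sigma}_t$ is predictable, $\Var(\zeta_{n,t} \mid \calF_{t-1}) = n^{-1}\sigma_t^2/\wh{\sigma}_t^2$, so
\[
\calV_n = \frac{1}{n}\sum_{t=1}^n \frac{\sigma_t^2}{\wh{\sigma}_t^2}.
\]
The hypothesis $(\wh{\sigma}_t - \sigma_t)/\sigma_t \to 0$ a.s. gives $r_t := \sigma_t^2/\wh{\sigma}_t^2 \to 1$ almost surely, because $\sigma_t > 0$ by Assumption~\ref{ass:ipw_variance_scale}. Each $r_t$ is finite, because $\sigma_t^2 \lesssim t^{2\alpha}$ and $\wh{\sigma}_t \geq \kappa_t > 0$. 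The Cesàro mean of an a.s. convergent sequence converges a.s. to the same limit. Pathwise, the finitely many early terms contribute $O(1/n)$ times a finite random constant, so $\calV_n \to 1$ a.s., hence in probability. The lower bound $\kappa_t$ is used only here, to guarantee finiteness of the early terms.

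\textbf{Lindeberg condition (main obstacle).} Assumptions~\ref{ass:estimation} and~\ref{ass:propensity} give $|g_\pi| \leq 1 + \max q \leq 2$ and $Y \in [0,1]$. Hence
\[
|\wt{Y}_t(\pi, q) - \theta(\pi)| \leq C t^{\alpha}.
\]
Assumption~\ref{ass:ipw_variance_scale} gives $\sigma_t \geq c_\sigma^{1/2} t^{\alpha/2}$. Let $\Omega_0$ be the event where $\wh{\sigma}_t/\sigma_t \to 1$. On $\Omega_0$ there is a random $T$ with $\wh{\sigma}_t \geq \sigma_t/2$ for $t > T$. For such $t$,
\[
|\zeta_{n,t}| \leq \frac{2C t^{\alpha}}{\sqrt{n}\, c_\sigma^{1/2} t^{\alpha/2}} \leq C' n^{(\alpha - 1)/2} \to 0,
\]
using $t^{\alpha/2} \leq n^{\alpha/2}$ and $\alpha < 1$. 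So for $n$ large all indicators $\1\{|\zeta_{n,t}| > \epsilon\}$ with $t > T$ vanish identically, and their conditional expectations are zero.

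The terms with $t \leq T$ need separate handling, since $\wh{\sigma}_t$ could be as small as $\kappa_t$. Each is bounded by
\[
\frac{1}{n}\E\big(\wh{\sigma}_t^{-2}(\wt{Y}_t - \theta)^2 \mid \calF_{t-1}\big) = \frac{r_t}{n},
\]
so their total is at most $n^{-1}\sum_{t \leq T} r_t$. This is $O(1/n)$ times a finite random quantity and tends to $0$ a.s. Here the random index $T$ is handled pathwise and does not need to be a stopping time, since each $r_t$ is a fixed $\calF_{t-1}$-measurable random variable. Therefore $L_n(\epsilon) \to 0$ a.s., hence in probability.

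Both conditions of Theorem~\ref{thm:martingale-clt} hold, so $\sum_t \zeta_{n,t} \Rightarrow \calN(0,1)$, which is the claim.
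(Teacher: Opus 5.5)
Your proof is correct and follows essentially the same route as the paper. It uses the same array $\zeta_{n,t}$ with Brown's martingale CLT, a pathwise Ces\`aro argument on the probability-one event for $\calV_n$, and a Lindeberg split into finitely many early terms (bounded by $\frac{1}{n}\sum_{t \leq T} \sigma_t^2/\wh{\sigma}_t^2$) and late terms that vanish once $C' n^{(\alpha-1)/2} \leq \epsilon$. One small correction: integrability of $\zeta_{n,t}$, which you need for it to be a martingale difference, also requires the bound $\wh{\sigma}_t \geq \kappa_t$, since $\zeta_{n,t}$ carries the factor $\wh{\sigma}_t^{-1}$; the paper uses it for exactly this ($|\zeta_{n,t}| \lesssim t^{\alpha}/(\kappa_t\sqrt{n})$), so $\kappa_t$ is not used only for finiteness of the early $\sigma_t^2/\wh{\sigma}_t^2$.
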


\begin{proof}[Proof of Asymptotic Normality]
First, for $n \geq 1$ and $1 \leq t \leq n$, define the difference array
\[
\zeta_{n, t} := \frac{\wt{Y}_t(\wh{\pi}_{t - 1}, q) - \theta(\wh{\pi}_{t - 1})}{\wh{\sigma}_t \sqrt{n}},
\]
where we let $\theta(\pi) := V(\pi) - V(q)$ for any policy $\pi \in \Pi$.

Throughout, we work with respect to the natural filtration $(\calF_t)_{t \geq 0}$ generated by the observations $(Z_n)_{n \geq 0}$. To complete the proof, we need to check three things. First, we need to show that $(\zeta_{n, t})_{1 \leq t \leq n}$ is a martingale difference sequence with respect to $(\calF_t)_{0 \leq t \leq n}$. Then, we need to check the two other conditions of theorem.

Thus, we start by checking that, for each $n \geq 1$, the process $(\zeta_{n,t})_{1 \leq t \leq n}$ is a martingale difference sequence with respect to the natural filtration $(\calF_t)_{1 \leq t \leq n}$. First, observe that integrability is trivial, as the assumption that $\wh{\sigma}_t \geq \kappa_t > 0$ for some $\kappa_t > 0$ implies
\[
\E[|\zeta_{n, t}|] \leq \frac{1}{\kappa_t\sqrt{n}}\E\left|\wt{Y}_t(\wh{\pi}_{t - 1}, q) - \theta(\wh{\pi}_{t - 1})\right| \leq 2\frac{t^{\alpha}}{c\kappa_t \sqrt{n}}.
\]
Next, the martingale property is satisfied, as we have
\begin{align*}
\E\left(\zeta_{n, t} \mid \calF_{t - 1}\right) &= \E\left(\frac{\wt{Y}_t(\wh{\pi}_{t - 1}, q) - \theta(\wh{\pi}_{t - 1})}{\wh{\sigma}_t \sqrt{n}} \mid \calF_{t - 1}\right) \\
&= \frac{\E\left(\wt{Y}_t(\wh{\pi}_{t - 1}, q) \mid \calF_{t - 1}\right) - \theta(\wh{\pi}_{t - 1})}{\wh{\sigma}_t \sqrt{n}} \\
&= 0,
\end{align*}
where  the second to last line follows because both $\wh{\sigma}_t$ and $\wh{\pi}_{t - 1}$ are $\calF_{t - 1}$-measurable and the final line follows as we have
\begin{align*}
\E\left(\wt{Y}_t(\wh{\pi}_{t - 1}, q) \mid \calF_{t - 1}\right) &= \E\left(\wt{Y}_t(\wh{\pi}_{t - 1}) \mid \calF_{t - 1}\right) - \E\left(\wt{Y}_t(q) \mid \calF_{t - 1}\right) \\
&= V(\wh{\pi}_{t - 1}) - V(q) \\
&= \theta(\wh{\pi}_{t - 1}),
\end{align*}
where the second to last line follows because, conditional on $\calF_{t - 1}$, the policy $\wh{\pi}_{t - 1}$ is fixed.
Now, we check that the conditions of the central limit theorem are satisfied. To see that the first condition holds, observe that we have
\[
\calV_n = \sum_{t = 1}^n \Var(\zeta_{n, t} \mid \calF_{t - 1}) = \frac{1}{n}\sum_{t = 1}^n\wh{\sigma}_t^{-2}\Var(\wt{Y}_t(\wh{\pi}_{t - 1}, q) \mid \calF_{t - 1}) = \frac{1}{n}\sum_{t = 1}^n \frac{\sigma_t^2}{\wh{\sigma}_t^2}.
\]
Thus, it suffices to demonstrate that the assumptions of Theorem~\ref{thm:clt} imply that
\[
\frac{1}{n}\sum_{t = 1}^n \frac{\sigma_t^2}{\wh{\sigma}_t^2} - 1 = o_\P(1).
\]

\noindent Let $\Omega$ denote the underlying sample space. By the assumption that
\[
\lim_{t \rightarrow \infty}\frac{\wh{\sigma}^2_t - \sigma_t^2}{\sigma_t^2} = 0 \quad \text{almost surely,}
\]
there exists $\Omega_0 \subset \Omega$ with $\P(\Omega_0) = 1$ such that the following holds: for any $\omega \in \Omega_0$, there is a sample size $N(\omega) \in \N$ such that, for all $t \geq N(\omega)$, we have
\begin{equation}
\label{eq:bdd-ratio-variance}
\frac{1}{2} \leq \frac{\wh{\sigma}_t^2(\omega)}{\sigma_t^2(\omega)} \leq \frac{3}{2},
\end{equation}
where we have indexed the predictable processes $(\wh{\sigma}_t^2)_{t \geq 1}$ and $(\sigma_t^2)_{t \geq 1}$ by the underlying element of the probability space to be explicit. Thus, for any such $\omega \in \Omega_0$ and sufficiently large $n$, we have
\begin{align*}
\frac{1}{n}\sum_{t = 1}^n \frac{\sigma_t^2(\omega)}{\wh{\sigma}_t^2(\omega)} - 1 = \frac{1}{n}\sum_{t = 1}^{N(\omega)}\left\{\frac{\sigma_t^2(\omega)}{\wh{\sigma}_t^2(\omega)} - 1\right\} + \frac{1}{n}\sum_{t = N(\omega) + 1}^n \left\{\frac{\sigma_t^2(\omega)}{\wh{\sigma}_t^2(\omega)}  - 1\right\}.
\end{align*}
The first term on the right hand side trivially vanishes as $n$ grows large, i.e.\ we have
\begin{align*}
\lim_{n \rightarrow \infty}\left|\frac{1}{n}\sum_{t = 1}^{N(\omega)}\left\{\frac{\sigma_t^2(\omega)}{\wh{\sigma}_t^2(\omega)} - 1\right\}\right| & \lesssim  \lim_{n \rightarrow \infty} \frac{1}{n}\sum_{t = 1}^{N(\omega)} \frac{t^\alpha}{\kappa_t^2} + \frac{N(\omega)}{n} \\
&\lesssim \max_{1 \leq t \leq N(\omega)} \kappa_t^{-2} \cdot \lim_{n \rightarrow \infty} \frac{N(\omega)^{\alpha  + 1}}{n} + \lim_{n \rightarrow \infty}\frac{N(\omega)}{n} = 0.
\end{align*}
The second term can be shown to converge almost surely to zero as follows. First, note that
\begin{align*}
\frac{1}{n}\left|\sum_{t = N(\omega) + 1}^n \left\{\frac{\sigma_t^2(\omega)}{\wh{\sigma}_t^2(\omega)} - 1\right\} \right| &\leq \frac{1}{n}\sum_{t = N(\omega) + 1}^n \left|\frac{\sigma_t^2(\omega)}{\wh{\sigma}_t^2(\omega)} - 1 \right| \\
&= \frac{1}{n}\sum_{t = N(\omega) + 1}^n \frac{\left|\sigma_t^2(\omega) - \wh{\sigma}_t^2(\omega)\right|}{\wh{\sigma}_t^2(\omega)} \\
&= \frac{1}{n}\sum_{t = N(\omega) + 1}^n \frac{\left|\wh{\sigma}_t^2(\omega)/\sigma_t^2(\omega) - 1\right|}{\wh{\sigma}_t^2(\omega)/\sigma_t^2(\omega)} \\
&\lesssim \frac{1}{n}\sum_{t = N(\omega) + 1}^n \left|\frac{\wh{\sigma}_t^2(\omega) - \sigma_t^2(\omega)}{\sigma_t^2(\omega)}\right|.
\end{align*}
Since we know $a_t(\omega) :=\frac{\wh{\sigma}_t^2(\omega) - \sigma_t^2(\omega)}{\sigma_t^2(\omega)} \rightarrow 0$ as $t \rightarrow \infty$ for all $\omega \in \Omega_0$, it follows that
\[
\lim_{n \rightarrow \infty}\left|\frac{1}{n}\sum_{t = N(\omega) + 1}^n \left\{\frac{\sigma_t^2(\omega)}{\wh{\sigma}_t^2(\omega)} - 1\right\}\right|  \leq \lim_{n \rightarrow \infty}\frac{1}{n}\sum_{t = N(\omega) + 1}^n \left|a_t(\omega)\right| = 0,
\]
Since if $|a_n| \rightarrow 0$ then $\frac{1}{n}\sum_{t = 1}^n |a_t| \rightarrow 0$ as well. Thus, we see that
\[
\lim_{n \rightarrow \infty}\left\{\frac{1}{n}\sum_{t = 1}^n \frac{\sigma_t^2}{\wh{\sigma}_t^2} - 1\right\} = 0 \quad \text{almost surely,}
\]
and hence the desired convergence in probability holds as well.

Now, we check that the Lindeberg condition holds. Our goal is to show that, for any $\epsilon > 0$, we have
\[
L_n(\epsilon) := \sum_{t = 1}^n \E\left(\zeta_{n, t}^2\1\{|\zeta_{n, t}| > \epsilon\} \mid \calF_{t - 1}\right) = o_\P(1).
\]

First, note that there is some deterministic constant $C > 0$ independent of $n$ or $t$ such that
\[
|\zeta_{n, t}| = \frac{1}{\sqrt{n}\wh{\sigma}_t}\left|\wt{Y}_t(\wh{\pi}_{t - 1}, q) - \theta(\wh{\pi}_{t - 1})\right| \leq \frac{Ct^{\alpha}}{\sqrt{n}\wh{\sigma}_t} =: B_{n, t},
\]
which follows from the first part of Assumption~\ref{ass:estimation} (bounded outcomes) and Assumption~\ref{ass:propensity}. In particular, note that because $\wh{\sigma}_t$ is assumed to be $\calF_{t - 1}$-measurable, $B_{n, t}$ is $\calF_{t - 1}$-measurable as well.

Next, on the event outlined in Equation~\eqref{eq:bdd-ratio-variance}, we note that $\wh{\sigma}_t \geq \sigma_t/\sqrt{2}$. Hence, since Assumption~\ref{ass:ipw_variance_scale} implies that $\sigma_t \geq \sqrt{c_\sigma} t^{\alpha/2}$, on the noted event, we have
\begin{equation}
\label{eq:deterministic_bd_set}
B_{n,t} \leq C'\frac{t^{\alpha/2}}{\sqrt{n}} \leq C'n^{\frac{\alpha - 1}{2}}.
\end{equation}

Now, from our earlier discussion, we know there is a random time $N : \Omega_0 \rightarrow \N$ such that Equation~\eqref{eq:deterministic_bd_set} holds for all $N(\omega) \leq t \leq n$. Hence, explicitly noting dependence on the underlying element $\omega$ of the sample space, on the event $\Omega_0$ (which, again, satisfies $\P(\Omega_0) = 1$), we have for any $n \geq N(\omega)$
\begin{align*}
L_n(\epsilon)(\omega) = \underbrace{\sum_{t = 1}^{N(\omega) - 1}\E\left(\zeta_{n, t}^2\1\left\{|\zeta_{n, t} > \epsilon\right\} \mid \calF_{t - 1}\right)(\omega)}_{=: T_1}  + \underbrace{\sum_{t = N(\omega)}^{n}\E\left(\zeta_{n, t}^2\1\left\{|\zeta_{n, t}> \epsilon\right\} \mid \calF_{t - 1}\right)(\omega)}_{=: T_2}.
\end{align*}
We argue that both terms tend to zero on $\Omega_0$. Analyzing the first term, note that our assumption that $\wh{\sigma}_t \geq \kappa_t > 0$ for a deterministic $\kappa_t$ implies that
\begin{align*}
T_1 & \leq \sum_{t = 1}^{N(\omega)- 1}\E\left(\zeta_{n, t}^2\mid \calF_{t - 1}\right)(\omega) \\
&= \frac{1}{n}\sum_{t = 1}^{N(\omega) - 1}\frac{\Var\left(\wt{Y}(\wh{\pi}_{t - 1}, q) \mid \calF_{t - 1}\right)(\omega)}{\wh{\sigma}_t^2(\omega)} \\
&= \frac{1}{n}\sum_{t = 1}^{N(\omega) - 1}\frac{\sigma_t^2(\omega)}{\wh{\sigma}_t^2(\omega)} \xrightarrow[n \rightarrow \infty]{} 0.
\end{align*}

The second term can similarly be bounded. Observe that, for any $\epsilon > 0$, there exists some deterministic $m > 0$ such that $B_{n, t} \leq C'n^{\frac{\alpha - 1}{2}} \leq \epsilon$. Hence, for any $\omega \in \Omega_0$ and any such $n \geq \max\{m, N(\omega)\}$, we have
\begin{align*}
T_2 &= \sum_{t = N(\omega)}^{n}\E\left(\zeta_{n, t}^2 \1\left\{|\zeta_{n, t}| > \epsilon\right\} \mid \calF_{t - 1}\right)(\omega) \\
&\leq \sum_{t  = N(\omega)}^n \1\left\{B_{n, t}(\omega) > \epsilon\right\}\E\left(\zeta_{n, t}^2 \mid \calF_{t - 1}\right)(\omega) \\
&\leq \sum_{t = N(\omega)}^{n}\1\left\{C'n^{\frac{\alpha - 1}{2}} > \epsilon\right\}\E\left(\zeta_{n, t}^2 \mid \calF_{t - 1}\right)(\omega) \\
&= 0,
\end{align*}

Thus, since we have checked all three conditions, the desired central limit theorem holds.
\end{proof}

We now turn to the more difficult argument, which involves showing that the remainder term consisting of a weighted sum of regrets vanishes. We already know that, for each $t \geq 1$, $\E[\Reg(\wh{\pi}_t)] = o\left(t^{\frac{\alpha - 1}{2}}\right)$. To show the entire sum vanishes at $o_\P(1)$ rates, we need to apply the following result on the convergence of Ces\`aro sums due to \citet{bibaut2020sufficient}.

\begin{lemma}[Proposition 2 of \citet{bibaut2020sufficient}]
\label{lem:expectation-cesaro}
Let $(X_n)$ be a sequence of non-negative random variables such that $\E[X_n]= o\left(n^{-\beta}\right)$ for some $\beta \in (0, 1)$. Then, the sample mean $\wb{X}_n := \frac{1}{n}\sum_{t = 1}^n X_t$ also satisfies $\E[\wb{X}_n] = o\left(n^{-\beta}\right)$ as well.
\end{lemma}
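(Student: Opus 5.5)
We need to prove Lemma~\ref{lem:expectation-cesaro}: if $X_n \geq 0$ and $\E[X_n] = o(n^{-\beta})$ with $\beta \in (0,1)$, then $\E[\wb{X}_n] = o(n^{-\beta})$.

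By linearity of expectation, $\E[\wb{X}_n] = \frac{1}{n}\sum_{t=1}^n a_t$ where $a_t := \E[X_t] \geq 0$. The claim is therefore purely deterministic: if $a_t = o(t^{-\beta})$, then the Ces\`aro mean of $(a_t)$ is $o(n^{-\beta})$. Non-negativity is not really needed beyond ensuring the expectations are well defined; randomness plays no further role.

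To prove the deterministic statement, fix $\epsilon > 0$ and choose $T$ such that $a_t \leq \epsilon t^{-\beta}$ for all $t > T$. Split the sum as
\[
n^{\beta}\cdot\frac{1}{n}\sum_{t=1}^n a_t \leq n^{\beta - 1}\sum_{t=1}^{T} a_t + \epsilon\, n^{\beta - 1}\sum_{t = T+1}^n t^{-\beta}.
\]
The first term is a fixed constant times $n^{\beta-1}$. Since $\beta < 1$, it tends to zero as $n \to \infty$.

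For the second term, compare the sum to an integral. Since $t^{-\beta}$ is decreasing,
\[
\sum_{t=1}^n t^{-\beta} \leq 1 + \int_1^n s^{-\beta}\,ds \leq \frac{n^{1-\beta}}{1-\beta} + 1,
\]
so the second term is at most $\epsilon\left(\frac{1}{1-\beta} + n^{\beta - 1}\right)$. Taking $\limsup_{n\to\infty}$ gives $\limsup_n n^{\beta}\E[\wb{X}_n] \leq \epsilon/(1-\beta)$. Since $\epsilon$ was arbitrary, the limit is zero.

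There is no real obstacle. The only point needing care is that $\beta < 1$ is used twice: to make the initial segment negligible, and to keep the constant $1/(1-\beta)$ finite. At $\beta = 1$ the argument fails and a logarithmic factor appears.
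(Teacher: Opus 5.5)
Your argument is correct. The paper does not actually prove this lemma. It imports it as Proposition~2 of \citet{bibaut2020sufficient} and uses it once, with $\beta = 1/2$, to show that the expected Ces\`aro regret term $n^{-1/2}\sum_{t \le n}\E[\sigma_t^{-1}\Reg(\wh{\pi}_{t-1})]$ vanishes.

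Your route makes the paper self-contained. Linearity of expectation reduces the claim to a purely deterministic Ces\`aro (Toeplitz-type) estimate on $a_t = \E[X_t]$. Your $\epsilon$--$T$ split, together with the integral comparison $\sum_{t \le n} t^{-\beta} \le 1 + n^{1-\beta}/(1-\beta)$, handles that estimate completely. It also makes clear that neither non-negativity nor any dependence among the $X_t$ matters. The condition $\beta < 1$ is exactly what is needed; at $\beta = 1$ a $\log n$ factor appears.

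What the citation supplies instead is context. The reason the paper cites \citet{bibaut2020sufficient} is the contrast with the in-probability version: $X_n = o_\P(n^{-\beta})$ does \emph{not} imply $\wb{X}_n = o_\P(n^{-\beta})$. Your proof shows why the expectation hypothesis avoids this problem: it turns the question into a deterministic one. That is why the paper must prove the expected regret bound of Theorem~\ref{thm:fast-regret}, via uniform integrability, rather than settling for an $o_\P$ rate.
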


\begin{prop}
Under the assumptions of Theorem~\ref{thm:clt}, the Ces\`aro regret term vanishes in probability, i.e.\ we have
\[
\wh{R}_n := \frac{\sum_{t = 1}^n \wh{\sigma}_t^{-1}\Reg(\wh{\pi}_{t - 1})}{\sqrt{n}} = o_\P(1).
\]
\end{prop}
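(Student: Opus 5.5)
The plan is to reduce $\wh{R}_n$ to a Ces\`aro average of scaled regrets with deterministic weights, and then combine Theorem~\ref{thm:fast-regret} with Lemma~\ref{lem:expectation-cesaro}. The random weights $\wh{\sigma}_t^{-1}$ cannot be handled directly in expectation, since they are only controlled almost surely for large $t$ and only by $\kappa_t^{-1}$ for small $t$. I will therefore split $\wh{R}_n$ on the almost sure event $\Omega_0$ from the previous proof.

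Fix $\omega \in \Omega_0$ and let $N(\omega)$ be the random time after which $\wh{\sigma}_t \geq \sigma_t/\sqrt{2} \geq \sqrt{c_\sigma/2}\, t^{\alpha/2}$. Regret is nonnegative and bounded by $2$, since outcomes lie in $[0,1]$. For $n \geq N(\omega)$ this gives
\[
0 \leq \wh{R}_n \leq \frac{2}{\sqrt{n}}\sum_{t=1}^{N(\omega)-1}\kappa_t^{-1} + \frac{C}{\sqrt{n}}\sum_{t=1}^{n} t^{-\alpha/2}\Reg(\wh{\pi}_{t-1}).
\]
The first term is finite for each $\omega$ and is multiplied by $n^{-1/2}$, so it tends to zero almost surely. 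The whole argument therefore reduces to showing that the deterministically weighted sum
\[
D_n := \frac{1}{\sqrt{n}}\sum_{t=1}^n t^{-\alpha/2}\Reg(\wh{\pi}_{t-1})
\]
is $o_\P(1)$. Since $0 \le \wh{R}_n \le (\text{a.s.\ vanishing term}) + C D_n$ eventually, that suffices.

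To control $D_n$, set $X_t := t^{(1-\alpha)/2}\, t^{-\alpha/2}\Reg(\wh{\pi}_{t-1}) \cdot t^{\alpha - 1/2}$. Equivalently, I want to write $D_n = \sqrt{n}\cdot \frac{1}{n}\sum_t t^{-\alpha/2}\Reg(\wh{\pi}_{t-1})$. Theorem~\ref{thm:fast-regret} gives $\E[\Reg(\wh{\pi}_{t-1})] = o(t^{(\alpha-1)/2})$; the shift from $t-1$ to $t$ is harmless. Hence the nonnegative variables $X_t := t^{-\alpha/2}\Reg(\wh{\pi}_{t-1})$ satisfy $\E[X_t] = o(t^{-1/2})$. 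Applying Lemma~\ref{lem:expectation-cesaro} with $\beta = 1/2 \in (0,1)$ yields $\E[\frac{1}{n}\sum_{t\le n} X_t] = o(n^{-1/2})$, so $\E[D_n] = o(1)$. Markov's inequality then gives $D_n = o_\P(1)$.

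The main obstacle is the decoupling step at the start: the comparison $\wh{\sigma}_t \gtrsim t^{\alpha/2}$ only holds after a random, non-uniform time $N(\omega)$. I need to make sure the early terms are killed without any integrability of $\kappa_t^{-1}$. This works because the bound is pathwise, so convergence almost surely of that piece, and hence in probability, suffices. I then combine the pieces with the elementary fact that if $0 \le U_n \le V_n + W_n$ with $V_n \to 0$ a.s.\ and $W_n = o_\P(1)$, then $U_n = o_\P(1)$. All the genuinely hard work, namely the uniform integrability behind the expectation rate, is already contained in Theorem~\ref{thm:fast-regret}.
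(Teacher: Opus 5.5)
Your proposal is correct and follows essentially the same route as the paper. You bound the finitely many early terms via $\kappa_t$, use the almost sure convergence of $\wh{\sigma}_t/\sigma_t$ together with $\sigma_t^2 \geq c_\sigma t^{\alpha}$ to dominate the remaining terms by a deterministically weighted regret sum, and then kill that sum in expectation via Theorem~\ref{thm:fast-regret}, Lemma~\ref{lem:expectation-cesaro}, and Markov's inequality. The differences are cosmetic: the paper passes through the intermediate quantity $R_n$ weighted by the true $\sigma_t^{-1}$, and it uses fixed-$N$ events $\Epsilon_N$ with $\P(\Epsilon_N)\to 1$ instead of your pathwise random time $N(\omega)$. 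Your first, garbled definition of $X_t$ simplifies to $t^{-\alpha/2}\Reg(\wh{\pi}_{t-1})$, so it is harmless.
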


\begin{proof}
First, we show that the random, cumulative regret vanishes in probability when weighted by the inverse of the true, predictable standard deviation. In more detail, define the random variable
\[
R_n := \frac{\sum_{t = 1}^n \sigma_t^{-1}\Reg(\wh{\pi}_{t - 1})}{\sqrt{n}}.
\]
Per the results of Theorem~\ref{thm:fast-regret}, we know that $\E\left[\Reg(\wh{\pi}_{t - 1})\right] = o\left(t^{\frac{\alpha - 1}{2}}\right)$.  Further, by Assumption~\ref{ass:ipw_variance_scale}, we have $\sigma_t^2 \geq c_\sigma t^{\alpha}$ for some constant $c_\sigma > 0$. Consequently, we have
\begin{align*}
\E[R_n] &= \frac{1}{\sqrt{n}}\sum_{t = 1}^n \E\left[\sigma_t^{-1}\Reg(\wh{\pi}_{t - 1})\right] \\
&\lesssim \frac{1}{\sqrt{n}}\sum_{t = 1}^n t^{-\alpha/2}\E\left[\Reg(\wh{\pi}_{t - 1})\right] \\
&= \frac{1}{\sqrt{n}}\sum_{t = 1}^n t^{-\alpha/2}o\left(t^{\frac{\alpha - 1}{2}}\right) \\
&= \frac{1}{\sqrt{n}}\sum_{t = 1}^n o(t^{-1/2}) \\
&= o(1),
\end{align*}
where the final line follows from Lemma~\ref{lem:expectation-cesaro}. Thus, for any $\epsilon > 0$, we have
\[
\lim_{n \rightarrow \infty}\P(R_n \geq \epsilon) \leq \lim_{n \rightarrow \infty}\frac{\E[R_n]}{\epsilon} = 0,
\]
and so $R_n = o_\P(1)$. We now show that the same conclusion holds when we weight by estimated inverse standard deviations. That is, we argue that $\wh{R}_n = o_\P(1)$. First, for any $N \geq 1$, define the event
\[
\Epsilon_N := \Bigg\{\frac{\wh{\sigma}_t}{\sigma_t} \geq \frac{1}{2} \quad \text{for all } t \geq N\Bigg\}.
\]
Next, note that $\lim_{N \rightarrow \infty}\P(\Epsilon_N) = 1$. Indeed, defining
\[
\Epsilon_\infty := \bigcup_{N = 1}^\infty \Epsilon_N = \bigcup_{N = 1}^\infty \bigcap_{t \geq N}\left\{\frac{\wh{\sigma}_t}{\sigma_t} \geq \frac{1}{2}\right\} = \Bigg\{\frac{\wh{\sigma}_t}{\sigma_t} \geq \frac{1}{2}\text{ eventually}\Bigg\},
\]
we see that we have the containment
\[
\Epsilon_\infty \supset \Bigg\{\lim_{t \rightarrow \infty}\frac{\wh{\sigma}_t^2}{\sigma_t^2} = 1\Bigg\}.
\]
The latter event has probability one by assumption, and thus since $\Epsilon_N \subset \Epsilon_{N + 1}$ for each $N \geq 1$, we have
\[
\lim_{N \rightarrow \infty}\P\left(\Epsilon_N\right) = \P(\Epsilon_\infty) = 1.
\]
On the event $\Epsilon_N$, for any $n \geq N$, we have the following bound:
\begin{align*}
\wh{R}_n &= \frac{1}{\sqrt{n}}\sum_{t = 1}^n\wh{\sigma}_t^{-1}\Reg(\wh{\pi}_{t - 1}) \\
&= \frac{1}{\sqrt{n}}\sum_{t = 1}^{N - 1}\wh{\sigma}_t^{-1}\Reg(\wh{\pi}_{t - 1}) + \frac{1}{\sqrt{n}}\sum_{t = N}^n\wh{\sigma}_t^{-1}\Reg(\wh{\pi}_{t - 1}) \\
&\leq \frac{1}{\sqrt{n}}\sum_{t = 1}^{N - 1} \wh{\sigma}_t^{-1}\Reg(\wh{\pi}_{t - 1}) + 2 R_n,
\end{align*}
where the final inequality follows because $\wh{\sigma}_t \geq \frac{1}{2}\sigma_t$ on $\Epsilon_N$, for $t \geq N$.
Now, fix $\epsilon > 0$ and $\eta > 0$, and take $N \geq 1$ sufficiently large such that $\P(\Epsilon_N^c) \leq \eta$.  Note that we have
\[
\P\left(\wh{R}_n \geq \epsilon\right)  \leq \P(\Epsilon_N^c) + \P\left(\frac{1}{\sqrt{n}}\sum_{t = 1}^N \wh{\sigma}_t^{-1}\Reg(\wh{\pi}_{t - 1}) \geq \frac{\epsilon}{2}\right) + \P\left(2R_n \geq \frac{\epsilon}{2}\right).
\]
Thus, noting that the latter two probabilities vanish as $n \rightarrow \infty$, we have
\[
\lim\sup_{n \rightarrow \infty}\P\left(\wh{R}_n \geq \epsilon\right) \leq \P(\Epsilon_N^c) \leq \eta.
\]
Since $\eta > 0$ was arbitrary, we must have
\[
\lim\sup_{n \rightarrow \infty}\P\left(\wh{R}_n \geq \epsilon\right) = \lim_{n \rightarrow \infty}\P\left(\wh{R}_n \geq \epsilon\right) = 0,
\]
thus completing the proof.
\end{proof}

\section{Proof of Variance Consistency}
\label{app:variance}
In this appendix, we prove Theorem~\ref{thm:variance-convergence}, which ensures the almost sure convergence of a plug-in estimate of the predictable variance when appropriately scaled by the predictable variance itself. We prove this result by separately arguing that both $\wh{\Phi}_{1,t}$ and $\wh{\Phi}_{2, t}$ converge almost surely when appropriately scaled. Since the former of these estimates depends on an adaptively chosen policy and the latter depends on both the adaptively chosen policy and logging policy, we will need to leverage the complexity bounds on the policy and logging classes in order to show convergence.
In this appendix, we actually prove the result under the more general assumption that the contrasts $g_\pi$ and $t^{-\alpha}g_\pi^2/p_t$ live in function classes that possess polynomial uniform covering numbers.

\begin{ass}
\label{ass:covering}
We assume that there are deterministic function classes $\calG_1$ and $\calG_{2, t}$ and constants $G_1, G_2, \frakv_1, \frakv_2 > 0$ such that
\begin{enumerate}
    \item $\|g_1\|_\infty \leq G_1$ and $\|g_2\|_\infty \leq G_2$ for all $g_1 \in \calG_1$ and $g_2 \in \calG_{2, t}$.
    \item $g_\pi \in \calG_1$ for all $\pi \in \Pi$, and $t^{-\alpha}\frac{g_\pi(a \mid x)^2}{p_t(a \mid x)} \in \calG_{2, t}$ almost surely for all $\pi \in \Pi$.
    \item for $b \in \{1, 2\}$, for all  distributions $Q$ on $\calA \times \calX$, $0 < \epsilon \leq 1$,  and all $t \geq 1$
    \begin{equation}
    \label{eq:uniform-covering}
    \log N\big(G_b \epsilon, \calG_{b, t}, L^2(Q)\big) \lesssim \frakv_b \log\left(\frac{A}{\epsilon}\right),
    \end{equation}
    where $A \geq e$ is a constant and $\calG_{1, t} \equiv \calG_1$.
\end{enumerate}
\end{ass}
When Assumption~\ref{ass:disagreement} holds, Proposition~\ref{prop:fn_class_pseudo} actually implies that the function class $\calG_1$ given by
\[
\calG_1 := \Big\{(a, x) \mapsto g_\pi(a \mid x) : \pi \in \Pi\Big\}
\]
has finite pseudo-dimension, and hence polynomial covering numbers by Lemma~\ref{lem:pseudo-dimension-covering}. Further, the same proposition and lemma combination yields that under Assumption~\ref{ass:logging}, the class $\calG_{2, t}$ given by
\[
\calG_{2, t} := \Big\{(a, x) \mapsto g_\pi(a \mid x)^2 r(a \mid x) : r \in \calP_t, \pi \in \Pi\Big\}
\]
has finite pseudo-dimension and again polynomial uniform covering numbers. We opt to work under Assumption~\ref{ass:covering} as it simplifies the proof.

We also emphasize that, as stated, Assumption~\ref{ass:covering} does not leverage any sort of sequential complexity on the logging policy classes. This is in contrast to the sequential bracketing numbers considered in \citet{bibaut2021post} and the sequential Rademacher complexity based argument leveraged in \citet{zhan2024policy}.

We dedicate the remainder of this section to the proof of Theorem~\ref{thm:variance-convergence}. Our proof proceeds roughly in two stages. First, we leverage high-probability tail bounds for collections of martingales indexed by infinite but well-behaved function classes to show that, for any finite sample size $n$, the worst case deviation of the estimates $\wh{\Phi}_{1, t}$ and $\wh{\Phi}_{2, t}$ from their means is not too large. Then, we use the first Borel-Cantelli lemma to yield almost sure convergence.
We start by stating the aforementioned lemma for completeness.

\begin{lemma}[First Borel-Cantelli Lemma]
\label{lem:borel-cantelli}
Let $(E_n)_{n \geq 1}$ be a sequence of events, and define the event that infinitely many $E_n$ occur as
\[
\Big\{E_n \text{ i.o. }\Big\} = \bigcap_{n \geq 1}\bigcup_{m \geq n}E_m.
\]
Then, if $\sum_{m = 1}^\infty \P(E_m) < \infty$,
\[
\P\Big(E_n \text{ i.o. }\Big) = 0.
\]

\end{lemma}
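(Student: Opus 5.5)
The plan is to use only monotonicity and countable subadditivity of $\P$; no results from earlier in the paper are needed. Write $\Big\{E_n \text{ i.o. }\Big\} = \bigcap_{n \geq 1} B_n$, where $B_n := \bigcup_{m \geq n} E_m$.

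First, since the intersection is contained in each of its members, $\Big\{E_n \text{ i.o. }\Big\} \subset B_n$ for every fixed $n \geq 1$. Monotonicity of probability then gives $\P\Big(E_n \text{ i.o. }\Big) \leq \P(B_n)$ for all $n$.

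Second, countable subadditivity (the union bound) gives
\[
\P(B_n) \leq \sum_{m = n}^\infty \P(E_m).
\]
The right-hand side is the tail of the series $\sum_{m=1}^\infty \P(E_m)$, which converges by hypothesis, so this tail tends to zero as $n \rightarrow \infty$.

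Third, the left-hand side $\P\Big(E_n \text{ i.o. }\Big)$ does not depend on $n$. Letting $n \rightarrow \infty$ in the resulting bound therefore forces $\P\Big(E_n \text{ i.o. }\Big) = 0$.

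There is no real obstacle. The only point needing a moment's care is that the union bound is applied to a countable union, which countable subadditivity of a probability measure covers directly.

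An equivalent route is to set $N := \sum_{m \geq 1} \1_{E_m}$ and use monotone convergence to get $\E[N] = \sum_{m} \P(E_m) < \infty$. Then $N < \infty$ almost surely, and the event $\{N < \infty\}$ is exactly the complement of $\Big\{E_n \text{ i.o. }\Big\}$. I would present the first argument, since it is shorter.
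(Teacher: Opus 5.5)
Your proof is correct: it is the standard argument via monotonicity, countable subadditivity, and the vanishing tail of a convergent series, and your alternative via $N = \sum_m \1_{E_m}$ is equally valid. The paper gives no proof of this lemma (it is quoted ``for completeness'' as a standard tool in the proof of Theorem~\ref{thm:variance-convergence}), so there is no competing argument to compare against; yours is the textbook one.
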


We now provide our proof of almost sure variance convergence. In what follows, we define $\mu_\ell(a, x) := \E\left(Y^\ell \mid X = x, A = a\right)$.

\begin{proof}[Proof of Theorem~\ref{thm:variance-convergence}]
In the sequel, we consider the processes $\big(G_{\ell, n}(g) : g \in \calG_{\ell, n + 1}\big)$ given by
\[
G_{\ell, n}(g) := \frac{1}{n}\sum_{t = 1}^n \left\{\frac{g(A_t \mid X_t)}{p_t(A_t \mid X_t)}Y_t^\ell - \E_X\left[\sum_{a \in \calA}g(a \mid X)\mu_\ell(a, X)\right] \right\}
\]
where $\calG_{\ell, n + 1} = \calG_1$ when $\ell = 1$ and $\calG_{\ell, n + 1} = \calG_{2, n + 1}$ when $\ell = 2$. This process exactly falls into the category of processes outlined in Equation~\eqref{eq:martingale} with weights $w_t = 1$ (equivalently, $\beta =0$), and hence Theorem~\ref{thm:high-prob-chain} yields the existence of constants $C, c > 0$ such that, for any $\ell \in \{1, 2\}$, $n \geq 2$, and $x > 0$
\[
\P\left(\sup_{g \in \calG_{\ell, n + 1}}\left|G_{\ell, n}(g)\right| \geq r_{\ell, n}(x)\right) \leq Ce^{-x} + C\exp\left\{-cn^{1 - \alpha}\right\}.
\]
where we let
\[
r_{\ell, n}(x) := C'G_\ell n^{\frac{\alpha - 1}{2}}(\sqrt{\frakv_\ell} + \sqrt{x}) + C'G_\ell n^{\alpha - 1}(\frakv_\ell\log(n) + x)\log(n)
\]
for some appropriately large constant $C' > 0$.
For $\ell \in \{1, 2\}$, define the sequence of events $(\Epsilon_{\ell, n})_{n \geq 1}$ as follows:
\[
\Epsilon_{\ell, n} := \left\{\sup_{g\in \calG_{\ell, n + 1}}\left|G_{\ell, n}(g)\right| < r_{\ell, n}(x_n)\right\},
\]
where we have selected $x_n = 4\log(n)$ (which is greater than zero for $n \geq 2$). By this choice of $x_n$, we have
\[
\P(\Epsilon_{\ell, n}^c) \leq Cn^{-4} + C\exp\left\{-cn^{1 - \alpha}\right\}.
\]
Summing over all $n \geq 1$  yields
\[
\sum_{n = 1}^\infty \P(\Epsilon_{\ell, n}^c) \lesssim \P(\Epsilon^c_{\ell, 1}) + C\sum_{n = 2}^\infty n^{-4} + C\sum_{n = 2}^\infty \exp\left\{-cn^{1 - \alpha}\right\} < \infty,
\]
which follows because both sums are finite. Thus, Lemma~\ref{lem:borel-cantelli} yields that
\[
\P\left(\Epsilon_{\ell, n}^c \text{ i.o. } \right) = \P\Bigg(\bigcap_{n \geq 1}\bigcup_{m \geq n} \Epsilon_{\ell, m}^c\Bigg)= 0.
\]
Letting $(\Omega, \calF, \P)$ denote the underlying probability space and letting
\[
\Omega_{\ell, 0} := \bigcup_{n \geq 1}\bigcap_{m \geq n}\Epsilon_{\ell, m} = \left(\bigcap_{n \geq 1}\bigcup_{m \geq n}\Epsilon_{\ell, m}^c\right)^c
\]
denote the probability one event that all $\Epsilon_{\ell, n}$ occur for sufficiently large $n$, and set $\Omega_0 := \Omega_{1, 0} \cap \Omega_{2, 0}$, which still has probability one. For any element  $\omega \in \Omega_0$ and $\ell \in \{1, 2\}$, there exists a sample size $N(\omega) \in \N$ satisfying
\[
\sup_{g \in \calG_{\ell, n + 1}}\left|G_{\ell, n}(g)(\omega)\right| < r_{\ell, n}(4\log n) \quad \text{ for all } n \geq N(\omega).
\]
Noting that
\begin{align*}
\lim_{n \rightarrow \infty} r_{\ell, n}(4\log n) &\lesssim \lim_{n \rightarrow \infty}|\calA|G_\ell n^{\frac{\alpha - 1}{2}}\left(\sqrt{\frakv_\ell} + \sqrt{4\log(n)}\right) \\
&\qquad + \lim_{n \rightarrow \infty}|\calA|G_\ell n^{\alpha - 1}\{\frakv_\ell \log(n) + 4\log(n)\}\log(n) \\
&= 0
\end{align*}
it thus follows that, for all $\omega \in \Omega_{0}$ and $\ell \in \{1, 2\}$, we have
\[
\lim_{n \rightarrow \infty}\sup_{g \in \calG_{\ell, n + 1}}\left|G_{\ell,n}(g)(\omega)\right| = 0,
\]
i.e.\ $\sup_{g \in \calG_{\ell, n + 1}}\left|G_{\ell, n}(g)\right| \xrightarrow[n \rightarrow \infty]{a.s.} 0$.

We now show that this uniform convergence implies the almost sure convergence result claimed in Equation~\eqref{eq:variance-rate}. First, define the random quantities
\[
\Phi_{1, n} :=\E\left(\sum_{a \in \calA}g_n(a \mid X_n) \mu_1(a, X_n) \mid \calF_{n- 1}\right)
\]
and
\[
\Phi_{2, n} := \E\left(\sum_{a \in \calA}\frac{g_n(a \mid X_n)^2}{p_n(a \mid X_n)}\mu_2(a,  X_n) \mid \calF_{n - 1}\right),
\]
and recall that we have the identity $\sigma_n^2 = \Phi_{2, n} - \Phi_{1, n}^2$. Further, since we have assumed that $g_{n}(a \mid x) := \1\left\{\wh{\pi}_{n - 1}(x) = a\right\} - q(a \mid x) \in \calG_1$ for each $n \geq 1$, we have that $\wh{\Phi}_{1, n} - \Phi_{1, n} = G_{1, n - 1}(g_n)$. Thus, the results shown above implies, almost surely,
\[
\lim_{n \rightarrow \infty}\left|\wh{\Phi}_{1, n} - \Phi_{1, n}\right| \leq \lim_{n \rightarrow \infty} \sup_{g \in \calG_1}\left|G_{1, n - 1}(g)\right| = 0.
\]
Since we have assumed that outcomes are bounded, we know that $\sup_{n \geq 1}\left|\Phi_{1, n}\right| \leq C$ for some constant $C > 0$. Defining $D_n := \wh{\Phi}_{1, n} - \Phi_{1, n}$, we thus have
\begin{align*}
\left|\wh{\Phi}_{1, n}^2 - \Phi_{1, n}^2\right| = \left|D_n(2 \Phi_{1, n} + D_n)\right| \leq 2C |D_n| + D_n^2 \xrightarrow[n \rightarrow \infty]{a.s.} 0,
\end{align*}
and hence $\left|\wh{\Phi}_{1, n}^2 - \Phi_{1, n}^2\right| \xrightarrow[n \rightarrow \infty]{a.s.} 0$. Consequently, it also follows that $\tfrac{\left|\wh{\Phi}_{1, n}^2 - \Phi_{1, n}^2\right|}{n^{\alpha}} \xrightarrow[n \rightarrow \infty]{a.s.} 0$, since $0 < n^{-\alpha} \leq 1$.

Next, note that by the assumption we have $h_n(a \mid x) := n^{-\alpha}\frac{g_n(a \mid x)^2}{p_n(a \mid x)} \in \calG_{2, n}$. Since we have $n^{-\alpha}\left(\wh{\Phi}_{2, n} - \Phi_{2, n}\right) = G_{2, n -1}(h_n)$, for each $n \geq 1$, we have almost surely that
\[
\lim_{n \rightarrow \infty} n^{-\alpha}\left|\wh{\Phi}_{2, n} - \Phi_{2, n}\right| \leq \lim_{n \rightarrow \infty}\sup_{g \in \calG_{2, n}}\left|G_{2, n - 1}(g)\right| = 0.
\]
Thus, we have that
\[
n^{-\alpha}\left|\wt{\sigma}_n^2 - \sigma_n^2\right| \leq n^{-\alpha}\left|\wh{\Phi}_{2, n} - \Phi_{2, n}\right| + n^{-\alpha}\left|\wh{\Phi}_{1, n}^2 - \Phi_{1, n}^2\right| \xrightarrow[n \rightarrow \infty]{a.s.} 0.
\]

Recalling that $\sigma_n^2 := \sigma_n^2(\wh{\pi}_{n - 1}) \geq c_\sigma n^\alpha$ by Assumption~\ref{ass:ipw_variance_scale} yields that
\[
\frac{|\wt{\sigma}_n^2 - \sigma_n^2|}{\sigma_n^2} \leq \frac{1}{c_\sigma}n^{-\alpha}\left|\wt{\sigma}_n^2 - \sigma_n^2\right| \xrightarrow[n \rightarrow \infty]{a.s.} 0,
\]
which is the desired result.
This completes the proof.
\end{proof}

\section{Proof of Fast Regret Rates}
\label{app:fast-regret}
The main goal of this appendix is to prove Theorem~\ref{thm:fast-regret}, which shows that the expected regret incurred from performing weighted empirical welfare maximization scales as $o\left(n^{(\alpha - 1)/2}\right)$. Our proof of this result proceeds in two stages. First, we use localized bounds on the expected supremum of a collection of martingales to yield the target regret convergence rate \textit{in probability}. Then, we use unlocalized, high-probability bounds for the same collection of martingales to establish uniform integrability.\footnote{We say a collection of random variables $(X_\alpha)_{\alpha \in A}$ is uniformly integrable if, for any $\delta > 0$, there exists a constant $K_\delta > 0$ such that
\[
\sup_{\alpha \in A}\E\left[|X_\alpha|\mathbbm{1}\left\{|X_\alpha| > K_\delta\right\}\right] \leq \delta.
\]} The exact statements and derivations of the bounds leveraged below can be found in Appendix~\ref{app:martingale-bounds}.
Note that we do not concern ourselves with measurability of stated suprema in the sequel. The results could naturally be formulated in terms of outer expectations and measures, or we could posit a separability assumption on the indexing policy class $\Pi$. We choose not to do this for the sake of streamlining the section.
Before presenting the proof, we prove two brief technical devices.

\begin{lemma}
\label{lem:conv-prob-decrease}
Suppose $(U_n)_{n \geq 1}$ is a sequence of real-valued random variables satisfying $U_n \xrightarrow[n\rightarrow \infty]{\P} 0$. Further, suppose that $(a_n)_{n \geq 1}$ is a deterministic, positive sequence such that $\lim_{n \rightarrow \infty}a_n = 0$. Then, there exists a sequence $(\delta_n)_{n \geq 1}$ satisfying (i) $\delta_n > a_n$, (ii) $\delta_n \downarrow 0$, and (iii) $\lim_{n \rightarrow \infty}\P(U_n \geq \delta_n) = 0$.
\end{lemma}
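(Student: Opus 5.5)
This is a standard diagonal construction. We may assume $U_n \geq 0$; otherwise replace it by $|U_n|$, which only enlarges the events in (iii).

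\textbf{Step 1: a sequence of thresholds.} Fix the target levels $\epsilon_k := 2^{-k}$ for $k \geq 1$. Since $U_n \to 0$ in probability, for each $k$ there is an index $n_k$ with $\P(U_n \geq \epsilon_k) \leq \epsilon_k$ for all $n \geq n_k$. Since $a_n \to 0$, we may also enlarge $n_k$ so that $a_n < \epsilon_k$ for all $n \geq n_k$. Finally, we take the $n_k$ strictly increasing, $n_1 < n_2 < \cdots$.

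\textbf{Step 2: define $\delta_n$.} Set $M := 2\sup_n a_n + 1$, which is finite because the sequence $(a_n)$ is convergent. Define
\[
\delta_n := \begin{cases} M, & n < n_1,\\ \epsilon_k, & n_k \leq n < n_{k+1}. \end{cases}
\]
\textbf{Step 3: verify the three properties.}
\begin{enumerate}
    \item[(i)] If $n < n_1$, then $\delta_n = M > a_n$. If $n_k \leq n < n_{k+1}$, then $n \geq n_k$, so $a_n < \epsilon_k = \delta_n$.
    \item[(ii)] The sequence $(\delta_n)$ is nonincreasing and tends to zero because $\epsilon_k \downarrow 0$. If strictly decreasing values are wanted, add the strictly decreasing null sequence $2^{-n}$ to $\delta_n$. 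This keeps (i), and it preserves (iii) because increasing the threshold only shrinks the event $\{U_n \geq \delta_n\}$.
    \item[(iii)] For $n_k \leq n < n_{k+1}$ we have $\P(U_n \geq \delta_n) = \P(U_n \geq \epsilon_k) \leq \epsilon_k$. As $n \to \infty$ the corresponding block index $k \to \infty$, so $\P(U_n \geq \delta_n) \to 0$.
\end{enumerate}

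There is no real obstacle here. The only point needing care is the bookkeeping in Step 1: each $n_k$ must be chosen large enough to handle both the probability bound and the domination $a_n < \epsilon_k$, and the $n_k$ must be strictly increasing so that the blocks partition all $n \geq n_1$.
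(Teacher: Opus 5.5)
Your proof is correct and uses essentially the same dyadic block (diagonal) construction as the paper. The only difference is cosmetic. The paper chooses the block endpoints from the probability bounds alone and then enforces (i) by setting $\delta_n := \epsilon_n \lor 2\sup_{m \geq n} a_m$, whereas you build the requirement $a_n < 2^{-k}$ into the choice of the $n_k$ and pad the initial segment with the constant $M$.
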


\begin{proof}
We start by constructing a sequence $(\epsilon_n)_{n \geq 1}$ satisfying points (ii) and (iii) from the lemma statement. Consider the sequence of indices $(N_k)_{k \geq 0}$ inductively defined by $N_0 = 1$ and
\[
N_k := \min\bigg\{n > N_{k - 1} : \P\left(U_m \geq 2^{-k}\right) \leq 2^{-k} \text{ for all } m \geq n\bigg\}.
\]
By assumption, $U_n \xrightarrow[n \rightarrow \infty]{\P} 0$, and hence
\[
\lim_{n \rightarrow \infty}\P\left(U_n \geq 2^{-k}\right) = 0 \quad \text{for all } k\geq 1.
\]
Hence, each $N_k$ is finite. Now, define the function $k(n)$ as the unique index $k$ such that $n \in [N_{k}, N_{k + 1})$. We define the sequence $(\epsilon_n)_{n \geq 1}$ by $\epsilon_n := 2^{-k(n)}$. Observe that, by construction, $\lim_{n \rightarrow \infty}\epsilon_n = 0$. Further, we have
\[
\lim_{n \rightarrow \infty}\P\left(U_n \geq \epsilon_n\right) \leq \lim_{n \rightarrow \infty}2^{-k(n)} = 0.
\]
Hence, the sequence $(\epsilon_n)_{n \geq 1}$ satisfies the last two points. Now, define $\wb{a}_n := \sup_{m \geq n}a_m$, and set $\delta_n := \epsilon_n \lor 2\wb{a}_n$. Then we trivially have $\delta_n > a_n$, $\delta_n \downarrow 0$, and lastly
\[
\P\left(U_n \geq \delta_n\right) \leq \P\left(U_n \geq \epsilon_n\right).
\]
Hence, the result follows.
\end{proof}

The next result is standard, and so no proof is provided.

\begin{prop}[Vitali's Theorem]
\label{prop:vitali}
Suppose $(X_n)_{n \geq 1}$ are a sequence of random variables and $X$ is a random variable in a measure space $(\Omega, \calF, P)$ such that $X_n \in L^p(P)$ for each $n$ and $X \in L^p(P)$ for some $p \geq 1$. The following are equivalent.
\begin{enumerate}
    \item $\|X_n - X\|_{L^p(P)} \xrightarrow[n \rightarrow \infty]{} 0$, i.e.\ $(X_n)_{n \geq 1}$ converge in $L^p$ to $X$.
    \item $X_n \xrightarrow[n \rightarrow \infty]{\P} X$ and $(|X_n|^p)_{n \geq 1}$ is uniformly integrable.
\end{enumerate}
\end{prop}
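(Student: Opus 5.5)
The statement is the classical Vitali convergence theorem on a probability space. The plan is to work throughout with the equivalent \emph{$\epsilon$--$\delta$ characterization} of uniform integrability, rather than the truncation definition given in the footnote of Appendix~\ref{app:fast-regret}. The characterization says: a family $(Z_n)$ is uniformly integrable if and only if (a) $\sup_n \E|Z_n| < \infty$, and (b) for every $\eta > 0$ there is $\delta > 0$ such that $\P(E) \leq \delta$ implies $\sup_n \E[|Z_n|\1_E] \leq \eta$. I will prove this equivalence first, since it is the only step needing care.

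For the truncation definition $\Rightarrow$ (a),(b), I split $\E[|Z_n|\1_E] \leq K\P(E) + \E[|Z_n|\1\{|Z_n| > K\}]$. For (a),(b) $\Rightarrow$ the truncation definition, Markov's inequality gives $\P(|Z_n| > K) \leq \sup_m\E|Z_m|/K$. This is below $\delta$ for large $K$, uniformly in $n$, and then I apply (b) with $E = \{|Z_n| > K\}$. I will also record the elementary bound $|a+b|^p \leq 2^{p-1}(|a|^p + |b|^p)$ for $p \geq 1$, which comes from convexity. Finally, a single integrable variable, and hence any finite family, satisfies (b) by absolute continuity of the integral.

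\textbf{(1) $\Rightarrow$ (2).} Convergence in probability is immediate from Markov's inequality:
\[
\P(|X_n - X| > \epsilon) \leq \epsilon^{-p}\|X_n - X\|_{L^p}^p \to 0.
\]
For uniform integrability of $(|X_n|^p)$, write
\[
\E[|X_n|^p\1_E] \leq 2^{p-1}\left(\|X_n - X\|_{L^p}^p + \E[|X|^p\1_E]\right).
\]
Given $\eta$, I choose $N$ so that the first term is at most $\eta/2$ for all $n \geq N$. I then pick $\delta$ small enough that $2^{p-1}\E[|X|^p\1_E] \leq \eta/2$ and also $\E[|X_n|^p\1_E] \leq \eta$ for each of the finitely many $n < N$, whenever $\P(E) \leq \delta$. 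The same inequality with $E = \Omega$ gives the $L^1$ bound (a).

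\textbf{(2) $\Rightarrow$ (1).} Set $Y_n := |X_n - X|^p \leq 2^{p-1}(|X_n|^p + |X|^p)$. Since $(|X_n|^p)$ is uniformly integrable and $|X|^p$ is integrable, $(Y_n)$ satisfies (a) and (b). Fix $\eta > 0$, and let $\delta$ be as in (b) for $(Y_n)$. Then
\[
\E Y_n \leq \eta + \E\left[Y_n\1\{Y_n > \eta\}\right].
\]
Because $X_n \to X$ in probability, $\P(Y_n > \eta) = \P(|X_n - X| > \eta^{1/p}) \leq \delta$ for all large $n$. For those $n$ the second term is at most $\eta$, so $\limsup_n \E Y_n \leq 2\eta$. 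Since $\eta$ was arbitrary, $X_n \to X$ in $L^p$. No passage to almost surely convergent subsequences is needed.

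The main obstacle is therefore not either implication itself but the equivalence between the paper's truncation-based definition and the $\epsilon$--$\delta$ form. That equivalence uses finiteness of the measure $\P$ (through Markov's inequality and through splitting on $\{|Z_n| \leq K\}$), so I will state explicitly that $P$ is a probability measure, as it is in every application in the paper.
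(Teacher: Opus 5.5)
Your proof is correct. The paper gives no proof of this proposition; it introduces it with ``The next result is standard, and so no proof is provided''. There is therefore no competing argument to compare against, and your write-up supplies the standard $\epsilon$--$\delta$ proof.

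You handle the reduction from the footnote's truncation definition of uniform integrability to conditions (a)--(b) correctly in both directions. In the step (2) $\Rightarrow$ (1), you rightly use the hypothesis $X \in L^p$. That hypothesis is what removes the need for the usual Fatou-along-an-a.s.-subsequence step.

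Your caveat about finiteness of $P$ is necessary, not pedantic. Read literally on a general measure space, the equivalence is false. Take $\R$ with Lebesgue measure and $p = 1$, and set $X_n := n^{-1}\1_{[0, n]}$ and $X := 0$. Then $X_n \to X$ in measure, and the truncation condition holds with $K = 1$ because $|X_n| \leq 1$. Yet $\|X_n\|_{L^1} = 1$ for every $n$.

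So the proposition must be read on a probability space. That is the only setting in which the paper uses it: the direction (2) $\Rightarrow$ (1) with $p = 1$ and $X = 0$, applied to $\sqrt{S_{n,-\alpha}}\Reg(\wh{\pi}_n)$ in the proof of Theorem~\ref{thm:fast-regret}.
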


The final technical lemma ensures that, for any $\epsilon > 0$, the set of policies that disagree from any optimal policy on an $\epsilon$-fraction of the population have regret that is well-separated from zero.

\begin{lemma}
\label{lem:distance}
Consider the function $\psi : \R_{\geq 0} \rightarrow \R_{\geq 0}$ given by
\[
\psi(\epsilon) := \inf_{\substack{\pi \in \Pi \\ d(\pi, \Pi^\ast) \geq \epsilon}}\left\{V^\ast  - V(\pi)\right\}.
\]
Under Assumptions~\ref{ass:observations},  \ref{ass:disagreement}, the first part of Assumption~\ref{ass:estimation}, and the assumption that $\Pi$ is closed under the disagreement metric, we have $\psi(\epsilon) > 0$ for all $\epsilon > 0$.
\end{lemma}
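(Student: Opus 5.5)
Suppose, for contradiction, that $\psi(\epsilon) = 0$ for some $\epsilon > 0$. I will argue by compactness. The plan is to produce a sequence $(\pi_k) \subset \Pi$ with $d(\pi_k, \Pi^\ast) \geq \epsilon$ and $V(\pi_k) \to V^\ast$, and then extract a limit point. That limit must lie in $\Pi$ and be optimal, yet stay at distance at least $\epsilon$ from $\Pi^\ast$, which is absurd.

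\emph{Total boundedness.} First I will show that $(\Pi, d)$ is totally bounded. By Assumption~\ref{ass:disagreement}, the disagreement class $\calD$ has VC dimension $\frakv$. Note that $d(\pi,\rho)^2 = \E[g_{\pi,\rho}(X)]$. Haussler-type packing bounds for VC classes under $L^1(P_X)$ then give a finite $\eta$-packing of $\calD$ for every $\eta>0$. To transfer this to $\Pi$, fix any $\pi_0 \in \Pi$. For $\pi, \rho \in \Pi$ the pointwise inequality $\1\{\pi\neq\rho\} \le \1\{\pi\neq\pi_0\}\oplus\1\{\rho\neq\pi_0\}$ fails in the multiclass case, so I will not rely on it. Instead I will use the direct observation that a $d$-packing $\{\pi_i\}$ of $\Pi$ at scale $\eta$ gives pairwise functions $g_{\pi_i,\pi_j}$ with large $L^1$ norm. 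The cleaner route is the Natarajan-dimension equivalence noted after Assumption~\ref{ass:disagreement}, combined with the standard multiclass packing bound (Haussler--Long), which gives $N(\eta, \Pi, d) \lesssim (A/\eta)^{c\frakv}$. Finally, $\Pi$ is closed in $d$. The paper implicitly treats the space of policies modulo a.s.\ equality as complete, since $L^1$-limits of indicator-valued maps into the finite set $\calA$ are again such maps. Closed plus totally bounded gives compactness of $\Pi$.

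\emph{Continuity of $V$.} Next I will show that $V$ is Lipschitz in the squared disagreement metric. Since $Y(a) \in [0,1]$,
\[
|V(\pi) - V(\rho)| = \left|\E\left[\mu_0(\pi(X),X) - \mu_0(\rho(X),X)\right]\right| \le \P(\pi(X)\neq\rho(X)) = d(\pi,\rho)^2 .
\]
So $V$ is continuous on $(\Pi, d)$. Two consequences follow. The maximum $V^\ast$ is attained, so $\Pi^\ast \neq \emptyset$. Also, $\Pi^\ast = V^{-1}(\{V^\ast\})$ is closed. In addition, $\pi \mapsto d(\pi,\Pi^\ast)$ is $1$-Lipschitz.

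\emph{Conclusion.} By compactness, the sequence $(\pi_k)$ has a subsequence converging in $d$ to some $\bar\pi \in \Pi$. Continuity of $V$ gives $V(\bar\pi) = V^\ast$, so $\bar\pi \in \Pi^\ast$. Continuity of $d(\cdot,\Pi^\ast)$ gives $d(\bar\pi, \Pi^\ast) \ge \epsilon > 0$. These two facts contradict each other. The edge case where $\{\pi : d(\pi,\Pi^\ast)\ge\epsilon\}$ is empty is trivial, since then $\psi(\epsilon) = +\infty > 0$.

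\emph{Main obstacle.} I expect the hard step to be the compactness, specifically completeness together with total boundedness of $(\Pi,d)$. I will first try to cite the covering bound for Natarajan classes via Appendix~\ref{app:combinatorial-dimension}. I will handle completeness by identifying policies up to $P_X$-null sets and invoking the assumed closedness.
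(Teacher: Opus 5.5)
Your proposal is correct and follows essentially the same route as the paper: total boundedness of $(\Pi, d)$ from finite covering numbers (the paper uses its covering bound, Proposition~\ref{prop:vc-bound-pi}, where you use the Natarajan/Haussler--Long bound), closedness to get compactness, and continuity of $V$ and of $d(\cdot,\Pi^\ast)$. Your minimizing-sequence contradiction is equivalent to the paper's observation that the infimum over the compact set $\Pi_\epsilon$ is attained, and your remark that ``closed and totally bounded implies compact'' needs completeness of the ambient space of policies (modulo $P_X$-null sets) makes explicit a point the paper leaves implicit.
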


\begin{proof}
First, note that Proposition~\ref{prop:vc-bound-pi} implies that $\log N(\epsilon, \Pi, d) < \infty$ for any $\epsilon  > 0$, and hence $\Pi$ is totally bounded under the disagreement metric. Next, since we have assumed that $\Pi$ is closed, it follows by total boundedness that $\Pi$ is also compact. Noting that $d(\cdot,  \Pi^\ast) : \Pi \rightarrow \R_{\geq 0}$ is a continuous map, it follows that the set
\[
\Pi_\epsilon := \Big\{\pi \in \Pi : d(\pi, \Pi^\ast) \geq \epsilon\Big\}
\]
is also closed, and hence compact. Since the value mapping $V : \Pi \rightarrow \R$ is continuous under the above assumptions, the infimum used in defining $\psi(\epsilon)$ is attained by some policy $\pi_\epsilon \in \Pi_\epsilon$. But, by definition of $\Pi_\epsilon$, $\Reg(\pi_\epsilon) > 0$, proving the claim.
\end{proof}

With the above technical devices, alongside the martingale maximal inequalities shown in the next appendix, we can prove our fast regret theorem. In what follows, recall that, for any $\gamma$, we have defined $S_{n, \gamma} := \sum_{t = 1}^n t^{\gamma}$. Further, note that $S_{n, \gamma} \asymp n^{1 + \gamma}$ for any $\gamma \in (-1, 0]$.

\begin{proof}[Proof of Theorem~\ref{thm:fast-regret}]
Our proof proceeds in a few steps, so we first provide an outline of our argument. Our first course of action is to prove the weaker result that regret vanishes in probability as the sample size grows, i.e.\ that $\Reg(\wh{\pi}_n) = o_\P(1)$. Then, we show that under our assumptions, convergence of the regret implies that the policy learned through weighted empirical risk minimization localizes around $\Pi^\ast$, the set of optimal policies. Lastly, we show that we can leverage this localization to show that regret actually scales as $\Reg(\wh{\pi}_n) = o_\P(n^{(\alpha - 1)/2})$.

\paragraph{Showing Vanishing Regret}

First, we establish that the regret vanishes as the overall sample size grows. In what follows, for shorthand, we define the empirical policy value $\wh{V}_n : \Pi \rightarrow \R$ by
\[
\wh{V}_n(\pi) := \frac{1}{S_{n, -\alpha}}\sum_{t = 1}^n t^{-\alpha}\wt{Y}_t(\pi).
\]
Observe the following basic inequality:
\begin{align*}
\Reg(\wh{\pi}_n) &= V^\ast - V(\wh{\pi}_n) \\
&= \left(V^\ast - \wh{V}_n(\pi^\ast)\right) -\left(V(\wh{\pi}_n) - \wh{V}_n(\wh{\pi}_n)\right) + \underbrace{\left(\wh{V}_n(\pi^\ast) - \wh{V}_n(\wh{\pi}_n)\right)}_{\leq 0} \\
&\leq \left(V^\ast - \wh{V}_n(\pi^\ast)\right) -\left(V(\wh{\pi}_n) - \wh{V}_n(\wh{\pi}_n)\right) \\
&= \frac{M_n(\wh{\pi}_n, \pi^\ast)}{S_{n, -\alpha}} \\
&\leq \frac{1}{S_{n, -\alpha}}\sup_{d(\pi, \rho) \leq 1}\left|M_n(\pi, \rho)\right|,
\end{align*}
where, for any $\pi, \rho \in \Pi$, $M_n(\pi, \rho)$ is as in Appendix~\ref{app:martingale-bounds} in Equation~\eqref{eq:diff-martingale}.

Now, Theorem~\ref{thm:local-sup-bound} implies that
\begin{align*}
\E\left[\frac{1}{S_{n, -\alpha}}\sup_{d(\pi, \rho) \leq 1}\left|M_n(\pi, \rho)\right|\right] \lesssim  n^{\frac{\alpha - 1}{2}}\sqrt{\frakv\log\left(A\right)}  + n^{\alpha - 1}\frakv\log\left(A\right)\log(n)^2.
\end{align*}
where again $\frakv$ denotes the VC dimension of the disagreement class $\calD$, and $A \geq e$ is some constant. It is thus apparent that for any $\epsilon >0$, we have
\begin{align*}
&\lim_{n \rightarrow \infty}\P\Big(\Reg(\wh{\pi}_n) \geq \epsilon\Big) \leq \lim_{n \rightarrow \infty}\frac{1}{\epsilon}\E\left[\frac{1}{S_{n, -\alpha}}\sup_{d(\pi, \rho) \leq 1}\left|M_n(\pi, \rho)\right|\right] \\
&\quad \leq \lim_{n \rightarrow \infty} \frac{1}{\epsilon}n^{\frac{\alpha - 1}{2}}\sqrt{\frakv\log(A)} + \lim_{n \rightarrow \infty} \frac{1}{\epsilon}n^{\alpha - 1}\frakv\log(A)\log(n)^2\\
&\quad = 0.
\end{align*}
In other words, we have shown that $\Reg(\wh{\pi}_n) = o_\P(1)$.

\paragraph{Localization of the Learned Policy}
Now, we use the fact that $\Reg(\wh{\pi}_n) = o_\P(1)$ to show that $d(\wh{\pi}_n, \Pi^\ast) = o_\P(1)$, where we recall that
\[
d(\pi, S) := \inf_{\rho \in S}d(\pi, \rho) = \inf_{\rho \in S}\P\left(\pi(X) \neq \rho(X)\right)^{1/2}
\]
for any $\pi \in \Pi$ and $S \subset \Pi$. Observe the following inclusion for any $\epsilon > 0$:
\[
\Big\{d(\wh{\pi}_n, \Pi^\ast) \geq \epsilon  \Big\} \subset \Big\{\Reg(\wh{\pi}_n) \geq \psi(\epsilon)\Big\},
\]
where the function $\psi$ is as in Lemma~\ref{lem:distance}. Since $\psi(\epsilon) > 0$ by assumption, this implies that
\begin{align*}
\lim_{n \rightarrow \infty}\P\Big(d(\wh{\pi}_n, \Pi^\ast) \geq \epsilon \Big) \leq \lim_{n \rightarrow \infty}\P\Big(\Reg(\wh{\pi}_n) \geq \psi(\epsilon)\Big) = 0.
\end{align*}

\paragraph{Showing Faster Regret}

Now we conclude the proof. First, since we know $d(\wh{\pi}_n, \Pi^\ast) \xrightarrow[n \rightarrow \infty]{\P} 0$, Lemma~\ref{lem:conv-prob-decrease} implies the existence of a  sequence $(\delta_n)_{n \geq 1}$ satisfying $\delta_n \downarrow 0$, $\lim_{n \rightarrow \infty}\P(d(\wh{\pi}_n, \Pi^\ast) \geq \delta_n) = 0$, and decaying sufficiently slowly as to satisfy
\begin{equation}
\label{eq:limit-rate}
\lim_{n \rightarrow \infty} n^{\frac{\alpha - 1}{2}}\log(A/\delta_n)\log(n)^2 = 0.
\end{equation}
More explicitly, set $\kappa := \frac{1 - \alpha}{4}$ and set $a_n := e^{-n^\kappa}$. Then, the specific sequence $\delta_n$ outlined in the proof of Lemma~\ref{lem:conv-prob-decrease} will satisfy:
\[
n^{\frac{\alpha - 1}{2}}\log(A/\delta_n)\log(n)^2 \leq n^{\frac{\alpha - 1}{2}}\left\{\log(A) + n^\kappa\right\}\log(n)^2 \xrightarrow[n \rightarrow \infty]{} 0.
\]
Using this shrinking sequence, we can define the ``good'' event $G_n$ by
\[
G_n := \Big\{d(\wh{\pi}_n, \Pi^\ast) \leq  \delta_n\Big\}.
\]
We see that, for any $\epsilon > 0$, we have
\begin{align*}
\P\left(\sqrt{S_{n, -\alpha}}\Reg(\wh{\pi}_n) \geq \epsilon\right) &= \P\left(\left\{\sqrt{S_{n, -\alpha}}\Reg(\wh{\pi}_n) \geq \epsilon\right\} \cap G_n^c\right) + \P\left(\left\{\sqrt{S_{n, -\alpha}}\Reg(\wh{\pi}_n) \geq \epsilon\right\} \cap G_n\right) \\
&\leq \P(G_n^c) + \P\left(\left\{\sqrt{S_{n, -\alpha}}\Reg(\wh{\pi}_n) \geq \epsilon\right\} \cap G_n\right).
\end{align*}
We clearly have $\P(G_n^c) \rightarrow 0$ by our assumption on the sequence $(\delta_n)$. To show the result, we just need to show that the latter probability tends towards zero. To see this behavior, first note that
\begin{align*}
&\P\left(G_n \cap \left\{\sqrt{S_{n, - \alpha}} \Reg(\wh{\pi}_n) \geq \epsilon\right\}\right) = \P\left(\sqrt{S_{n, -\alpha}}\Reg(\wh{\pi}_n)\1_{G_n} \geq \epsilon\right) \\
&\qquad \leq \frac{1}{\epsilon}\E\left[\1_{G_n} \sqrt{S_{n, -\alpha}}\Reg(\wh{\pi}_n)\right] \\
&\qquad \leq \frac{1}{\epsilon}\sqrt{S_{n, -\alpha}}\E\left[\frac{1}{S_{n, -\alpha}} \sup_{d(\pi, \rho) \leq \delta_n}\left|M_n(\pi, \rho)\right|\right] \\
&\qquad \lesssim \frac{1}{\epsilon}\left\{\delta_n\sqrt{\frakv\log\left(\frac{A}{\delta_n}\right)} + n^{\frac{\alpha - 1}{2}}\frakv\log\left(\frac{A}{\delta_n}\right)\log(n)^2\right\}.
\end{align*}

\noindent By our assumption on the rate of decay of $(\delta_n)$, it follows that
\[
\lim_{n \rightarrow \infty}\P\left(G_n \cap\left\{\sqrt{S_{n, -\alpha}}\Reg(\wh{\pi}_n) \geq \epsilon\right\}\right) = 0,
\]
which thus implies
\[
\lim_{n \rightarrow \infty}\P\Big(\sqrt{S_{n, -\alpha}}\Reg(\wh{\pi}_n) \geq \epsilon\Big) = 0.
\]
Since $\epsilon > 0$ was arbitrary, this implies that $\sqrt{S_{n, -\alpha}}\Reg(\wh{\pi}_n) = o_\P(1)$, i.e.\ that
\[
\Reg(\wh{\pi}_n) = o_\P(S_{n, -\alpha}^{-1/2}) = o_\P\left(n^{\frac{\alpha - 1}{2}}\right),
\]
which is the first desired result.

We now strengthen the result by showing $\E\left[\Reg(\wh{\pi}_n)\right] = o\left(n^{\frac{\alpha - 1}{2}}\right)$, or equivalently that $\E\left[\sqrt{S_{n, -\alpha}}\Reg(\wh{\pi}_n)\right] = o(1)$. By Vitali's Theorem (Proposition~\ref{prop:vitali} above), since we have shown $\sqrt{S_{n, -\alpha}}\Reg(\wh{\pi}_n) = o_\P(1)$, it suffices to argue that the sequence $\big(\sqrt{S_{n, -\alpha}}\Reg(\wh{\pi}_n)\big)_{n \geq 1}$ is uniformly integrable. We do precisely this in Theorem~\ref{thm:uniform-integrability} below. Thus, we have shown the result.

\end{proof}

\subsection{Establishing Uniform Integrability}

The one missing piece in the proof of Theorem~\ref{thm:fast-regret} above is showing that the process $\sqrt{S_{n, -\alpha}}\Reg(\wh{\pi}_n)$ is uniformly integrable. We now establish this in the following theorem.

\begin{theorem}
\label{thm:uniform-integrability}
Assume the same setup as Theorem~\ref{thm:fast-regret}.
Then, for all $q \in [1, 2]$, we have
\[
\sup_{n \geq 1}\E\left[S_{n, -\alpha}^{q/2}\Reg(\wh{\pi}_n)^q\right] < \infty.
\]
Consequently, the process $\big(\sqrt{S_{n, -\alpha}}\Reg(\wh{\pi}_n)\big)$ is uniformly integrable.
\end{theorem}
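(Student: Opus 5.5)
The approach is to bound the $q$-th moment via a tail integral, using the basic inequality from the proof of Theorem~\ref{thm:fast-regret}. For any $\pi^\ast \in \Pi^\ast$ we have $0 \leq \Reg(\wh{\pi}_n) \leq S_{n,-\alpha}^{-1}M_n(\wh{\pi}_n, \pi^\ast) \leq S_{n,-\alpha}^{-1}\sup_{d(\pi,\rho)\leq 1}|M_n(\pi,\rho)|$. Writing $Z_n := S_{n,-\alpha}^{-1/2}\sup_{d(\pi,\rho)\leq 1}|M_n(\pi,\rho)|$, this gives $S_{n,-\alpha}^{q/2}\Reg(\wh{\pi}_n)^q \leq Z_n^q$. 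It therefore suffices to show $\sup_n \E[Z_n^2] < \infty$. The cases $q \in [1,2)$ then follow by Jensen/Lyapunov. Uniform integrability of $(\sqrt{S_{n,-\alpha}}\Reg(\wh{\pi}_n))$ follows from the uniform $L^2$ bound by the standard criterion $\E[|X|\1\{|X|>K\}] \leq \E[X^2]/K$. Note that no localization is needed here: the unlocalized ($\delta = 1$) supremum suffices.

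To bound $\E[Z_n^2]$, I will invoke the high-probability chaining bound for collections of martingales indexed by VC-type classes (Theorem~\ref{thm:high-prob-chain} of Appendix~\ref{app:martingale-bounds}), applied with weights $w_t = t^{-\alpha}$ to the difference process $M_n(\pi,\rho)$ indexed by the pairs $(\pi,\rho)$. Its index class has polynomial covering numbers by Assumption~\ref{ass:disagreement} and Proposition~\ref{prop:vc-bound-pi}. The form of that bound, mirroring its use in Appendix~\ref{app:variance}, should be
\[
\P\left(\frac{1}{S_{n,-\alpha}}\sup_{d(\pi,\rho)\leq 1}|M_n(\pi,\rho)| \geq r_n(x)\right) \leq Ce^{-x} + C\exp\{-cn^{1-\alpha}\},
\]
with $r_n(x) \lesssim n^{\frac{\alpha-1}{2}}(\sqrt{\frakv}+\sqrt{x}) + n^{\alpha-1}(\frakv\log n + x)\log n$. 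Multiplying by $\sqrt{S_{n,-\alpha}} \asymp n^{(1-\alpha)/2}$, the threshold becomes $\sqrt{\frakv}+\sqrt{x} + n^{\frac{\alpha-1}{2}}(\frakv\log n + x)\log n$. Setting $u$ equal to this and inverting, the event $\{Z_n \geq C'u\}$ has probability at most $Ce^{-c\min\{u^2,\, u n^{(1-\alpha)/2}/\log n\}} + Ce^{-cn^{1-\alpha}}$ for $u \geq u_0$, a constant depending only on $\frakv$. This is a sub-Gaussian tail plus a sub-exponential tail with scale $\log n/n^{(1-\alpha)/2} \to 0$.

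Then I would compute $\E[Z_n^2] = \int_0^\infty 2u\,\P(Z_n \geq u)\,du$. I would split the integral at $u_0$ (contributing at most $u_0^2$) and at the deterministic cap $U_n := \|Z_n\|_\infty$. This cap is finite: by bounded outcomes and Assumption~\ref{ass:propensity}, each increment of $M_n$ is at most $\lesssim t^{-\alpha}t^{\alpha} = O(1)$, so $U_n \lesssim n/\sqrt{S_{n,-\alpha}} \asymp n^{(1+\alpha)/2}$. The sub-Gaussian and sub-exponential pieces integrate to constants uniformly in $n$, since the sub-exponential scale only shrinks. The remaining piece $Ce^{-cn^{1-\alpha}}$ contributes at most $U_n^2 e^{-cn^{1-\alpha}} \lesssim n^{1+\alpha}e^{-cn^{1-\alpha}}$, which is bounded in $n$ since $\alpha<1$.

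The main obstacle is this last piece. The additive term $e^{-cn^{1-\alpha}}$ in the Freedman-type chaining bound does not decay in $u$, so the bound must be truncated by the crude almost-sure bound on $Z_n$. It has to be checked that the polynomial growth of $U_n$ is beaten by the stretched-exponential factor. Relatedly, I must verify that Theorem~\ref{thm:high-prob-chain} holds uniformly in $x$ with constants independent of $n$; if instead it only holds for $x \lesssim n^{1-\alpha}$, the range beyond that must be handled by the same truncation argument. Finally, small $n$ are handled trivially, since $Z_n$ is bounded there.
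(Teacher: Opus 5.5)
Your proposal is correct and follows essentially the paper's route. You reduce scaled regret to the supremum of a centered, weighted IPW process, invert the high-probability chaining bound of Theorem~\ref{thm:high-prob-chain} into a sub-Gaussian plus shrinking sub-exponential tail, and run a layer-cake integral truncated at the deterministic almost-sure bound, where the residual $e^{-cn^{1-\alpha}}$ term contributes only $O\big(n^{1+\alpha}e^{-cn^{1-\alpha}}\big) = O(1)$. The differences are cosmetic: the paper bounds $\Reg(\wh{\pi}_n) \leq 2\sup_{\pi}|G_n(\pi)|$ over single policies rather than using your pair-indexed difference process, which only needs squared covering numbers, and it treats each $q$ directly via $(U_n - D_n)_+$ and Poussin's lemma instead of your $q=2$ bound plus Lyapunov.
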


We note that the uniform integrability follows from the uniform expectation bound via Poussin's lemma, which we state below.

\begin{prop}[Poussin's Lemma]
\label{prop:pous}
A sequence of integrable random variables $(X_n)_{n \geq 1}$ is uniformly integrable if and only if there is an increasing convex function $\Phi : \R_{\geq 0} \rightarrow \R_{\geq 0}$ such that
\[
\lim_{t \rightarrow \infty}\frac{\Phi(t)}{t} = \infty \quad \text{and} \quad \sup_n \E\left[\Phi(|X_n|)\right] < \infty.
\]

\end{prop}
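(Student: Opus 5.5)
We prove the two implications separately. The direction from the existence of $\Phi$ to uniform integrability is a direct Markov-type estimate. The converse direction requires building $\Phi$ explicitly from the tail bounds that uniform integrability provides, and we expect that construction to be the main obstacle. Throughout we use the definition of uniform integrability from the footnote of Appendix~\ref{app:fast-regret}: for every $\delta > 0$ there is $K_\delta$ with $\sup_n \E\left[|X_n|\1\{|X_n| > K_\delta\}\right] \leq \delta$.

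For the ``if'' direction, suppose $\Phi$ is as stated and set $B := \sup_n \E[\Phi(|X_n|)] < \infty$. Define $m(K) := \inf_{t \geq K}\Phi(t)/t$. Since $\Phi(t)/t \to \infty$, we have $m(K) \to \infty$ as $K \to \infty$. On the event $\{|X_n| > K\}$ we have $|X_n| \leq \Phi(|X_n|)/m(K)$, and hence
\[
\sup_n \E\left[|X_n|\1\{|X_n| > K\}\right] \leq \frac{B}{m(K)}.
\]
Given $\delta > 0$, we choose $K_\delta$ large enough that $B/m(K_\delta) \leq \delta$. Neither convexity nor monotonicity of $\Phi$ is needed here, only superlinear growth.

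For the ``only if'' direction, we use uniform integrability to choose a strictly increasing sequence $K_1 < K_2 < \cdots$ with $K_k \geq k$ and
\[
\sup_n \E\left[|X_n|\1\{|X_n| > K_k\}\right] \leq 2^{-k}.
\]
Let $g(s) := \#\{k \geq 1 : K_k \leq s\}$. This is a nondecreasing, integer-valued step function that is finite for every $s$ (because $K_k \geq k$) and satisfies $g(s) \to \infty$. We then set $\Phi(t) := \int_0^t g(s)\,ds$.

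We now verify that $\Phi$ has the required properties.
\begin{itemize}
\item $\Phi$ is nonnegative and nondecreasing, since $g \geq 0$.
\item $\Phi$ is convex, since its derivative $g$ is nondecreasing.
\item $\Phi$ grows superlinearly: for $t > 0$,
\[
\Phi(t) \geq \int_{t/2}^t g(s)\,ds \geq \frac{t}{2}\,g(t/2), \quad \text{so} \quad \frac{\Phi(t)}{t} \geq \frac{g(t/2)}{2} \to \infty.
\]
\end{itemize}
For the moment bound, we write $g(s) = \sum_k \1\{s \geq K_k\}$ and exchange sum and integral by Tonelli's theorem, which gives
\[
\Phi(t) = \sum_{k} (t - K_k)_+ \leq \sum_k t\,\1\{t > K_k\}.
\]
Applying this with $t = |X_n|$, taking expectations, and using Tonelli again yields
\[
\E\left[\Phi(|X_n|)\right] \leq \sum_k \E\left[|X_n|\1\{|X_n| > K_k\}\right] \leq \sum_k 2^{-k} = 1
\]
uniformly in $n$. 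This completes the converse direction.

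The only delicate points are in this construction. First, $g$ must be finite everywhere so that $\Phi$ is real-valued; this is why we insisted on $K_k \geq k$, so that the $K_k$ diverge. Second, the choice of the $K_k$ must let the bound on $\Phi(t)$ telescope against the summable tail bounds $2^{-k}$. Once the $K_k$ are fixed as above, the remaining steps are routine.
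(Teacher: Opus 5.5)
Your proof is correct and complete. The paper gives no proof of this proposition. It quotes it as a standard fact, and only the sufficiency direction is ever used, with $\Phi(t) = t^q$ for $q \in (1,2]$ in the proof of Theorem~\ref{thm:uniform-integrability}.

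Your Markov-type estimate $\sup_n \E[|X_n|\1\{|X_n| > K\}] \leq B/m(K)$ is exactly what that application needs. As you note, it uses neither convexity nor monotonicity of $\Phi$.

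Your converse is the classical de la Vall\'ee-Poussin construction $\Phi(t) = \sum_k (t - K_k)_+$. Each step checks out: finiteness of $g$ from $K_k \geq k$, convexity, the bound $\Phi(t)/t \geq g(t/2)/2$, and the Tonelli interchange giving $\sup_n \E[\Phi(|X_n|)] \leq 1$.

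The only quibble is that your $\Phi$ vanishes on $[0, K_1]$. If ``increasing'' is meant strictly, use $\Phi(t) + t$ instead. This changes nothing, since uniform integrability already gives $\sup_n \E|X_n| \leq K_1 + 1/2 < \infty$.
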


We now present our proof of uniform integrability.

\begin{proof}[Proof of Theorem~\ref{thm:uniform-integrability}]
Throughout the proof, we assume $n \geq 2$ to apply our concentration inequalities from Appendix~\ref{app:martingale-bounds}. Note that the finiteness of the moment for the case $n = 1$ is trivial, as outcomes are assumed to be bounded.

First, observe that the following basic inequality on regret holds:
\begin{align*}
\Reg(\wh{\pi}_n) = V^\ast - V(\wh{\pi}_n) \leq 2\sup_{\pi \in \Pi} \left|\wh{V}_n(\pi) - V(\pi)\right| = 2\sup_{\pi \in \Pi}\left|G_{n}(\pi)\right|,
\end{align*}
where we have defined, for any $\pi \in \Pi$, the process $(G_n(\pi))_{n \geq 1}$ by
\[
G_n(\pi) := \frac{1}{S_{n, -\alpha}}\sum_{t = 1}^nw_t\left\{\frac{\1\{A_t = \pi(X_t)\}}{p_t(A_t \mid X_t)}Y_t - V(\pi)\right\},
\]
where $w_t = t^{-\alpha}$.

Note that by the basic inequality outlined above, we clearly have
\[
\sup_{n \geq 1}\E\left[S_{n, -\alpha}^{q/2}\Reg(\wh{\pi}_n)^q\right] \lesssim \sup_{n \geq 1}\E\left[S_{n, -\alpha}^{q/2}\left(\sup_{\pi \in \Pi}\left|G_n(\pi)\right|\right)^q\right],
\]
so it suffices to show that the right hand side is finite. We aim to establish this via an application of the layer cake theorem for each $n \geq 1$. In the sequel, we define the random variable $U_n := \sup_{\pi \in \Pi}\left|G_n(\pi)\right|$ to ease notation. First, note that for any $n \geq 1$ and any deterministic $D_n$, we have the upper bound
\begin{align*}
S_{n, -\alpha}^{q/2} \E U_n^q &= S_{n, -\alpha}^{q/2}\E\left[(U_n - D_n + D_n)^q\right] \\
&\leq S_{n, -\alpha}^{q/2}2^{q - 1}\left\{\E\left[(U_n - D_n)_+^q\right] + D_n^q\right\} \\
&\equiv 2^{q - 1}\left\{S_{n,- \alpha}^{q/2}\E\left[R_n^q\right] + S_{n,-\alpha}^{q/2}D_n^q\right\},
\end{align*}
where we have set $R_n := (U_n - D_n)_+ \equiv \max\{U_n - D_n, 0\}$. We now show that there is a choice of $D_n$ such that both terms on the right hand side are uniformly bounded over $n \geq 1$. First, since $R_n \geq 0$ by construction, we have by the layer cake theorem:
\begin{equation}
\label{eq:layer-cake}
\E\left[R_n^q\right] = \int_0^\infty \P\left(R_n > t\right)q t^{q-1} dt.
\end{equation}
We now use our high-probability bound established in Theorem~\ref{thm:high-prob-chain} to control the tail probability.
First, defining the class of functions
\[
\calG := \Big\{ g_\pi(a \mid x) := \1\{a = \pi(x)\} : \pi \in \Pi\Big\},
\]
we can equivalently view $\left(G_n(\pi)\right)_{\pi \in \Pi}$ as a process indexed by the function class $\calG$, i.e.\ as a process $(G_n(g))_{g \in \calG}$. Further, it is clear that $G_n(g)$ takes the form of the process defined in Equation~\eqref{eq:martingale} in Appendix~\ref{app:martingale-bounds} with $\ell = 1$. Moreover, each function $g \in \calG$ is bounded between $0$ and $1$, and the assumption that the disagreement class $\calD$ has finite VC-dimension (Assumption~\ref{ass:disagreement}) implies that $\calG$ has finite pseudo-dimension, per Proposition~\ref{prop:fn_class_pseudo}.\footnote{Note that this follows from taking $q(a \mid x) \equiv 0$ in the proposition. Further, note that Assumption~\ref{ass:logging} is not needed for $\calG$ (denoted $\calG_1$ in the proposition) to have finite pseudo-dimension.} Hence, by Lemma~\ref{lem:pseudo-dimension-covering}, there is some constant $\frakv > 0$ and another constant $A > e$ (potentially different from the constants in previous subsection) such that the uniform covering number of $\calG$ is bounded as
\[
\sup_{Q}\log N\left(\epsilon, \calG, L^2(Q)\right) \lesssim  \frakv \log\left(\frac{A}{\epsilon}\right) \quad \text{for all } 0 < \epsilon \leq 1.
\]
Thus, since the conditions of Theorem~\ref{thm:high-prob-chain} are satisfied, we have for any $x > 0$
\[
\P\left(U_n \geq Cn^{\frac{\alpha - 1}{2}}\left(1 + \sqrt{x}\right) + Cn^{\alpha - 1}\log(n)\left(\log(n) + x\right)\right) \leq Ce^{-x} + C\exp\left\{-cn^{1 - \alpha}\right\},
\]
where $C, c > 0$ are appropriately large constants that potentially depend on $\frakv, \alpha,$ and $\calA$, but not on $n$ or $x$. We now convert this tail bound into one of the form outlined in the right hand side of Equation~\eqref{eq:layer-cake}. Grouping terms, we have that
\[
\P\left(U_n \geq D_n + C\left[a_n\sqrt{x} + b_n x\right]\right) \leq C\exp\{-x\} + C\exp\left\{-cn^{1 - \alpha}\right\},
\]
where we have defined $a_n := n^{\frac{\alpha - 1}{2}}$, $b_n := n^{\alpha - 1}\log(n)$, and $D_n := Cn^{\frac{\alpha - 1}{2}} + Cn^{\alpha - 1}\log(n)^2$. With $R_n$ as defined above, we have
\begin{align*}
\P\left(R_n \geq C\left[a_n\sqrt{x} + b_n x\right]\right) &= \P\left(U_n \geq D_n + C\left[a_n\sqrt{x} + b_n x\right]\right) \\
&\leq C\exp\{-x\} + C\exp\left\{-cn^{1 - \alpha}\right\}
\end{align*}
for any $x > 0$. Now, let $t > 0$ be arbitrary, and set $x_t > 0$ as
\[
x_t := \eta \min\left\{\frac{t^2}{a_n^2}, \frac{t}{b_n}\right\},
\]
where $\eta > 0$ is any constant satisfying $C(\eta + \sqrt{\eta}) \leq 1$. For the given choice of $x_t$, we have
\begin{align*}
C\left[a_n\sqrt{x_t} + b_n x_t\right] &= C\left[a_n\sqrt{\eta \min\left\{ \frac{t^2}{a_n^2}, \frac{t}{b_n}\right\}} + b_n \eta \min\left\{\frac{t^2}{a_n^2}, \frac{t}{b_n}\right\}\right] \\
&\leq C\left[a_n \sqrt{\eta} \sqrt{\frac{t^2}{a_n^2}} + b_n \eta  \frac{t}{b_n}\right] \\
&= C(\sqrt{\eta} + \eta)t \\
&\leq t.
\end{align*}
Consequently, we have
\begin{align*}
\P\left(R_n \geq t\right) &\leq \P\left(R_n \geq C\left[a_n\sqrt{x_t} + b_n x_t\right]\right) \\
&\leq C\exp\left\{-\eta\min\left\{\frac{t^2}{a_n^2}, \frac{t}{b_n}\right\}\right\} + C\exp\left\{-cn^{1 - \alpha}\right\}.
\end{align*}
We now return to the expression in Equation~\eqref{eq:layer-cake}. Note that we have the following almost sure upper bound
\begin{align*}
R_n &= (U_n - D_n)_+ \leq U_n = \frac{1}{S_{n, -\alpha}}\sup_{\pi \in \Pi}\left|\sum_{t = 1}^n w_t\left\{\frac{g_\pi(A_t \mid X_t)}{p_t(A_t \mid X_t)}Y_t - V(\pi)\right\}\right| \\
&\lesssim n^{\alpha - 1}\sup_{\pi \in \Pi}\sum_{t = 1}^n t^{-\alpha}\left\{\left|\frac{g_\pi(A_t \mid X_t)}{p_t(A_t \mid X_t)}Y_t\right| + \left|V(\pi)\right|\right\} \\
&\lesssim n^{\alpha - 1} \cdot n = n^{\alpha},
\end{align*}
where the last line follows from the fact that $V(\pi), Y_t \in [0, 1]$ almost surely and $p_t(A_t \mid X_t) \geq \lb{c}t^{-\alpha}$ for some absolute constant $\lb{c} > 0$.
Thus, letting $B > 0$ denote the constant such that $R_n \leq Bn^\alpha$ almost surely, we have
\begin{align*}
\E\left[R_n^q\right] &= \int_0^{\infty}\P\left(R_n > t\right)q t^{q-1} dt = \int_0^{Bn^\alpha} \P(R_n > t)qt^{q - 1}dt \\
&\leq \int_0^{Bn^\alpha} \left(C\exp\left\{-\eta\min\left\{\frac{t^2}{a_n^2}, \frac{t}{b_n}\right\}\right\} + C\exp\left\{-cn^{1 - \alpha}\right\}\right)q t^{q-1} dt \\
&\leq Cq\underbrace{\int_0^\infty \exp\left\{-\eta\min\left\{\frac{t^2}{a_n^2}, \frac{t}{b_n}\right\}\right\} t^{q-1} dt}_{=: T_1} + Cq \underbrace{\int_0^{Bn^\alpha} \exp\left\{-cn^{1 - \alpha}\right\} t^{q-1} dt}_{=: T_2}.
\end{align*}

We now bound $T_1$ and $T_2$ separately. First, we see that we have the bound
\begin{align*}
T_1 &\leq \int_0^\infty \exp\left\{-\eta\frac{t^2}{a_n^2}\right\}t^{q - 1}dt + \int_0^\infty\exp\left\{-\eta \frac{t}{b_n}\right\} t^{q - 1}dt \\
&\leq \frac{a_n^q}{2 \eta^{q / 2}} \int_0^\infty u^{q/2 - 1}e^{-u}du+ \frac{b_n^q}{\eta^q}\int_0^\infty v^{q - 1}e^{-v}dv \\
&= \frac{a_n^q}{2 \eta^{q/2}} \Gamma(q/ 2) + \frac{b_n^q}{\eta^q}\Gamma(q) \\
&\lesssim n^{\frac{q(\alpha - 1)}{2}} + n^{q(\alpha - 1)}\log(n)^{q},
\end{align*}
where the second inequality follows from performing the change of variables $u := \frac{\eta t^2}{a_n^2}$ in the first integral and $v := \frac{\eta t}{b_n}$ in the second integral, where $\Gamma(q) := \int_0^\infty s^{q - 1}e^{-s}ds$ is the Gamma function. We now bound $T_2$.
\begin{align*}
T_2 = \exp\left\{-cn^{1 - \alpha}\right\}\int_0^{B n^\alpha} t^{q - 1}dt \asymp n^{\alpha q} \exp\left\{-cn^{1 - \alpha}\right\}.
\end{align*}
Thus, we have shown that
\[
\E\left[R_n^q\right] \lesssim n^{\frac{q(\alpha - 1)}{2}} + n^{q(\alpha - 1)}\log(n)^q + n^{\alpha q}\exp\left\{-cn^{1 - \alpha}\right\}.
\]
Noting that $S_{n, -\alpha}^{q/2} \asymp n^{-\frac{q(\alpha - 1)}{2}}$, we see that
\begin{align*}
S_{n, -\alpha}^{q/2}\E\left[U_n^q\right] &\lesssim S_{n, -\alpha}^{q/2}\E\left[R_n^q\right] + S_{n, -\alpha}^{q/2}D_n^q \\
& \lesssim S_{n, -\alpha}^{q/2}\Big[n^{\frac{q(\alpha - 1)}{2}} + n^{q(\alpha - 1)}\log(n)^q  \\
&\qquad  + n^{\alpha q}\exp\left\{-cn^{1 - \alpha}\right\} + n^{\frac{q(\alpha - 1)}{2}}  + n^{q(\alpha - 1)}\log(n)^{2q}\Big] \\
&\asymp n^{\frac{q(1 - \alpha)}{2}}\Big[n^{\frac{q(\alpha - 1)}{2}} + n^{q(\alpha - 1)}\log(n)^q + n^{\alpha q}\exp\left\{-cn^{1 - \alpha}\right\} \\
&\qquad  + n^{\frac{q(\alpha - 1)}{2}}  + n^{q(\alpha - 1)}\log(n)^{2q}\Big]  \\
&\asymp 1 + n^{\frac{q(\alpha - 1)}{2}}\log(n)^{2q} + n^{\frac{q(1 + \alpha)}{2}}\exp\left\{-cn^{1 - \alpha}\right\}.
\end{align*}
That is, there is a constant $D > 0$ independent of $n$ such that
\[
\E\left[S_{n, -\alpha}^{q/2}\Reg(\wh{\pi}_n)^q\right] \leq D\left[1 + n^{\frac{q(\alpha - 1)}{2}}\log(n)^{2q} + n^{\frac{q(\alpha + 1)}{2}}\exp\left\{-cn^{1 - \alpha}\right\}\right].
\]
Taking a limit superior over $n$ gives
\[
\limsup_{n \rightarrow \infty}\E\left[S_{n, -\alpha}^{q/2}\Reg(\wh{\pi}_n)^q\right] \leq D,
\]
which implies $\sup_{n \geq 1}\E\left[S_{n, -\alpha}^{q/2}\Reg(\wh{\pi}_n)^q\right] < \infty$.

The uniform integrability of the process $\big(\sqrt{S_{n, -\alpha}}\Reg(\wh{\pi}_n)\big)_{n\geq 1}$ now follows immediately by Proposition~\ref{prop:pous} applied with the mapping $\Phi(t) := t^{q}$ for any $q > 1$. Thus, the result holds.

\end{proof}

\section{Combinatorial Dimension, Covering Numbers, and Equivalence of Assumptions}
\label{app:combinatorial-dimension}

In this appendix, we discuss the various notions of combinatorial dimension that we leverage throughout both the main body of this paper and the other appendices. We start by drawing connections between our main complexity class assumption (\ref{ass:disagreement}) and the notion of Natarajan dimension from the multi-class classification literature~\citep{daniely2015multiclass, natarajan1989learning}. Next, we use a generalization of Sauer's lemma to multiclass setting (see \citet{haussler1995generalization}) to bound the uniform covering numbers of the policy class $\Pi$ under the natural disagreement metric. Lastly, we define pseudo-dimension and discuss how composing our policy class with controlled complexity logging policy classes results in classes with finite pseudo-dimension, and hence finite uniform covering numbers. These uniform covering numbers will allow us to show the almost sure consistency of our plug-in variance estimates (Appendix~\ref{app:variance}) by using the uniform martingale bounds discussed in Appendix~\ref{app:martingale-bounds}.

We start by discussing the equivalence of our main assumption on the VC dimension of the disagreement class associated with $\Pi$ (Assumption~\ref{ass:disagreement}) with the assumption of finite Natarajan dimension. In particular, we show that these two notions are equivalent up to logarithmic factor depending on the number of treatments/classes $|\calA|$.  We start by defining Natarajan  shattering and Natarajan dimension.

\begin{definition}
Let $\calG \subset \big\{g : \calX \rightarrow \calA\big\}$ be a collection of functions and let $S \subset \calX$ be a set of points. We say that $\calG$ \textit{Natarajan shatters} $S$ if there exist two functions $f_0, f_1 : S \rightarrow \calA$ such that (i) $f_0(x) \neq f_1(x)$ for all $x \in S$ and (ii) for every $T \subset S$, there is $g \in \calG$ satisfying
\begin{equation}
\label{eq:natarajan-shatter}
\forall x \in T, g(x) = f_0(x) \quad \text{and} \quad \forall x \in S \setminus T, g(x) = f_1(x).
\end{equation}
We say the \textit{Natarajan dimension} of the class $\calG$ is the size of the largest set Natarajan shattered by $\calG$.
\end{definition}

One important property of classes with finite Natarajan dimension is that the number of labels they assign to a collection of points only grows polynomially in the number of samples. The following serves as an analogue of Sauer's lemma for VC classes.

\begin{lemma}[Generalized Sauer's Lemma, Corollary 5 of \citet{haussler1995generalization}]
\label{lem:gen-sauer}
Suppose $\calG \subset \calA^\calX$ is a class of functions with finite Natarajan dimension. Then, for any set $S \subset \calX$ of cardinality $n \geq 1$, letting
\[
\calG(S) := \bigg\{\big(g(x_1), \dots, g(x_n)\big) : g \in \calG\bigg\} \subset \calA^n,
\]
we have
\[
\big|\calG(S)\big| \leq \sum_{i = 0}^{d_N(\calG)} \binom{n}{i}\binom{|\calA|}{2}^i \leq \left(\frac{|\calA|^2en}{2d_N(\calG)}\right)^{d_N(\calG)},
\]
with the final inequality holding only if $n \geq d_N(\calG)$.
\end{lemma}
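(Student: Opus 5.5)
Lemma~\ref{lem:gen-sauer} is the standard Haussler--Long generalization of Sauer's lemma, and the plan is to prove it by the classical shifting/induction argument adapted to Natarajan shattering, followed by an elementary binomial estimate. Write $d := d_N(\calG)$, $k := |\calA|$, and let $\Phi_d(n) := \sum_{i=0}^{d}\binom{n}{i}\binom{k}{2}^i$. Since $\calG(S)$ depends only on the restriction of $\calG$ to $S$, and restriction cannot increase Natarajan dimension, it suffices to prove the following claim: every set $F \subset \calA^n$ of label vectors with Natarajan dimension at most $d$ (viewed as functions on $[n]$) satisfies $|F| \leq \Phi_d(n)$.

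The first step is a double induction on $(n, d)$, with the base cases $n = 0$ or $d = 0$ handled directly. When $d = 0$, no single point is Natarajan shattered, so any two vectors in $F$ that differ at some coordinate would witness a shattered singleton; hence $|F| \leq 1$. For the inductive step, I would follow the Haussler--Long decomposition by conditioning on the last coordinate. Let $F'$ be the projection of $F$ onto the first $n-1$ coordinates. For each unordered pair $\{a, b\} \subset \calA$ with $a \neq b$, let $F_{ab}$ be the set of $u \in F'$ that extend to vectors in $F$ with both last label $a$ and last label $b$. The key counting inequality is
\[
|F| \leq |F'| + \sum_{\{a,b\}} |F_{ab}|.
\]
To see this, fix $u$ and list the last-coordinate labels it takes, say $m_u$ of them. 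Then $m_u - 1 \leq \#\{\text{pairs among those labels}\}$, which gives the inequality after summing over $u$. Next, $F'$ has Natarajan dimension at most $d$, and each $F_{ab}$ has Natarajan dimension at most $d-1$: if $T$ were Natarajan shattered by $F_{ab}$ with witnesses $f_0, f_1$, then $T \cup \{n\}$ would be shattered by $F$, using the witnesses extended by $a$ and $b$ respectively. Applying the induction hypothesis then gives
\[
|F| \leq \Phi_d(n-1) + \binom{k}{2}\Phi_{d-1}(n-1) = \Phi_d(n),
\]
where the final equality is Pascal's identity applied termwise.

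The last step is the closed-form estimate. For $n \geq d$, bound $\binom{k}{2}^i \leq \binom{k}{2}^d \leq (k^2/2)^d$, which reduces the sum to $(k^2/2)^d\sum_{i \leq d}\binom{n}{i}$. The standard bound $\sum_{i \leq d}\binom{n}{i} \leq (en/d)^d$ then yields $\left(\frac{k^2 e n}{2d}\right)^d$, as claimed.

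I expect the main obstacle to be the counting inequality in the inductive step. One has to check carefully that charging the extra extensions of each prefix $u$ to label pairs is valid when $u$ takes more than two last labels; the argument above uses $m - 1 \leq \binom{m}{2}$ for this. One also has to confirm that the witness functions extend consistently, so that the shattering of $T \cup \{n\}$ genuinely uses distinct labels at coordinate $n$. The remaining steps are routine.
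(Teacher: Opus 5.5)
Your argument is correct. The paper itself gives no proof of this lemma; it imports it as Corollary~5 of \citet{haussler1995generalization}. So the real comparison is between a citation and your self-contained proof.

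Your route is a Sauer--Shelah style double induction on $(n,d)$, and each step checks out:
\begin{itemize}
\item You split off the last coordinate and charge the extra extensions of each prefix to the $\binom{|\calA|}{2}$ unordered label pairs, using $m_u \le 1 + \binom{m_u}{2}$.
\item Each pair class $F_{ab}$ has Natarajan dimension at most $d-1$. A set $T$ shattered by $F_{ab}$ lifts to a shattered $T \cup \{n\}$ once the witnesses are extended by $a$ and $b$, and these labels differ at coordinate $n$.
\item You close the recursion with Pascal's identity.
\item The base cases $n=0$ and $d=0$ are handled correctly, as is the reduction from $\calG(S)$ to a class of label vectors on $[n]$.
\end{itemize}

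The citation keeps the appendix short. Your proof makes it self-contained and shows exactly where the factor $\binom{|\calA|}{2}^i$ comes from: one subclass per unordered label pair at the removed coordinate, each losing one unit of dimension.

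One point deserves to be made explicit. The closed-form step needs $|\calA| \ge 2$, so that $\binom{|\calA|}{2} \ge 1$ and hence $\binom{|\calA|}{2}^i \le \binom{|\calA|}{2}^d$. It also needs $d \ge 1$, so that $(en/d)^d$ is defined. The degenerate case $|\calA| = 1$ forces $d_N(\calG) = 0$, and there the first inequality already gives $|\calG(S)| \le 1$.
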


We note that, in the setting of binary functions (so $|\calA| = 2$) one can obtain the slightly sharper rate of $|\calG(S)| \leq \left(\frac{en}{d_{\vc}(\calG)}\right)^{d_{\vc}(\calG)}$ instead~\citep{van2014probability}, which is just the standard rate obtained via Sauer's lemma. We leverage the general result above in controlling uniform covering numbers in this appendix, and leverage the standard version of Sauer's lemma in the proof of Lemma~\ref{lem:vc-disagreement} in Appendix~\ref{app:martingale-bounds}.

The following proposition formally relates the VC dimension of the disagreement class to the Natarajan dimension of the underlying policy class. In particular, it ensures that the two are equivalent up to only logarithmic factors in the number of actions/labels.

\begin{prop}
\label{prop:vc-to-natarajan}
Let $\Pi$ be a policy class and $\calD$ the corresponding disagreement class. Then, we have
\[
d_N(\Pi) \leq d_{\vc}(\calD) \lesssim \log(e|\calA|)d_N(\Pi).
\]
\end{prop}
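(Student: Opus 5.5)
We prove the two inequalities separately.

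\textbf{Lower bound $d_N(\Pi) \leq d_{\vc}(\calD)$.} We turn a Natarajan-shattered set into a VC-shattered set for $\calD$. Suppose $S = \{x_1, \dots, x_m\}$ is Natarajan shattered by $\Pi$, with witnesses $f_0, f_1$ satisfying $f_0(x) \neq f_1(x)$ on $S$. Fix a reference policy $\rho \in \Pi$ that realizes the pattern $T = S$, so that $\rho = f_0$ on $S$. For any $T \subset S$, take $\pi_{S\setminus T}\in\Pi$ that equals $f_0$ on $S \setminus T$ and $f_1$ on $T$. Then $g_{\pi_{S\setminus T},\rho}(x) = \1\{\pi_{S\setminus T}(x)\neq f_0(x)\} = \1\{x \in T\}$ on $S$. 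Hence $\calD$ realizes every labeling of $S$, so $S$ is VC-shattered by $\calD$ and $m \leq d_{\vc}(\calD)$.

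\textbf{Upper bound $d_{\vc}(\calD) \lesssim \log(e|\calA|)\, d_N(\Pi)$.} We use a counting argument via Lemma~\ref{lem:gen-sauer}. Suppose $S$ with $|S| = m$ is VC-shattered by $\calD$, and write $d := d_N(\Pi)$; we may take $m \geq d$, since otherwise there is nothing to show.

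\begin{enumerate}
\item Every disagreement pattern $(\1\{\pi(x_i)\neq\rho(x_i)\})_{i \leq m}$ is determined by the pair of label vectors $(\pi(S), \rho(S)) \in \Pi(S)\times\Pi(S)$. Therefore
\[
2^m \leq |\Pi(S)|^2 \leq \left(\frac{|\calA|^2 e m}{2d}\right)^{2d}.
\]
\item Taking logarithms gives
\[
m \log 2 \leq 2d\left[\log\left(e|\calA|^2/2\right) + \log(m/d)\right].
\]
\item Set $u := m/d$. The inequality becomes $u \leq c_1\log(e|\calA|) + c_2\log u$.
\item The elementary fact that $u \leq a + b\log u$ implies $u \lesssim a + b\log(b)$ then yields $u \lesssim \log(e|\calA|)$. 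This gives $m \lesssim d\log(e|\calA|)$, and hence $d_{\vc}(\calD) \lesssim \log(e|\calA|)\, d_N(\Pi)$.
\end{enumerate}

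\textbf{Main obstacle.} The only delicate step is the last one: extracting an explicit bound from the implicit inequality $u \leq c_1\log(e|\calA|) + c_2\log u$. We handle it by observing that $u > 2c_2\log u$ once $u$ is larger than a constant depending only on $c_2$. For such $u$ the inequality gives $u/2 \leq c_1\log(e|\calA|)$, and in every case $u \lesssim \log(e|\calA|) + 1 \lesssim \log(e|\calA|)$.
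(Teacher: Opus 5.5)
Your proof is correct and follows the paper's argument essentially step for step. The lower bound uses the same witness construction; you anchor at the policy equal to $f_0$ on $S$, whereas the paper uses the one equal to $f_1$. The upper bound uses the same surjection $\Pi(S)\times\Pi(S)\to\calD(S)$ together with Lemma~\ref{lem:gen-sauer}, and the paper makes your final step explicit as $n/d \le 16\log(e|\calA|)$ via a short contradiction.

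The one case you should add is $d_N(\Pi)=0$, where dividing by $d$ is not allowed. There the sum form of Lemma~\ref{lem:gen-sauer} gives $|\Pi(S)|\le 1$, so $2^m\le 1$ forces $m=0$; the paper treats this case separately.
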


\begin{proof}
We first argue that $d_N(\Pi) \leq d_{\vc}(\calD)$. Let $S \subset \calX$ be Natarajan shattered by $\Pi$ with witness functions $f_0, f_1 : S \rightarrow \calA$. For any $T \subset S$, we let $\pi_T \in \Pi$ denote an arbitrary policy satisfying Equation~\eqref{eq:natarajan-shatter}. We examine the policy $\pi_\emptyset : \calX \rightarrow \calA$, which satisfies $\pi_\emptyset(x) = f_1(x)$ for all $x \in S$. Enumerate $S$ as $x_1, \dots, x_n$, and let $b_1, \dots, b_n \in \{0, 1\}$ give any binary labeling to be assigned to the elements. Let $T_b := \{x_i \in S : b_i = 1\}$ and setting $g_b(x) := \1\left\{\pi_{T_b}(x) \neq \pi_\emptyset(x)\right\}$, observe that we have $g(x_i) = b_i$ for all $i \in [n]$, and hence $\calD$ shatters $S$ (in the sense of VC dimension). Thus, $d_N(\Pi) \leq d_{\vc}(\calD)$.

Now, we prove the upper bound on $d_{\vc}(\calD)$ in terms of $d_N(\Pi)$. Let $S \subset \calX$ be shattered by $\calD$, set $n := |S|$, and let $d := d_N(\Pi)$. First, note that if $d = 0$, then $d_{\vc}(\calD) =0$ as well. Now suppose $d \geq 1$. If $n < d$, then we trivially have
\[
n \leq d \leq 16\log(e|\calA|) d.
\]
Hence, suppose that $n \geq d$ (as a shattered set of this size will always exist by the first inequality). Consider the mapping $\iota : \Pi(S) \times \Pi(S) \rightarrow \calD(S)$ given by
\[
\iota(v, w) := \big(\1\{v_1 \neq w_1\}, \dots, \1\{v_n \neq w_n\}\big).
\]
Observe that the above map is clearly a surjection, i.e., for each bit vector $b \in \calD(S)$, there exists at least one tuple $(v, w) \in \Pi(S) \times \Pi(S)$ satisfying $\iota(v, w) = b$. This implies that we have $|\Pi(S)|^2 \geq |\calD(S)|$. Using this alongside generalized Sauer's lemma (Lemma~\ref{lem:gen-sauer}) yields
\begin{align*}
2^n = |\calD(S)| \leq |\Pi(S)|^2 \leq \left(\frac{|\calA|^2 e n}{2d}\right)^{2d},
\end{align*}
where the first equality follows from the assumption of shattering.
Taking logs on both sides and defining $r := \frac{n}{d}$, we see that
\[
r \log(2) \leq 2\log\left(r|\calA|^2 e\right) \leq 2\log(r) + 4\log(e|\calA|).
\]
We claim now that we must have
\[
r \leq 16 \log(e|\calA|).
\]
Suppose the above inequality doesn't hold. Since $\log(e |\calA|) \geq 1$, we see that $r \geq 16$. Hence, we must simultaneously have $2 \log r  \leq \frac{\log(2)}{2} r$ and $4\log(e|\calA|) \leq \tfrac{r}{4}$. This implies that
\[
r \log(2) \leq \left(\frac{1}{4} + \frac{\log(2)}{2}\right) r < r\log(2),
\]
which follows since $\log(2)/2 > 1/4$. This is a contradiction, and hence we have shown
\[
n \leq 16\log(|\calA|e) d.
\]
Taking the maximum over all such $n$ admitting shattered sets furnishes the claimed result.
\end{proof}

The next proposition bounds the covering numbers of $\Pi$ uniformly under the disagreement metric $d_Q(\pi, \rho) := Q(\pi(X) \neq \rho(X))^{1/2}$. We prove it using results from \citet{daniely2015multiclass} alongside Proposition~\ref{prop:vc-to-natarajan}.

\begin{prop}
\label{prop:vc-bound-pi}
Let $\Pi$ be a policy class satisfying Assumption~\ref{ass:disagreement}, and let $\frakv := d_{\vc}(\calD)$ denote the VC dimension of the disagreement class. Let $Q$ be any probability distribution on $\calX$. Then, for any $0 < \epsilon \leq 1$, we have
\[
\log N\left(\epsilon, \Pi, d_Q\right) \lesssim \frakv\log\left(\frac{A \sqrt{|\calA|}}{\epsilon}\right),
\]
where $A \geq e$ is some constant.
\end{prop}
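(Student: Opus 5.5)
The plan is to reduce covering numbers of $\Pi$ under $d_Q$ to $L^2(Q)$ covering numbers of the disagreement class $\calD$. From there, a standard VC covering bound (Haussler's packing bound, or the \citet{van2014probability} version) gives the result. Proposition~\ref{prop:vc-to-natarajan} and Lemma~\ref{lem:gen-sauer} provide the multiclass bookkeeping that produces the $\sqrt{|\calA|}$ factor.

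The main route is to bound the packing number $D(\epsilon, \Pi, d_Q)$ and then use $N(\epsilon) \leq D(\epsilon)$.

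\textbf{Discretize.} Let $\pi_1, \dots, \pi_M$ be an $\epsilon$-separated set, so $Q(\pi_i \neq \pi_j) > \epsilon^2$ for all $i \neq j$. Draw $m$ i.i.d.\ points $X_1, \dots, X_m \sim Q$. For a fixed pair $(i,j)$, the probability that $\pi_i$ and $\pi_j$ agree on every sample point is at most $(1 - \epsilon^2)^m \leq e^{-m\epsilon^2}$.

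\textbf{Choose $m$.} Take $m \asymp \epsilon^{-2}\log M$, large enough that $M^2 e^{-m\epsilon^2} < 1$. Then a union bound over pairs shows there is a realization on which all $M$ policies induce distinct label vectors in $\calA^m$.

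\textbf{Compare with Sauer.} This gives $M \leq |\Pi(S)|$. By Lemma~\ref{lem:gen-sauer} together with $d_N(\Pi) \leq \frakv$ from Proposition~\ref{prop:vc-to-natarajan},
\[
M \leq \left(\frac{|\calA|^2 e m}{2\frakv}\right)^{\frakv}.
\]
Substituting $m \asymp \epsilon^{-2}\log M$ yields an implicit inequality of the form
\[
\log M \lesssim \frakv \log\left(\frac{|\calA|^2 \log M}{\frakv\,\epsilon^2}\right).
\]

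\textbf{Solve the implicit inequality.} This is the main obstacle. The same $r \leq 2\log r + c$ style argument used in the proof of Proposition~\ref{prop:vc-to-natarajan} handles it: writing $\log M = r\frakv$ gives $r \lesssim \log r + \log(|\calA|^2/\epsilon^2)$, so $r \lesssim \log(A|\calA|/\epsilon)$.

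\textbf{Obtain the $\sqrt{|\calA|}$ form.} Since $\log(|\calA|^2/\epsilon^2) = 2\log(|\calA|/\epsilon)$, the constant can be arranged into $\log(A\sqrt{|\calA|}/\epsilon)$ up to the hidden multiplicative constant. Concretely, $\log(|\calA|/\epsilon) \leq 2\log(\sqrt{|\calA|}/\epsilon)$ for $\epsilon \leq 1$, absorbing constants into $A \geq e$.

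An alternative route, if the direct calculation is cleaner, avoids the multiclass lemma altogether. Fix a center $\rho \in \Pi$ and observe that $d_Q(\pi, \pi') = \|g_{\pi,\rho} - g_{\pi',\rho}\|_{L^2(Q)}$ fails in general, so this map does not directly give an isometry. For that reason I would use the probabilistic packing argument above as the main route.
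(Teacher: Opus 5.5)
Your packing argument is correct, but it is not the route the paper takes. The paper does not bound a packing number by sampling. It encodes each policy by its one-hot indicator $c_\pi(a,x):=\1\{\pi(x)=a\}$, which gives the binary class $\calC_\Pi:=\{c_\pi:\pi\in\Pi\}$ on $\calA\times\calX$. A short shattering argument shows $d_{\vc}(\calC_\Pi)\le\frakv$. The paper then uses the exact identity $\|c_\pi-c_\rho\|^2_{L^2(\calU(\calA)\otimes Q)}=\tfrac{2}{|\calA|}Q(\pi(X)\neq\rho(X))$ to get $N(\epsilon,\Pi,d_Q)\le N(\sqrt{2/|\calA|}\,\epsilon,\calC_\Pi,L^2(\calU(\calA)\otimes Q))$. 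The uniform covering bound for classes of finite pseudo-dimension (Lemma~\ref{lem:pseudo-dimension-covering}) controls the right-hand side, and the $\sqrt{|\calA|}$ in the statement is exactly this rescaling of the radius.

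This is the isometric embedding your final paragraph was looking for. Centering at a fixed $\rho$ inside $\calD$ fails, as you correctly note, but encoding by one-hot indicators and averaging uniformly over actions works. Your opening sentence about reducing to covering numbers of $\calD$ therefore describes a route you then abandon.

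The two approaches trade off as follows. The paper's version is shorter: it hands the combinatorics to a black-box covering theorem and has no implicit inequality to solve. Yours is self-contained, needing only Lemma~\ref{lem:gen-sauer} and the first inequality of Proposition~\ref{prop:vc-to-natarajan}, and it actually gives the bound with $d_N(\Pi)\le\frakv$ in place of $\frakv$. The cost is the $\log\log$ bookkeeping, and an $|\calA|$-dependence of the form $\log(|\calA|^2/\epsilon^2)$. That matches $\log(A\sqrt{|\calA|}/\epsilon)$ only up to an absolute constant, which is harmless for a $\lesssim$ claim.

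Two small details to tidy. First, the closed-form Sauer bound requires $m\ge\frakv$, so take $m\asymp\epsilon^{-2}\log M+\frakv$. Second, replacing $d_N(\Pi)$ by $\frakv$ should go through the sum form in Lemma~\ref{lem:gen-sauer}, which is monotone in the dimension. Repeated sample points are harmless, since they only shrink the effective point set.
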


\begin{proof}
First, define the function class $\calC_\Pi \subset \{0, 1\}^{\calA \times \calX}$ as
\[
\calC_\Pi := \bigg\{c_\pi(a \mid x) := \1\{\pi(x) = a\} : \pi \in \Pi\bigg\}.
\]
We claim that $d_{\vc}(\calC_{\Pi}) \leq \frakv$. To see this, suppose $\calC_\Pi$ shatters a set $S = \left\{(x_1, a_1), \dots, (x_n, a_n)\right\}$.\footnote{Note that this requires $x_1, \dots, x_n$ necessarily to be unique. In particular, if $(x_i, a_i), (x_j, a_j) \in S$ with $x_i = x_j$ but $a_i \neq a_j$, this would require the existence of a policy $\pi \in \Pi$ with $\pi(x_i) = a_i$ and $\pi(x_i) = a_j$, a contradiction.}  Then, there is some policy $\pi_S \in \Pi$ satisfying $\pi_S(x_i) = a_i$ for all $i \in [n]$. Further, for any $T \subset S$, there is some $\pi_{T}$ such that $\pi_{T}(x_i) = a_i$ if and only if $(x_i, a_i) \in T$. Thus, we see that, for any $I \subset [n]$, letting $T_{I^c} := \left\{(x_i, a_i) \in S : i \notin I\right\}$, there is some policy $\pi_{T_{I^c}}$ such that:
\[
\1\left\{\pi_{T_{I^c}}(x_i) \neq \pi_S(x_i)\right\} = 1 \Longleftrightarrow i \in I.
\]
Hence, $\calD$ shatters $\left\{x_i : (x_i, a_i) \in S\right\}$.

Since $g_\pi(a \mid x) \in \{0, 1\}$ for each $a, x \in \calA \times \calX$, we know that $d_{\vc}(\calC_\Pi) = d_P(\calC_\Pi)$, where $d_P$ denotes pseudo-dimension, defined in Definition~\ref{def:pseudo-dimension} below.
Thus, from Lemma~\ref{lem:pseudo-dimension-covering} below, it follows that
\[
\sup_{\wb{Q}}\log N\big(\epsilon, \calC_\Pi, L^2(\wb{Q})\big) \lesssim \frakv\log\left(\frac{A}{\epsilon}\right)
\]
for some absolute constant $A \geq e$, where the supremum is over all distributions on $\calA \times \calX$. Now, let $Q$ be some distribution on $\calX$, and consider the product distribution $\wb{Q} := \calU(\calA) \otimes Q$, where $\calU(S)$ denotes the uniform distribution on a set $S$. It thus follows for policies $\pi, \rho \in \Pi$, the $L^2(\wb{Q})$ distance between $g_\pi$ and $g_\rho$ satisfies
\begin{align*}
\left\|g_\pi - g_\rho\right\|_{L^2(\wb{Q})}^2 &= \frac{1}{|\calA|}\sum_{a \in \calA}\E_{Q}\left(g_\pi(a, X) - g_\rho(a, X)\right)^2 \\
&= \frac{1}{|\calA|}\sum_{a \in \calA}Q\left(\1\big\{\pi(X) = a\big\} \neq \1\big\{\rho(X) = a\big\}\right) \\
&= \frac{2}{|\calA|}Q\left(\pi(X) \neq \rho(X)\right),
\end{align*}
where the final equality follows since $\1\left\{\pi(X) \neq \rho(X)\right\} = 1$ if and only if there are exactly two treatments $a, b \in \calA$ such that $\1\left\{\pi(X) = c\right\} \neq \1\left\{\rho(X) = c\right\}$ for $c \in \{a, b\}$. Thus, we have the following relationship for our covering numbers
\[
\log N\big(\epsilon, \Pi, d_Q\big) \leq \log N\big(\sqrt{2/|\calA|}\epsilon, \calC_\Pi, L^2(\wb{Q})\big) \lesssim \frakv \log\left(\frac{A\sqrt{|\calA|}}{\epsilon}\right)
\]
for some constant $A \geq e$. Since $Q$ was arbitrary, we have shown the desired uniform covering bound.
\end{proof}

To conclude this appendix, we comment on the assumption (Assumption~\ref{ass:covering}) which assumes that the class $\calG_1$ and $\calG_{2, t}$ containing $g_\pi(a \mid x) := \1\{\pi(x) = a\} - q(a \mid x)$ and $t^{-\alpha}\frac{g_\pi(a \mid x)^2}{p_t(a \mid x)}$ have uniform covering numbers of polynomial growth. We directly leverage this assumption as opposed to Assumption~\ref{ass:disagreement} and \ref{ass:logging} in our convergence analysis of our variance estimator due to convenience. We now show that this assumption follows from the two more primitive assumptions noted above. We start by formally defining the notion of pseudo-dimension.
\begin{definition}
\label{def:pseudo-dimension}
Let $\calG \subset \R^{\calX}$ be a class of functions. Define the corresponding subgraph class of function $\calG^\#$ by
\[
\calG^\# := \bigg\{ (x, t) \mapsto \1\{t < g(x)\} : g \in \calG\bigg\}.
\]
Then, the pseudo-dimension of $\calG$ is defined as $d_P(\calG) := d_{\vc}(\calG^\#)$.
\end{definition}
We now aim to show that the two function classes
\begin{align*}
\calG_1 &:= \bigg\{ g_\pi(a \mid x) := \1\{\pi(x) = a\} - q(a \mid x) : \pi \in \Pi\bigg\} \\
\calG_{2, t} &:= \bigg\{ g_\pi(a \mid x)^2r(a \mid x) : \pi \in \Pi, r \in \calP_t\bigg\},
\end{align*}
have finite pseudo-dimension irrespective of $t$ under Assumption~\ref{ass:disagreement} and \ref{ass:logging}, hence justifying Assumption~\ref{ass:covering} in Appendix~\ref{app:clt_proof}.
The following proposition yields a uniform bound (in $t \geq 1$) on the pseudo-dimension of $\calG_1$ and $\calG_{2, t}$. We can then couple this bound with  Lemma~\ref{lem:pseudo-dimension-covering} below to yield polynomial uniform covering numbers.

\begin{prop}
\label{prop:fn_class_pseudo}
Suppose Assumptions~\ref{ass:disagreement} and \ref{ass:logging} hold, let $\frakv := d_{\vc}(\calD)$ and let $\frakp$ satisfy $\sup_{t \geq 1}d_P(\calP_t) \leq \frakp$.  Then, we have $d_P(\calG_1) \leq \frakv$ and $\sup_{t \geq 1}d_P(\calG_{2, t}) \leq 5(\frakv + \frakp)$, where $\calG_1$ and $\calG_{2, t}$ are as defined above.

\end{prop}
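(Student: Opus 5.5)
The plan is to reduce both claims to VC-dimension statements about the indicator class $\calC_\Pi := \{c_\pi(a \mid x) := \1\{\pi(x) = a\} : \pi \in \Pi\}$ and about subgraphs of $\calP_t$. Then, for $\calG_{2,t}$, we count dichotomies with Sauer's lemma. In the proof of Proposition~\ref{prop:vc-bound-pi} we already showed $d_{\vc}(\calC_\Pi) \leq \frakv$, and we will reuse that.

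\emph{The class $\calG_1$.} Suppose $\calG_1^\#$ shatters the points $(a_i, x_i, s_i)$, $i \in [n]$. Writing $u_i := s_i + q(a_i \mid x_i)$, we have
\[
\1\{s_i < g_\pi(a_i \mid x_i)\} = \1\{u_i < c_\pi(a_i \mid x_i)\}.
\]
Because $c_\pi \in \{0,1\}$, each point can take both labels only if $u_i \in [0, 1)$. For such $u_i$, the indicator $\1\{u_i < c\}$ equals $c$. So the shattering of $\calG_1^\#$ is literally a VC-shattering of the pairs $(a_i, x_i)$ by $\calC_\Pi$, which gives $n \leq \frakv$.

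\emph{The class $\calG_{2,t}$: decomposition.} At a fixed point $(a,x)$ we have $g_\pi(a \mid x)^2 = (1-q(a\mid x))^2$ when $\pi(x) = a$, and $q(a \mid x)^2$ otherwise. Set $w_1(a,x) := (1-q(a\mid x))^2$ and $w_0(a,x) := q(a \mid x)^2$. Then, for $c = c_\pi(a \mid x)$,
\[
\1\{s < g_\pi(a\mid x)^2 r(a \mid x)\} = c \, \1\{s < w_1(a,x) r(a\mid x)\} + (1 - c)\, \1\{s < w_0(a,x) r(a \mid x)\}.
\]
Each indicator $\1\{s < w\, r(a\mid x)\}$ is one of two things. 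If $w > 0$, it is the subgraph indicator $\1\{s/w < r(a \mid x)\}$ of $\calP_t$ evaluated at the shifted threshold $s/w$. If $w = 0$, it is a constant. Hence, on any $n$ points, a labeling produced by $\calG_{2,t}^\#$ is a fixed Boolean function of two pieces of data. The first is the labeling of $(a_i, x_i)$ by $\calC_\Pi$. The second is the labeling by $\calP_t^\#$ of the $2n$ points $(a_i, x_i, s_i/w_1)$ and $(a_i, x_i, s_i/w_0)$.

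\emph{The class $\calG_{2,t}$: counting.} By Sauer's lemma, the number of distinct labelings is at most
\[
\left(\frac{en}{\frakv}\right)^{\frakv} \left(\frac{2en}{\frakp}\right)^{\frakp}.
\]
Shattering requires this to be at least $2^n$. Taking logs and writing $D := \frakv + \frakp$, concavity of $\log$ gives roughly $n \log 2 \leq D \log(2en/D)$. From this we solve for $n/D$ as in the proof of Proposition~\ref{prop:vc-to-natarajan} (a contradiction argument on $r := n/D$) to obtain $n \leq 5D$. Every bound used is uniform in $t$, since only $\frakp$ enters, so the result holds for $\sup_t$.

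I expect the main obstacle to be the bookkeeping that pins down the constant $5$, rather than any conceptual step. Handling the $w = 0$ edge cases and the strict-inequality convention of Definition~\ref{def:pseudo-dimension} needs care. The analytic inequality $r \log 2 \leq \log(2er)$ must then be shown to force $r \leq 5$. If the sharp constant fails with this crude product bound, I would fall back on bounding the two $\calP_t$ threshold classes separately, each by $(en/\frakp)^{\frakp}$. That gives an $O(\frakv + \frakp)$ bound with a possibly slightly different constant, which is all that Lemma~\ref{lem:pseudo-dimension-covering} and Assumption~\ref{ass:covering} actually need.
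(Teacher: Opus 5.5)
Your argument for $\calG_1$ is correct and is the paper's argument: the paper cites translation invariance of the subgraph class under the fixed shift $q$, then uses that pseudo-dimension and VC dimension coincide for the binary class $\calC_\Pi$.

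The gap is in the counting step for $\calG_{2,t}$, and it sits exactly at the constant $5$ that the statement asserts. You count $\calP_t^\#$-labelings of all $2n$ points $(a_i,x_i,s_i/w_1)$ and $(a_i,x_i,s_i/w_0)$. This costs $(2en/\frakp)^{\frakp}$ instead of $(en/\frakp)^{\frakp}$. Concavity of $\log$ then gives only
\[
n\log 2 \le \frakv\log\frac{en}{\frakv} + \frakp\log\frac{2en}{\frakp} \le D\log\frac{3en}{D}.
\]
It does not give $D\log(2en/D)$; that inequality is false for your count, and when $\frakv = \frakp$ it fails by an additive $\frakv\log 2$. With $r = n/D$ you get only $2^r \le 3er$, and this still holds at $r=5$, since $32 < 15e$. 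Concretely, take $\frakp = 2\frakv$ and $n = 5(\frakv+\frakp)$. Your bound equals $(15e)^{3\frakv}$, which exceeds $32^{3\frakv} = 2^n$. No contradiction arises, and the argument only yields roughly $n < 5.5(\frakv+\frakp)$. Your proposed fallback is worse, not better. Bounding the two threshold classes separately gives a factor $(en/\frakp)^{2\frakp}$, which effectively replaces $\frakp$ by $2\frakp$.

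The missing idea is to condition on the $\calC_\Pi$-pattern before counting threshold labelings.
\begin{itemize}
\item Fix $c = (c_\pi(a_i\mid x_i))_{i=1}^n$. By Sauer's lemma there are at most $(en/\frakv)^{\frakv}$ such patterns.
\item For this fixed $c$, the $i$th label is either the constant $\1\{s_i<0\}$ (when $w_{c_i}(a_i,x_i)=0$) or $\1\{s_i/w_{c_i}(a_i,x_i) < r(a_i\mid x_i)\}$.
\item So only $n$ points of $\calP_t^\#$ are involved, giving at most $(en/\frakp)^{\frakp}$ labelings per pattern.
\item Hence $2^n \le (en)^{D}/(\frakv^{\frakv}\frakp^{\frakp}) \le (2en/D)^{D}$. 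You can get the last step from $\frakv^{\frakv}\frakp^{\frakp} \ge (D/2)^{D}$ as in the paper, or from exactly the concavity step you wrote, which is valid for this corrected count.
\item This gives $2^r \le 2er$, which fails for every $r \ge 5$ because $2^5 = 32 > 10e$ and $2^r/r$ is increasing there.
\end{itemize}
With this one change your proof becomes the paper's. You are right that any $O(\frakv+\frakp)$ bound suffices for Lemma~\ref{lem:pseudo-dimension-covering}, but the proposition claims the specific constant $5$.
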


\begin{proof}
First, we argue the bound on $d_P(\calG_1)$. Again, consider the class of functions $\calC_\Pi$ as defined in the proof of Proposition~\ref{prop:vc-bound-pi}. Then, since $\calC_\Pi$ is a binary class, pseudo-dimension and VC dimension coincide, and $d_{P}(\calC_\Pi) = d_{\vc}(\calC_\Pi) \leq \frakv$. Further, through standard closure properties of classes of finite pseudo-dimension (see Lemma 2.6.18 of \citet{van1996weak}), we have that $d_P(\calG_1) = d_P(\calC_\Pi - q) = d_P(\calC_\Pi)$, since $q(a \mid x)$ is a fixed function. This yields the first claim.

Next, we argue the bound on $d_P(\calG_{2, t})$. Suppose $\calG_{2, t}^\#$ shatters a collection of points
\[
S = \left\{(x_1, a_1, t_1) ,\dots, (x_n, a_n, t_n)\right\},
\]
where $t_1, \dots, t_n$ denote the thresholds associated with Definition~\ref{def:pseudo-dimension}. Any function in $\calG_{2, t}$ is given by
\[
h_{\pi, r}(a, x) = \big(c_\pi(a, x) - q(a \mid x)\big)^2r(a \mid x)
\]
for some $c_\pi(a, x) = \1\{\pi(x) = a\} \in \calC_\Pi$ and $r \in \calP_t$. By our assumption of shattering, we know that we must have
\[
2^n = \left|\left\{\big(\1\left\{h_{\pi, r}(a_i, x_i) > t_i\right\}\big)_{i = 1}^n : \pi \in \Pi, r \in \calP_t\right\}\right|
\]
Since $d_{\vc}(\calC_\Pi) \leq \frakv$ and since $q$ is fixed, Sauer's lemma implies that there are at most $\sum_{j = 0}^{\min\{\frakv, n\}}\binom{n}{j}$ distinct (non-negative) values that the vector $\big((c_\pi(a_i, x_i) - q(a_i \mid x_i))^2\big)_{i = 1}^n$ can take. Further, letting $w_i^\pi := (c_\pi(a_i, x_i) - q(a_i \mid x_i))^2$ for each $i \in [n]$ and $\pi \in \Pi$, we see that, for any fixed $\pi$, there are at most $\sum_{k = 0}^{\min\{\frakp, n\}}\binom{n}{k}$ values that $\big(\1\{w_i^\pi r(a_i \mid x_i) > t_i\}\big)_{i = 1}^n$ can take.\footnote{This is again by Sauer's lemma. Note this follows from rearranging and noting $\1\{w_i^\pi r(a_i \mid x_i) > t_i\} = 1$ if and only if $\1\{r(a_i \mid x_i) > t_i/w_i^\pi\}$, so long as $w_i^\pi > 0$. Having some indices with $w_i^\pi = 0$ only serves to decrease the number of possible labels.} Assume that $n \geq \max\{\frakv, \frakp\}$ (Else, the claim trivially holds). Then, we must certainly have
\begin{equation}
\label{eq:product_sauer}
2^n \leq \left\{\sum_{j = 0}^{\min\{\frakv, n\}}\binom{n}{j}\right\}\left\{\sum_{k = 0}^{\min\{\frakp, n\}}\binom{n}{k}\right\} \leq \left(\frac{e n}{\frakv}\right)^\frakv\left(\frac{e n}{\frakp}\right)^\frakp = \frac{(en)^{\frakv + \frakp}}{\frakv^\frakv \frakp^\frakp}.
\end{equation}

We now argue that we must have $n \leq 5(\frakv + \frakp)$, which proves the result. Define $\lambda := \frac{\frakv}{\frakv + \frakp}$, which thus implies $1 - \lambda = \frac{\frakp}{\frakv + \frakp}$. Using this notation, observe that we have
\[
\frakv^\frakv \frakp^\frakp = (\frakv + \frakp)^{\frakv + \frakp}\left\{\lambda^\lambda(1 - \lambda)^{1- \lambda}\right\}^{\frakv + \frakp} \geq \left(\frac{\frakv + \frakp}{2}\right)^{\frakv + \frakp},
\]
which follows since $\lambda^\lambda(1 - \lambda)^{1 - \lambda}$ is minimized when $\lambda = 1/2$. Hence, plugging this in to Equation~\eqref{eq:product_sauer}, we see that
\[
2^n \leq \left(\frac{2en}{\frakv + \frakp}\right)^{\frakv + \frakp}.
\]

We now conclude in much the same manner as the proof of Proposition~\ref{prop:vc-to-natarajan}. Setting $r := \frac{n}{\frakv + \frakp}$ and taking $(\frakv + \frakp)$th roots, we have $2^r \leq 2er$. If we assume that $n \geq 5(\frakv + \frakp)$, then we certainly have $r \geq 5$. Noting that
\[
2^5 = 32 > 10 e = 2e(5)
\]
and that the mapping $2^r/r$ is increasing for $r \geq 5$ leads to a contradiction, thus completing the proof.

\end{proof}

We lastly state a well-known result that shows that classes with finite pseudo-dimension (and thus, also finite VC-dimension in the binary setting) admit well-behaved uniform covering numbers.

\begin{lemma}[Theorem 7.16/Problem 7.12 \citet{van2014probability}]
\label{lem:pseudo-dimension-covering}
Let $\calG \subset \R^{\calA \times \calX}$ be a function class with (i) $\sup_{g\in\calG}\|g\|_\infty \leq G$ for some $G > 0$ and (ii) $d_P(\calG) = \frakv$. Then, for any $0 < \epsilon \leq 1$, one has
\[
\sup_{Q}\log N\left(G\epsilon, \calG, L^2(Q)\right) \lesssim \frakv \log\left(\frac{A}{\epsilon}\right)
\]
where the supremum is over all probability measures $Q$ on $\calA \times \calX$ and $A \geq e$ is some constant.

\end{lemma}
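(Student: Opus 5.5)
We prove the covering bound by reducing it to the classical Haussler--Pollard bound for real-valued classes of finite pseudo-dimension. First we rescale: replacing $\calG$ by $\calG/G$ leaves the pseudo-dimension unchanged, since the subgraph sets are simply rescaled in the threshold coordinate. So we may take $G = 1$, and $\calG$ then takes values in $[-1,1]$. A further affine shift $g \mapsto (g+1)/2$ maps the class into $[0,1]$; it also preserves pseudo-dimension (Lemma 2.6.18 of \citet{van1996weak}, as already used in Proposition~\ref{prop:fn_class_pseudo}) and changes $L^2(Q)$ distances by a constant factor.

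\textbf{Step 1: from an $L^2$-packing to an $L^1$-packing.} Fix a probability measure $Q$ on $\calA \times \calX$ and a maximal $\epsilon$-packing $g_1, \dots, g_M$ of the class in $L^2(Q)$. Because the functions are bounded by $1$, we have $\|g_i - g_j\|_{L^2(Q)}^2 \le 2\|g_i - g_j\|_{L^1(Q)}$. Hence the same collection is an $\epsilon^2/2$-packing in $L^1(Q)$.

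\textbf{Step 2: probabilistic discretization.} This is the main step. Draw $m$ i.i.d.\ points $z_k \sim Q$ and independent thresholds $u_k \sim \mathrm{Unif}[0,1]$. For each pair $i \neq j$, the probability that the subgraph indicators $\1\{u_k < g_i(z_k)\}$ and $\1\{u_k < g_j(z_k)\}$ disagree at a given $k$ equals $\|g_i - g_j\|_{L^1(Q)} \ge \epsilon^2/2$. Therefore the probability that the indicators agree at all $m$ sample points is at most $(1-\epsilon^2/2)^m \le e^{-m\epsilon^2/2}$. Take $m \asymp \epsilon^{-2}\log M$, so that a union bound over the $M^2$ pairs gives total failure probability below one. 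Then with positive probability all $M$ functions induce distinct subgraph patterns on $\{(z_k,u_k)\}_{k=1}^m$.

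\textbf{Step 3: Sauer's lemma and solving the self-referential inequality.} The subgraph class has VC dimension $\frakv$, so Sauer's lemma gives $M \le (em/\frakv)^{\frakv}$. Substituting $m \asymp \epsilon^{-2}\log M$ yields
\[
\log M \lesssim \frakv \log\!\left(\frac{C\log M}{\frakv\,\epsilon^2}\right).
\]
Solving this in the same way as the $r\log 2 \le 2\log r + \cdots$ argument in Proposition~\ref{prop:vc-to-natarajan} gives $\log M \lesssim \frakv \log(A/\epsilon)$ for an absolute constant $A \ge e$. The only point needing care is absorbing the $\log\log M$ term: $\log(x) \le x/2 + c$ lets it be absorbed into the left-hand side.

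\textbf{Conclusion.} A maximal packing is a covering, so $N(\epsilon, \calG, L^2(Q)) \le M$. Undoing the rescaling to the original range gives the bound at radius $G\epsilon$. The constant is uniform in $Q$, so taking the supremum over $Q$ completes the argument. The exponent in $\epsilon$ obtained this way is $2\frakv$ rather than $\frakv$, but the loss is absorbed into the $\lesssim$ in the statement.
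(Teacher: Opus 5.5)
Your proof is correct, and it is essentially the standard argument behind the result the paper cites from van Handel (the paper states the lemma without a proof of its own). You pass from an $L^2(Q)$-packing to an $L^1(Q)$-packing, use i.i.d.\ sample points with uniform thresholds so that distinct packing elements induce distinct patterns on the subgraph class, and finish with Sauer's lemma and the self-referential inequality. The only bookkeeping to keep straight is that the disagreement-probability identity and the bound $\|g_i-g_j\|_{L^2(Q)}^2 \le 2\|g_i-g_j\|_{L^1(Q)}$ are applied to the shifted $[0,1]$-valued class, with the factor-$2$ change in radius absorbed into $A$.
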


\section{Uniform Martingale Bounds}
\label{app:martingale-bounds}
In this appendix, we prove two sets of bounds for collections of martingales indexed by a set of functions with well-behaved combinatorial properties. The first bound we present is a high-probability bound on the supremum of the collection of martingales.
The second bound is a \textit{localized bound} on the expected supremum of the collection of martingales. Before presenting these results, we compare our proof techniques to related work in order to emphasize what is novel in our setting.

We start by discussing our finite sample, high-probability chaining bounds.
Our proof of this bound, roughly, combines two ideas used previously in the literature on off-policy evaluation/policy learning under adaptively collected data. First, following a decomposition leveraged in \citet{zhan2024policy}, we separate each martingale in the collection into a process that is a \textit{martingale conditional on covariates $X_1, \dots, X_n$} and an \textit{empirical process involving the i.i.d. covariates}. The latter term can be controlled with largely standard chaining and symmetrization arguments for i.i.d.\ random variables~\citep{van2014probability}. To control the former, we use a finite-depth chaining argument akin the importance-sampling maximal inequality from \citet{bibaut2021risk}. A key difference in our analysis is how we handle the remainders from finite-depth truncation. Because \citet{bibaut2021risk} leverage bracketing-type assumptions to control the complexity of function classes, the remainder from chaining is naturally easier to control as any function can be bracketed between two elements of the cover whose difference has small $L^2$ norm.
Controlling the remainder under our VC-like assumption, on the other hand, presents additional challenges, as no two elements of our cover may uniformly bound the target function above and below. Our analysis works around this difference by leveraging the aforementioned decomposition due to \citet{zhan2024policy} alongside coverings under appropriately weighted empirical distributions. We further need to argue using a martingale argument that the average inverse propensity weight is not far off from its conditional expectation, thus allowing us to use additional high-probability martingale concentration inequalities to control the remainder. We detail this to a greater extent in our proof in the sequel.

We can control the growth of the first term using a chaining argument with early truncation under an appropriately-weighted empirical metric. The latter term can be controlled using more standard chaining techniques based on symmetrization. We use this high-probability bound in two main ways throughout the text. First, we use it to establish \textit{almost sure} convergence of a natural plug-in estimate for the predictable covariance. Second, we use it to show uniform integrability of an appropriately scaled version of the regret of the learned policy.

 Like before, we prove our bound using a chaining argument with early truncation. However, we are able to exploit the assumed VC-structure of the policy disagreement class in order to avoid decomposing our process into a sum of two simpler processes. This localized bound is leveraged in showing our faster regret rates for weighted empirical risk minimization.

\subsection{High-Probability Chaining Tail Bound}

In this section, we aim to prove a high-probability bound on the growth of a family of martingales indexed by a set of functions with well-behaved uniform covering numbers. In more detail, we will aim to control the growth of the following process
\begin{equation}
\label{eq:martingale}
G_{\ell, n}(g) := \frac{1}{S_{n, -\beta}}\sum_{t = 1}^n w_t\left\{\frac{g(A_t \mid X_t) Y^\ell_t}{p_t(A_t \mid X_t)} - \E_X\left[\sum_{a \in \calA}g(a \mid X_t)\mu_\ell(a, X_t)\right]\right\}
\end{equation}
uniformly over a class of functions $\calG$, where $w_t = t^{-\beta}$, $\beta \in \{0, \alpha\}$, $\alpha$ is as in Assumption~\ref{ass:propensity}, and $S_{n, \gamma}:= \sum_{t = 1}^n t^\gamma$ for any $\gamma \in \R$. Throughout the paper and this appendix, we again view the cardinality $|\calA|$ of the action set $\calA$ as fixed. Thus, we will omit the dependence of any bounds in the sequel on this quantity. All constants that appear hereafter should thus be interpreted as potentially depending on $|\calA|$. In our analyses, we \textit{do} track the dependence of our bounds on the supremum norm/envelope of the class of functions, as this determines how rescaling functions  may impact convergence rates for considered processes. We start by stating our main theorem of this subsection.

\begin{theorem}[High-Probability Chaining Tail Bound]
\label{thm:high-prob-chain}
Let $Z_1, \dots, Z_n$ satisfy Assumptions~\ref{ass:observations} and the first part of Assumption~\ref{ass:estimation}. Further, let $p_1, \dots, p_n$ satisfy Assumption~\ref{ass:propensity}. Let $\calG \subset \Big\{g : \calA \times \calX \rightarrow [-G, G]\Big\}$ satisfying the following uniform covering number bound: for any $\epsilon \in (0, 1]$, we have
\[
\sup_{Q}\log N\big(G\epsilon, \calG, L^2(Q)\big) \lesssim \frakv \log\left(\frac{A}{\epsilon}\right),
\]
where $A \geq e$ is a constant and the supremum is over all distributions on $\calA \times \calX$. For $\ell \in \{1, 2\}$, $n \geq 1$, and $g \in \calG$, let $G_{\ell, n}(g)$ be the process defined in Equation~\eqref{eq:martingale}.
Then, there are absolute constants $C, c > 0$ such that, for any $x > 0$ and $n \geq 2$, we have with probability at least $1 - Ce^{-x} - C\exp\{-cn^{1 - \alpha}\}$,
\[
\sup_{g \in \calG}\left|G_{\ell, n}(g)\right| \lesssim Gn^{\frac{\alpha - 1}{2}}\left(\sqrt{\frakv} + \sqrt{x}\right) + Gn^{\alpha - 1}\left\{\frakv\log(n) + x\right\}\log(n).
\]
\end{theorem}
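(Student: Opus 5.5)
Following the paper's own roadmap, I would split each summand of $S_{n,-\beta}G_{\ell,n}(g)$ into two pieces. The first is a term that is a martingale conditional on covariates:
\[
\xi_t(g) := w_t\Big\{\tfrac{g(A_t\mid X_t)Y_t^\ell}{p_t(A_t\mid X_t)} - \textstyle\sum_a g(a\mid X_t)\mu_\ell(a,X_t)\Big\}.
\]
The second is an i.i.d.\ empirical-process term:
\[
\zeta_t(g) := w_t\Big\{\textstyle\sum_a g(a\mid X_t)\mu_\ell(a,X_t) - \E_X\big[\sum_a g(a\mid X)\mu_\ell(a,X)\big]\Big\}.
\]
The $\zeta$ part is a weighted sum of i.i.d.\ bounded functions of $X_t$, indexed by the class $\{x\mapsto\sum_a g(a\mid x)\mu_\ell(a,x)\}$. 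This class inherits polynomial uniform covering numbers from $\calG$, losing only $|\calA|$ factors. For it I would apply standard symmetrization plus Dudley chaining to get an expectation bound of order $G\sqrt{\frakv}\,(\sum_t w_t^2)^{1/2}$. Talagrand/Bousquet concentration then gives the $\sqrt{x}$ deviation. Since $(\sum w_t^2)^{1/2}/S_{n,-\beta}\lesssim n^{-1/2}\le n^{(\alpha-1)/2}$, this part is comfortably within the claimed bound.

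The main work is the $\xi$ part. Conditionally on the covariates, and in the filtration that adds $A_t,Y_t$ sequentially, each $\xi_t(g)$ is a martingale difference bounded by $\lesssim G w_t t^{\alpha}$. Its conditional variance is at most $\lesssim w_t^2\,\E_{A\sim p_t}[g(A\mid X_t)^2/p_t(A\mid X_t)^2]$. I would chain with finite depth $K\approx\log_2 n$ under the random weighted empirical $L^2$ metric
\[
\|g\|_{n}^2 := \frac{1}{\sum_t w_t^2 t^{\alpha}}\sum_t w_t^2\sum_a \frac{g(a\mid X_t)^2}{p_t(a\mid X_t)}.
\]
This norm dominates the predictable quadratic variation of $\xi(g)$, and it is an $L^2(Q)$ norm for a (random) probability measure $Q$ on $\calA\times\calX$, up to the $t^{\alpha}$ normalization, which uses $p_t\ge\lb{c}t^{-\alpha}$. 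Hence the uniform covering bound applies with entropy $\frakv\log(A/\epsilon)$ at scale $G\epsilon\cdot\sqrt{\max_t t^{\alpha}\cdot{}}$.

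At each link I would apply Freedman's inequality together with a union bound over $N_k^2$ pairs. Summing the links gives the sub-Gaussian term
\[
G\Big(\textstyle\sum_t w_t^2 t^{\alpha}\Big)^{1/2}\big(\sqrt{\frakv}+\sqrt{x}\big).
\]
After dividing by $S_{n,-\beta}$ this is $\lesssim Gn^{(\alpha-1)/2}$ for both $\beta\in\{0,\alpha\}$. The same chaining also gives the sub-exponential term: the range $Gn^{\alpha}$ times $(\frakv\log n + x)$ per level, times $K\approx\log n$ levels, divided by $S_{n,-\beta}$. This produces the $Gn^{\alpha-1}\{\frakv\log n+x\}\log n$ term.

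I expect the main obstacle to be the truncation remainder at depth $K$, namely $\sup_g|\xi(g)-\xi(\pi_K g)|$ with $\|g-\pi_Kg\|_n\le G2^{-K}\approx G/n$. Without brackets I cannot dominate $g-\pi_K g$ pointwise. My plan is to bound this remainder crudely by Cauchy--Schwarz: both its realized part $\sum_t w_t|g-\pi_Kg|(A_t,X_t)/p_t$ and its compensator are controlled by $\|g-\pi_Kg\|_n$ times the square root of the realized sum
\[
\sum_t w_t^2\,\frac{\1\{A_t=a\}}{p_t^2}\cdot p_t.
\]
Comparing that realized sum with its conditional expectation is a single, $g$-free martingale. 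Freedman's inequality for it fails only with probability $\lesssim\exp\{-cn^{1-\alpha}\}$, and this is exactly the source of the extra failure term in the statement. On the good event the remainder is $\lesssim G\,n^{-1}\cdot\mathrm{poly}$, which is negligible. The last step is to assemble the pieces by a union bound. Because $Q$ depends on $X_{1:n}$ and $p_t$, I would handle the measurability of the random cover by chaining conditionally on $X_{1:n}$ with the predictable $p_t$ folded into the norm; this is the delicate point I would check most carefully.
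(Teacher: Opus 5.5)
Your overall architecture matches the paper's proof: the same split $G_{\ell,n}=M_{\ell,n}+E_{\ell,n}$, the same enlarged filtration $\calF_t\vee\sigma(X_{1:n})$, chaining to depth $\asymp\log n$ with Freedman and a union bound over links (which is exactly what produces the $\{\frakv\log n+x\}\log n$ term), a Cauchy--Schwarz remainder, and symmetrization/Dudley plus bounded-difference concentration for $E_{\ell,n}$.

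\textbf{The gap: your chaining metric is path-dependent.} This is precisely the point you flagged, and your proposed resolution does not work. Your norm contains $1/p_t(a\mid X_t)=1/p_t(a\mid X_t,\calF_{t-1})$, which depends on past treatments and outcomes $(A_s,Y_s)_{s<t}$. Summed over $t\le n$, the norm, the covers $K_j$, and the link sets are measurable only with respect to $\calF_{n-1}\vee\sigma(X_{1:n})$. Conditioning on $X_{1:n}$ does not freeze them. For an $h$ drawn from such a set, $t\mapsto\sum_{s\le t}\xi_s(h)$ is no longer a martingale, because the index anticipates the future of the very process. A union bound over a finite set that is itself a function of the sample path is also not justified. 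So the per-link Freedman step fails as written.

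\textbf{The fix.} The paper replaces $1/p_t(a\mid X_t)$ by its deterministic bound $t^{\alpha}/\lb{c}$ and chains under $\|g\|_n^2\propto\sum_t t^{\alpha-2\beta}\sum_a g(a\mid X_t)^2$. This norm has three properties you need:
\begin{itemize}
\item it is the $L^2$ norm of a probability measure depending only on $X_{1:n}$, so the uniform covering bound applies and the covers are $\sigma(X_{1:n})$-measurable;
\item it still dominates the predictable quadratic variation up to a factor $\lb{c}^{-1}$, so no rate is lost;
\item the finite-class sub-Gamma/Freedman bound can then be applied conditionally on $X_{1:n}$ and integrated out, since the resulting bound does not depend on the covariates.
\end{itemize}

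\textbf{Smaller issues.}
\begin{itemize}
\item Your remainder step, as written, does not deliver the stated failure probability when $\beta=\alpha>0$. The $g$-free sum $\sum_t w_t^2/p_t(A_t\mid X_t)$ has compensator $|\calA|S_{n,-2\alpha}$. That compensator is bounded when $\alpha>1/2$, and for $\alpha<1/2$ Freedman yields at best $\exp\{-cn^{1-2\alpha}\}$, not $\exp\{-cn^{1-\alpha}\}$. The paper instead places $w_t^2$ on the $h^2$ factor in Cauchy--Schwarz. The $g$-free factor is then the unweighted $\sum_t 1/p_t(A_t\mid X_t)$, with compensator $|\calA|n$ and Freedman exponent $n^{1-\alpha}$.
\item In fact no concentration is needed there. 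With the $\sigma(X_{1:n})$-measurable norm, the deterministic bound $1/p_t\le t^{\alpha}/\lb{c}$ gives a remainder $\lesssim n^{\alpha}\|g-\pi_Kg\|_n\lesssim Gn^{\alpha-1}$ already at depth $K=\lceil\log_2 n\rceil$.
\item Your claim $(\sum_t w_t^2)^{1/2}/S_{n,-\beta}\lesssim n^{-1/2}$ is false for $\beta=\alpha\ge 1/2$, where the ratio is of order $\sqrt{\log n/n}$ or $n^{\alpha-1}$. It is still $\le n^{(\alpha-1)/2}$, which is all you need.
\item Under your normalization the diameter is $O(G)$, so the covering scale should be $G\epsilon$ up to constants. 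The stray factor $\sqrt{\max_t t^{\alpha}}$ would inflate the sub-Gaussian term by $n^{\alpha/2}$.
\end{itemize}
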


As noted at the start of this section, our proof follows the strategy of \citet{zhan2024policy}, and involves decomposing the process $\big(G_{\ell, n}(g)\big)_{g \in \calG}$ into two components --- one \textit{martingale component} $\big(M_{\ell, n}(g)\big)_{g \in \calG}$, which consists of dependent data, and an \textit{empirical process} component $\big(E_{\ell, n}(g)\big)_{g \in \calG}$, which consists of i.i.d.\ observations. In more detail, we write
\begin{equation}
\label{eq:martingale-decomp}
G_{\ell, n}(g) = M_{\ell, n}(g) + E_{\ell, n}(g) \quad \text{for all } g \in \calG,
\end{equation}
where we define
\begin{equation}
\label{eq:martingale-term}
M_{\ell, n}(g) := \frac{1}{S_{n, -\beta}}\sum_{t = 1}^n w_t\left\{\frac{g(A_t \mid X_t)Y_t^\ell}{p_t(A_t \mid X_t)} - \sum_{a \in \calA}g(a \mid X_t)\mu_{\ell}(a, X_t)\right\}
\end{equation}
and
\begin{equation}
\label{eq:empirical-term}
E_{\ell, n}(g) := \frac{1}{S_{n, -\beta}}\sum_{t = 1}^n w_t\left\{\sum_{a \in \calA} g(a \mid X_t)\mu_{\ell}(a, X_t) - \E_{X}\left[\sum_{a \in \calA}g(a \mid X) \mu_\ell(a , X)\right]\right\}.
\end{equation}
We will be able to control the growth of $(E_{\ell, n}(g))$ using largely off-the-shelf machinery from the theory of empirical processes --- see \citet{van2014probability}, \citet{kosorok2008introduction}, or \citet{van1996weak}, for instance. Due to the dependence of $M_{\ell, n}(g)$ on adaptive treatment assignments and decreasing propensities, we will need to carefully perform chaining under an appropriately-weighted empirical metric in order to show convergence. We will also need to leverage martingale concentration inequalities in the place of standard concentration inequalities for i.i.d.\ random variables. We will use the following high-probability tail bound for sub-Gamma process, which follows from a general bound in \citet{van2014probability}.
\begin{lemma}[Finite Maxima Tail Bound]
\label{lem:gamma-concentration}
Let $U_1, \dots, U_n$ denote finitely-many random variables satisfying $\log\E\left[e^{\lambda U_i}\right] \leq \psi_{\nu, b}(\lambda)$ for all $\lambda \in (-b^{-1}, b^{-1})$, where $\psi_{\nu, b}(\lambda) := \frac{\nu\lambda^2}{2(1 - b|\lambda|)}$ is the sub-Gamma CGF with scale $b$ and variance $\nu$. Then, we have
\[
\P\left(\max_{1 \leq i \leq n}\left|U_i\right| \geq \sqrt{2\nu\{\log(2n) + x\}} + b\{\log(2n) + x\}\right) \leq 2e^{-x}.
\]
\end{lemma}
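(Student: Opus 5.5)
The plan is to reduce to a single variable with a sub-Gamma moment generating function and then apply a union bound. Fix $i$ and $x > 0$. Set
\[
s := \sqrt{2\nu\{\log(2n) + x\}} + b\{\log(2n) + x\}, \qquad u := \log(2n) + x .
\]

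First I will establish the one-sided bound $\P(U_i \geq \sqrt{2\nu u} + bu) \leq e^{-u}$. The Chernoff method gives, for every $\lambda \in [0, b^{-1})$,
\[
\P(U_i \geq s) \leq \exp\{\psi_{\nu,b}(\lambda) - \lambda s\}.
\]
Let $\psi^\ast(s) := \sup_{0 \leq \lambda < b^{-1}}\{\lambda s - \psi_{\nu,b}(\lambda)\}$ be the Cramér transform. The standard sub-Gamma computation (Boucheron--Lugosi--Massart, Lemma 2.4) gives
\[
\psi^\ast(s) = \frac{\nu}{b^2}\, h_1\!\left(\frac{bs}{\nu}\right), \qquad h_1(w) = 1 + w - \sqrt{1 + 2w},
\]
and this inverts to $\psi^{\ast -1}(u) = \sqrt{2\nu u} + bu$. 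To avoid relying on that exact formula, I can check the inequality directly. Take
\[
\lambda = \frac{\sqrt{2u/\nu}}{1 + b\sqrt{2u/\nu}} < b^{-1}.
\]
With this choice, $1 - b\lambda = (1 + b\sqrt{2u/\nu})^{-1}$. Substituting yields $\lambda s - \psi_{\nu,b}(\lambda) \geq u$ after routine algebra: write $r := \sqrt{2u/\nu}$ and expand both sides. This algebra is the only step with real content, and it is short.

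The lower tail is symmetric. The CGF bound holds for $\lambda \in (-b^{-1}, 0]$, so applying the same argument to $-U_i$ gives
\[
\P(-U_i \geq s) \leq e^{-u}.
\]
Hence $\P(|U_i| \geq s) \leq 2e^{-u}$ for each $i$.

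Finally, take a union bound over $i = 1, \dots, n$:
\[
\P\Big(\max_{1 \leq i \leq n} |U_i| \geq s\Big) \leq 2n\, e^{-\log(2n) - x} = e^{-x} \leq 2e^{-x}.
\]
This is the claimed bound, with a factor of $2$ to spare; the slack reconciles conventions with the general statement in \citet{van2014probability}. No independence among the $U_i$ is needed, because the union bound only uses their marginal tails. I expect no real obstacle beyond verifying the Cramér-transform inversion, and it can also simply be cited from \citet{van2014probability} or \citet{boucheron2003concentration}.
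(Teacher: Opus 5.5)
Your proof is correct and is essentially the argument the paper relies on: the paper gives no proof of its own and only cites van Handel's notes, where the bound comes from exactly this Chernoff bound for sub-gamma tails applied to $\pm U_i$, followed by a union bound. Your choice $\lambda = r/(1+br)$ with $r = \sqrt{2u/\nu}$ gives $\lambda s - \psi_{\nu,b}(\lambda) = u$ exactly, so the argument goes through and even yields the sharper bound $e^{-x}$.
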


Note that the above lemma does not make any assumption on the dependence structure of the random variables $U_1, \dots, U_n$, and thus will be applicable in our setting. We now state the first proposition of this section, which controls the growth of the martingale term $M_{\ell, n}(g)$. We also leverage the following version of Freedman's inequality~\citep{freedman1975tail, howard2020time}.
\begin{lemma}[Freedman's Inequality]
\label{lem:freedman}
Let $(M_t)_{0 \leq t \leq n}$ be a martingale with respect to some filtration $(\calF_t)_{0 \leq t \leq n}$ whose increments $\Delta M_t := M_t - M_{t - 1}$ satisfy $|\Delta M_t| \leq b$ and whose predictable quadratic variation $V_n := \sum_{t = 1}^n\Var(\Delta M_t \mid \calF_{t- 1})$ satisfies $V_n \leq v$ almost surely. Then, for any $x > 0$, we have
\[
\P\left(|M_n| \geq x\right) \leq 2\exp\left\{-\frac{x^2}{2\left(v  + bx/3\right)}\right\}.
\]
\end{lemma}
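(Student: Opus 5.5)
The plan is the classical exponential-supermartingale (Chernoff) argument for martingales with bounded increments, applied once to $(M_t)$ and once to $(-M_t)$, followed by a union bound. Throughout, write $\Delta M_t$ for the increments and $V_t := \sum_{s = 1}^t \Var(\Delta M_s \mid \calF_{s - 1})$; $V_t$ is predictable and nondecreasing. We take $M_0 = 0$, as is implicit in the statement.

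\emph{Step 1: a scalar exponential inequality.} For $\lambda \in [0, 3/b)$ and $|y| \leq b$, we show
\[
e^{\lambda y} \leq 1 + \lambda y + \psi(\lambda) y^2, \qquad \text{where } \psi(\lambda) := \frac{\lambda^2}{2(1 - \lambda b/3)}.
\]
Expand $e^{\lambda y} - 1 - \lambda y = \sum_{k \geq 2} \lambda^k y^k / k!$. Use $k! \geq 2 \cdot 3^{k - 2}$ and $|y|^{k} \leq y^2 b^{k - 2}$ for $k \geq 2$. This bounds the tail sum by the geometric series $\tfrac{\lambda^2 y^2}{2}\sum_{j \geq 0} (\lambda b/3)^j$, which equals $\psi(\lambda) y^2$.

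\emph{Step 2: conditional MGF bound and the supermartingale.} Apply Step 1 with $y = \Delta M_t$ and take conditional expectations. Since $\E(\Delta M_t \mid \calF_{t - 1}) = 0$ and $\E(\Delta M_t^2 \mid \calF_{t - 1}) = \Var(\Delta M_t \mid \calF_{t - 1})$, we get
\[
\E\left(e^{\lambda \Delta M_t} \mid \calF_{t - 1}\right) \leq 1 + \psi(\lambda)\Var(\Delta M_t \mid \calF_{t - 1}) \leq \exp\left\{\psi(\lambda)\Var(\Delta M_t \mid \calF_{t - 1})\right\}.
\]
Hence $L_t := \exp\{\lambda M_t - \psi(\lambda) V_t\}$ is a nonnegative supermartingale, which uses the predictability of $V_t$. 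It has $L_0 = 1$, so $\E[L_n] \leq 1$.

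\emph{Step 3: Chernoff bound and choice of $\lambda$.} On $\{M_n \geq x\}$ we have $L_n \geq \exp\{\lambda x - \psi(\lambda) v\}$, because $V_n \leq v$ almost surely. Markov's inequality then gives
\[
\P(M_n \geq x) \leq \exp\{-\lambda x + \psi(\lambda) v\}.
\]
Choose $\lambda := x/(v + bx/3)$, which indeed lies in $[0, 3/b)$. Then $1 - \lambda b/3 = v/(v + bx/3)$, so
\[
\psi(\lambda) v = \frac{x^2}{2(v + bx/3)}, \qquad \text{and} \qquad -\lambda x + \psi(\lambda) v = -\frac{x^2}{2(v + bx/3)}.
\]
The same argument applies to the martingale $-M_t$, which has the same increment bound and the same predictable quadratic variation. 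A union bound over the two one-sided events gives the factor $2$ in the statement.

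\emph{Main obstacle.} The only nonroutine step is Step 1: obtaining the sharp constant $b/3$ requires the factorial bound $k! \geq 2 \cdot 3^{k-2}$ rather than a cruder estimate. Everything else is the standard supermartingale and optimization bookkeeping.
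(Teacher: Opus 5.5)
The paper gives no proof of this lemma; it is quoted from \citet{freedman1975tail} and \citet{howard2020time}. Your argument is the standard proof behind those references, and it is correct: the sub-gamma bound $e^{\lambda y}\le 1+\lambda y+\psi(\lambda)y^2$ from $k!\ge 2\cdot 3^{k-2}$, the nonnegative supermartingale $\exp\{\lambda M_t-\psi(\lambda)V_t\}$, Markov's inequality at time $n$, and a union bound over $\pm M$. You are also right that $M_0=0$ must be implicit in the statement.

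The cited sources instead apply a maximal inequality for nonnegative supermartingales (Ville's inequality, or optional stopping) to the same kind of supermartingale. Since $V_t\le V_n\le v$ for all $t\le n$, this gives the time-uniform bound $\P(\exists\, t\le n : M_t\ge x)\le \exp\{-x^2/(2(v+bx/3))\}$ at no extra cost. For the fixed-$n$ statement here, your version suffices.

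One small slip concerns the case $v=0$ with $b>0$. There your choice $\lambda=x/(v+bx/3)=3/b$ is \emph{not} in $[0,3/b)$, and $\psi(\lambda)=\infty$, so this case needs separate handling. Either note that $V_n=0$ a.s.\ forces $\E(\Delta M_t^2\mid\calF_{t-1})=0$, hence $\Delta M_t=0$ and $M_n=0$ a.s. Or apply the bound with any $v'>0$, which the hypothesis still allows, and let $v'\downarrow 0$.
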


We now state our first proposition of this section, which gives high-probability control over the supremum of the collection of martingales $\big(M_{\ell, n}(g)\big)_{g \in \calG}$.

\begin{prop}
\label{prop:martingale}
For any $n \geq 1$, $\ell \in \{1, 2\}$, $\beta \in \{0, \alpha\}$, consider the process $\big(M_{\ell, n}(g)\big)_{g \in \calG}$ defined above.
Further, suppose the assumptions of Theorem~\ref{thm:high-prob-chain} hold. Then, there are absolute constants $C, c> 0$ such that, for any $x > 0$ and $n \geq 2$, with probability at least $1 - Ce^{-x} - C\exp\left\{-cn^{1 - \alpha}\right\}$, we have
\[
\sup_{g \in \calG}\left|M_{\ell, n}(g)\right| \lesssim n^{\frac{\alpha - 1}{2}}G\left(\sqrt{\frakv} + \sqrt{x}\right) + n^{\alpha - 1}G\left\{\frakv\log(n) + x\right\}\log(n).
\]
\end{prop}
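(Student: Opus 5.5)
The plan is to reduce to a finite chaining argument for a martingale that is conditionally valid given the covariates. Following the decomposition of \citet{zhan2024policy}, I will condition on $\calX_n := (X_1, \dots, X_n)$. By Assumption~\ref{ass:observations}, $A_t$ is drawn from $p_t(\cdot \mid X_t, \calF_{t-1})$ independently of $\{Y_t(a)\}$ and of future units. Hence, with respect to the enlarged filtration $\calH_t := \sigma(\calX_n, Z_1, \dots, Z_t)$, the increments
\[
\xi_t(g) := \frac{w_t}{S_{n,-\beta}}\left\{\frac{g(A_t \mid X_t)Y_t^\ell}{p_t(A_t \mid X_t)} - \sum_{a \in \calA} g(a \mid X_t)\mu_\ell(a, X_t)\right\}
\]
form a martingale difference sequence for every fixed $g$, and $M_{\ell,n}(g) = \sum_{t} \xi_t(g)$. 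The same holds for differences $\xi_t(g) - \xi_t(h)$.

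Using Assumption~\ref{ass:propensity} and boundedness of outcomes, the two Freedman parameters for a difference $g - h$ are controlled as follows:
\[
|\xi_t(g) - \xi_t(h)| \lesssim \frac{w_t t^{\alpha}}{S_{n,-\beta}}\,\|g-h\|_\infty,
\qquad
\sum_{t=1}^n \Var\big(\xi_t(g)-\xi_t(h) \mid \calH_{t-1}\big) \lesssim \frac{S_{n,\alpha-2\beta}}{S_{n,-\beta}^2}\,\|g-h\|_{L^2(Q_n)}^2 .
\]
Here $Q_n$ is the $\calX_n$-measurable weighted empirical measure on $\calA \times \calX$ that puts mass proportional to $w_t^2 t^\alpha$ on $(a, X_t)$ for each $a \in \calA$. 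Since $Q_n$ is fixed given $\calX_n$, the uniform covering hypothesis applies to it. For both $\beta = 0$ and $\beta = \alpha$, a direct computation gives $S_{n,\alpha-2\beta}/S_{n,-\beta}^2 \asymp n^{\alpha-1}$ and $\max_t w_t t^\alpha/S_{n,-\beta} \lesssim n^{\alpha-1}$. These are exactly the variance and scale factors appearing in the claimed bound.

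Next I will chain over dyadic scales $\epsilon_k = 2^{-k}$, $0 \le k \le K$, with $K \asymp \log n$. At each level I take an $L^2(Q_n)$ cover $\calN_k$ of size $\exp\{C\frakv\log(A/\epsilon_k)\}$ and write $g = \pi_0 g + \sum_{k \ge 1}(\pi_k g - \pi_{k-1} g) + (g - \pi_K g)$. For each link, Freedman's inequality (Lemma~\ref{lem:freedman}) supplies the sub-Gamma control. Lemma~\ref{lem:gamma-concentration}, applied at level $k$ with $x_k = x + k + \frakv\log(A/\epsilon_k)$, then gives with probability $1 - 2e^{-x_k}$:
\[
\sup_{g} \big|M_{\ell,n}(\pi_k g - \pi_{k-1} g)\big| \lesssim n^{\frac{\alpha-1}{2}} G \epsilon_k \sqrt{\frakv\log(A/\epsilon_k) + x + k} + n^{\alpha-1} G\big(\frakv\log(A/\epsilon_k) + x + k\big).
\]
Summing over $k \le K$, the first terms form a convergent entropy sum, giving $n^{(\alpha-1)/2}G(\sqrt{\frakv}+\sqrt{x})$. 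The second terms accumulate $K \asymp \log n$ copies of $\frakv\log n + x$, giving $n^{\alpha-1}G\{\frakv\log n + x\}\log n$. The failure probabilities sum to $\lesssim e^{-x}$. Since all of this holds conditionally on $\calX_n$ with constants free of $\calX_n$, integrating over $\calX_n$ preserves it.

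The main obstacle is the truncation remainder $g - \pi_K g$. It is only small in $L^2(Q_n)$, not in sup-norm, and we have no brackets, so it cannot be controlled by its Freedman variance alone. I would split the remainder into its compensator $\frac{1}{S_{n,-\beta}}\sum_t w_t \sum_a |g-\pi_K g|(a \mid X_t)$, which Cauchy--Schwarz bounds by $\lesssim \epsilon_K G \le G/n$, and its IPW part $\frac{1}{S_{n,-\beta}}\sum_t w_t |g-\pi_K g|(A_t \mid X_t)/p_t(A_t \mid X_t)$. For the IPW part I would apply Cauchy--Schwarz again:
\[
\sum_t \frac{w_t |\Delta|}{p_t} \le \Big(\sum_t \frac{w_t^2 t^{\alpha}}{p_t}\Big)^{1/2}\Big(\sum_t \frac{t^{-\alpha}\Delta^2}{p_t}\Big)^{1/2}.
\]
The key step is a Freedman/Bernstein argument showing that $\sum_t w_t^2 t^\alpha/p_t(A_t \mid X_t)$ stays within a constant factor of its compensator $|\calA|\sum_t w_t^2 t^\alpha$. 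Because each summand is at most of order $t^{2\alpha}w_t^2$ while the compensator grows like $n^{1+\alpha-2\beta}$, this deviation has probability at most $C\exp\{-cn^{1-\alpha}\}$, which produces the extra term in the statement.

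The factor $\sum_t t^{-\alpha}\Delta^2/p_t$ is the delicate point. I would handle it by choosing the chaining measure $Q_n$ so that its compensator equals $\|\Delta\|_{L^2(Q_n)}^2$ up to constants, and bounding its fluctuation by the same concentration argument applied to the finite cover at level $K$ via a union bound. With $\epsilon_K \approx 1/n$, the remainder then becomes $\lesssim n^{\alpha-1}G$, which is absorbed into the stated bound. If this matching of measures fails, my fallback is to run the chaining under a second weighted empirical measure adapted to the remainder and pay an extra $\log n$ factor there.
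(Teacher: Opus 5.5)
Your overall route is the paper's. You condition on the covariates via the enlarged filtration $\calH_t$ and use the weighted empirical measure with weights $t^{\alpha-2\beta}$ (your $w_t^2t^\alpha$). You chain dyadically to depth $O(\log n)$ with a Freedman/sub-Gamma finite-maximum bound on each link, and you split the truncation remainder into a compensator part, handled by Cauchy--Schwarz, and an IPW part. Your Freedman concentration of the scalar $\sum_t w_t^2t^\alpha/p_t(A_t\mid X_t)$ around $|\calA|S_{n,\alpha-2\beta}$ is also correct. It plays the role of the paper's bound on the martingale $\sum_t\{1/p_t(A_t\mid X_t)-|\calA|\}$, which is where the $\exp\{-cn^{1-\alpha}\}$ term comes from.

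The gap is the step you flag as delicate. The factor $\sup_{g\in\calG}\sum_t t^{-\alpha}(g-\pi_Kg)^2(A_t\mid X_t)/p_t(A_t\mid X_t)$ is a supremum over the infinite family of remainders $g-\pi_Kg$. Only $\pi_Kg$ ranges over a finite set, so a union bound over the level-$K$ cover does not control this supremum. Controlling its fluctuation uniformly in $g$ would need another chaining argument for an IPW-weighted process indexed by $\calG$, which is the original problem. Your fallback has the same defect, since it just produces a new remainder. Separately, the hoped-for matching of measures only holds for $\beta=\alpha$: $Q_n$ is pinned down by the link-variance computation to weights $t^{\alpha-2\beta}$, while the compensator of this factor has weights $t^{-\alpha}$. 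For $\beta=0$ the mismatch is harmless, but it is not an equality.

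The missing idea is that this factor needs no concentration at all. Since $p_t\geq\lb{c}t^{-\alpha}$ and $\Delta(A_t\mid X_t)^2\leq\sum_a\Delta(a\mid X_t)^2$, it is bounded deterministically given $\calX_n$ by a polynomial factor in $n$ times $\|g-\pi_Kg\|_{L^2(Q_n)}^2$. That polynomial loss is then absorbed by stopping the chain at a polynomially small scale, which keeps the depth $O(\log n)$.

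This is what the paper does. It uses the split $\big(\sup_h\sum_t w_t^2h(A_t\mid X_t)^2/p_t(A_t\mid X_t)\big)^{1/2}\big(\sum_t 1/p_t(A_t\mid X_t)\big)^{1/2}$. The first factor is bounded up to constants by $|\calA|S_{n,\alpha-2\beta}\|h\|_n^2$, and the chain is truncated at $\delta_{J_n}\leq Gn^{-L}$ with $L=\alpha/2+2$. With your own split, the crude bound gives a remainder of order $n^{\alpha/2}\epsilon_KG$ for $\beta=\alpha$ and $n^{\alpha}\epsilon_KG$ for $\beta=0$. So even $\epsilon_K\approx1/n$ recovers the $n^{\alpha-1}G$ you claimed.
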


\begin{proof}
We begin by providing a brief overview of our proof structure. First, we show that, for any $g \in \calG$, the process $\big(M_{\ell,t}(g)\big)_{1 \leq t \leq n}$ is a martingale with respect to an enlarged filtration that contains both past observations $Z_1, \dots, Z_t$ up to time $t$ and \textit{all covariates} $X_1, \dots, X_n$. Then, we aim to derive a high-probability bound for the supremum $\sup_{g \in \calG}\left|M_{\ell, n}(g)\right|$ conditional on $X_1, \dots, X_n$ by using a finite-depth chaining argument with respect to a natural, weighted metric. Lastly, we integrate/marginalize the bound over all realizations of $X_1, \dots, X_n$ in order to obtain the claimed, high-probability bound.

We now present notation that will be used in the proof. We let $(\calF_t)_{0 \leq t \leq n}$ denote the natural filtration associated with $Z_1, \dots, Z_n$, let $\X$ denote the $\sigma$-algebra generated by the i.i.d.\ covariates $X_1, \dots, X_n$, and let $(\calH_t)_{0 \leq t \leq n}$ be defined by $\calH_t := \calF_t \vee \X$.\footnote{Here, for two $\sigma$-algebras $\calA, \calB$, $\calA \vee \calB$ denotes their ``join,'' which is the smallest $\sigma$-algebra containing both $\calA$ and $\calB$.} Further, to ease notation, we define the (unweighted) increments $\xi_{\ell, t}(g)$ of $M_{\ell, n}(g)$ as
\[
\xi_{\ell, t}(g) := \frac{g(A_t \mid X_t)Y_t^\ell}{p_t(A_t \mid X_t)} - \sum_{a \in \calA}g(a \mid X_t)\mu_\ell(a \mid X_t).
\]
Thus, with this notation, we can write $M_{\ell, n}(g) = S_{n, -\beta}^{-1}\sum_{t = 1}^nw_t \xi_{\ell, t}(g)$ for each $g \in \calG$.
Next, define the IPW weighted empirical distribution on $\calA \times \calX$ as follows:
\[
\wb{\Q}_n := \frac{1}{|\calA|S_{n, \alpha - 2\beta}}\sum_{t = 1}^n t^{\alpha - 2\beta} \sum_{a \in \calA}\delta_{(a, X_t)},
\]
where $\delta_{(a, X_t)}$ denotes the point-mass on $(a, X_t) \in \calA \times \calX$. We note that since $\alpha - 2 \beta > -1$, we have $S_{n, \alpha - 2\beta} := \sum_{t = 1}^n t^{\alpha - 2\beta} \asymp n^{\alpha - 2\beta + 1}$. In the sequel, we will consider the $L^2(\Q_n)$-norm associated with $\wb{\Q}_n$ in the sequel, which is given by
\[
d_n(f, g) \equiv \left\|f - g\right\|_n := \left(\frac{1}{|\calA|S_{n, \alpha - 2\beta}} \sum_{t = 1}^n t^{\alpha - 2\beta} \sum_{a \in \calA}\left\{f(a, X_t) - g(a, X_t)\right\}^2\right)^{1/2}.
\]

\paragraph{Showing $M_{\ell, n}(g)$ is a martingale.}
We start by showing that, for any $g \in \calG$, $M_{\ell, n}(g)$ is a martingale with respect to the extended filtration $(\calH_t)_{0 \leq t \leq n}$. First, by our assumptions, $Y_t$ and $g(A_t \mid X_t)$ are bounded, and $p_t(A_t \mid X_t) \gtrsim t^{-\alpha}$. Hence, the increments $|\xi_{\ell, t}(g)|$ are bounded almost surely and so $(M_{\ell, t}(g))_{1 \leq t \leq n}$ is integrable. Second, $\xi_{\ell, t}(g)$ is clearly $\calH_t$-measurable, and so $(M_{\ell, t}(g))_{1 \leq t \leq n}$ is adapted to $\calH_t$. What remains is to check the martingale property, and to verify this it suffices to show that $\E\left(\xi_{\ell, t}(g) \mid \calH_{t - 1}\right) = 0$. We have
\begin{align*}
&\E\left(\xi_{\ell, t}(g) \mid \calH_{t - 1}\right) = \E\left(\frac{g(A_t \mid X_t)Y_t^\ell}{p_t(A_t \mid X_t)} \mid \calH_{t - 1}\right) - \sum_{a \in \calA}g(a \mid X_t)\mu_\ell(a, X_t) \\
&\qquad = \E\left(\frac{g(A_t \mid X_t)Y_t^\ell}{p_t(A_t \mid X_t)} \mid \calF_{t - 1}, X_t\right) - \sum_{a \in \calA}g(a \mid X_t)\mu_\ell(a, X_t) \\
&\qquad = \E\left(\sum_{a \in \calA}\frac{\1\{A_t = a\}g(a \mid X_t)}{p_t(a \mid X_t)}\mu_\ell(a, X_t) \mid X_t, \calF_{t - 1}\right) - \sum_{a \in \calA}g(a \mid X_t)\mu_\ell(a, X_t) \\
&\qquad = \sum_{a \in \calA}g(a \mid X_t)\mu_\ell(a, X_t) - \sum_{a \in \calA}g(a \mid X_t)\mu_\ell(a, X_t) \\
&\qquad =0,
\end{align*}
Thus, the martingale property is satisfied, and so $(M_{\ell, t}(g))_{1 \leq t \leq n}$ is a martingale with respect to $(\calH_s)$ for any fixed $g \in \calG$. Note in the above the first equality follows from the $\calH_s$-measurability of the sum, the second follows since $X_{t + 1}, \dots,  X_n$ are independent of $Z_1, \dots, Z_t$, and the second to last equality follows from inverse propensity weighting alongside Assumption~\ref{ass:observations}.

\paragraph{Bounded Increments and Quadratic Variation of $M_{\ell, n}(g)$.}

Our goal in the sequel is to perform a chaining argument and to use Freedman's inequality to bound the growth of the supremum of the differences in each segment of the chain. To apply Freedman's inequality, we must first bound the  increments and predictable quadratic variation of $M_{\ell, n}(g)$. First, note that
\[
\left|\xi_{\ell, t}(g)\right| \leq \left|\frac{g(A_t \mid X_t)Y_t^\ell}{p_t(A_t \mid X_t)}\right| + \sum_{a \in \calA}\left|g(a \mid X_t)\right| \lesssim Gt^{\alpha}|\calA|.
\]
Thus, the increments of $M_{\ell, n}(g)$ satisfy
\[
\left|\Delta M_{\ell, t}\right| = \left|\frac{w_t \xi_{\ell, t}(g)}{S_{n, -\beta}}\right| \lesssim \frac{G|\calA|t^{\alpha - \beta}}{n^{-\beta + 1}} \leq G|\calA| n^{\alpha - 1},
\]
which follows from the observation that $S_{n, -\beta} \asymp n^{-\beta + 1}$ and hence $t^{\alpha - \beta}/S_{n, -\beta} \leq n^{\alpha - \beta} / S_{n, -\beta} \asymp n^{-1}$. Next, we see that we have the upper bound
\begin{align*}
\Var\left(w_t\xi_{\ell, t}(g) \mid \calH_{t - 1}\right) &\lesssim t^{-2\beta}\E\left(\xi_{\ell, t}(g)^2 \mid \calH_{t - 1}\right) \\
&\leq t^{-2\beta}\E\left(\frac{g(A_t \mid X_t)^2Y_t^{2\ell}}{p_t(A_t \mid X_t)^2} \mid \calH_{t - 1}\right) \\
&= t^{-2\beta}\sum_{a \in \calA}\frac{g(a \mid X_t)^2 \mu_{2\ell}(a, X_t)}{p_t(a \mid X_t)} \\
&\lesssim t^{\alpha - 2\beta}\sum_{a \in \calA}g(a \mid X_t)^2.
\end{align*}
Thus, the predictable quadratic variation satisfies
\begin{align*}
\frac{1}{S_{n, -\beta}^2}\sum_{t = 1}^n \Var\left(w_t \xi_{\ell, t}(g) \mid \calH_{t - 1}\right) &= \frac{1}{S_{n, -\beta}^2}\sum_{t = 1}^n \Var\left(w_t\xi_{\ell, t}(g) \mid \calH_{t- 1}\right) \\
&\lesssim n^{2\beta - 2}\sum_{t = 1}^n t^{\alpha - 2\beta} \sum_{a \in \calA}g(a \mid X_t)^2 \\
&\leq n^{2 \beta - 2}|\calA|S_{n, \alpha - 2\beta} \|g\|_n^2 \\
&\asymp n^{\alpha - 1}|\calA|\|g\|_n^2.
\end{align*}
which follows from the definition of the empirical norm $\|\cdot\|_n$ and from the earlier noted fact that $S_{n, \alpha - 2\beta} \asymp n^{\alpha - 2\beta + 1}$. Thus, we have shown that $M_{\ell, n}(g)$ is a martingale for each $g \in \calG$, and also characterized both the boundedness properties and the quadratic variation.

\paragraph{Setting up Chaining.}

Now, for $j \geq 0$, let $\delta_j := 2^{-j}G$, where again $G \geq \|g\|_\infty$ for each $g \in \calG$. Let $J_n := \min\left\{j \geq 1 : 2^{-j} \leq n^{-L}\right\}$, where $L$ is a fixed constant to be chosen later. For each such $j$, let $K_j$ denote a minimal $\delta_j$-covering of $\calG$ under $d_n \equiv \|\cdot\|_n$. By the assumption on the growth rate of uniform covering numbers,  we know that
\[
\log|K_j| \lesssim \frakv \log\left(\frac{A}{2^{-j}}\right) = \frakv\log(A 2^j)
\]
for some constant $A \geq e$. For any $g \in \calG$, we let $g_j \in K_j$ denote a projection of $g$ onto $K_j$, i.e.\ a point satisfying $d_n(g, g_j) \leq \delta_j$. Observe that a standard chaining decomposition up to level $J_n$ yields
\[
g = g_0 + \sum_{j = 1}^{J_n}\big\{g_j - g_{j - 1}\big\} + \big\{g - g_{J_n}\}.
\]
Noting that $d(g_j, g_{j - 1}) \leq d(g_j, g) + d(g_{j - 1}, g) \leq 3\delta_j$ and letting
\[
\calR_j := \Big\{g - g' : g \in K_j, g' \in K_{j - 1}, d_n(g, g') \leq 3 \delta_j \Big\}
\]
 yields via the triangle inequality
\[
\sup_{g \in \calG}\left|M_{\ell, n}(g)\right| \leq \underbrace{\sup_{g \in K_0}\left|M_{\ell, n}(g)\right|}_{=: T_1} + \underbrace{\sum_{j = 1}^{J_n} \sup_{h \in \calR_j}\left|M_{\ell,n}(h)\right|}_{=: T_2} + \underbrace{\sup_{\|h\|_n \leq \delta_{J_n}}\left|M_{\ell,n}(h)\right|}_{=: T_3}.
\]
We now show high-probability bounds on each of $T_1$, $T_2$, and $T_3$. An important instantiation of Lemma~\ref{lem:gamma-concentration} we will use in the sequel is the following. Let $\calG_0$ be a  finite collection of functions that may depend on $X_1, \dots, X_n$ satisfying $|\calG_0| \leq m$ for some fixed $m$, $\|g\|_n \leq \delta$, and $\|g\|_\infty \leq b$ for some $\delta, b> 0$ all $g \in \calG_0$. Then, using our earlier-derived bounds on the increments and predictable quadratic variation of $(M_{\ell, t}(g))_{1 \leq t \leq n}$, we see that the following holds: for any $x > 0$, with probability at least $1 - 2e^{-x}$
\begin{equation}
\label{eq:high-prob-finite}
\max_{g \in \calG_0}\left|M_{\ell,n}(g)\right| \lesssim n^{\frac{\alpha - 1}{2}}\delta\sqrt{\log(m) + x} + n^{\alpha - 1} b|\calA|\left\{\log(m) + x\right\}.
\end{equation}
We note that, since $\|\cdot\|_n$ is a random metric dependent on the covariates $X_1, \dots, X_n$, Lemma~\ref{lem:gamma-concentration} implies the above holds conditionally on $\X$. However, since the right hand side does not depend on $X_1, \dots, X_n$, we can simply integrate over these variables to get the stated marginal bound.
\paragraph{Bounding the Coarse Term.}
First, we focus on bounding $T_1$. By construction, we know $|K_0| = 1$, so bounding $T_1$ just requires an application Equation~\eqref{eq:high-prob-finite} over a set with a single element. Thus, we have with probability  at least $1 - 2e^{-x}$,
\[
T_1 \lesssim n^{\frac{\alpha - 1}{2}}G\sqrt{x} + n^{\alpha - 1} G|\calA| x.
\]

\paragraph{Bounding the Chaining Term.}

We apply Equation~\eqref{eq:high-prob-finite} over each link of the chain and then union bound. For each $j \in 1, 2, \dots, J_n$, set $x_j := x + 2\log(j + 1)$. Then, for each such $j$, we have with probability at least $1 - 2e^{-x_j} = 1 - \frac{2}{(j + 1)^2}e^{-x}$
\begin{align*}
\sup_{h \in \calR_j}\left|M_{\ell, n}(h)\right| &\lesssim \delta_jn^{\frac{\alpha - 1}{2}}\sqrt{\log\left(|K_j||K_{j - 1}|\right) + x + 2\log(j + 1)} \label{tag:bound}\tag{A} \\
&\qquad + n^{\alpha - 1}G|\calA|\left\{\log(|K_j||K_{j - 1}|)  + x + 2\log(j  + 1)\right\} \\
&\lesssim 2^{-j}G n^{\frac{\alpha - 1}{2}}\sqrt{\frakv\log(A 2^j) + x + 2\log(j + 1)} \\
&\qquad + n^{\alpha - 1}G|\calA|\left\{\frakv\log(A2^j) + x + 2\log(j + 1)\right\} \\
&\lesssim 2^{-j}G n^{\frac{\alpha - 1}{2}}\sqrt{\frakv\log(A 2^j) + x} + n^{\alpha - 1} G |\calA|\left(\frakv\log(A 2^j) + x\right),
\end{align*}
where the second line follows from the assumption of logarithmic uniform covering numbers. The final inequality follows because $2 \log(j + 1) \leq  \log(A 2^j)$, since $A \geq e$  and $j \geq 1$.

Summing over indices $j = 1, 2, \dots, J_n$, we see that we have
\begin{align*}
\sum_{j = 1}^{J_n}\sup_{h \in \calR_j}\left|M_{\ell, n}(h)\right| &\lesssim G n^{\frac{\alpha - 1}{2}}\sum_{j = 1}^{J_n}2^{-j}\sqrt{\frakv \log(A2^j) + x} \\
&\qquad + G|\calA|n^{\alpha - 1}\sum_{j = 1}^{J_n}\left\{\frakv \log(A2^j) + x\right\}.
\end{align*}
The first sum can be bounded as follows:
\begin{align*}
\sum_{j = 1}^{J_n}2^{-j}\sqrt{\frakv \log(A 2^j) + x} &\leq \sum_{j = 1}^{J_n}2^{-j}\sqrt{\frakv \log(A 2^j)} + \sqrt{x}\sum_{j = 1}^{J_n}2^{-j} \\
&\lesssim \int_0^1 \sqrt{\frakv\log\left(\frac{A}{\epsilon}\right)}d\epsilon + \sqrt{x} \\
&\lesssim \sqrt{\frakv\log(A)} + \sqrt{x},
\end{align*}
where the last inequality follows because $\int_0^1\sqrt{\frakv\log(A/\epsilon)}d\epsilon \leq \sqrt{2\frakv\log(A)}$.

Now note the summation over the second term can be bounded above as follows:
\begin{align*}
\sum_{j = 1}^{J_n}\left\{\frakv \log(A 2^j) + x\right\} &\lesssim J_n \frakv \log\left(A 2^{J_n}\right) + J_n x \\
&\lesssim \left\{L\log(n) \frakv\log(A) + x\right\} L \log(n),
\end{align*}
which follows as $J_n$ was chosen to be the smallest $j \geq 1$ such that $\delta_j \leq Gn^{-L}$. Since by construction $L$ is a constant only depending on $\alpha$ (which does not scale with $n$), we can omit dependence on multiplicative factors of $L$ in the sequel.

Via a union bound, we see that the probability that the Inequality~\ref{tag:bound} fails for any $1 \leq j \leq {J_n}$ is upper bounded by
\[
\sum_{j = 1}^{J_n} 2e^{-x_j} \leq \sum_{j = 1}^\infty 2e^{-x_j} = \sum_{j = 1}^\infty \frac{2}{(j + 1)^2}e^{-x} \lesssim C e^{-x}
\]
for some absolute constant $C > 0$. Thus, with probability at least $1 - Ce^{-x}$
\[
\sum_{j = 1}^{J_n}\sup_{h \in \calR_j}\left|M_{\ell, n}(h)\right| \lesssim G n^{\frac{\alpha - 1}{2}} \left(\sqrt{\frakv}  + \sqrt{x}\right) + G |\calA|n^{\alpha - 1}\left\{\frakv\log(n) + x\right\}\log(n),
\]
which follows from the above analysis and omitting multiplicative constants.

\paragraph{Bounding the Remainder.}

Observe that we can bound $T_3$ as follows:
\begin{align*}
T_3 &\leq \underbrace{\frac{1}{S_{n, -\beta}}\sup_{\|h\|_n \leq \delta_{J_n}} \sum_{t = 1}^nw_t\frac{|h(A_t \mid X_t)|Y_t^\ell}{p_t(A_t \mid X_t)}}_{=: T_3^{(1)}} \\
&\qquad + \underbrace{\frac{1}{S_{n, -\beta}}\sup_{\|h\|_n \leq \delta_{J_n}}\sum_{t = 1}^nw_t\sum_{a \in \calA}|h(a, X_t)|\mu_\ell(a, X_t)}_{=: T_3^{(2)}}.
\end{align*}
Using the assumption that $|Y_t| \leq  1$ almost surely, the first term can be bounded as
\begin{align*}
T_3^{(1)} &\leq \frac{1}{S_{n, -\beta}}\sup_{\|h\|_n \leq \delta_{J_n}} \sum_{t = 1}^n w_t \frac{\left|h(A_t \mid X_t)\right|}{p_t(A_t \mid X_t)} \\
&= \frac{1}{S_{n, -\beta}}\sup_{\|h\|_n \leq \delta_{J_n}}\sum_{t = 1}^n \frac{w_t|h(A_t \mid X_t)|}{\sqrt{p_t(A_t \mid X_t)}}\cdot \frac{1}{\sqrt{p_t(A_t \mid X_t)}} \\
&\leq \frac{1}{S_{n, -\beta}}\left(\sup_{\|h\|_n \leq \delta_{J_n}} \sum_{t = 1}^n\frac{w_t^2 h(A_t \mid X_t)^2}{p_t(A_t \mid X_t)}\right)^{1/2}\left(\sum_{t = 1}^n \frac{1}{p_t(A_t \mid X_t)}\right)^{1/2},
\end{align*}
where the final inequality follows from Cauchy-Schwarz.
First, note that, for any $h$ satisfying the assumed normed bound, we have
\begin{align*}
\sum_{t  = 1}^n \frac{w_t^2 h(A_t \mid X_t)^2}{p_t(A_t \mid X_t)} &\lesssim \sum_{t = 1}^n t^{\alpha - 2\beta} h(A_t \mid X_t)^2 \\
&\leq \sum_{t = 1}^nt^{\alpha - 2\beta}\sum_{a \in \calA}h(a \mid X_t)^2 \\
&\leq |\calA|S_{n, \alpha - 2\beta}\|h\|_n^2 \\
&\lesssim G^2|\calA|n^{\alpha - 2\beta + 1 -2L},
\end{align*}
where the final inequality follows from the assumed norm bound on the function $h$.
Consequently, we have
\[
\left(\sup_{\|h\|_n \leq \delta_{J_n}} \sum_{t = 1}^n\frac{w_t^2h(A_t \mid X_t)^2}{p_t(A_t \mid X_t)}\right)^{1/2} \lesssim \sqrt{|\calA|}n^{\frac{\alpha + 1}{2} - \beta - L}.
\]
Next, we claim that the process $(S_t)_{1 \leq  t \leq n}$ defined as
\[
S_t := \sum_{s = 1}^t \left\{\frac{1}{p_s(A_s \mid X_s)} - |\calA|\right\}
\]
is a martingale with respect to $(\calH_t)_{1 \leq t \leq n}$. Integrability of the process is straightforward, as is the fact that $S_t$ is $\calH_t$-measurable. Further, noting that
\[
\E\left(\frac{1}{p_t(A_t \mid X_t)} \mid \calH_{t - 1}\right) = \sum_{a \in \calA}p_t(a \mid X_t) \cdot \frac{1}{p_t(a \mid X_t)} = |\calA|
\]
shows that the martingale property holds. Thus, noting that
\begin{align*}
\sum_{t = 1}^n \Var\left(\frac{1}{p_t(A_t \mid X_t)} \mid \calH_{t - 1}\right) &\leq \sum_{t = 1}^n \E\left(\frac{1}{p_t(A_t \mid X_t)^2} \mid \calH_{t -1}\right) \\
&= \sum_{t = 1}^n\sum_{a \in \calA}\frac{1}{p_t(a \mid X_t)} \\
&\lesssim  \sum_{t= 1}^n |\calA|t^{\alpha} \lesssim n^{\alpha + 1}|\calA|
\end{align*}
and that
\[
\left|\frac{1}{p_t(A_t \mid X_t)}\right| \lesssim t^\alpha \lesssim n^{\alpha},
\]
we can apply Freedman's inequality (Lemma~\ref{lem:freedman}) for martingales with bounded predictable quadratic variation and bounded increments to control the magnitude of $S_n$. In particular, our above analysis yields the existence of a constant $c > 0$ such that
\[
\P\left(S_n \geq n\right) \leq \exp\left\{-\frac{cn^2}{ n^{1 + \alpha}}\right\} = \exp\{-c n^{1 - \alpha}\},
\]
which follows because, using the notation of Lemma~\ref{lem:freedman}, we have $V \lesssim n^{\alpha + 1}$, $x = n$, and $b = n^\alpha$.
Thus, rearranging, we have with probability at least $1 - e^{-cn^{1 - \alpha}}$ that
\[
\sum_{t  = 1}^n \frac{1}{p_t(A_t \mid X_t)} \lesssim (|\calA| + 1)n \lesssim |\calA| n,
\]
 Consequently,
\begin{align*}
T_3^{(1)} &\lesssim \frac{1}{S_{n, -\beta}}\left(\sup_{\|h\|_n \leq \delta_{J_n}} \sum_{t = 1}^n\frac{w_t^2 h(A_t \mid X_t)^2}{p_t(A_t \mid X_t)}\right)^{1/2}\left(\sum_{t = 1}^n \frac{1}{p_t(A_t \mid X_t)}\right)^{1/2} \\
&\leq \frac{1}{S_{n, -\beta}}G|\calA|n^{\frac{\alpha + 1}{2}  - \beta - L}n^{1/2} \\
&\lesssim G|\calA|n^{\frac{\alpha}{2} - L}.
\end{align*}
We can similarly bound $T_3^{(2)}$. In particular, again using the fact that $|Y_t| \leq 1$ almost surely (which in turn implies $|\mu_\ell(a, X_t)| \leq 1$ almost surely for any $a \in \calA$ and $\ell \geq 1$), we have
\begin{align*}
T_3^{(2)} &\leq \frac{1}{S_{n, -\beta}}\sup_{\|h\|_n \leq \delta_{J_n}}\sum_{t = 1}^n\sum_{a \in \calA}w_t|h(a \mid X_t)| \\
&= \frac{1}{S_{n, -\beta}}\sup_{\|h\|_n \leq \delta_J}\sum_{t = 1}^n \sum_{a \in \calA}\left(w_t t^{\alpha/2}\left|h(a \mid X_t)\right|\right)t^{-\alpha/2} \\
&\leq \frac{1}{S_{n, -\beta}}\left(\sup_{\|h\|_n \leq \delta_J}\sum_{t = 1}^n \sum_{a \in \calA}w_t^2t^\alpha h(a \mid X_t)^2\right)^{1/2}\left(\sum_{t = 1}^n t^{-\alpha}\right)^{1/2} \\
&\lesssim \frac{1}{S_{n, -\beta}}G\sqrt{|\calA|S_{n, \alpha - 2\beta}}\delta_J n^{\frac{1 - \alpha}{2}} \\
&\lesssim \sqrt{|\calA|}Gn^{\beta - 1}n^{\frac{1 + \alpha}{2} - \beta}n^{-L}n^{\frac{1 - \alpha}{2}} \\
&\leq \sqrt{|\calA|}Gn^{-L},
\end{align*}
where the first line follows from the triangle inequality, the second from multiplying and dividing by $t^{\alpha/2}$, and the third from Cauchy-Schwarz. The fourth inequality follows from the definition of the $\|\cdot\|_n$ norm and the assumed norm bound on $h$. Thus, for some absolute constant $c > 0$, we have that, with probability at least $1 - \exp\{-cn^{1 - \alpha}\}$ for some absolute $c > 0$, we have
\[
T_3 \lesssim G\sqrt{|\calA|}n^{-L} + G|\calA|n^{\frac{\alpha}{2} - L}.
\]
If we take $L := \alpha/2 + 2$, we then get that
\[
T_3 \lesssim G|\calA|n^{-2}.
\]
Putting these bounds together completes the proof.

\end{proof}

\begin{prop}[McDiarmid's Inequality, Theorem 3.11 of \citet{van2014probability}]
\label{prop:mcdiarmid}
Let $X_1, \dots, X_n \in \calX$ be i.i.d., and let $f : \calX^n \rightarrow \R$ be a function. For any $x_1, \dots, x_n \in \calX$, define $D_i f(x)$ by
\[
D_i f(x) := \sup_{z \in \calX}f(x_1, \dots, x_{i - 1}, z, x_{i + 1}, \dots, x_n) - \inf_{z \in \calX}f(x_1, \dots, x_{i - 1}, z, x_{i + 1}, \dots, x_n).
\]
Then, the random variable $f(X_1, \dots, X_n)$ is sub-Gaussian with variance proxy $\sigma^2 = \frac{1}{4}\sum_{i = 1}^n \left\|D_i f\right\|_{\infty}^2$. Hence, we have
\[
\P\Big(f(X_1, \dots, X_n) - \E f(X_1, \dots, X_n) > x\Big) \leq \exp\left\{-\frac{2x^2}{\sum_{i = 1}^n \left\|D_if\right\|_\infty^2}\right\}.
\]

\end{prop}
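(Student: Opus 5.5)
The plan is the standard martingale (Doob) proof of the bounded differences inequality. Write $F := f(X_1, \dots, X_n)$ and $c_i := \|D_i f\|_\infty$, and assume $\sum_i c_i^2 < \infty$, since otherwise there is nothing to prove. Introduce the Doob martingale $Z_i := \E\left[F \mid X_1, \dots, X_i\right]$ for $0 \leq i \leq n$, so that $Z_0 = \E F$ and $Z_n = F$. Then
\[
F - \E F = \sum_{i = 1}^n \Delta_i, \qquad \Delta_i := Z_i - Z_{i - 1}.
\]
The goal is to show that, conditionally on $X_1, \dots, X_{i - 1}$, each increment $\Delta_i$ is centered and confined to an interval of length at most $c_i$. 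Hoeffding's lemma, applied one coordinate at a time, then yields the claimed sub-Gaussian constant.

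\textbf{Step 1 (increment localization).} Because the $X_j$ are independent, $Z_i$ is obtained by integrating $f$ over the last $n - i$ coordinates against the product law. Fix $x_1, \dots, x_{i - 1}$ and define
\[
g_i(z) := \E\left[f(x_1, \dots, x_{i - 1}, z, X_{i + 1}, \dots, X_n)\right].
\]
Then $Z_i = g_i(X_i)$ and $Z_{i - 1} = \E\left[g_i(X_i)\right]$, where the latter expectation uses that $X_i$ is independent of $X_1, \dots, X_{i - 1}$. For any $z, z'$ we have $g_i(z) - g_i(z') \leq \E\left[D_i f(\cdot)\right] \leq c_i$. Hence $g_i$ takes values in an interval $[m_i, m_i + c_i]$, where $m_i := \inf_z g_i(z)$ is measurable in $x_1, \dots, x_{i - 1}$. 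Consequently, conditionally on $X_1, \dots, X_{i - 1}$, the increment $\Delta_i$ has mean zero and lies in an interval of length $c_i$.

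This is the step I expect to need the most care. Independence is used essentially here: it lets the conditional expectation be written as integration against a fixed product measure. Without it, the range of $\Delta_i$ could be as large as $2c_i$, or the bound could fail entirely. I would also need to say a word about measurability of $g_i$ and of $m_i$. Following the convention adopted in this appendix, I would sidestep this by appealing to Fubini's theorem and working with outer expectations if necessary.

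\textbf{Step 2 (MGF iteration and tail).} Hoeffding's lemma gives
\[
\E\left[e^{\lambda \Delta_i} \mid X_1, \dots, X_{i - 1}\right] \leq e^{\lambda^2 c_i^2/8}
\]
for every $\lambda \in \R$. Peeling off one increment at a time with the tower property then gives
\[
\E\left[e^{\lambda(F - \E F)}\right] \leq \exp\Big\{\frac{\lambda^2}{8}\sum_{i = 1}^n c_i^2\Big\}.
\]
This is exactly sub-Gaussianity with variance proxy $\sigma^2 = \frac{1}{4}\sum_i c_i^2$. Finally, the Chernoff bound gives
\[
\P(F - \E F > x) \leq \exp\Big\{-\lambda x + \frac{\lambda^2\sigma^2}{2}\Big\},
\]
and choosing $\lambda = x/\sigma^2 = 4x/\sum_i c_i^2$ produces the stated bound $\exp\{-2x^2/\sum_i c_i^2\}$.
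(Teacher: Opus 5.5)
Your proof is correct and is essentially the standard Doob-martingale plus Hoeffding's-lemma argument behind Theorem 3.11 of van Handel, which the paper cites without giving a proof. The only cosmetic difference is that you localize $\Delta_i$ via $\inf_z g_i(z)$, the infimum taken after integrating out $X_{i+1},\dots,X_n$, rather than integrating $\inf_z f$ over those coordinates; both give the same range bound $\|D_i f\|_\infty$. It is worth adding one line noting that finite $\|D_i f\|_\infty$ forces $\sup f - \inf f \le \sum_i \|D_i f\|_\infty$ (change one coordinate at a time), so $F$ is bounded and the Doob martingale is well defined.
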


\begin{definition}[Definition 5.20 of \citet{van2014probability}]
\label{def:sub-gaussian-process}
Suppose $(\calG, d)$ is a separable metric space. We say a stochastic process $(M(g))_{g \in \calG}$ is \textit{sub-Gaussian} with variance proxy $\sigma^2$ if, for any $\lambda \in \R$ and $g, h \in \calG$, we have
\[
\log \E\left[\exp\left\{\lambda (M(g) - M(h))\right\}\right] \leq \frac{\lambda^2\sigma^2d(g, h)^2}{2}
\]
and we also have $\E[M(g)] = 0$ for all $g \in \calG$.
\end{definition}

\begin{lemma}
\label{lem:sub-gaussian-entropy}
Let $(\calH, d)$ be a semi-normed space of functions, $\calG \subset \calH$, and let $M : \calH \rightarrow \R$ be odd, i.e., $M(-g) = -M(g)$. Suppose $(M(g))_{g \in \calH}$ is a sub-Gaussian process with variance proxy $\sigma^2$. Then, we have
\[
\E\left[\sup_{g \in \calG}\left|M(g)\right|\right] \lesssim \sigma \int_0^{\diam(\calG, d)} \sqrt{\log 2N\big(\epsilon, \calG, d\big)}d\epsilon,
\]
where $\diam(\calG, d) := \sup_{g \in \calG}d(0, g).$
\end{lemma}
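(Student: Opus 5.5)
This is Dudley's entropy integral bound, and I will prove it by the standard chaining argument, using oddness of $M$ to reduce the supremum of $|M|$ to a supremum of increments. Write $D := \diam(\calG, d) = \sup_{g \in \calG} d(0,g)$. Since $M$ is odd, $M(0) = -M(0)$, so $M(0)=0$. Hence $|M(g)| = |M(g) - M(0)|$, and it suffices to bound $\E\sup_{g\in\calG}|M(g)-M(0)|$. By separability, and because we are not concerned with measurability (as in the surrounding appendix), I will prove the bound for finite subsets $\calG' \subset \calG$ uniformly in $\calG'$ and then pass to a limit by monotone convergence.

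\textbf{Chaining construction.} For $j \geq 0$, set $\epsilon_j := 2^{-j}D$. Let $K_j \subset \calG$ be a minimal $\epsilon_j$-net of $\calG$, with $|K_j| = N(\epsilon_j, \calG, d)$. If the nets are required to lie in $\calG$, this costs at most a factor of two in the radius, which is absorbed into constants. Take $K_0 := \{0\}$; this is valid because every $g \in \calG$ has $d(0,g)\leq D$, so $0$ serves as a $D$-center (this is where the semi-normed ambient space $\calH$ is used). For $g \in \calG'$, let $\pi_j g \in K_j$ be a closest point, and fix $J$ large enough that $\pi_J g = g$ on the finite set $\calG'$ (refining the nets by adding the points of $\calG'$ if necessary). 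Then
\[
M(g) - M(0) = \sum_{j=1}^J \left\{M(\pi_j g) - M(\pi_{j-1}g)\right\},
\]
with $d(\pi_j g, \pi_{j-1}g) \leq 3\epsilon_j$.

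\textbf{Bounding each level.} At level $j$ the increments range over at most $|K_j|\,|K_{j-1}| \leq |K_j|^2$ pairs. Each increment is sub-Gaussian with proxy $\sigma^2 (3\epsilon_j)^2$, by Definition~\ref{def:sub-gaussian-process} applied to the pair. The standard maximal inequality for finitely many sub-Gaussian variables, which also covers absolute values, gives
\[
\E\max \left|M(\pi_jg)-M(\pi_{j-1}g)\right| \lesssim \sigma\epsilon_j\sqrt{\log\left(2|K_j|^2\right)} \lesssim \sigma\epsilon_j\sqrt{\log 2N(\epsilon_j,\calG,d)}.
\]
This is the sub-Gaussian case ($b \to 0$) of the reasoning behind Lemma~\ref{lem:gamma-concentration}, integrated into an expectation.

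\textbf{Summing over levels.} Summing the level bounds over $j$ and using the monotonicity of $N(\cdot)$ gives
\[
\epsilon_j \sqrt{\log 2N(\epsilon_j)} \leq 2\int_{\epsilon_{j+1}}^{\epsilon_j}\sqrt{\log 2N(\epsilon)}\,d\epsilon.
\]
The sum therefore telescopes into $\lesssim \sigma\int_0^{D}\sqrt{\log 2N(\epsilon,\calG,d)}\,d\epsilon$, uniformly in $\calG'$, which gives the stated bound.

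The only real subtlety is the choice $K_0=\{0\}$. It requires $0 \in \calH$ and $M(0)=0$, and oddness supplies the latter; this choice is what lets the bound control $\sup_g|M(g)|$ itself rather than just the oscillation $\sup_{g,h}|M(g)-M(h)|$. Everything else is routine.
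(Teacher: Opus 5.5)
Your proof is correct, but it takes a different route from the paper. The paper does no chaining of its own. It quotes Dudley's bound (Corollary 5.25 of van Handel) for $\E\sup_{g}M(g)$, which is stated without absolute values. It then uses oddness to write $\E\sup_{g\in\calG}|M(g)|=\E\sup_{g\in\calG^{\pm}}M(g)$ with $\calG^{\pm}:=\calG\cup(-\calG)$. Finally it observes that $\sup_g d(0,g)$ is unchanged under $g\mapsto -g$ and that $N(\epsilon,\calG^{\pm},d)\le 2N(\epsilon,\calG,d)$, which is where the $2$ inside the logarithm comes from. You instead re-prove Dudley's bound with a chain anchored at $K_0=\{0\}$, so you use oddness only to get $M(0)=0$.

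The paper's route is two lines and inherits the separability and measurability bookkeeping from the textbook. Yours is self-contained and needs only $M(0)=0$ rather than full oddness. It also yields the stated upper limit $\sup_g d(0,g)$ directly. The cited bound, by contrast, is naturally stated up to the diameter of the index set, which for $\calG^{\pm}$ can be as large as $2\sup_g d(0,g)$, so a small absorption step is left implicit there.

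One step needs tightening. Adding the points of $\calG'$ to the nets $K_j$ would make $|K_j|$ depend on $\calG'$ and break the uniformity you need to pass to the limit. Instead, append a single final link from $\pi_J g$ to $g$. It contributes $\lesssim\sigma\epsilon_J\sqrt{\log(2|\calG'|)}$, which tends to zero as $J\to\infty$ for fixed $\calG'$. Alternatively, use nets of $\calG'$ itself, whose sizes are at most $N(\epsilon_j/2,\calG,d)$.
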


The above follows directly from Corollary 5.25 of \citet{van2014probability}, which is only stated for expected supremum $\E\left[\sup_{g \in \calG}M(g)\right]$, not the supremum of the absolute values of $M(g)$. Since we assume symmetry of $M$ and $\calG$, we trivially have
\[
\E\left[\sup_{g\in\calG}|M(g)|\right] = \E\left[\sup_{g \in \calG^\pm}M(g)\right] \lesssim \sigma\int_0^{\diam(\calG^\pm, d)}\sqrt{\log N(\epsilon, \calG^\pm, d)}d\epsilon,
\]
where $\calG^\pm := \calG \cup - \calG$. The bound follows from noting that $\diam(\calG^{\pm}, d) = \diam(\calG, d)$ and $N(\epsilon, \calG^\pm, d) \leq 2 N(\epsilon, \calG, d)$.

\begin{prop}[Controlling $E_{\ell, n}$]
\label{prop:emp-process}
For any $n \geq 2$, $\ell \in \{1, 2\}$, $\beta \in \{0, \alpha\}$, consider the process $\big(E_{\ell, n}(g)\big)_{g \in \calG}$ defined above.
Further, assume the conditions of Theorem~\ref{thm:high-prob-chain} hold, and let $x > 0$ be arbitrary. Then, with probability $\geq 1 - 2 e^{-x}$, we have
\[
\sup_{g \in \calG}\left|E_{\ell, n}(g)\right| \lesssim G\kappa_{n, \beta}\big(\sqrt{\frakv} + \sqrt{x}\big)
\]
where we define
\[
\kappa_{n, \beta} := \begin{cases}
n^{-1/2} &\text{for } \beta \in [0, 1/2) \\
\sqrt{\log(n)/n} &\text{for } \beta = 1/2, \text{ and } \\
n^{-(1 - \beta)} &\text{for } \beta \in (1/2, 1).
\end{cases}
\]

\end{prop}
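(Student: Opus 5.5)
We split the argument into an expectation bound for $\sup_{g}|E_{\ell,n}(g)|$ and a concentration step around that expectation. Write $f_g(x) := \sum_{a \in \calA} g(a \mid x)\mu_\ell(a, x)$, so that
\[
E_{\ell,n}(g) = \frac{1}{S_{n,-\beta}}\sum_{t=1}^n w_t\{f_g(X_t) - \E f_g(X)\}
\]
is a weighted empirical process in the i.i.d.\ covariates $X_1,\dots,X_n$ alone. The natural normalization is $\|w\|_2/S_{n,-\beta}$, where $\|w\|_2^2 = \sum_t t^{-2\beta}$. This sum is $\asymp n^{1-2\beta}$ for $\beta<1/2$, $\asymp \log n$ for $\beta=1/2$, and $\asymp 1$ for $\beta>1/2$. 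Combined with $S_{n,-\beta}\asymp n^{1-\beta}$, a direct computation gives $\|w\|_2/S_{n,-\beta}\asymp \kappa_{n,\beta}$ in all three regimes. The goal is therefore to show that the supremum is of order $G\,\|w\|_2 S_{n,-\beta}^{-1}(\sqrt{\frakv}+\sqrt{x})$.

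\emph{Concentration.} Set $F(X_1,\dots,X_n) := \sup_{g \in \calG}|E_{\ell,n}(g)|$. Since $|f_g|\le |\calA| G$ (using $|\mu_\ell|\le 1$), changing a single coordinate $X_i$ moves $F$ by at most $2|\calA|G w_i/S_{n,-\beta}$. McDiarmid's inequality (Proposition~\ref{prop:mcdiarmid}) then shows that, with probability at least $1-e^{-x}$,
\[
F \le \E F + C G\,\frac{\|w\|_2}{S_{n,-\beta}}\sqrt{x}.
\]
This produces the $\sqrt{x}$ term, with $|\calA|$ absorbed into the constants.

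\emph{Expectation.} First symmetrize: $\E F \le 2\,\E\sup_g \bigl|S_{n,-\beta}^{-1}\sum_t \varepsilon_t w_t f_g(X_t)\bigr|$, with Rademacher signs $\varepsilon_t$ independent of the data. Conditionally on $X_1,\dots,X_n$, this Rademacher process is sub-Gaussian in the sense of Definition~\ref{def:sub-gaussian-process}. Its natural metric is
\[
d(f,h)^2 = S_{n,-\beta}^{-2}\sum_t w_t^2\,(f-h)(X_t)^2 = \frac{\|w\|_2^2}{S_{n,-\beta}^2}\,\|f-h\|_{L^2(Q_w)}^2,
\]
where $Q_w := \sum_t w_t^2\delta_{X_t}/\|w\|_2^2$ is a probability measure on $\calX$. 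The process is odd in $g$ (extend it linearly to $\calH=\mathrm{span}$), so Lemma~\ref{lem:sub-gaussian-entropy} applies. It bounds the conditional expectation by $\|w\|_2 S_{n,-\beta}^{-1}$ times a Dudley integral of $\log N(\epsilon,\{f_g\},L^2(Q_w))$ over $\epsilon \in [0, |\calA| G]$.

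To evaluate the entropy, I transfer covers of $\calG$ to covers of $\{f_g\}$. Let $\bar Q := \calU(\calA)\otimes Q_w$, the same device used in the proof of Proposition~\ref{prop:vc-bound-pi}. By Cauchy--Schwarz,
\[
\|f_g - f_h\|_{L^2(Q_w)} \le |\calA|\,\|g-h\|_{L^2(\bar Q)}.
\]
The uniform covering hypothesis of Theorem~\ref{thm:high-prob-chain} then gives $\log N(\epsilon |\calA| G,\{f_g\},L^2(Q_w)) \lesssim \frakv\log(A/\epsilon)$. Hence the Dudley integral is $\lesssim |\calA| G\int_0^1\sqrt{\frakv\log(2A/\epsilon)}\,d\epsilon \lesssim G\sqrt{\frakv}$, with $\log A$ and $|\calA|$ absorbed into constants. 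Integrating over the $X$'s yields $\E F \lesssim G\,\kappa_{n,\beta}\sqrt{\frakv}$, and combining with the concentration step gives the claim with probability $\ge 1-e^{-x}\ge 1-2e^{-x}$.

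The main obstacle is the entropy transfer. The covering hypothesis is stated for distributions on $\calA\times\calX$, while the Rademacher metric lives on $\calX$ and carries the unequal weights $w_t^2$. Two points need care: the weights must be absorbed into a probability measure $Q_w$ (which is legitimate because the hypothesis is uniform over all $Q$), and the product-with-uniform trick must be used to pass from $f_g$ to $g$. The remaining steps are standard. Measurability of the suprema is ignored, as elsewhere in the paper.
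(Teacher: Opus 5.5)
Your proposal is correct and follows essentially the same route as the paper. Both use symmetrization and Dudley's entropy bound for the Rademacher process under the $t^{-2\beta}$-weighted empirical measure, transfer covers of $\calG$ through the product with the uniform measure on $\calA$, and finish with McDiarmid's inequality using bounded differences $\lesssim |\calA| G\, t^{-\beta}/S_{n,-\beta}$. The only differences are cosmetic: you use Cauchy--Schwarz where the paper uses Jensen, and a one-sided McDiarmid bound that gives probability $1-e^{-x}$ instead of $1-2e^{-x}$.
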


\begin{proof}

First, for $\ell \geq 1$, define the function class $\calU_\ell$ by
\[
\calU_{\ell} := \Big\{u_g(x) := \sum_{a \in \calA}g(a \mid x)\mu_\ell(a, x) : g \in \calG\Big\}
\]
For any probability distribution $Q$ on $\calX$, let $\wb{Q} := Q \otimes \rho$, where $\rho$ denotes the uniform distribution on $\calA$. Observe that we have the inequality
\begin{align*}
&\left\|u_g(X) - u_{g'}( X)\right\|_{L^2(Q)} = \left(\int_{\calX}\left\{\sum_{a \in \calA} \left(g(a \mid x) - g'(a \mid x)\right)\mu_\ell(a, x)\right\}^2Q(dx)\right)^{1/2} \\
&\qquad = |\calA|\left(\int_{\calX}\left\{\frac{1}{|\calA|}\sum_{a \in \calA} \left(g(a \mid x) - g'(a \mid x)\right)\mu_\ell(a, x)\right\}^2Q(dx)\right)^{1/2} \\
&\qquad \leq |\calA|\left(\int_{\calX}\frac{1}{|\calA|}\sum_{a \in \calA} \big(g(a \mid x) - g'(a \mid x)\big)^2\mu_\ell(a, x)^2 Q(dx)\right)^{1/2} \\
&\qquad \leq |\calA|\left(\int_{\calX}\frac{1}{|\calA|}\sum_{a \in \calA} \big(g(a \mid x) - g'(a \mid x)\big)^2 Q(dx)\right)^{1/2} \\
&\qquad \leq |\calA|\left(\int_{\calX \times \calA}\big(g(a \mid x) - g'(a \mid x)\big)^2 \wb{Q}(dx \times da)\right)^{1/2},
\end{align*}
where the first inequality follows from Jensen's inequality.
Thus, the covering numbers of $\calU_\ell$ obey the following relation: for any $0 < \epsilon \leq 1$, letting $D_{\calU} := |\calA|G$ denote an upper bound on the $\ell_\infty$ norm of elements of $\calU_{\ell}$, we have
\begin{align*}
\log N\big(D_{\calU}\epsilon, \calU_\ell, L^2(Q)\big) &\leq \log N\left(\frac{D_{\calU}\epsilon}{|\calA|}, \calG, L^2(\wb{Q})\right) \\
&= \log N\left( G \epsilon, \calG, L^2(\wb{Q})\right) \\
&\lesssim \frakv \log\left(\frac{A}{\epsilon}\right).
\end{align*}

Now, define the symmetrized process $R_{\ell, n}(u)$ indexed by $\calU_\ell$ as follows:
\[
R_{\ell, n}(u) := \frac{1}{S_{n, -\beta}}\sum_{t = 1}^n w_t\epsilon_t u(X_t),
\]
where $\epsilon_1, \dots, \epsilon_n$ are i.i.d.\ Rademacher random variables. Further, we define the empirical $L^2$ metric $d_n : \calU_\ell \times \calU_\ell \rightarrow \R_{\geq 0}$ by
\[
d_n(u_g, u_{g'}) \equiv \|u_g - u_{g'}\|_n := \left(\frac{1}{S_{n, -2\beta}}\sum_{t = 1}^n t^{-2\beta}\left\{u_g(X_t) - u_{g'}(X_t)\right\}^2\right)^{1/2},
\]
where we recall that $S_{n, \gamma} := \sum_{t = 1}^n t^\gamma$. As in the proof of the previous proposition, this is just an $L^2$ norm on $\calU_\ell$, but this time under the weighted empirical measure $\Q_n := \frac{1}{S_{n, -2\beta}}\sum_{t = 1}^n t^{-2\beta}\delta_{X_t}$. Note that $S_{n, \gamma}$ scales as follows for all $\gamma \in \R$.
\begin{equation}
\label{eq:S_gamma_scale}
S_{n, \gamma} \asymp \begin{cases}
    n^{\gamma + 1} &\text{For all } \gamma > -1, \\
    \log(n) &\text{For } \gamma = -1, \text{ and} \\
    \zeta(-\gamma) &\text{For }\gamma < -1,
\end{cases}
\end{equation}
where $\zeta(s) := \sum_{n = 1}^\infty n^{-s}$ denotes the Riemann zeta function for $s > 1$. Using the scaling outlined in Equation~\eqref{eq:S_gamma_scale}, we see that $(R_{\ell, n}(u) : u \in \calU_\ell)$ is $\sigma^2_{n, \beta}$-subGaussian in the sense of Definition~\ref{def:sub-gaussian-process} under the the conditional law $\P(\cdot \mid \X)$, where we recall that $\X$ is the $\sigma$-algebra generated by $X_1, \dots, X_n$, where
\begin{equation}
\label{eq:subgaussian-scale}
\sigma^2_{n, \beta} = \frac{S_{n, -2\beta}}{\left(S_{n, -\beta}\right)^2} \asymp \begin{cases}
n^{-1} &\text{For } \beta \in [0, 1/2), \\
\log(n)/n &\text{For } \beta = 1/2, \text{ and } \\
n^{-2(1 - \beta)} &\text{For } \beta\in (1/2, 1).
\end{cases}
\end{equation}

Indeed, $\E\left(R_{\ell, n}(u) \mid \X\right) = 0$ for any $u \in \calU_\ell$, and for any $\lambda \in \R$ and $u_g, u_{g'} \in \calU_\ell$, we have the bound
\begin{align*}
&\E\left(\exp\left\{\lambda \big(R_{\ell, n}(u_g) - R_{\ell, n}(u_{g'})\big)\right\} \mid \X\right) = \prod_{t = 1}^n \E\left(\exp\left\{\frac{\lambda w_t\epsilon_t}{S_{n, -\beta}}\left(u_g(X_t) - u_{g'}(X_t)\right)\right\} \mid \X\right) \\
&\qquad\leq \prod_{t = 1}^n \E\left(\exp\left\{\frac{\lambda^2t^{-2\beta}}{2(S_{n, -\beta})^2}\left(u_g(X_t) - u_{g'}(X_t)\right)^2\right\} \mid \X\right) \\
&\qquad= \E\left(\exp\left\{\frac{\lambda^2}{2}\frac{S_{n, -2\beta}}{\left(S_{n, -\beta}\right)^2}d_n(u_g, u_{g'})^2\right\} \mid \X\right),
\end{align*}
where the first inequality follows because a random variable that is bounded in absolute value by $B$ is $B^2$ sub-Gaussian. Thus, Lemma~\ref{lem:sub-gaussian-entropy} alongside the fact that $\diam(\calU_\ell, d_n) \leq D_\calU$ yields that
\begin{align*}
\E\left(\sup_{u_g \in \calU_\ell}\left|R_{\ell, n}(u_g)\right| \mid \X \right) &\lesssim  \frac{\sqrt{S_{n, -2\beta}}}{S_{n, -\beta}}\int_0^{D_{\calU}}\sqrt{\log N\big(\epsilon, \calU_\ell, d_n\big)}d\epsilon\\
&= D_\calU\frac{\sqrt{S_{n, -2\beta}}}{S_{n, -\beta}}\int_0^1 \sqrt{\log N\left(D_\calU \epsilon, \calU_\ell, d_n\right)}d\epsilon\\
&\leq |\calA|G\frac{\sqrt{S_{n, -2\beta}}}{S_{n, -\beta}}\int_0^1\sqrt{\log N\left(\frac{\epsilon D_{\calU}}{|\calA|}, \calG, L^2(\wb{\Q}_n)\right)}d\epsilon \\
&\lesssim |\calA|G\frac{\sqrt{S_{n, -2\beta}}}{S_{n, -\beta}}\int_0^1 \sqrt{\frakv \log\left(\frac{A}{\epsilon}\right)}d\epsilon \\
&\asymp \sqrt{\frakv}|\calA|G \kappa_{n, \beta},
\end{align*}
where $\kappa_{n, \beta}$ is as defined as in the statement of the proposition and we have let $\wb{\Q}_n := \Q_n \otimes \rho$, where again $\rho$ is the uniform measure on $\calA$. Note that, in the above, we absorbed the factor of 2 attached to the covering numbers in Lemma~\ref{lem:sub-gaussian-entropy} into constants.
Since this bound does not depend on $X_1, \dots, X_n$, we can marginalize over these covariates to obtain
\[
\E\left[\sup_{u_g \in \calU_\ell}\left|R_{\ell, n}(u_g)\right|\right] \lesssim \sqrt{\frakv}|\calA|G\kappa_{n, \beta}.
\]

We can now use the bound on the expected supremum of the symmetrized process alongside McDiarmid's inequality to obtain a high probability bound on $\sup_{g \in \calG}\left|E_{\ell, n}(g)\right|$.  First, standard results on symmetrization yield that
\begin{align*}
\E\left[\sup_{g \in \calG}\left|E_{\ell,n}(g)\right|\right] &= \E\left[\sup_{u_g \in \calU_\ell}\left|\frac{1}{S_{n, -\beta}}\sum_{t  =1}^nw_t\left\{ u_g(X_t) - \E u_{g}(X_t)\right\}\right|\right] \\
&\leq 2 \E\left[\sup_{u_g \in \calU_\ell} \left|\frac{1}{S_{n, -\beta}}\sum_{t = 1}^n w_t\epsilon_t u_g(X_t)\right|\right] \\
&= 2 \E\left[\sup_{u_g \in \calU_\ell}|R_{\ell, n}(u_g)|\right]
\end{align*}
Next, define the function $f : \calX^n \rightarrow \R$ defined as
\[
f(x_1, \dots, x_n) := \sup_{u_g \in \calU_\ell}\left|\frac{1}{S_{n, -\beta}}\sum_{t = 1}^n w_t\left\{u_g(x_t) - \E u_g(X)\right\}\right|.
\]
We note that this process satisfies a bounded difference inequality. Letting $x_i, \wt{x}_i \in \calX$, we have
\begin{align*}
\left|f(x_{-i}, x_i) - f(x_{-i}, \wt{x}_i)\right| &\leq \sup_{g \in \calG}\left|\frac{i^{-\beta}}{S_{n, -\beta}}\left\{u_g(x_i) - u_g(\wt{x}_i)\right\}\right| \\
& \leq \frac{|\calA|i^{-\beta}}{S_{n, -\beta}}\sup_{\substack{g \in \calG \\ a \in \calA}}\left\{\left|g(a \mid x_i)\right| + \left|g(a \mid \wt{x}_i)\right|\right\} \\
& \leq \frac{2|\calA|G i^{-\beta}}{S_{n, -\beta}}.
\end{align*}
Thus, for any $i \in [n]$, we have $\left\|D_i f\right\|_\infty \leq \frac{2|\calA|Gi^{-\beta}}{S_{n, - \beta}}$. Noting that
\begin{align*}
\sum_{i = 1}^n \left\|D_i f\right\|_\infty^2 \leq 4|\calA|^2G^2\sum_{i = 1}^n \frac{i^{-2\beta}}{\left(S_{n, -\beta}\right)^2} = 4|\calA|^2G^2 \frac{S_{n, -2\beta}}{\left(S_{n, -\beta}\right)^2} \asymp |\calA|^2G^2 \kappa_{n, \beta}^2.
\end{align*}
Applying McDiarmid's inequality (Proposition~\ref{prop:mcdiarmid}) yields that, for any $x > 0$, one has
\[
\P\bigg(\left|f(X_{1:n}) - \E f(X_{1:n})\right| \gtrsim |\calA|G\kappa_{n, \beta}\sqrt{x}\bigg) \leq 2\exp\left\{-x\right\}.
\]
Thus, with probability at least $1 - 2 \exp\left\{-x\right\}$, we have
\begin{align*}
\sup_{g \in \calG}\left|E_{\ell, n}(g)\right| \lesssim \E\left[\sup_{g \in \calG} \big|E_{\ell, n}(g)\big|\right] + |\calA|G\kappa_{n, \beta}\sqrt{x} \lesssim |\calA|G \kappa_{n, \beta}\big(\sqrt{\frakv} + \sqrt{x}\big).
\end{align*}

\end{proof}

\begin{proof}[Proof of Theorem~\ref{thm:high-prob-chain}]
Our proof follows from combining the bounds presented in Propositions~\ref{prop:martingale} and \ref{prop:emp-process}. First, observe that we have
\[
\sup_{g \in \calG}\left|G_{\ell,n}(g)\right| \leq \sup_{g \in \calG}\left|M_{\ell, n}(g)\right| + \sup_{g \in \calG}\left|E_{\ell, n}(g)\right|.
\]
Applying Propositions~\ref{prop:martingale} and \ref{prop:emp-process} alongside a union bound, we know there are constants $C, c > 0$ (not depending on $n$ or $x$) such that with probability at least $1 - C\exp\{-x\} - C\exp\left\{-cn^{1 - \alpha}\right\}$, we simultaneously have
\[
\sup_{g \in \calG}\left|M_{\ell, n}(g)\right| \lesssim Gn^{\frac{\alpha - 1}{2}}\left(\sqrt{\frakv} + \sqrt{x}\right) + Gn^{\alpha - 1}\left(\frakv \log(n) + x\right)\log(n).
\]
and
\[
\sup_{g \in \calG}\left|E_{\ell, n}(g)\right| \lesssim G\kappa_{n, \beta}\left(\sqrt{\frakv} + \sqrt{x}\right).
\]
We now argue that $\kappa_{n,\beta} \leq n^{\frac{\alpha - 1}{2}}$ for all $\beta \in [0, \alpha]$. First, for $\beta \in [0, 1/2 \land \alpha)$, we clearly have
\[
\kappa_{n, \beta} = n^{-1/2} \leq n^{\frac{\alpha - 1}{2}}.
\]
For $\beta = 1/2$ and $\alpha \geq 1/2$, we have
\[
\kappa_{n, \beta} = \left(\frac{\log(n)}{n}\right)^{1/2} \leq \left(\frac{\sqrt{n}}{n}\right)^{1/2} = n^{-1/4} \leq n^{\frac{\alpha - 1}{2}}.
\]
Lastly, for $\alpha > \frac{1}{2}$ and $\beta \in (1/2, \alpha]$, we have
\[
\kappa_{n, \beta} = n^{\beta - 1} \leq n^{\alpha - 1} \leq n^{\frac{\alpha - 1}{2}}.
\]
Thus, we certainly have on the event that
\[
\sup_{g \in \calG}\left|E_{\ell,n}(g)\right| \lesssim Gn^{\frac{\alpha - 1}{2}}\left(\sqrt{\frakv} + \sqrt{x}\right).
\]
Combining these pieces yields the claimed result.

\end{proof}

\subsection{Localized Expected Supremum Bound}

In this subsection, we aim to control the expected supremum of a localized process that naturally appears in upper bounding the regret of weighted empirical risk minimization. In particular, our goal is to study the localized supremum of the following process:
\begin{equation}
\label{eq:diff-martingale}
M_n(\pi, \rho) := \sum_{t = 1}^n w_t \left\{\wt{Y}_t(\pi) - \wt{Y}_t(\rho) - \left(V(\pi) - V(\rho)\right)\right\}
\end{equation}
where $\pi, \rho \in \Pi$ are arbitrary indexing treatment policies, $w_t = t^{-\alpha}$, and $\wt{Y}_t(\pi) := \frac{\1\{\pi(X_t) = A_t\}}{p_t(A_t \mid X_t)}Y_t$ is again an IPW pseudo-outcome. In words, this process contrasts the IPW estimates of the value of two policies, centering by their expectations in order to ensure martingale structure. Very heuristically, we should think of one of the policies as the learned policy and the other as the ``closest'' optimal policy (as determined in terms of disagreement). When the policies $\pi$ and $\rho$ (e.g., learned and optimal policies) are known to be close in probability, we will be able to establish uniform, localized bounds on the expectation of the above process that decay with the distance between policies. These localized bounds, as noted in the basic inequality in the proof sketch of Theorem~\ref{thm:fast-regret} in Section~\ref{sec:regret}, will be able to show fast, in probability convergence of regret. We now state the main theorem of this subsection --- a localized bound on the expected supremum of $M_n(\pi, \rho)$ when weighted by a proxy for accumulated variance.

\begin{theorem}
\label{thm:local-sup-bound}
Suppose Assumption~\ref{ass:observations}, the first part of Assumption~\ref{ass:estimation}, Assumption~\ref{ass:propensity}, and Assumption~\ref{ass:disagreement} hold. Then,  for any $0 < \delta \leq 1$ and $n \geq 2$, we have
\begin{align*}
\E\left[\frac{1}{S_{n, -\alpha}}\sup_{\substack{\pi, \rho \in \Pi \\ d(\pi, \rho) \leq\delta}}\left|M_n(\pi, \rho)\right|\right] &\lesssim \delta n^{\frac{\alpha - 1}{2}}\sqrt{\frakv\log\left(\frac{A}{\delta}\right)} \\
&\qquad + n^{\alpha - 1}\frakv\log\left(\frac{A}{\delta}\right)\log(n)^2,
\end{align*}
where again $S_{n, \gamma} := \sum_{t  = 1}^n t^\gamma$ for any $\gamma \in \R$, $A \geq e$ is a constant, and the omitted constant has no dependence on $n$ or $\delta$.
\end{theorem}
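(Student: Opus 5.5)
The plan is to reduce Theorem~\ref{thm:local-sup-bound} to a localized version of the argument behind Theorem~\ref{thm:high-prob-chain}. First I would reparametrize by difference functions. For $\pi,\rho \in \Pi$ set
\[
h_{\pi,\rho}(a \mid x) := \1\{\pi(x) = a\} - \1\{\rho(x) = a\},
\]
so that $S_{n,-\alpha}^{-1}M_n(\pi,\rho) = G_{1,n}(h_{\pi,\rho})$ with $\beta = \alpha$ in Equation~\eqref{eq:martingale}. Let $\calH_\delta := \{h_{\pi,\rho} : d(\pi,\rho) \leq \delta\}$. This class is bounded by $G = 1$ and has polynomial uniform covering numbers, since it is a subset of $\calC_\Pi - \calC_\Pi$ (Propositions~\ref{prop:vc-bound-pi} and \ref{prop:fn_class_pseudo}). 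The key identity is $\sum_a h_{\pi,\rho}(a\mid x)^2 = 2\,\1\{\pi(x)\neq\rho(x)\}$, so every norm of $h_{\pi,\rho}$ is a weighted average of the disagreement indicator $g_{\pi,\rho} \in \calD$. In particular, the population $L^2$ size of $h_{\pi,\rho}$ is at most $\sqrt{2}\,\delta$. As in \eqref{eq:martingale-decomp}, I would split $G_{1,n} = M_{1,n} + E_{1,n}$ and bound the expected supremum of each piece over $\calH_\delta$.

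\textbf{Empirical-process part.} For $E_{1,n}$ I would rerun the proof of Proposition~\ref{prop:emp-process} (symmetrization and the sub-Gaussian entropy integral, Lemma~\ref{lem:sub-gaussian-entropy}), but now with a localized class. The obstacle is that the entropy integral runs up to the empirical diameter, not the population one. I would handle this with the standard ratio/Bernstein-type bound for VC classes: the weighted empirical mean of $g_{\pi,\rho}$ is at most $2\delta^2 + C\frakv\log(A/\delta)\log(n)/n_{\mathrm{eff}}$, both in expectation and with overwhelming probability, where $n_{\mathrm{eff}} \asymp n^{1-\alpha}$ is the effective sample size under the weights. This gives
\[
\E\sup_{\calH_\delta}|E_{1,n}| \lesssim \kappa_{n,\alpha}\,\delta\sqrt{\frakv\log(A/\delta)} + \text{lower order},
\]
and since $\kappa_{n,\alpha} \leq n^{(\alpha-1)/2}$, this is dominated by the claimed bound.

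\textbf{Martingale part.} For $M_{1,n}$ I would reuse the finite-depth chaining of Proposition~\ref{prop:martingale} under the weighted empirical norm $\|\cdot\|_n$, with two changes.
\begin{enumerate}
\item The chain starts at the random radius
\[
r_n := \sup_{\calH_\delta}\|h\|_n, \qquad\text{which satisfies}\quad r_n^2 \lesssim \delta^2 + \frac{\frakv\log(A/\delta)\log n}{n^{1-\alpha}}
\]
on a good event $\Omega_n$. To see this, note that with $\beta = \alpha$ the weights $t^{\alpha-2\beta} = t^{-\alpha}$ make $\|h\|_n^2$ a $t^{-\alpha}$-weighted average of $2g_{\pi,\rho}(X_t)$ over an effective sample of size $n^{1-\alpha}$. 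Uniform concentration of this average over the VC class $\calD$ then gives the bound.
\item Instead of the tail statements, I would integrate the Freedman / finite-maxima bounds (Lemmas~\ref{lem:freedman} and \ref{lem:gamma-concentration}) to get expectation bounds at each level $j$.
\end{enumerate}
Summing over the levels gives
\[
n^{(\alpha-1)/2}\int_0^{r_n}\sqrt{\frakv\log(A/\epsilon)}\,d\epsilon + n^{\alpha-1}\frakv\log(A/\delta)\log(n)^2,
\]
where the second term comes from the Bernstein part multiplied by the $J_n \asymp \log n$ chain levels. The integral is $\lesssim r_n\sqrt{\frakv\log(A/r_n)}$. The $\delta$ part of $r_n$ gives the first claimed term. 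The extra part, $\sqrt{\frakv\log n/n^{1-\alpha}}$, multiplied by $n^{(\alpha-1)/2}$, has order $n^{\alpha-1}\frakv\log(\cdot)$ and is absorbed into the second term. The truncation remainder $T_3$ is handled exactly as in Proposition~\ref{prop:martingale}, contributing $O(n^{-2})$ outside an event of probability $e^{-cn^{1-\alpha}}$.

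\textbf{Main obstacle and bad events.} The main difficulty is the mismatch between the population localization $d(\pi,\rho)\leq\delta$ and the random empirical metric in which chaining must be done, together with converting high-probability statements into expectation bounds. I would handle both by splitting on $\Omega_n$ (the event for $r_n$ intersected with the event controlling $\sum_t 1/p_t$). Off $\Omega_n$, I would use the crude almost-sure bound $S_{n,-\alpha}^{-1}\sup|M_n| \lesssim n^{\alpha}$ (as in the proof of Theorem~\ref{thm:uniform-integrability}). I would choose the deviation level $x \asymp \log n$ so that $\P(\Omega_n^c) \lesssim n^{-2} + e^{-cn^{1-\alpha}}$; then $n^{\alpha}\P(\Omega_n^c)$ is negligible relative to $n^{\alpha-1}\log(n)^2$.
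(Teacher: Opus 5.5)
Your proof works, but it takes a heavier route than the paper's.

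\textbf{Your route.} You reuse the decomposition $G_{1,n}=M_{1,n}+E_{1,n}$ from Theorem~\ref{thm:high-prob-chain} and chain the covariate-conditional martingale in the random weighted norm $\|\cdot\|_n$. This forces three extra steps:
\begin{itemize}
\item a uniform ratio-type bound turning the population localization $d(\pi,\rho)\le\delta$ into control of the random radius $r_n$;
\item a second such bound for the empirical diameter in the $E_{1,n}$ part;
\item a good/bad-event argument to return to expectations.
\end{itemize}

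\textbf{The paper's route.} The paper avoids this mismatch entirely; the obstacle you single out is an artifact of conditioning on $X_1,\dots,X_n$. It treats $M_n(\pi,\rho)$ as a martingale in the natural filtration $(\calF_t)$. Because $X_t$ is independent of $\calF_{t-1}$, the predictable variance is bounded deterministically by $\frac{2}{\lb{c}}w_t\,d(\pi,\rho)^2$ (Lemma~\ref{lem:martingale-properties}). The paper can therefore chain directly under the population disagreement metric, with deterministic covers of $\Pi$ (Proposition~\ref{prop:vc-bound-pi}). At each level it applies the finite-class Freedman expectation bound (Lemma~\ref{lem:finite_max_bd}). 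There is no $M+E$ split, no random metric, and no bad events.

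An empirical quantity appears only in the truncation remainder at scale $\delta_{J_n}\le n^{-1/2}$. There the paper uses $w_t/p_t\lesssim 1$ and the self-bounding VC estimate of Lemma~\ref{lem:vc-disagreement}. This is essentially the ratio bound you need for $r_n$, but invoked only once, at the finest scale.

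\textbf{Trade-off.} Your approach recycles the high-probability machinery already built for Theorem~\ref{thm:high-prob-chain}. The paper's is shorter and purely expectation-based.

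\textbf{Bookkeeping point.} In the $E_{1,n}$ part, the symmetrized process's natural metric carries weights $t^{-2\alpha}$, and its effective sample size is not $n^{1-\alpha}$. This does not affect your conclusion. Either dominate $t^{-2\alpha}\le t^{-\alpha}$, giving prefactor $S_{n,-\alpha}^{-1/2}$ and diameter in the $t^{-\alpha}$-weighted norm, which matches your stated bound. Or keep the $t^{-2\alpha}$ metric: in every regime of $\alpha$, the additional term is at most $n^{\alpha-1}\frakv$ times logarithmic factors, which the second term of the theorem absorbs.
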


We briefly provide an interpretation of the above theorem. In the above bound, there are two competing forces. The first term on the right hand side of the bound generally shrinks as $\delta \downarrow 0$, and corresponds to a ``subGaussian'' term that appears in the chaining argument. It tends towards zero because localization forces the predictable quadratic variation of the $M_n(\pi, \rho)$ to be small.
The second term, which is a ``Bernstein'' or ``subExponential'' term, grows as $\delta \downarrow 0$, and is a byproduct of the fact that although predictable quadratic variation localizes well, the $L^\infty$ norms of the increments $m_t(\pi, \rho)$ of $M_n(\pi, \rho)$ do not generally shrink with $\delta$. In our application of Theorem~\ref{thm:local-sup-bound} in the proof of Theorem~\ref{thm:fast-regret}, we thus must be careful to not localize too quickly with respect to the sample size.

We now contrast the technical tools that will be needed to prove Theorem~\ref{thm:local-sup-bound} against those used to prove Theorem~\ref{thm:high-prob-chain}. In proving the latter theorem, we leveraged an intricate argument based on decomposing the target process into two simpler processes. We then needed to construct appropriately-weighted empirical metrics alongside uniform covering numbers in order to obtain high-probability bounds. Because our goal is a bound on the \textit{expected supremum} (instead of a high-probability bound) and because the process will naturally be indexed by categorical treatment policies, we can use a more direct approach in this subsection based on chaining under the natural policy disagreement metric:
\[
d(\pi, \rho) :=  \left(\P(\pi(X) \neq \rho(X))\right)^{1/2}.
\]
In particular, the VC structure of the disagreement class will allow us to argue that the error from truncating the chaining argument early is well-behaved under this metric. However, while the chaining argument used in proving the above theorem is significantly simpler, we need to take extra care to get localized convergence rates, which explicitly depend on the distance between policies being considered.

Before proving Theorem~\ref{thm:local-sup-bound}, we state a key combinatorial lemma that will be used to control the error incurred from terminating chaining early in terms of the VC-dimension of the disagreement class $\calD$, which we recall was defined as
\[
\calD := \Big\{ g_{\pi, \rho}(x) := \1\{\pi(x) \neq \rho(x)\} : \pi, \rho \in \Pi\Big\}.
\]

\begin{lemma}
\label{lem:vc-disagreement}
Let $0 < \delta \leq 1$ be arbitrary and define the $\delta$-disagreement class $\calD(\delta)$ by
\[
\calD(\delta) := \bigg\{g(x) = \1\{\pi(x) \neq \rho(x)\} \in \calD  : d(\pi, \rho) \leq \delta\bigg\}.
\]
Suppose Assumption~\ref{ass:disagreement} holds. Then, the following bound holds for $n \geq 2$:
\[
\E\left[\sup_{g \in \calD(\delta)}\sum_{t = 1}^n g(X_t)\right] \lesssim n\delta^2  + \delta \sqrt{n \frakv \log(n)} + \frakv\log(n),
\]
where the omitted constant has no dependence on $n$ and $\delta$.
\end{lemma}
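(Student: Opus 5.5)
We want to bound the expected supremum of $\sum_t g(X_t)$ over $g\in\calD(\delta)$. Each such $g$ satisfies $\E[g(X)] = d(\pi,\rho)^2\le\delta^2$, so we center and write
\[
\sup_{g\in\calD(\delta)}\sum_{t=1}^n g(X_t) \le n\delta^2 + \sup_{g\in\calD(\delta)}\sum_{t=1}^n\big\{g(X_t)-\E g(X)\big\}.
\]
This reduces the problem to a localized empirical process over a VC class of binary functions whose variance is at most $\sigma^2:=\delta^2$. The $X_t$ are i.i.d.\ by Assumption~\ref{ass:observations}, so no martingale tools are needed.

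To control the centered supremum, I plan to use the standard localized VC bound. A Talagrand/Bousquet-type expectation bound, or equivalently symmetrization followed by the Giné–Koltchinskii argument, states that for a class $\calF$ of $[0,1]$-valued functions with VC dimension $\frakv$ and $\sup_f \Var f\le\sigma^2$, one has
\[
\E\sup_f\Big|\sum_t (f(X_t)-\E f)\Big| \lesssim \sqrt{n\sigma^2 \frakv\log(A/\sigma)} + \frakv\log(A/\sigma).
\]
To avoid quoting that result, I will derive it directly. First, symmetrize to the Rademacher process $\sum_t\epsilon_t g(X_t)$. Conditionally on $X_{1:n}$, this process is sub-Gaussian with respect to the empirical $L^2$ metric, where the empirical radius satisfies $\hat\sigma_n^2 := \sup_{g}\frac1n\sum_t g(X_t)$. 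By Sauer's lemma, $\calD(\delta)\subset\calD$ has at most $(en/\frakv)^{\frakv}$ distinct patterns on $X_{1:n}$. Applying Lemma~\ref{lem:sub-gaussian-entropy}, or simply a finite-maximum sub-Gaussian bound over these patterns, gives
\[
\E_\epsilon\sup_g\Big|\sum_t\epsilon_t g(X_t)\Big| \lesssim \sqrt{n\hat\sigma_n^2\,\frakv\log(n)}.
\]
Here $\log n$ replaces $\log(A/\delta)$, which is harmless given the target bound.

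The main obstacle is that $\hat\sigma_n^2$ is random and depends on the very supremum $Z:=\E\sup_g\sum_t g(X_t)$ that we are bounding. Since $g$ is binary, $g^2=g$, so $n\hat\sigma_n^2 = \sup_g\sum_t g(X_t)$. Taking expectations and using Jensen's inequality gives
\[
Z \le n\delta^2 + C\sqrt{Z\,\frakv\log n}.
\]
This is a self-bounding quadratic inequality in $\sqrt Z$. Solving it yields
\[
Z\lesssim n\delta^2 + \frakv\log n + \sqrt{n\delta^2\frakv\log n},
\]
which is exactly the claimed bound $n\delta^2+\delta\sqrt{n\frakv\log n}+\frakv\log n$. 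I will take care that the symmetrization constant and the absolute value are handled correctly (the class contains $0$ via $\pi=\rho$), and note that the measurability caveats are dismissed as elsewhere in the paper.
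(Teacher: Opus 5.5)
Your proposal is correct and follows the paper's proof essentially step for step: center to extract the $n\delta^2$ term, symmetrize, apply Massart's finite-class bound over the Sauer-bounded set of patterns on $X_{1:n}$, use $g^2 = g$ and Jensen to get $Z \le n\delta^2 + C\sqrt{Z\,\frakv\log n}$, and solve the resulting quadratic inequality in $\sqrt{Z}$. The absolute values you carry through symmetrization are harmless but not needed, since the paper only bounds the one-sided supremum.
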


In proving the above, we leverage the following, standard lemma.

\begin{lemma}[Massart's Finite Class Lemma]
\label{lem:massart}
Let $b_1, \dots, b_N \in \R^n$ be fixed vectors, and let $\epsilon_1, \dots, \epsilon_n$ be i.i.d.\ Rademacher random variables. We have
\[
\E\left[\max_{1 \leq j \leq N}\sum_{i = 1}^n \epsilon_i b_{j, i}\right] \leq \sqrt{2\log(N)}\max_{1 \leq j \leq N}\|b_j\|_2
\]
\end{lemma}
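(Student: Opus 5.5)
The plan is the standard exponential-moment (Chernoff) argument: bound the moment generating function of each Rademacher sum, then pass from the maximum to a sum. Set $\sigma := \max_{1 \leq j \leq N}\|b_j\|_2$ and $Z_j := \sum_{i = 1}^n \epsilon_i b_{j,i}$. Two degenerate cases are handled directly. If $\sigma = 0$, every $Z_j \equiv 0$ and both sides vanish. If $N = 1$, then $\E[Z_1] = 0$, which matches the right-hand side $\sqrt{2\log 1}\,\sigma = 0$. So assume $\sigma > 0$ and $N \geq 2$.

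For any $\lambda > 0$, I will chain together Jensen's inequality (convexity of $\exp$), the bound of a maximum of nonnegative terms by their sum, and independence of the $\epsilon_i$:
\[
\exp\Big\{\lambda \E\big[\max_j Z_j\big]\Big\} \leq \E\Big[\max_j e^{\lambda Z_j}\Big] \leq \sum_{j = 1}^N \E\big[e^{\lambda Z_j}\big] = \sum_{j = 1}^N \prod_{i = 1}^n \cosh(\lambda b_{j,i}).
\]
Next I use the elementary inequality $\cosh(u) \leq e^{u^2/2}$, which follows by comparing Taylor series termwise since $(2k)! \geq 2^k k!$. This gives $\prod_i \cosh(\lambda b_{j,i}) \leq \exp\{\lambda^2\|b_j\|_2^2/2\} \leq \exp\{\lambda^2\sigma^2/2\}$. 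Taking logarithms yields
\[
\E\big[\max_j Z_j\big] \leq \frac{\log N}{\lambda} + \frac{\lambda \sigma^2}{2}.
\]

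Finally, I optimize over $\lambda$ by choosing $\lambda = \sqrt{2\log N}/\sigma$. This makes both terms equal to $\sigma\sqrt{\log N/2}$, so their sum is $\sigma\sqrt{2\log N}$, which is exactly the claimed bound.

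There is no real obstacle here. The only points needing care are the $\cosh$ sub-Gaussian bound and the degenerate cases, both of which are routine.
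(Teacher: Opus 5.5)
Your proof is correct and complete. Jensen's inequality together with bounding the maximum by the sum, the termwise comparison $\cosh(u) \leq e^{u^2/2}$, and the choice $\lambda = \sqrt{2\log N}/\sigma$ give exactly the stated constant, and you handle separately the degenerate cases $\sigma = 0$ and $N = 1$, where that choice of $\lambda$ is unavailable. The paper gives no proof of its own: it treats the lemma as standard and applies it conditionally on $X_1, \dots, X_n$ in the proof of Lemma~\ref{lem:vc-disagreement}. Your argument is the standard Chernoff proof of this lemma.
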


\begin{proof}[Proof of Lemma~\ref{lem:vc-disagreement}]
To bound the expected supremum, we center and then apply a standard symmetrization argument. First, we have
\begin{align*}
\E\left[\sup_{g \in \calD(\delta)}\sum_{t = 1}^n g(X_t)\right] &= \E\left[\sup_{g \in \calD(\delta)}\sum_{t = 1}^n\big\{g(X_t) - \E[g(X)] + \E[g(X)]\big\}\right] \\
&\leq n\sup_{g \in \calD(\delta)}\E\left[g(X)\right] + \E\left[\sup_{g \in \calD(\delta)} \sum_{t = 1}^n \left\{g(X_t) - \E[g(X)]\right\}\right] \\
&\leq n\delta^2 + \E\left[\sup_{g \in \calD(\delta)} \sum_{t = 1}^n \left\{g(X_t) - \E[g(X)]\right\}\right].
\end{align*}
We focus now on bounding the second term. We have via symmetrization that
\begin{align*}
\E\left[\sup_{g \in \calD(\delta)}\sum_{t = 1}^n \left\{g(X_t) - \E[g(X)]\right\}\right] &\leq 2\E\left[\E\left(\sup_{g \in \calD(\delta)}\sum_{t = 1}^n\epsilon_t g(X_t) \mid \X\right)\right],
\end{align*}
where $\epsilon_1, \dots, \epsilon_n$ are i.i.d.\ Rademacher random variables and again $\X := \sigma(X_1, \dots, X_n)$ is the sigma algebra generated by $X_1, \dots, X_n$.  Applying Lemma~\ref{lem:massart} conditionally on $\X$ and noting that taking the supremum over the potentially infinite $\calG$ can be swapped to taking the supremum over the finite class
\[
\calG(x_1^n) := \bigg\{\big(g(x_1), \dots, g(x_n)\big) : g \in \calD(\delta)\bigg\},
\]
we have, letting $g(x_1^n) := (g(x_1), \dots, g(x_n))$, that
\begin{align*}
\E\left(\sup_{g \in \calD(\delta)}\sum_{t = 1}^n \epsilon_t g(X_t) \mid \X\right) &= \E\left[\sup_{g_{1:n} \in \calG(x_1^n)} \sum_{t = 1}^n \epsilon_t g(x_t)  \right] \\
&\leq \sqrt{2\log\left|\calG(x_1^n)\right|}\sup_{g_{1:n} \in \calG(x_1^n)}\|g(x_1^n)\|_2 \\
&= \sqrt{2\log\left|\calG(x_1^n)\right|} \sup_{g \in \calD(\delta)}\left(\sum_{t = 1}^n g(X_t)^2 \right)^{1/2} \\
&= \sqrt{2\log\left|\calG(x_1^n)\right|}\sup_{g \in \calD(\delta)}\left(\sum_{t = 1}^n g(X_t)\right)^{1/2},
\end{align*}
where the first inequality follows from Massart's finite class lemma and the final equality follows because $g(x) = g(x)^2$ for all $x \in \calX$ (since functions are binary). Noting that Jensen's inequality implies
\[
\E\left[\sup_{g \in \calD(\delta)}\left(\sum_{t = 1}^ng(X_t)\right)^{1/2}\right] \leq \left(\E\left[\sup_{g \in \calD(\delta)}\sum_{t = 1}^ng(X_t)\right]\right)^{1/2},
\]
we have the following bound in expectation:
\[
\underbrace{\E\left[\sup_{g \in \calD(\delta)}\sum_{t = 1}^n g(X_t)\right]}_{=:U} \leq \underbrace{n\delta^2}_{=:A} + \underbrace{2\sqrt{2\sup_{x_1^n}\log\left|\calG(x_1^n)\right|}}_{=: B} \times \underbrace{\left(\E\left[\sup_{g \in \calD(\delta)}\sum_{t = 1}^n g(X_t)\right]\right)^{1/2}}_{= \sqrt{U}}.
\]
We now use a standard approach to translate the above bound into one that doesn't have the expected supremum appearing on the right hand side. Rearranging and setting $V := \sqrt{U}$, we have the following quadratic inequality
\[
V^2 - BV - A \leq 0.
\]
For the inequality to hold, the quadratic formula implies that the following upper bound on $V$ must be satisfied:
\[
V \leq \frac{B + \sqrt{B^2 + 4A}}{2} \leq  B  + \sqrt{A},
\]
which follows from the inequality $\sqrt{B^2 + 4A} \leq B + 2\sqrt{A}$. Thus, we have
\[
U = V^2 \leq (B + \sqrt{A})^2 = A + 2\sqrt{A}B + B^2
\]
that is, in terms of our original quantities:
\[
\E\left[\sup_{g \in \calD(\delta)} \sum_{t = 1}^n g(X_t)\right] \leq n\delta^2 + 4\delta\sqrt{2n\sup_{x_1^n}\log\left|\calG(x_1^n)\right|} + 8\sup_{x_1^n}\log\left|\calG(x_1^n)\right|
\]
Now, we know by Lemma~\ref{lem:gen-sauer} that $\left|\calG(x_1^n)\right| \leq \sum_{k =0}^{\min\{n, \frakv\}}\binom{n}{k}$ by Sauer's lemma, and so $\sup_{x_1^n}\log|\calG(x_1^n)| \lesssim \frakv \log(e n)$. Plugging this in yields the desired claim.

\end{proof}

We need two more technical lemmas before proceeding with the proof of Theorem~\ref{thm:local-sup-bound}.  The first is a standard result on the expected maximum of a finite collection of sub-Gamma random variables. The second studies the properties of the process $(M_t(\pi, \rho))_{1 \leq t \leq n}$ for fixed $\pi, \rho \in \Pi$. In particular, it confirms that indeed said process is a martingale, and that its increments/predictable quadratic variation admit reasonable bounds.

\begin{lemma}[Freedman's Inequality for Finite Classes, Theorem 2.5 of \citet{boucheron2003concentration}]
\label{lem:finite_max_bd}
Let $(M_n^{(\ell)} : \ell \in \calI)$ be a finite collection of martingales satisfying (i) $\big|\Delta M_t^{(\ell)}\big| \leq b$  for all $1 \leq t \leq n$ and $\ell \in \calI$, and (ii) $V_n^{(\ell)} := \sum_{t = 1}^n \Var\big(\Delta M_t^{(\ell)}\mid \calF_{t - 1}\big) \leq \nu$ for all $\ell \in \calI$ almost surely. Then, we have
\[
\E\left[\max_{\ell \in \calI} |M_n^{\ell}|\right] \leq \sqrt{2\nu\log(2|\calI|)} + \frac{b}{3}\log(2|\calI|).
\]

\end{lemma}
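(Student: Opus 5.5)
The plan is to reduce the statement to the standard maximal inequality for finitely many sub-Gamma random variables. The work is in establishing the sub-Gamma cumulant bound for each martingale endpoint $M_n^{(\ell)}$. No independence across $\ell$ is needed: the maximal inequality only uses a union-type (Jensen/log-sum-exp) argument, so arbitrary dependence between the martingales is harmless.

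\emph{Step 1 (exponential supermartingale).} Fix $\ell$ and $\lambda \in [0, 3/b)$, and set
\[
\phi(\lambda) := \frac{e^{\lambda b} - 1 - \lambda b}{b^2}.
\]
The elementary fact is that $u \mapsto (e^u - 1 - u)/u^2$ is increasing. Hence for any increment $D$ with $|D| \le b$ we have $e^{\lambda D} \le 1 + \lambda D + \phi(\lambda) D^2$. Taking conditional expectations, using $\E(\Delta M_t^{(\ell)} \mid \calF_{t-1}) = 0$ and then $1 + x \le e^x$, gives
\[
\E\bigl(e^{\lambda \Delta M_t^{(\ell)}} \mid \calF_{t-1}\bigr) \le \exp\bigl\{\phi(\lambda) \Var(\Delta M_t^{(\ell)} \mid \calF_{t-1})\bigr\}.
\]
It follows that $\exp\{\lambda M_t^{(\ell)} - \phi(\lambda) V_t^{(\ell)}\}$ is a supermartingale, so its expectation at time $n$ is at most one. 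Because $V_n^{(\ell)} \le \nu$ almost surely, this yields $\log \E e^{\lambda M_n^{(\ell)}} \le \phi(\lambda)\nu$. I expect this step to need the most care. The predictable variance is random, so the almost-sure bound $V_n \le \nu$ must be used inside the expectation, through $e^{\lambda M_n} \le e^{\lambda M_n - \phi(\lambda) V_n} e^{\phi(\lambda)\nu}$, rather than by conditioning on it.

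\emph{Step 2 (sub-Gamma form).} Expand $e^{u} - 1 - u = \sum_{k \ge 2} u^k/k!$ and use $k! \ge 2 \cdot 3^{k-2}$. This gives
\[
\phi(\lambda) \le \frac{\lambda^2}{2(1 - b\lambda/3)}.
\]
So $M_n^{(\ell)}$ is sub-Gamma with variance factor $\nu$ and scale $c = b/3$ in the sense of $\psi_{\nu, c}$ from Lemma~\ref{lem:gamma-concentration}. Applying the same argument to $-M^{(\ell)}$, which is also a martingale with the same increment and variance bounds, handles the absolute value. We therefore have $2|\calI|$ sub-Gamma variables $\pm M_n^{(\ell)}$.

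\emph{Step 3 (maximal inequality).} For $\lambda \in (0, 1/c)$, Jensen's inequality gives
\[
\E\max_\ell |M_n^{(\ell)}| \le \lambda^{-1}\log \sum_{\pm,\ell} \E e^{\pm\lambda M_n^{(\ell)}} \le \frac{\log(2|\calI|)}{\lambda} + \frac{\nu\lambda}{2(1 - c\lambda)}.
\]
Optimize over $\lambda$; the standard choice is $\lambda = \bigl(c + \sqrt{\nu/(2\log(2|\calI|))}\bigr)^{-1}$. This produces $\sqrt{2\nu\log(2|\calI|)} + c\log(2|\calI|)$, which is the claimed bound with $c = b/3$, as in Boucheron et al.'s Corollary 2.6.
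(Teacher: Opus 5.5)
Your proof is correct and is essentially the argument behind the paper's citation. The paper gives no proof of its own: it invokes Boucheron--Lugosi--Massart's maximal inequality for sub-Gamma variables (your Step 3, their Theorem 2.5/Corollary 2.6) and treats the sub-Gamma property of the martingale endpoints with scale $b/3$ as standard, as it also asserts in Lemma~\ref{lem:martingale-properties}. Your Freedman-type supermartingale argument in Steps 1--2 supplies exactly that property. Two implicit points are worth stating: the argument needs $M_0^{(\ell)} = 0$, and in the degenerate case $\nu = 0$ your choice $\lambda = 1/c$ sits on the boundary of $(0, 1/c)$, so you should either let $\lambda \uparrow 1/c$ or note that then $M_n^{(\ell)} = 0$ almost surely.
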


\begin{lemma}
\label{lem:martingale-properties}
For any $\pi, \rho \in \Pi$, the following are true.
\begin{enumerate}
\item The process $(M_n(\pi, \rho))$ is a martingale with respect to the natural filtration $\calF_n := \sigma(Z_i : 1 \leq i \leq n)$.
\item The increments $m_t(\pi, \rho)$ of $M_n(\pi, \rho)$ satisfy $|m_t(\pi, \rho)| \leq B := 1 + \lb{c}^{-1}$, where $\lb{c} > 0$ is as in Assumption~\ref{ass:propensity}.
\item The predictable variances $\sigma^2_t(\pi, \rho) := \Var(m_t(\pi, \rho) \mid \calF_{t - 1})$ satisfy $\sigma^2_t(\pi, \rho) \leq \frac{2}{\lb{c}}w_t d(\pi, \rho)^2$.
\end{enumerate}
Consequently, the process $(M_n(\pi, \rho))$ is sub-Gamma with variance $\nu_{\pi, \rho} :=\frac{2}{\lb{c}} d(\pi, \rho)^2 S_{n, -\alpha}$ and scale $b = \frac{B}{3}$.
\end{lemma}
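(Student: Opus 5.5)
The plan is to verify the three claims directly from the definition of the increment
\[
m_t(\pi,\rho) = w_t\left\{\frac{\big(\1\{\pi(X_t)=A_t\} - \1\{\rho(X_t)=A_t\}\big)Y_t}{p_t(A_t\mid X_t)} - \big(V(\pi)-V(\rho)\big)\right\},
\]
and then obtain the sub-Gamma statement from a conditional Bernstein bound on the moment generating function, iterated with the tower property. Throughout, write $D_t(a) := \1\{\pi(X_t)=a\}-\1\{\rho(X_t)=a\} \in \{-1,0,1\}$.

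\emph{Martingale property.} Each increment is bounded (see the next step), so it is integrable, and $m_t(\pi,\rho)$ is clearly $\calF_t$-measurable. For the conditional mean, condition first on $(X_t,\calF_{t-1})$. By Assumption~\ref{ass:observations} (predictable assignments with conditional exogeneity, and consistency),
\[
\E\!\left(\frac{D_t(A_t)Y_t}{p_t(A_t\mid X_t)} \,\Big|\, X_t,\calF_{t-1}\right) = \sum_{a\in\calA} D_t(a)\,\mu_0(a,X_t).
\]
By stationarity, $X_t$ is independent of $\calF_{t-1}$ with law $P_X$. Averaging over $X_t$ therefore gives $V(\pi)-V(\rho)$, so $\E(m_t(\pi,\rho)\mid\calF_{t-1})=0$.

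\emph{Bounded increments.} By Assumption~\ref{ass:propensity}, $w_t/p_t(A_t\mid X_t) \le t^{-\alpha}/(\lb{c}\,t^{-\alpha}) = \lb{c}^{-1}$. Combined with $|D_t(A_t)|\le 1$ and $Y_t\in[0,1]$, the first term of $m_t$ is at most $\lb{c}^{-1}$ in absolute value. The centering term satisfies $w_t|V(\pi)-V(\rho)|\le 1$, since $w_t\le 1$ and values lie in $[0,1]$. Hence $|m_t(\pi,\rho)|\le 1+\lb{c}^{-1}=B$.

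\emph{Predictable variance.} Bound the variance by the conditional second moment of the uncentered IPW term:
\[
\sigma_t^2(\pi,\rho) \le w_t^2\,\E\!\left(\sum_{a\in\calA}\frac{D_t(a)^2\,\mu_2(a,X_t)}{p_t(a\mid X_t)} \,\Big|\, \calF_{t-1}\right) \le \frac{w_t^2 t^{\alpha}}{\lb{c}}\,\E\!\left[\sum_{a\in\calA} D_t(a)^2\right].
\]
Here $\mu_2(a,x) := \E(Y^2\mid X=x,A=a) \le 1$. Two facts finish the bound:
\begin{itemize}
\item $w_t^2 t^{\alpha}=w_t$;
\item $\sum_a D_t(a)^2 = 2\,\1\{\pi(X_t)\neq\rho(X_t)\}$, because a disagreement changes exactly two indicator coordinates.
\end{itemize}
Together with stationarity of $X_t$, these give $\sigma_t^2(\pi,\rho)\le \frac{2}{\lb{c}}\,w_t\,d(\pi,\rho)^2$.

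\emph{Sub-Gamma conclusion.} For a conditionally mean-zero increment bounded by $B$, the standard Bernstein moment generating function bound gives, for $|\lambda|<3/B$,
\[
\E\big(e^{\lambda m_t}\mid\calF_{t-1}\big) \le \exp\!\left\{\frac{\sigma_t^2\lambda^2}{2(1-B|\lambda|/3)}\right\}.
\]
The variance bound in the previous step is deterministic. So we can peel off one increment at a time from $t=n$ down to $t=1$ via the tower property and multiply the bounds. Summing the deterministic bounds, $\sum_t \frac{2}{\lb{c}}w_t d(\pi,\rho)^2 = \nu_{\pi,\rho}$, which gives $\log\E e^{\lambda M_n(\pi,\rho)}\le\psi_{\nu_{\pi,\rho},B/3}(\lambda)$.

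The only delicate point is this last step. It requires the conditional variance bound to hold almost surely with a non-random right-hand side, so that the iteration closes; this is exactly what the deterministic bound from the previous step provides. Everything else is a routine check.
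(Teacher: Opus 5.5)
Your proposal is correct and follows essentially the same route as the paper on all three items. The paper likewise conditions on $(X_t,\calF_{t-1})$ for the mean, bounds $w_t/p_t(A_t\mid X_t)$ by $\lb{c}^{-1}$ for the increments, and uses that a disagreement contributes exactly two terms $\mu_2(\cdot,X_t)/p_t(\cdot\mid X_t)$ for the variance. The one addition is that you write out the conditional Bernstein MGF bound and the tower-property iteration behind the sub-Gamma conclusion, which the paper asserts with ``consequently'' without proof; your argument for that step is valid because the variance bound is deterministic.
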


\begin{proof}
We check each claim in succession. First, the martingale property is straightforward to verify, as we have
\begin{align*}
\E\left(m_t(\pi, \rho) \mid \calF_{t - 1}\right) &= w_t\E\left(\wt{Y}_t(\pi) - V(\pi) \mid \calF_{t - 1}\right) - w_t \E\left(\wt{Y}_t(\rho) - V(\rho) \mid \calF_{t - 1}\right).
\end{align*}
We just check that the conditional expectation involving $\pi$ is zero, as analyzing the other conditional expectation is exactly analogous. First, note that we have the identity
\begin{align*}
&\E\left(\sum_{a \in \calA} \frac{\1\{A_t = a, \pi(X_t) = a\}}{p_t(a \mid X_t)}Y_t \mid X_t, \calF_{t - 1}\right) \\
&\qquad = \sum_{a \in \calA}\E\left(\frac{\1\{A_t = a, \pi(X_t) = a\}}{p_t(a \mid X_t)}Y_t \mid \calX_t, \calF_{t - 1}\right) \\
&\qquad = \sum_{a \in \calA}\E\left(\frac{\1\{A_t = a, \pi(X_t) = a\}}{p_t(a \mid X_t)}Y_t(a) \mid \calX_t, \calF_{t - 1}\right) \\
&\qquad = \sum_{a \in \calA}\1\{\pi(X_t) = a\}\E\left(\frac{\1\{A_t = a\}}{p_t(a \mid X_t)} \mid X_t, \calF_{t - 1}\right)\E\left(Y_t(a) \mid X_t, \calF_{t - 1}\right) \\
&\qquad =\sum_{a \in \calA} \1\{\pi(X_t) = a\}\mu(a, X_t).
\end{align*}
Consequently, we have
\begin{align*}
\E\left(\wt{Y}_t(\pi) \mid \calF_{t - 1}\right) &= \E\left(\sum_{a \in \calA} \frac{\1\{A_t = a, \pi(X_t) = a\}}{p_t(a \mid X_t)}Y_t \mid \calF_{t - 1}\right) \\
&= \E\left[\E\left(\sum_{a \in \calA} \frac{\1\{A_t = a, \pi(X_t) = a\}}{p_t(a \mid X_t)}Y_t \mid X_t, \calF_{t - 1}\right) \mid \calF_{t - 1}\right] \\
&= \E\left[\sum_{a \in \calA} \1\{\pi(X_t) = a\}\mu(a, X_t) \mid \calF_{t - 1}\right] \\
&= \E\left[\sum_{a \in \calA}\1\{\pi(X_t) = a\} \mu(a, X_t)\right] = V(\pi).
\end{align*}
Thus, it follows that $\E\left(m_t(\pi, \rho) \mid \calF_{t - 1}\right) = 0$ and hence $M_n(\pi, \rho)$ is a martingale with respect to $\calF_n$.

\noindent Next, we check boundedness of the increments $m_t(\pi, \rho)$. We have
\begin{align*}
\left|m_t(\pi, \rho)\right| &\leq w_t\left|\wt{Y}(\pi) - \wt{Y}(\rho)\right| + w_t\left|V(\pi) - V(\rho)\right|.
\end{align*}
The difference between true policy values satisfies $\left|V(\pi) - V(\rho)\right| \leq d(\pi, \rho)^2 \leq 1$, as we have
\begin{align*}
\left|V(\pi) - V(\rho)\right| &= \left|\E\left[\mu(\pi(X), X) - \mu(\rho(X), X)\right]\right|\\
&\leq \E\left[\1\left\{\pi(X) \neq \rho(X)\right\}\left|\mu(\pi(X), X) - \mu(\rho(X), X)\right|\right] \\
&\leq \E\left[\1\{\pi(X) \neq \rho(X)\}\right] \leq d(\pi, \rho)^2.
\end{align*}
Thus, since $w_t \leq 1$ by assumption, we have $w_t \left|V(\pi) - V(\rho)\right| \leq 1$.

Next, we have
\begin{align*}
w_t \left|\wt{Y}_t(\pi) - \wt{Y}_t(\rho)\right| &= w_t \left|\frac{\1\{A_t = \pi(X_t)\}}{p_t(A_t\mid X_t)} - \frac{\1\{A_t = \rho(X_t)}{p_t(A_t \mid X_t)}\right|Y_t \\
&\leq w_t \frac{\1\{\pi(X_t) \neq \rho(X_t)\}}{p_t(A_t \mid X_t)} \\
&\leq \frac{1}{\lb{c}}.
\end{align*}
Thus, the increments $m_t(\pi, \rho)$ are bounded as claimed. We conclude by controlling the predictable variance. First, note the following upper bound when we additionally condition on $X_t$:
\begin{align*}
&\E\left[\left(\wt{Y}_t(\pi) - \wt{Y}_t(\rho)\right)^2 \mid X_t, \calF_{t - 1}\right] \\
&\qquad =\E\left[\1\left\{\pi(X_t)\neq \rho(X_t)\right\}\left(\frac{\1\{A_t = \pi(X_t)\}}{p_t(\pi(X_t) \mid X_t)} - \frac{\1\{A_t = \rho(X_t)\}}{p_t(\rho(X_t) \mid X_t)}\right)^2Y_t^2 \mid X_t, \calF_{t -1}\right] \\
&\qquad \leq \1\left\{\pi(X_t)\neq \rho(X_t)\right\}\left(\frac{\mu_2(\pi(X_t), X_t)}{p_t(\pi(X_t) \mid X_t)} + \frac{\mu_{2}(\rho(X_t), X_t)}{p_t(\rho(X_t) \mid X_t)}\right) \\
&\qquad \leq \frac{2t^{\alpha}}{\lb{c}}\1\left\{\pi(X_t) \neq \rho(X_t)\right\}
\end{align*}
Leveraging this bound, we see that we have
\begin{align*}
\Var(m_t(\pi, \rho) \mid \calF_{t - 1}) &\leq w_t^2\E\left[\left(\wt{Y}_t(\pi) - \wt{Y}_t(\rho)\right)^2 \mid \calF_{t - 1}\right] \\
&= w_t^2 \E\left[\E\left(\left(\wt{Y}_t(\pi) - \wt{Y}_t(\rho)\right)^2\mid X_t, \calF_{t -1}\right)\mid \calF_{t - 1}\right] \\
&\leq \frac{2t^\alpha w_t^2}{\lb{c}}\E\left[\1\left\{\pi(X_t) \neq \rho(X_t)\right\} \mid \calF_{t - 1}\right] \\
&= \frac{2w_t}{\lb{c}}d(\pi, \rho)^2,
\end{align*}
where the last line follows because $X_t$ is independent of $\calF_{t - 1}$. This thus completes the proof.

\end{proof}

\begin{proof}[Proof of Theorem~\ref{thm:local-sup-bound}]
Our proof follows from a chaining argument. For each $j \geq 0$, let $\delta_j := 2^{-j}\delta$ and let $K_j$ be a minimal $\delta_j$-covering of $\Pi$ under the disagreement metric $d(\pi, \rho) := \P(\pi(X) \neq \rho(X))^{1/2}$. let $J_n \geq 1$ be the smallest $j$ such that $\delta_{j} \leq n^{-1/2}$, and for each $j \in \{0, 1, \dots, J_n\}$ and $\pi, \rho \in \Pi$, let $\pi_j, \rho_j \in K_j$ be points satisfying $d(\pi, \pi_j) \leq \delta_j$ and $d(\rho, \rho_j) \leq \delta_j$. Performing a truncated chaining decomposition up to scale $J_n$ yields:
\begin{align*}
M_n(\pi, \rho) &= M_n(\pi_0, \rho_0) + \sum_{j = 1}^{J_n}\left\{M_n\big(\pi_j, \rho_j\big) - M_n\big(\pi_{j - 1}, \rho_{j - 1}\big)\right\}  \\
&\qquad + M_n(\pi, \rho) -  M_n\big(\pi_{J_n}, \rho_{J_n}\big) \\
&= M_n(\pi_0, \rho_0) + \sum_{j = 1}^{J_n}\left\{M_n\big(\pi_j, \pi_{j - 1}\big) - M_n\big(\rho_j, \rho_{j - 1}\big)\right\}  \\
&\qquad + M_n\big(\pi, \pi_{J_n}\big) -  M_n\big(\rho, \rho_{J_n}\big),
\end{align*}
where the final equality follows from the fact that
\begin{align*}
M_n(\pi, \rho) - M_n(\pi', \rho') &= \sum_{t = 1}^n w_t\left\{\wt{Y}(\pi) - \wt{Y}(\rho) - (V(\pi) - V(\rho))\right\} \\
&\qquad - \sum_{t = 1}^n w_t \left\{\wt{Y}(\pi') - \wt{Y}(\rho') - (V(\pi') - V(\rho'))\right\} \\
&= \sum_{t = 1}^n w_t\left\{\wt{Y}(\pi) - \wt{Y}(\pi') - (V(\pi) - V(\pi'))\right\} \\
&\qquad - \sum_{t = 1}^n w_t \left\{\wt{Y}(\rho) - \wt{Y}(\rho') - (V(\rho) - V(\rho'))\right\} \\
&= M_n(\pi, \pi') - M_n(\rho, \rho').
\end{align*}

Thus, the triangle inequality yields the following upper bound.
\begin{align*}
\left|M_n(\pi, \rho)\right| &\leq  \underbrace{\left|M_n(\pi_0, \rho_0)\right|}_{\text{Coarse Term}} + \underbrace{\sum_{j = 1}^{J_n}\left|M_n\big(\pi_j, \pi_{j - 1}\big)\right| + \sum_{j = 1}^{J_n}\left|M_n\big(\rho_j, \rho_{j - 1}\big)\right|}_{\text{Chaining Term}}  \\
&\qquad + \underbrace{\left|M_n\big(\pi, \pi_{J_n}\big)\right| + \left|M_n\big(\rho, \rho_{J_n}\big)\right| }_{\text{Remainder Term}}\\
\end{align*}
We now bound each of the terms in sequence.

\paragraph{Bounding the Coarse Term}
First, observe that by the triangle inequality we have
\[
d(\pi_0, \rho_0) \leq d(\pi_0, \pi) + d(\pi, \rho) + d(\rho, \rho_0) \leq 3 \delta.
\]
Next, define relevant set of pairs $\calR_0 \subset K_0 \times K_0$ by
\[
\calR_0 := \left\{(\pi, \rho) \in K_0 \times K_0 : d(\pi, \rho) \leq 3 \delta\right\}
\]
Clearly, $(\pi_0, \rho_0) \in \calR_0$ by construction, so we have the upper bound:
\[
\left|M_n\big(\pi_0, \rho_0\big)\right| \leq \max_{(\pi', \rho') \in \calR_0}\left|M_n(\pi', \rho')\right|.
\]
Next, for any pair $(\pi', \rho')$, we know by Lemma~\ref{lem:martingale-properties} that $M_n(\pi', \rho')$ is sub-Gamma with variance $\nu := \frac{18}{\lb{c}}\delta^2\sum_{t = 1}^n w_t = \frac{18}{\lb{c}}\delta^2 S_{n, -\alpha}$ and scale $b = \frac{B}{3}$, where again $B = 1 + \lb{c}^{-1}$.
Applying Lemma~\ref{lem:finite_max_bd} over the finite set $\calR_0$ and omitting dependence on the constant $\lb{c}$ and the action set cardinality $|\calA|$ thus yields
\begin{align*}
&\E\left[\frac{1}{S_{n, -\alpha}}\max_{(\pi', \rho') \in \calR_0}\left|M_n(\pi', \rho')\right|\right] \lesssim \delta S_{n, -\alpha}^{-1/2}\sqrt{\log\left(2\left|\calR_0\right|\right)} + S_{n, -\alpha}^{-1}\log\left(2\left|\calR_0\right|\right) \\
&\quad \lesssim \delta S_{n, -\alpha}^{-1/2} \sqrt{\log\left(2\left|K_0\right|^2\right)} + S_{n, -\alpha}^{-1}\log\left(2\left|K_0\right|^2\right) \\
&\quad \lesssim \delta S_{n , -\alpha}^{-1/2}\sqrt{\frakv\log\left(\frac{A|\calA|}{\delta}\right)} + S_{n, -\alpha}^{-1}\frakv \log\left(\frac{A|\calA|}{\delta}\right) \\
&\lesssim  \delta n^{\frac{\alpha - 1}{2}} \sqrt{\frakv\log\left(\frac{A}{\delta}\right)} + n^{\alpha - 1}\frakv\log\left(\frac{A}{\delta}\right).
\end{align*}
In the above, the second to last inequality follows from Assumption~\ref{ass:disagreement} followed by an application of Proposition~\ref{prop:vc-bound-pi} and the final inequality follows because $S_{n, -\alpha} \asymp n^{1 - \alpha}$ for $\alpha \in [0, 1)$.

\paragraph{Bounding the Chaining Terms}
We bound $|M_n(\pi_j, \pi_{j - 1})|$ for each $1 \leq j \leq J_n$. Similar reasoning to the above yields that $d(\pi_j, \pi_{j - 1}) \leq 3 \delta_j$ for each $j \geq 1$. For each $j$, let $\calR_j \subset K_j \times K_{j - 1}$ be defined similarly to $\calR_0$ as follows
\[
\calR_j := \left\{(\pi, \rho) \in K_j \times K_{j - 1} : d(\pi, \rho) \leq 3 \delta_j\right\}.
\]
Also note that we have
\[
\E\left[\left|M_n(\pi_j, \pi_{j - 1})\right|\right] \leq \E\left[\max_{(\pi', \rho') \in \calR_j} \left|M_n(\pi', \rho')\right|\right].
\]
Noting that $|\calR_j| \leq |K_j||K_{j - 1}| \leq |K_j|^2$, analogous reasoning to the bounding of the coarse term yields that
\begin{align*}
\E\left[\frac{1}{S_{n, -\alpha}}\max_{(\pi' , \rho') \in \calR_j}\left|M_n(\pi', \rho')\right|\right] &\lesssim  n^{\frac{\alpha - 1}{2}}\delta_j \sqrt{\frakv\log\left(\frac{A}{\delta_j}\right)} \\
&\qquad + n^{\alpha - 1}\frakv\log\left(\frac{A}{\delta_j}\right).
\end{align*}

Now, we upper bound the sum over $j \in \{1, \dots, J_n\}$. First, we have the following upper bound:
\begin{align*}
\E\left[\frac{1}{S_{n, -\alpha}}\sum_{j = 1}^{J_n}\left|M_n(\pi_j, \pi_{j - 1})\right|\right] &\leq \frac{1}{S_{n, -\alpha}}\sum_{j = 1}^{J_n}\E\left[\max_{(\pi', \rho') \in \calR_j}\left|M_n(\pi', \rho')\right|\right] \\
&\lesssim n^{\frac{\alpha - 1}{2}}\sum_{j = 1}^{J_n} \delta_j \sqrt{\frakv \log\left(\frac{A}{\delta_j}\right)} +  n^{\alpha - 1}\sum_{j = 1}^{J_n}\frakv \log\left(\frac{A}{\delta_j}\right)
\end{align*}
The first sum can be bounded as follows.
\begin{align*}
\sum_{j = 1}^{J_n} \delta_j \sqrt{\frakv \log\left(\frac{A}{\delta_j}\right)} &\leq \sum_{j = 1}^\infty \frac{\delta}{2^{j}}\sqrt{\frakv\log\left(\frac{A2^j}{\delta}\right)} \\
&\lesssim  \int_0^\delta \sqrt{\frakv \log\left(\frac{A}{\epsilon}\right)}d\epsilon \\
&\asymp \delta\sqrt{\frakv\log\left(\frac{A}{\delta}\right)},
\end{align*}
where the final inequality follows from the relation
\[
\delta \sqrt{\frakv \log\left(\frac{A}{\delta}\right)} \leq \int_0^\delta \sqrt{\frakv \log\left(\frac{A}{\epsilon}\right)}d\epsilon \leq \delta \sqrt{2\frakv\log\left(\frac{A}{\delta}\right)}
\]
for every $0 < \delta \leq 1$.

We now bound the second sum. Our assumption that $J_n$ is the smallest natural number such that $2^{-J_n}\delta \leq n^{-1/2}$ and that $\delta \leq 1$ implies that $J_n \leq \frac{1}{2}\log_2(n) + 1 \lesssim \log(n)$ so long as $n \geq 2$, so we have
\begin{align*}
\sum_{j = 1}^{J_n} \frakv \log\left(\frac{A}{\delta_j}\right) &\lesssim \sum_{j = 1}^{J_n} \frakv j \log\left(\frac{A}{\delta}\right) \\
&\lesssim J_n^2\frakv\log\left(\frac{A}{\delta}\right) \\
&\lesssim \log(n)^{2}\frakv \log\left(\frac{A}{\delta}\right).
\end{align*}

Noting that the analysis of $\sum_{j = 1}^{J_n} \left|M_n(\rho_j, \rho_{j - 1})\right|$ is exactly analogous, we have that
\begin{align*}
&\E\left[\frac{1}{S_{n, -\alpha}}\sum_{j =1}^{J_n} \left|M_n(\pi_j, \pi_{j - 1})\right|\right] + \E\left[\frac{1}{S_{n, -\alpha}}\sum_{j = 1}^{J_n}\left|M_n(\rho_j, \rho_{j - 1})\right| \right] \\
&\qquad \lesssim n^{\frac{\alpha - 1}{2}} \delta \sqrt{\frakv\log\left(\frac{A}{\delta}\right)} + n^{\alpha - 1}\log(n)^2 \frakv \log\left(\frac{A}{\delta}\right).
\end{align*}

\paragraph{Bounding the Remainder}

First, we have the following upper bound.
\begin{align*}
\frac{1}{S_{n, -\alpha}}\left|M_n(\pi_{J_n}, \pi)\right| &\leq \frac{1}{S_{n, -\alpha}}\left|\sum_{t = 1}^n w_t \left\{\wt{Y}_t(\pi_{J_n}) - \wt{Y}_t(\pi)\right\}\right| \\
&\qquad + \frac{1}{S_{n, -\alpha}}\left|\sum_{t = 1}^n w_t \left\{V(\pi_{J_n}) - V(\pi)\right\}\right|.
\end{align*}
The second term can be bounded as follows:
\begin{align*}
\frac{1}{S_{n, -\alpha}}\left|\sum_{t = 1}^n w_t \left\{V(\pi_{J_n}) - V(\pi)\right\}\right| = \left|V(\pi_{J_n}) - V(\pi)\right| \lesssim  n^{-1},
\end{align*}
where the final inequality follows from the fact that
\[
\left|V(\pi) - V(\pi_{J_n})\right| \leq \E\left[\1\left\{\pi_{J_n}(X) \neq \pi(X)\right\}|Y|\right] \leq \delta_{J_n}^2 \lesssim n^{-1}.
\]

We now bound the first term. Observe that
\begin{align*}
\frac{1}{S_{n, -\alpha}}\left|\sum_{t = 1}^n w_t\left\{\wt{Y}_t(\pi_{J_n}) - \wt{Y}_t(\pi)\right\}\right| &\leq \frac{1}{S_{n, -\alpha}}\sum_{t = 1}^nw_t \frac{\1\left\{\pi_{J_n}(X_t) \neq \pi(X_t)\right\}}{p_t(A_t \mid X_t)}\left|Y_t\right| \\
&\lesssim \frac{1}{S_{n, -\alpha}}\sum_{t = 1}^n \1\left\{\pi_{J_n}(X_t) \neq \pi(X_t)\right\}.
\end{align*}
Using Lemma~\ref{lem:vc-disagreement}, we can bound the expectation of the last term. In particular, we have
\begin{align*}
\E\left[\sum_{t = 1}^n \1\left\{\pi_{J_n}(X_t) \neq \pi(X_t)\right\}\right] &\leq  \E\left[\sup_{d(\pi', \rho') \leq \delta_{J_n}} \sum_{t = 1}^n \1\left\{\pi'(X_t) \neq \rho'(X_t)\right\}\right] \\
&\lesssim n\delta_{J_n}^2 + \delta_{J_n}\sqrt{n\frakv\log(n)} + \frakv\log(n) \\
&\lesssim  1  + \sqrt{\frakv\log(n)} + \frakv\log(n),
\end{align*}
Thus, we have
\begin{align*}
\E\left[\frac{1}{S_{n, -\alpha}}\big|M_n(\pi_{J_n}, \pi)\big|\right] &\lesssim n^{\alpha  -1}\sqrt{\frakv \log(n)} + n^{\alpha - 1}\frakv\log(n) + n^{-1} \\
&\lesssim n^{\alpha - 1}\frakv\log(n) \\
&\lesssim n^{\alpha - 1}\log(n)^2 \frakv\log\left(\frac{A}{\delta}\right).
\end{align*}
Note that an exactly identical argument provides the same bound on the remainder involving $\rho$.

\paragraph{Wrapping up.}

Putting together our bounds on each term, we have that
\begin{align*}
\E\left[\frac{1}{S_{n, -\alpha}}\sup_{\substack{\pi, \rho \in \Pi \\ d(\pi, \rho) \leq\delta}}\left|M_n(\pi, \rho)\right|\right] &\lesssim  \delta n^{\frac{\alpha - 1}{2}}\sqrt{\frakv\log\left(\frac{A}{\delta}\right)} \\
&\qquad + n^{\alpha - 1}\log(n)^2\frakv\log\left(\frac{A}{\delta}\right).
\end{align*}
This yields the claimed rate.
\end{proof}

\section{Additional Proofs}
\label{app:additional}

% In this appendix, we adopt the shorthand notation $\E_{t - 1}[U] := \E_{t - 1}(U \mid \calF_{t - 1})$ and $\Var_{t - 1}[U] := \Var\left(u \mid \calF_{t- 1}\right)$ to ease notation while conditioning.

\begin{proof}[Proof of Lemma~\ref{lem:suff_conditions}]
We start by establishing a lower bound on the predictable variance that we will use in checking all three of the sufficient conditions. Let $\pi \in \Pi$ be an arbitrary policy. First, note that the law of total variance yields
\begin{align*}
\Var\left(\wt{Y}_t(\pi, q)\mid \calF_{t - 1}\right)  &= \E\left[\Var\left(\wt{Y}_t(\pi, q) \mid X_t, A_t, \calF_{t - 1}\right) \mid \calF_{t - 1}\right] \\
&\qquad + \Var\left(\E\left[\wt{Y}_{t}(\pi, q) \mid X_t, A_t, \calF_{t - 1}\right] \mid \calF_{t - 1}\right) \\
&\geq \E\left[\Var\left(\wt{Y}_t(\pi, q) \mid X_t, A_t, \calF_{t - 1}\right) \mid \calF_{t - 1}\right].
\end{align*}
Next, note that Assumption~\ref{ass:observations} and \ref{ass:estimation} taken jointly yield
\begin{align*}
\Var\left(\wt{Y}_t(\pi, q) \mid X_t = x, A_t = a, \calF_{t - 1}\right) &= \frac{g_\pi(a \mid x)^2}{p_t(a \mid x)^2}\Var(Y_t \mid X_t = x, A_t = a, \calF_{t - 1}) \\
&= \frac{g_\pi(a \mid x)^2}{p_t(a \mid x)^2}\Var\left(Y_t(a) \mid X_t = x, \calF_{t - 1}\right) \\
&\geq \frac{g_\pi(a \mid x)^2}{p_t(a \mid x)^2}\lb{\sigma}^2.
\end{align*}
We now lower bound $\E\left[\frac{g_\pi(A_t \mid X_t)^2}{p_t(A_t \mid X_t)^2} \mid \calF_{t - 1}\right]$ in each of the three settings.

\paragraph{Strictly Randomized Baseline.}
In the setting of a strictly randomized baseline, we have

\begin{align*}
\E\left(\frac{g_\pi(A_t \mid X_t)^2}{p_t(A_t \mid X_t)^2} \mid \calF_{t - 1}\right) &=\E\left(\sum_{a \in \calA} \frac{g_\pi(a \mid X_t)^2}{p_t(a \mid X_t)} \mid \calF_{t - 1}\right) \\
&\geq \E\left(\sum_{a \in \calA} \frac{\eta_q^2}{p_t(a \mid X_t)} \mid \calF_{t - 1}\right) \\
&\geq \E\left(\frac{\eta_q^2}{\min_{a \in \calA}p_t(a \mid X_t)}\mid \calF_{t - 1}\right)\\
&\geq \frac{\eta^2_qt^{\alpha}}{\wb{c}},
\end{align*}
which completes the proof in this case.

\paragraph{Deterministic Baseline.}
Using similar reasoning as above, we have that
\begin{align*}
\E\left(\frac{g_\pi(A_t \mid X_t)^2}{p_t(A_t \mid X_t)^2} \mid \calF_{t - 1}\right) &=\E\left(\frac{\1\left\{\pi(X_t) \neq q(X_t)\right\}}{p_t(A_t \mid X_t)^2} \mid \calF_{t - 1}\right) \\
&= \E\left(\1\left\{\pi(X_t) \neq q(X_t)\right\}\left(\frac{1}{p_t(1 \mid X_t)} + \frac{1}{p_t(0 \mid X_t)}\right) \mid \calF_{t - 1}\right) \\
&\geq \E\left(\1\left\{\pi(X_t) \neq q(X_t)\right\}\frac{1}{\min\left\{p_t(1 \mid X_t),p_t(0 \mid X_t)\right\}} \mid \calF_{t - 1}\right) \\
&\geq \frac{t^\alpha}{\wb{c}}\P\left(\pi(X_t) \neq q(X_t)\right) \\
&\geq \frac{\eta_qt^\alpha}{\wb{c}},
\end{align*}
where again the second equality follows from expanding the expectation over just the treatment assignment $A_t$. This thus shows the desired result in this case.

\paragraph{Non-Vanishing Exploration.}
Finally, observe that we have
\begin{align*}
\E\left(\frac{g_\pi(A_t \mid X_t)^2}{p_t(A_t \mid X_t)^2} \mid \calF_{t - 1}\right) &\geq \E\left(g_\pi(A_t \mid X_t)^2 \mid \calF_{t - 1}\right) \geq \eta_g.
\end{align*}
Since $t^\alpha = 1$ in this setting, the result is proved.

\end{proof}

\section{Additional Details From Section~\ref{sec:exp}}
\label{app:exp}

In this appendix, we provide additional details, plots, and interpretations from Section~\ref{sec:exp} --- the main experimental section of our work. We start with a brief road map of the appendix. First, in Appendix~\ref{app:exp:comp}, we rigorously define the various comparators leveraged in experiments. Then, in Appendix~\ref{app:exp:synth}, we provide additional details, tables, and plots from our synthetic experiments. Finally, we provide some additional details on our semi-synthetic experiments.

\subsection{Comparator Details}
\label{app:exp:comp}

We start by providing a detailed overview of the comparators outlined in Table~\ref{tab:comparators}.
Throughout, recall that we have defined $g_\pi(a \mid x) := \1\{\pi(x) = a\} - q(a \mid x)$ as the contrast between the value of a policy
$\pi$ and that of a baseline policy $q$, $p_t(\cdot \mid \cdot)$ is the logging policy's selection
probability at round $t$, and $\wh{\sigma}_t$ is an estimate of the predictable standard deviation. Our adaptively-weighted estimator \aw{} is given in full by
\cref{alg:online-epoch}; we describe the other estimators below.

\paragraph{CADR \citep{bibaut2021post}.} This estimator is closely related to our
inverse standard deviation-weighted estimator and likewise derives from the online estimator of
\citet{luedtke2016statistical}. The key difference between this estimator and our own is that the former is designed to evaluate a single, \emph{fixed} policy
rather than a sequence of policies that are refit over time. Using sample splitting, we can leverage this estimator to perform inference on the value of policy improvement. Given an initial policy $\wh{\pi}$ learned via weighted empirical welfare maximization on the first
 $1 \leq b_n < n$ samples, the \biba{} estimate is given by
\begin{equation}
\label{eq:bibaut-est}
\wh{\theta}_n^{\biba} := \frac{\sum_{t = b_n + 1}^n \wh{\sigma}_t^{-1}\frac{g_{\wh{\pi}}(A_t \mid X_t)Y_t}{p_t(A_t\mid X_t)}}{\sum_{t = b_n + 1}^n \wh{\sigma}_t^{-1}},
\end{equation}
with the corresponding $(1 - \delta)$-confidence interval
\[
C_{1 - \delta}^{\biba} := \left[\wh{\theta}_n - \frac{\sqrt{n - b_n}}{\sum_{t = b_n + 1}^{n}\wh{\sigma}_t^{-1}}z_{1 - \delta/2}, \wh{\theta}_n + \frac{\sqrt{n - b_n}}{\sum_{t = b_n + 1}^n \wh{\sigma}_t^{-1}}z_{1 - \delta/2}\right].
\]
Here $\wh{\sigma}_t$ is the standard deviation estimator defined in Section~\ref{sec:estimator:variance}, except that the single policy
$\wh{\pi}$ is used in place of the sequence of policies refit over epochs.

\paragraph{StabVar \citep{zhan2021off}.} Unlike \aw{} and \biba{}, this estimator does not apply
inverse standard deviation weighting to the IPW pseudo-outcomes. Instead it
sets
\[
h_t^{-1} := \sqrt{\frac{1}{t - b_n - 1}\sum_{s = b_n + 1}^{t - 1}\sum_{a \in \calA}\frac{g_{\wh{\pi}}(a \mid X_s)^2}{p_t(a \mid X_s)}}
\]
and then defines the point estimate $\wh{\theta}^{\zhan}_n$ via
\[
\wh{\theta}^{\zhan}_n := \frac{\sum_{t = b_n + 1}^n h_t\frac{g_{\wh{\pi}}(A_t \mid X_t)Y_t}{p_t(A_t \mid X_t)}}{\sum_{t= b_n + 1}^n h_t},
\]
together with the confidence interval
\[
C_{1 - \delta}^{\zhan} := \left[\wh{\theta}_n - \wh{\sigma}_n z_{1 - \delta/2}, \wh{\theta}_n + \wh{\sigma}_n z_{1 - \delta/2}\right],
\]
where
\[
\wh{\sigma}^2_n := \frac{\sum_{t = b_n + 1}^n h_t^2 \left\{\frac{g_{\wh{\pi}}(A_t \mid X_t)Y_t}{p_t(A_t \mid X_t)} - \wh{\theta}_n^{\zhan}\right\}^2}{\left(\sum_{t = b_n + 1}^n h_t\right)^2}.
\]

\paragraph{Double-dipping.} Finally, the double-dipping estimator $\wh{\theta}_n^{\dd}$ uses the
entire sample to learn a policy by weighted ERM and then evaluates that policy on the same sample.
It is exactly the estimator of \cref{eq:bibaut-est} with $b_n = 0$ and
$\wh{\pi}_n \in \arg\max_{\pi \in \Pi} \sum_{t=1}^n t^{-\alpha}\frac{g_\pi(A_t \mid X_t)Y_t}{p_t(A_t \mid X_t)}$.
Because it reuses the data for both steps, it is the only comparator without out-of-sample
evaluation, which, as discussed in \cref{sec:exp}, is the source of its persistent
under-coverage when the optimal policy is non-unique.
In particular, authors such as \citet{carcamo2020directional} and \citet{fang2019inference} show that the corresponding limiting distribution of the centered version of the estimator is not normal, but rather follows the distribution of the supremum of a limiting Gaussian process.

\paragraph{Burn-in vs.\ $50/50$ splits.} Both \biba{} and \zhan{} require a choice of how much data
to spend on policy learning, and we run each in two modes. In the \emph{$50/50$} mode, we set
$b_n = n/2$, dedicating half of the data to learning the policy and half to evaluating it. This is
the regime in which the comparators enjoy theoretical coverage guarantees, at the cost of discarding
a constant fraction of the data. In the \emph{burn-in} mode we instead set $b_n$ equal to the
(sublinear) burn-in period used by \aw{}; this leaves far more data for evaluation but forfeits the
coverage guarantees. Together with \aw{} and \dd{}, this yields the six estimator variants reported
in the experiments.

\subsection{Additional Synthetic Details}
\label{app:exp:synth}

We now provide remaining details from Section~\ref{sec:exp:synthetic}, which amounts to several lingering tables/figures. Table~\ref{tab:nu-cov-synth} displays the coverage statistics for the best-performing estimators alongside 95\% Wilson confidence intervals. Figure~\ref{fig:multiple-optima} shows two possible depth two decision/policy trees that both attain the optimal policy value under our synthetic DGP.

\begin{table}[htbp]
\centering
\footnotesize
\setlength{\tabcolsep}{6pt}
\caption{Synthetic DGP: Empirical $95\%$ coverage [with $95\%$
Wilson interval] of the three well-calibrated estimators over $250$ independent runs; contrast truth
$\theta^\ast\approx0.350$.}
\label{tab:nu-cov-synth}
\begin{tabular}{c r c c c}
\toprule
$\alpha$ & $n$ & \aw & \zhan~(50/50) & \biba~(50/50) \\
\midrule
\multirow{4}{*}{$0$}
 & 2000  & 0.880 [.83,\,.91] & 0.888 [.84,\,.92] & 0.880 [.83,\,.91] \\
 & 5000  & 0.952 [.92,\,.97] & 0.940 [.90,\,.96] & 0.924 [.88,\,.95] \\
 & 10000 & 0.960 [.93,\,.98] & 0.964 [.93,\,.98] & 0.956 [.92,\,.98] \\
 & 15000 & 0.932 [.89,\,.96] & 0.940 [.90,\,.96] & 0.924 [.88,\,.95] \\
\midrule
\multirow{4}{*}{$0.1$}
 & 2000  & 0.872 [.82,\,.91] & 0.912 [.87,\,.94] & 0.904 [.86,\,.93] \\
 & 5000  & 0.924 [.88,\,.95] & 0.940 [.90,\,.96] & 0.924 [.88,\,.95] \\
 & 10000 & 0.948 [.91,\,.97] & 0.944 [.91,\,.97] & 0.920 [.88,\,.95] \\
 & 15000 & 0.952 [.92,\,.97] & 0.976 [.95,\,.99] & 0.956 [.92,\,.98] \\
\midrule
\multirow{4}{*}{$0.25$}
 & 2000  & 0.856 [.81,\,.89] & 0.880 [.83,\,.91] & 0.824 [.77,\,.87] \\
 & 5000  & 0.920 [.88,\,.95] & 0.968 [.94,\,.98] & 0.936 [.90,\,.96] \\
 & 10000 & 0.944 [.91,\,.97] & 0.956 [.92,\,.98] & 0.944 [.91,\,.97] \\
 & 15000 & 0.972 [.94,\,.99] & 0.940 [.90,\,.96] & 0.912 [.87,\,.94] \\
\midrule
\multirow{4}{*}{$0.5$}
 & 2000  & 0.772 [.72,\,.82] & 0.856 [.81,\,.89] & 0.480 [.42,\,.54] \\
 & 5000  & 0.820 [.77,\,.86] & 0.908 [.87,\,.94] & 0.484 [.42,\,.55] \\
 & 10000 & 0.816 [.76,\,.86] & 0.892 [.85,\,.92] & 0.572 [.51,\,.63] \\
 & 15000 & 0.804 [.75,\,.85] & 0.884 [.84,\,.92] & 0.476 [.41,\,.54] \\
\bottomrule
\end{tabular}
\end{table}

\begin{figure}[t]
  \centering
  \begin{subfigure}[t]{0.45\textwidth}
    \centering
    \includegraphics[width=\linewidth]{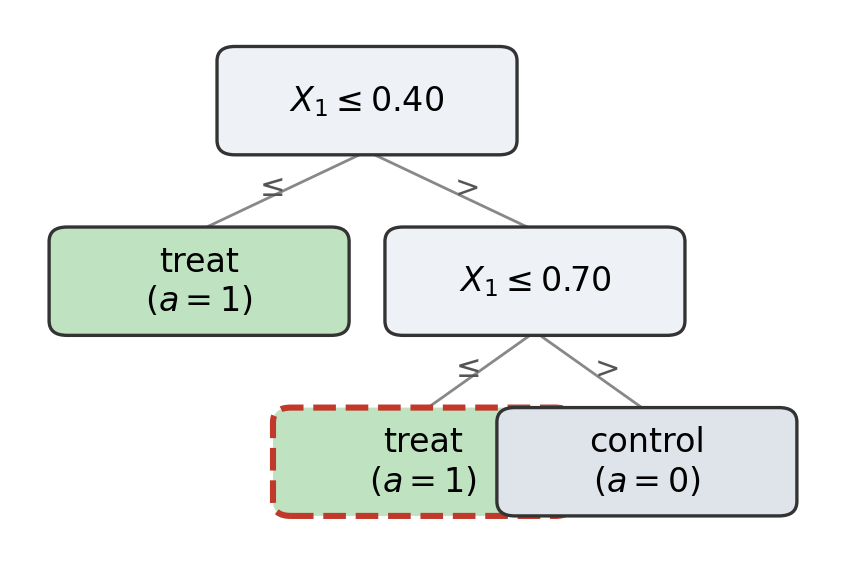}
    \caption{Treat on $[0,0.7)$, control on $[0.7,1)$.}
    \label{fig:multiple-optima-a}
  \end{subfigure}\hfill
  \begin{subfigure}[t]{0.45\textwidth}
    \centering
    \includegraphics[width=\linewidth]{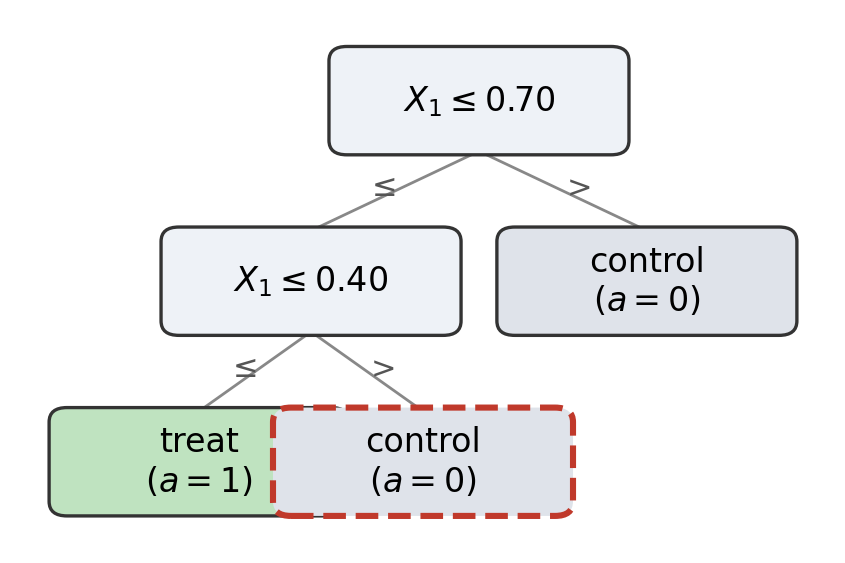}
    \caption{Treat on $[0,0.4)$, control on $[0.4,1)$.}
    \label{fig:multiple-optima-b}
  \end{subfigure}
  \caption{Examples of two depth 2 decision trees with minimum node size of at least 15\%  that  obtain the optimal policy value. Note that these policies just differ in how they assign treatments to units with $x_1 \in [0.4, 0.7]$.}
  \label{fig:multiple-optima}
\end{figure}

\subsection{Additional Semi-Synthetic Details}
\label{app:exp:semi-synth}

We now provide additional details and plots from our semi-synthetic experiments. We first describe how treatments and covariates were selected from the collection of 100 jokes from the Jester dataset, and also provide several summary statistics.

\paragraph{Additional Details on Treatment and Covariate Selection}
As noted in the main body of the paper, we select the five treatments in order to ensure none is strictly preferred over the other. Table~\ref{tab:treatment-summary} displays both the average ratings and rating standard deviations for the five selected jokes. We see that jokes 50 and 81 generally have the highest ratings, whereas jokes 16 and 57 have the lowest. Further, the standard deviations for joke ratings are roughly the same for all jokes, with that of joke 50 being mildly lower than the others.
The same table also displays the ``best arm share'' for each joke, or the fraction of time that the noted joke is preferred over the other four by units in the dataset. We see no joke is uniformly preferred over the others, and that the ranking of jokes by best arm share is the same as ranking by mean rating. Table~\ref{tab:treatment-corr} displays the correlation in ratings amongst the five jokes. The ratings of all jokes are positively correlated, with the rating correlation being strongest between the two least popular jokes.

\begin{table}[ht]
\centering
\caption{Marginal diagnostics for the fixed five-joke treatment benchmark. The ``best arm share'' column gives the marginal fraction of the time that the noted arm wins.}
\label{tab:treatment-summary}
\begin{tabular}{rrrr}
\toprule
joke id & mean & std & best arm share \\
\midrule
16 & -2.595 & 5.027 & 0.058 \\
57 & -1.515 & 5.446 & 0.084 \\
41 & 0.011 & 5.484 & 0.139 \\
81 & 1.986 & 5.318 & 0.293 \\
50 & 3.822 & 4.150 & 0.426 \\
\bottomrule
\end{tabular}

\end{table}

\begin{table}[ht]
\centering
\caption{Pairwise correlation matrix for user ratings in the fixed five joke treatment set.}
\label{tab:treatment-corr}
\begin{tabular}{rrrrrr}
\toprule
joke id & 16 & 57 & 41 & 81 & 50 \\
\midrule
16 & 1.000 & 0.331 & 0.141 & 0.067 & 0.072 \\
57 & 0.331 & 1.000 & 0.288 & 0.166 & 0.121 \\
41 & 0.141 & 0.288 & 1.000 & 0.256 & 0.225 \\
81 & 0.067 & 0.166 & 0.256 & 1.000 & 0.205 \\
50 & 0.072 & 0.121 & 0.225 & 0.205 & 1.000 \\
\bottomrule
\end{tabular}

\end{table}

We select a set of 20  of the remaining 95 jokes to serve as covariates. Explicitly, the selected jokes are:
\[
\calS_{20} = \{71, 2, 75, 7, 80, 60, 3, 98, 44, 5, 90, 15, 79, 85, 51, 13, 43, 73, 86, 100\}.
\]
We construct the set $\calS_{20}$ using an unsupervised selection procedure. For each candidate joke $j$ we compute
the empirical variance $\hat v_j = \tfrac{1}{n}\sum_i (X_{ij} - \bar X_j)^2$ and rescale it to lie in the
unit interval: $\tilde v_j = (\hat v_j - \min_k \hat v_k)/(\max_k \hat v_k - \min_k \hat v_k)$. This is done to ensure that all candidates lie on a common variance scale. Next, writing
$\hat\rho_{jk}$ for the empirical correlation between jokes $j$ and $k$, we build the set greedily.
The first set of selected jokes is given by $\calS_1 = \arg\max_j \tilde v_j$. Having chosen $\calS_{m-1}$, we set $\calS_m := \calS_{m - 1} \cup \{j_m\}$ where
\[
j_m = \arg\max_{j \notin \calS_{m-1}} \left\{ \tilde v_j - \frac{1}{|\calS_{m-1}|}\sum_{k \in \calS_{m-1}} |\hat\rho_{jk}| \right\},
\]
with ties broken by smaller joke ID. This objective rewards high marginal variability while
penalizing jokes that are strongly correlated with those already selected. We note that this approach is treatment-agnostic
by construction, depending only on the raw covariate distribution in the Jester dataset. We display the statistics associated with the chosen jokes in Table~\ref{tab:covariate_selection_table}.

\begin{table}[!ht]
\centering
\caption{Results of the covariate selection procedure. The table reports the selected joke ID at each
step alongside the normalized variance score, average absolute-correlation penalty relative to
the previously selected jokes, and the resulting net objective value.}
\label{tab:covariate_selection_table}
\begin{tabular}{rrrrr}
\toprule
step & joke id & variance score & redundancy penalty & objective value \\
\midrule
1 & 71 & 1.000 & 0.000 & 1.000 \\
2 & 2 & 0.773 & 0.142 & 0.630 \\
3 & 75 & 0.764 & 0.168 & 0.596 \\
4 & 7 & 0.730 & 0.138 & 0.592 \\
5 & 80 & 0.780 & 0.243 & 0.537 \\
6 & 60 & 0.684 & 0.177 & 0.507 \\
7 & 3 & 0.661 & 0.237 & 0.424 \\
8 & 98 & 0.627 & 0.217 & 0.410 \\
9 & 44 & 0.625 & 0.203 & 0.422 \\
10 & 5 & 0.564 & 0.173 & 0.391 \\
11 & 90 & 0.631 & 0.234 & 0.397 \\
12 & 15 & 0.494 & 0.114 & 0.380 \\
13 & 79 & 0.641 & 0.252 & 0.389 \\
14 & 85 & 0.647 & 0.274 & 0.373 \\
15 & 51 & 0.647 & 0.289 & 0.357 \\
16 & 13 & 0.470 & 0.137 & 0.333 \\
17 & 43 & 0.587 & 0.250 & 0.337 \\
18 & 73 & 0.616 & 0.285 & 0.331 \\
19 & 86 & 0.561 & 0.235 & 0.326 \\
20 & 100 & 0.598 & 0.277 & 0.321 \\
\bottomrule
\end{tabular}

\end{table}

\paragraph{Additional Plots.}
We display the precise coverage numbers for the top-performing estimators alongside 95\% Wilson CIs in Table~\ref{tab:nu-cov-semisynth} below.

\begin{table}[htbp]
\centering
\footnotesize
\setlength{\tabcolsep}{6pt}
\caption{Semi-synthetic DGP: Empirical $95\%$ coverage [with $95\%$
Wilson interval] of the three well-calibrated estimators over $250$ independent runs.}
\label{tab:nu-cov-semisynth}
\begin{tabular}{c r c c c}
\toprule
$\alpha$ & $n$ & \aw & \zhan~(50/50) & \biba~(50/50) \\
\midrule
\multirow{6}{*}{$0$}
 & 2000  & 0.952 [.92,\,.97] & 0.940 [.90,\,.96] & 0.864 [.82,\,.90] \\
 & 5000  & 0.964 [.93,\,.98] & 0.944 [.91,\,.97] & 0.912 [.87,\,.94] \\
 & 8000  & 0.948 [.91,\,.97] & 0.956 [.92,\,.98] & 0.932 [.89,\,.96] \\
 & 10000 & 0.948 [.91,\,.97] & 0.936 [.90,\,.96] & 0.932 [.89,\,.96] \\
 & 15000 & 0.968 [.94,\,.98] & 0.912 [.87,\,.94] & 0.920 [.88,\,.95] \\
 & 20000 & 0.948 [.91,\,.97] & 0.936 [.90,\,.96] & 0.924 [.88,\,.95] \\
\midrule
\multirow{6}{*}{$0.1$}
 & 2000  & 0.940 [.90,\,.96] & 0.888 [.84,\,.92] & 0.784 [.73,\,.83] \\
 & 5000  & 0.932 [.89,\,.96] & 0.908 [.87,\,.94] & 0.868 [.82,\,.90] \\
 & 8000  & 0.916 [.88,\,.94] & 0.932 [.89,\,.96] & 0.872 [.82,\,.91] \\
 & 10000 & 0.964 [.93,\,.98] & 0.968 [.94,\,.98] & 0.912 [.87,\,.94] \\
 & 15000 & 0.940 [.90,\,.96] & 0.948 [.91,\,.97] & 0.912 [.87,\,.94] \\
 & 20000 & 0.928 [.89,\,.95] & 0.944 [.91,\,.97] & 0.920 [.88,\,.95] \\
\midrule
\multirow{6}{*}{$0.25$}
 & 2000  & 0.928 [.89,\,.95] & 0.876 [.83,\,.91] & 0.680 [.62,\,.73] \\
 & 5000  & 0.924 [.88,\,.95] & 0.892 [.85,\,.92] & 0.772 [.72,\,.82] \\
 & 8000  & 0.932 [.89,\,.96] & 0.932 [.89,\,.96] & 0.812 [.76,\,.86] \\
 & 10000 & 0.916 [.88,\,.94] & 0.924 [.88,\,.95] & 0.864 [.82,\,.90] \\
 & 15000 & 0.948 [.91,\,.97] & 0.936 [.90,\,.96] & 0.848 [.80,\,.89] \\
 & 20000 & 0.944 [.91,\,.97] & 0.904 [.86,\,.93] & 0.860 [.81,\,.90] \\
\bottomrule
\end{tabular}
\end{table}

\section{Additional Details From Section~\ref{sec:case_study}}
\label{app:exp:case_study}

We now provide additional details from \citet{caria2024adaptive} that are useful for contextualizing our findings. The rough outline of this appendix is as follows. First, we describe which data we leverage from \citet{caria2024adaptive}. The, we describe describe the tempered Thompson sampling algorithm. Next, we describe how the authors compute their estimates for the effect of targeting/value of policy improvement. Lastly, we present and discuss some supplemental statistics, such as the the leveraged sample sizes/burn-in sizes and the learned policies.

\paragraph{Data Structure.}

All estimates are computed from the replication package of \citet{caria2024adaptive}. The base file is \texttt{Data/2020-05-15\_priordata.csv}, the
3{,}709-observation extract on which the authors' Bayesian analysis operates. It
supplies the assignment time stratum (\texttt{covar1}, sixteen cells), the
one-hot treatment indicators \texttt{treatment1}--\texttt{treatment4} (Cash,
Information, Nudge, Control), the six-week wage-employment outcome
(\texttt{outcome}, the value the assignment algorithm conditioned on and observed
for all 3{,}709 units), and the intake date. Three further files are merged onto
it. \texttt{Data/strata\_codes.csv} decodes \texttt{covar1} into nationality,
gender, above-secondary education and prior wage employment.
\texttt{Data/Outcomes.dta}, matched on its \texttt{id} field, supplies the
longer-run outcomes \texttt{employed1} (two months) and \texttt{employed2} (four
months); horizon-specific analyses use complete cases, giving 3{,}631 and 3{,}516
observations. The 388 dated files \texttt{Data/*\_treatmentprobabilities.csv},
each a stratum-by-arm matrix, supply the known assignment propensities: a unit
receives the row for its stratum from the file dated on or immediately before its
intake, with a uniform $0.25$ for the 585 units assigned before the first file
(7 April 2019). We estimate on all 3{,}709 observations rather than the
3{,}668-observation subset defined by \texttt{Data/merged\_full\_data.csv}, which
governs the package's randomization-inference branch.

\paragraph{Tempered Thompson Sampling Algorithm}

Using the above Bayesian hierarchical model noted in Section~\ref{sec:case_study}, \citet{caria2024adaptive} propose a tempered Thompson sampling algorithm that ensures each unit has at least a constant probability of getting any treatment. In more detail, let $m_{t}^{a, x}$ denote the total number of times treatment $a$ has been given to units in stratum $x$ up to (and not including) day $t$ and let $r_t^{a, x}$ denote the total number of successes in said stratum under the given treatment.\footnote{In the paper, the outcomes from treatment assignment on day $t - 1$ are not known on day $t$. The authors thus instead take $r_t^{a, x}$ and $m_t^{a, x}$ to be defined in terms of \textit{units who have reported their status} before the start of day $t$.} The \textit{untempered} probability $p_t^{a, x}$ that a unit is assigned to receive treatment $a$ on day $t$ is then the posterior probability under the hierarchical prior (given the recorded number of success and treatment assignments for each pair $a, x$) of a unit in stratum $x$ having treatment $a$ as the argument maximizer of their expected outcomes, i.e.,
\[
\wh{p}_{t}^{a, x} := \P\left(a = \arg\max_{a' \in \calA} \mu^{a', x} \mid m_t, r_t\right),
\]
where $m_t := \left\{m_t^{a, x} : (a, x) \in \calA \times \calX\right\}$ and $r_t$ is defined analogously. Given a parameter $\gamma \in [0, 1]$, the tempered treatment assignment is then given as follows:
\[
\P(A_{i, t} = a \mid m_t, r_t, X_{i, t} = x) = \frac{\gamma}{|\calA|} + (1 - \gamma)\cdot \wh{p}_t^{a, x}.
\]

\paragraph{Markov Chain Monte Carlo}

We now detail how the authors construct both point-estimates and 95\% credible sets for the effect of targeting. Let
\[
\Theta := \left\{\mu_{a, x} : a \in \calA, x \in \calX\right\}
\]
denote a random sample from the posterior distribution conditioned on the data $\calD$. The authors run the following algorithm (Algorithm~\ref{alg:caria-targeting-short}) to construct estimates and credible intervals for $\theta_{0, B}$.

\begin{algorithm}[htbp]
\caption{Posterior estimate of the value of targeting}
\label{alg:caria-targeting-short}
\begin{algorithmic}[1]
\Require Number of MCMC samples $R$, observed data $\calD$.

\State Draw posterior samples $\{\Theta^{(r)}\}_{r=1}^R$ from the posterior given $\calD$.
\State Estimate per $(a, x)$ posterior means:
$\widehat\mu_{a,x} = R^{-1}\sum_{r=1}^R\mu_{a,x}^{(r)}$.
\For{$x\in\mathcal X$}
    \State Define
    $\widehat\pi(x)
    :=
    \arg\max_{a\in\mathcal A}\widehat\mu_{a,x}.$
\EndFor
\State Compute weights:
\[
p_x := \frac{\sum_{i, t} \1\left\{X_{i, t} = x\right\}}{\sum_{t \in \calT}n_t}.
\]
\State Compute value of policy improvement estimate
\[
\wh{\theta} :=
\sum_{x\in\mathcal X}
p_x
\left[
\widehat\mu_{\widehat\pi(x),x}
-
\widehat\mu_{0,x}
\right].
\]

\For{$r=1,\ldots,R$}
    \State Holding $\widehat\pi$ fixed, compute
    \[
    \theta^{(r)}
    =
    \sum_{x\in\mathcal X}
    p_x
    \left[
    \mu_{\wh{\pi}(x),x}^{(r)}
    -
    \mu_{0,x}^{(r)}
    \right].
    \]
\EndFor

\State \Return point estimate $\widehat\theta$ and
$2.5\%$ and $97.5\%$ quantiles of
$\{\theta^{(r)}\}_{r=1}^R$.
\end{algorithmic}
\end{algorithm}

\paragraph{Mean Gaps Under the Bayesian Prior.}

In the main body of this paper, we noted that the hierarchical Bayesian prior assumed by \citet{caria2024adaptive} ensured that each stratum $x \in \calX$ admitted a unique best intervention. We then noted that, given the data, there was not strong evidence that this assumption held. One lingering question is as follows: what does the distribution of the gap between the first and second best expected outcomes look like. We show the distribution of (a) the sub-optimality gaps for a fixed stratum and (b) the minimum gap across strata in Figure~\ref{fig:bayes-prior-gaps} below. Since the the prior leveraged by \citep{caria2024adaptive} is not proper, we adopt the practice done in the authors' code by enforcing $\alpha, \beta < 1/2$. We further note properties of the realized, empirical distributions in Table~\ref{tab:bayes-prior-gap-summary}.

We find that, for any fixed stratum, the gaps between the expected outcomes under the first and second best actions are not tightly concentrated. The gap falls below 1 pp only 4\% of the time, and the median gap is around 16 pp.
Naturally, the distribution of the minimum sub-optimality gap over the strata is much more tightly concentrated around zero. Here, there is around a 25\% chance of the gap being less than 0.5 pp, and a 43\% chance of it being less than 1 pp.
Thus, we see that the assumed Bayesian model may be likely to generate near ties for some strata, but never can generate exact ties. This latter setting is precisely what leads to a ``winner's curse'' in estimating the VPI.

Under the hierarchical Bayesian model used by Caria et al., the stratum-arm
success probabilities are continuous random variables. Therefore, conditional
on the model, exact equality between the best and second-best treatment in a
stratum has probability zero. This does not imply that the practical
suboptimality gap is large. As a prior-predictive diagnostic, we draw
treatment-specific hyperparameters from the paper's hyperprior, draw
stratum-level treatment means, and record the best-minus-second-best gap.
Table~\ref{tab:bayes-prior-gap-summary} summarizes 200,000 prior-predictive
draws.

\begin{table}[ht]
\centering
\caption{Prior-predictive Bayesian suboptimality gap diagnostics}
\label{tab:bayes-prior-gap-summary}
\begin{tabular}{lrrrrrr}
\toprule
Diagnostic & Mean & 5\% & Median & 95\% & $P(<0.5\text{ pp})$ & $P(<1\text{ pp})$ \\
\midrule
Single stratum & 20.52 & 1.27 & 16.24 & 54.33 & 2.0\% & 4.0\% \\
Minimum over 16 strata & 2.82 & 0.08 & 1.25 & 10.72 & 26.2\% & 43.3\% \\
\bottomrule
\end{tabular}

\end{table}

\begin{figure}[ht]
\centering
\begin{subfigure}[t]{0.48\linewidth}
\centering
\includegraphics[width=\linewidth]{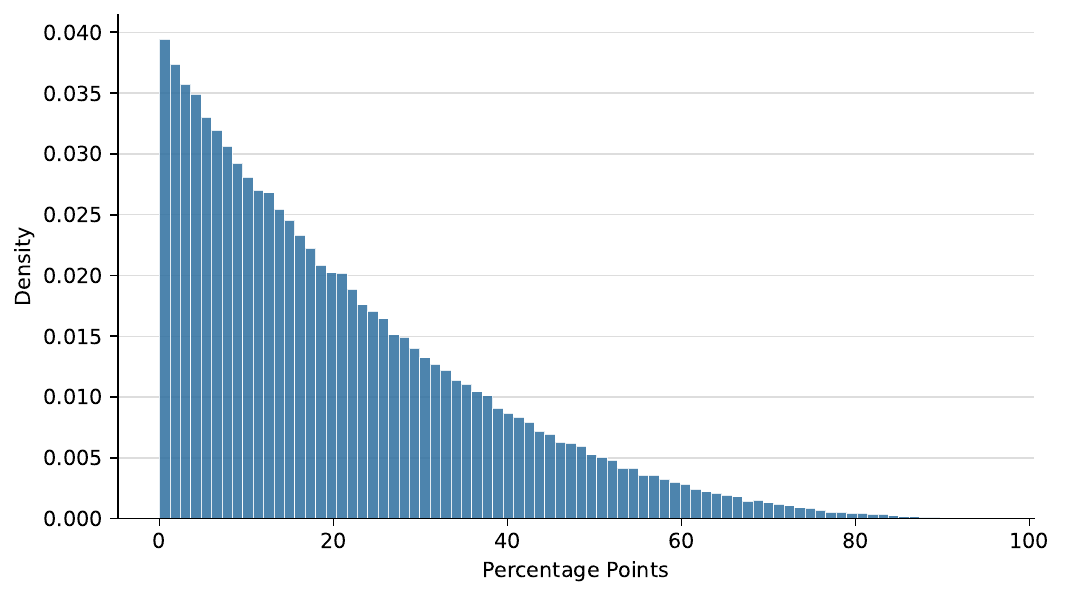}
\caption{Single stratum.}
\label{fig:bayes-prior-single-gap}
\end{subfigure}
\hfill
\begin{subfigure}[t]{0.48\linewidth}
\centering
\includegraphics[width=\linewidth]{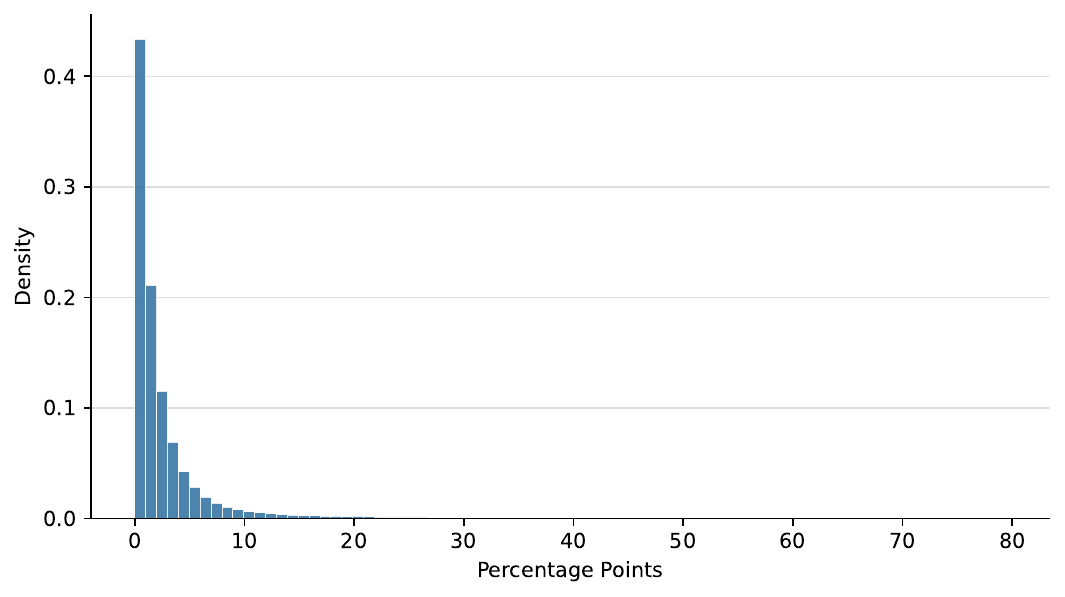}
\caption{Minimum over sixteen strata.}
\label{fig:bayes-prior-min-gap}
\end{subfigure}
\caption{Prior-predictive distributions of best-minus-second-best Bayesian arm gaps.}
\label{fig:bayes-prior-gaps}
\end{figure}

\paragraph{Additional Plots}
We lastly show several plots that are supplemental to our findings in Section~\ref{sec:case_study}. Table~\ref{tab:burn20-cutoffs} shows the realized burn-in rates associated with our estimator alongside raw number of samples. Figures~\ref{fig:policy-heatmap-burn20-all-6w}-\ref{fig:policy-heatmap-burn20-all-4m} compare the realized policies constructed using our estimator  and the Bayesian estimator of \citet{caria2024adaptive}. For our estimator, we only display the final learned policy. Note that the actions chosen by the policies are generally similar.

\begin{table}[ht]
\centering
\caption{Natural day-boundary burn-in cutoffs after 20 percent of each sample}
\label{tab:burn20-cutoffs}
\resizebox{\textwidth}{!}{\begin{tabular}{llrrrrl}
\toprule
Group & Horizon & $n$ & Burn-in rows & Burn-in share & Burn-in days & First eval date \\
\midrule
All & 6 weeks & 3709 & 759 & 20.5\% & 42 & 2019-04-17 \\
All & 2 months & 3631 & 732 & 20.2\% & 38 & 2019-04-17 \\
All & 4 months & 3516 & 724 & 20.6\% & 39 & 2019-04-22 \\
Jordanian & 6 weeks & 2064 & 422 & 20.4\% & 40 & 2019-04-17 \\
Jordanian & 2 months & 2013 & 405 & 20.1\% & 38 & 2019-04-17 \\
Jordanian & 4 months & 1941 & 393 & 20.2\% & 38 & 2019-04-17 \\
Syrian & 6 weeks & 1645 & 337 & 20.5\% & 41 & 2019-04-17 \\
Syrian & 2 months & 1618 & 327 & 20.2\% & 38 & 2019-04-17 \\
Syrian & 4 months & 1575 & 317 & 20.1\% & 39 & 2019-04-22 \\
\bottomrule
\end{tabular}
}
\end{table}

\begin{figure}[p]
\centering
\includegraphics[width=\textwidth]{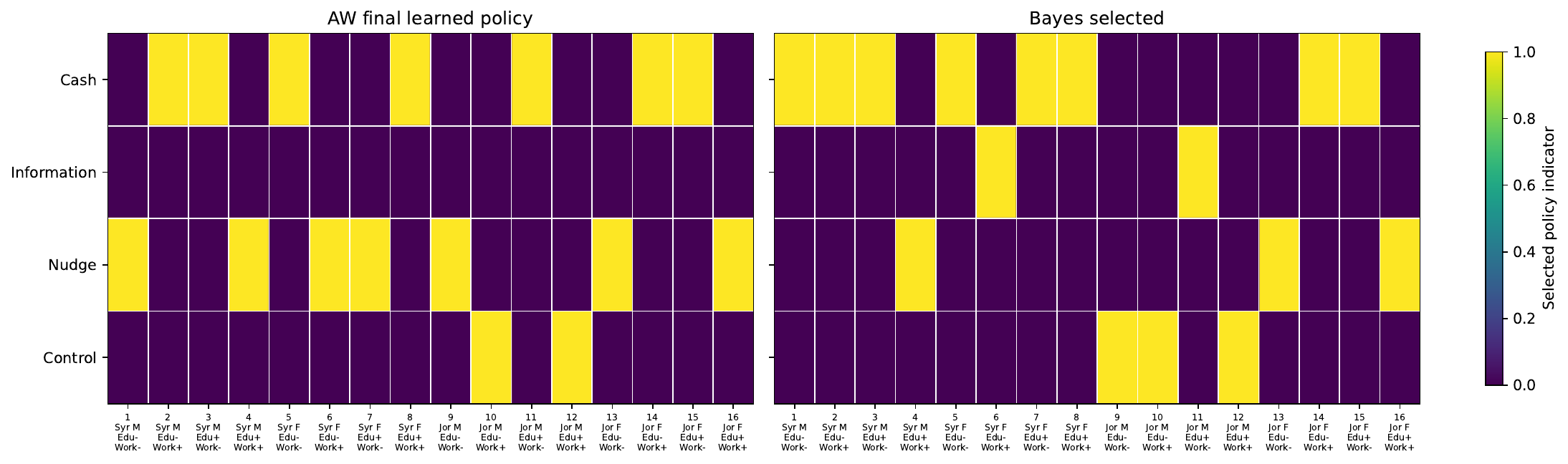}
\caption{Full-sample learned-policy heatmaps, 6-week employment, natural 20 percent burn-in}
\label{fig:policy-heatmap-burn20-all-6w}
\end{figure}

\begin{figure}[p]
\centering
\includegraphics[width=\textwidth]{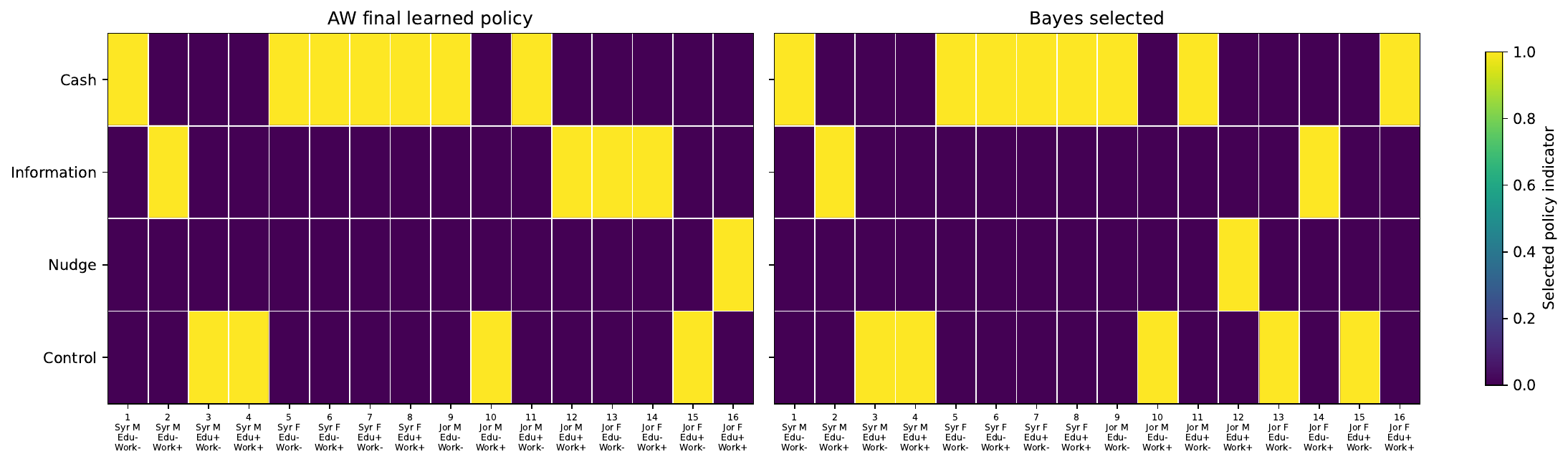}
\caption{Full-sample learned-policy heatmaps, 2-month employment, natural 20 percent burn-in}
\label{fig:policy-heatmap-burn20-all-2m}
\end{figure}

\begin{figure}[p]
\centering
\includegraphics[width=\textwidth]{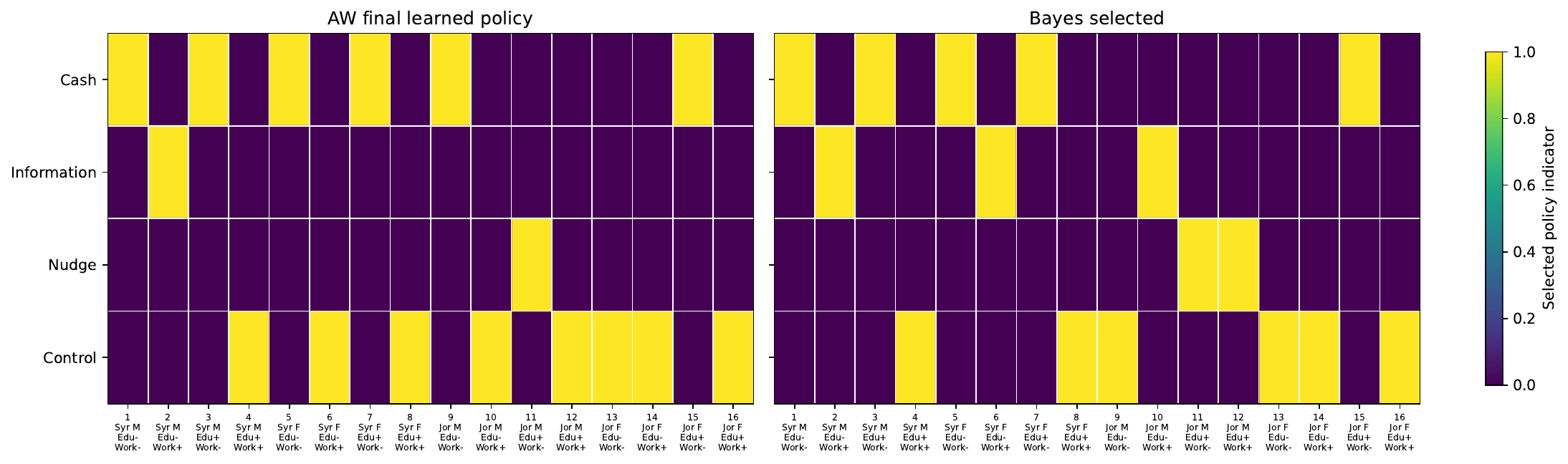}
\caption{Full-sample learned-policy heatmaps, 4-month employment, natural 20 percent burn-in}
\label{fig:policy-heatmap-burn20-all-4m}
\end{figure}

\end{document}